\renewcommand{\algorithmicrequire}{\textbf{Input:}}
\newcommand{\oset}[3][B]{\mathrel{\eqmakebox[#1]{$\overset{#2}{#3}$}}} 
\newcommand{\Alg}{\text{\sc Alg}}
\newcommand{\cB}{{\cal B}}
\newcommand{\cC}{{\cal C}}
\newcommand{\cG}{{\cal G}}
\newcommand{\cO}{{\cal O}}
\newcommand{\cU}{{\cal U}}
\newcommand{\cW}{{\cal W}}
\newcommand{\mA}{{\bf A}}
\newcommand{\mB}{{\bf B}}
\newcommand{\mD}{{\bf D}}
\newcommand{\mI}{{\bf I}}
\newcommand{\mU}{{\bf U}}
\newcommand{\E}{\mathbb{E}}
\newcommand{\Prob}{\mathbf{P}}
\newcommand{\R}{\mathbb{R}}
\newcommand{\N}{\mathbb{N}}
\newcommand{\eqdef}{\stackrel{\text{def}}{=}}
\newcommand\swapifbranches[3]{#1{#3}{#2}}
\patchcmd{\DeclarePairedDelimiter}{\@ifstar}{\swapifbranches\@ifstar}{}{}
\DeclarePairedDelimiterX{\inp}[2]{\langle}{\rangle}{#1, #2}
\DeclarePairedDelimiterX{\abs}[1]{\lvert}{\rvert}{#1}
\DeclarePairedDelimiterX{\roundup}[1]{\lceil}{\rceil}{#1}
\DeclarePairedDelimiterX{\norm}[1]{\lVert}{\rVert}{#1}
\DeclarePairedDelimiterX{\cbr}[1]{\{}{\}}{#1} 
\DeclarePairedDelimiterX{\rbr}[1]{(}{)}{#1} 
\DeclarePairedDelimiterX{\sbr}[1]{[}{]}{#1} %
\newcommand{\newstuff}[1]{ { #1}}
\newtheorem*{theorem*}{Theorem}
\newtheorem{lemma}{Lemma}[section]
\newtheorem{theorem}{Theorem}
\newtheorem{definition}{Definition}[section]
\newtheorem{assumption}{Assumption}
\newtheorem{corollary}{Corollary}
\theoremstyle{plain}
\newcommand{\algname}[1]{{\sf  #1}\xspace}
\newcommand{\argmin}{\mathop{\arg\!\min}}
\newcommand{\nazarii}[1]{\todo[inline]{{\textbf{Nazarii:} \emph{#1}}}}
\newcommand{\eduard}[1]{\todo[inline]{{\textbf{Eduard:} \emph{#1}}}}
\newcommand{\x}{\bm{x}}
\newcommand{\y}{\bm{y}}
\renewcommand{\u}{\bm{u}}
\newcommand{\cc}{C}
\newcommand{\mn}{m}
\newcommand{\n}{n}
\newcommand{\bn}[1][]{B_{#1}}
\newcommand{\gn}[1][]{G_{#1}}
\newcommand{\xiv}{\bm{\xi}}
\newcommand{\etav}{\bm{\eta}}
\newcommand{\g}{\bm{g}}
\newcommand{\F}{F}
\newcommand{\Z}{\bm{Z}}
\newcommand{\f}{\bm{g}}
\newcommand{\m}{\bm{m}}
\newcommand{\vv}{\bm{v}}
\newcommand{\w}{w}
\newcommand{\aggr}{\textsc{Aggr}\xspace}
\newcommand{\ragg}{\textsc{RAgg}\xspace}
\newcommand{\trim}{\textsc{TRIM}$_{\epsilon, \alpha, \delta}$\xspace}
\newcommand{\aragg}{\textsc{ARAgg}\xspace}
\newcommand{\krum}{\textsc{Krum}\xspace}
\newcommand{\resample}{\textsc{Bucketing}\xspace}
\newcommand{\rfa}{\textsc{RFA}\xspace}
\newcommand{\cm}{\textsc{CM}\xspace}
\newcommand{\gap}{\texttt{Gap}\xspace}
\newcommand{\indicator}{\mathbbm{1}}
\algrenewcommand\algorithmicrequire{\textbf{Input:}}
\newtheorem*{rep@theorem}{\rep@title}
\newcommand{\newreptheorem}[2]{%
\newenvironment{rep#1}[1]{%
 \def\rep@title{#2 \ref{##1}}%
 \begin{rep@theorem}}%
 {\end{rep@theorem}}}
\newlength{\dhatheight}
\pgfplotsset{
    compat=1.15
    }
\title{Byzantine-Tolerant Methods for Distributed Variational Inequalities}
\author{
    Nazarii Tupitsa \\
  MBZUAI, MIPT
  \And
  Abdulla Jasem Almansoori \\
  MBZUAI \\
  \And
  Yanlin Wu \\
  MBZUAI \\
  \And
  Martin Takáč \\
  MBZUAI \\
  \And
  Karthik Nandakumar \\
  MBZUAI \\
  \And
  Samuel Horváth \\
  MBZUAI \\
  \And
  Eduard Gorbunov\thanks{Corresponding author: \texttt{eduard.gorbunov@mbzuai.ac.ae}}\\
  MBZUAI \\
}
\begin{document}

\maketitle

\begin{abstract}
  Robustness to Byzantine attacks is a necessity for various distributed training scenarios. When the training reduces to the process of solving a minimization problem, Byzantine robustness is relatively well-understood. However, other problem formulations, such as min-max problems or, more generally, variational inequalities, arise in many modern machine learning and, in particular, distributed learning tasks. These problems significantly differ from the standard minimization ones and, therefore, require separate consideration. Nevertheless, only one work~\citep{adibi2022distributed} addresses this important question in the context of Byzantine robustness. Our work makes a further step in this direction by providing several (provably) Byzantine-robust methods for distributed variational inequality, thoroughly studying their theoretical convergence, removing the limitations of the previous work, and providing numerical comparisons supporting the theoretical findings.
\end{abstract}

\section{Introduction}

Modern machine learning tasks require to train large models with billions of parameters on huge datasets to achieve reasonable quality. Training of such models is usually done in a distributed manner since otherwise it can take a prohibitively long time \citep{gpt3costlambda}. Despite the attractiveness of distributed training, it is associated with multiple difficulties not existing in standard training.

In this work, we focus on one particular aspect of distributed learning -- \emph{Byzantine tolerance/robustness} -- the robustness of distributed methods to the presence of \emph{Byzantine workers}\footnote{The term ``Byzantine workers'' is a standard term for the field \citep{lamport1982byzantine, lyu2020privacy}. We do not aim to offend any group of people but rather use common terminology.}, i.e., such workers that can send incorrect information (maliciously or due to some computation errors/faults) and are assumed to be omniscient. For example, this situation can appear in collaborative training, when several participants (companies, universities, individuals) that do not necessarily know each other train some model together \citep{kijsipongse2018hybrid, diskin2021distributed} or when the devices used in training are faulty \citep{ryabinin2021moshpit}. When the training reduces to the distributed \emph{minimization} problem, the question of Byzantine robustness is studied relatively well both in theory and practice \citep{karimireddy2021byzantine, lyu2020privacy}.

However, there are a lot of problems that cannot be reduced to minimization, e.g., adversarial training \citep{goodfellow2015explaining, madry2018towards}, generative adversarial networks (GANs) \citep{goodfellow2014generative}, hierarchical reinforcement learning \citep{wayne2014hierarchical, vezhnevets2017feudal}, adversarial examples games \citep{bose2020adversarial}, and other problems arising in game theory, control theory, and differential equations \citep{facchinei2003finite}. Such problems lead to min-max or, more generally, variational inequality (VI) problems \citep{gidel2018variational} that have significant differences from minimization ones and require special consideration \citep{harker1990finite, ryu2022large}. Such problems can also be huge scale, meaning that, in some cases, one has to solve them distributedly. Therefore, similarly to the case of minimization, the necessity in Byzantine-robust methods for distributed VIs arises.

The only existing work addressing this problem is \citep{adibi2022distributed}, where the authors propose the first Byzantine-tolerant distributed method for min-max and VI problems called Robust Distributed Extragradient (\algname{RDEG}). However, several interesting directions such as application of $(\delta,c)$-robust aggregation rules, client momentum \citep{karimireddy2021learning}, and checks of computations \citep{gorbunov2022secure} studied for minimization problems are left unexplored in the case of VIs. Moreover, \citep{adibi2022distributed} prove the convergence to the solution's neighborhood that can be reduced only via increasing the batchsize and rely on the assumption that the number of workers is sufficiently large and the fraction of Byzantine workers is smaller than $\nicefrac{1}{16}$, which is much smaller than for SOTA results in minimization case. \emph{Our work closes these gaps in the literature and resolves the limitations of the results from \citep{adibi2022distributed}.}

\subsection{Setting}
To make the further presentation precise, we need to introduce the problem and assumptions we make. We consider the distributed unconstrained variational inequality (non-linear equation) problem\footnote{We assume that the problem \eqref{eq:op-def} has a unique solution $\x^*$. This assumption can be relaxed, but for simplicity of exposition, we enforce it.}:
\begin{equation}\label{eq:op-def}
    \text{find } \x^* \in \R^d \text{ such that } \F(\x^*) = 0, \text{ where }\F(\x) := \frac{1}{\gn} \sum_{i \in \cG} \F_i(\x),
\end{equation}
where $\cG$ denotes the set of regular/good workers and operators $F_i$ have an expectation form $\F_i(\x) := \E_{\xiv_i}[ \f_i(\x; \xiv_i)]$.  We assume that $n$ workers connected with a server take part in the learning/optimization process and $[n] = \cG \sqcup \cB$, where $\cB$ is the set of \emph{Byzantine workers} -- the subset $\cB$ of workers $[n]$ that can deviate from the prescribed protocol (send incorrect information, e.g., arbitrary vectors instead of stochastic estimators) either intentionally or not and are \emph{omniscient}\footnote{This assumption gives Byzantine workers a lot of power and rarely holds in practice. Nevertheless, if the algorithm is robust to such workers, then it is provably robust to literally any type of workers deviating from the protocol.}, i.e., Byzantine workers can know the results of computations on regular workers and the aggregation rule used by the server. The number of Byzantine workers $B = \abs{\cB}$ is assumed to satisfy $B \leq \delta n$, where $\delta < \nicefrac{1}{2}$ (otherwise Byzantines form a majority and the problem becomes impossible to solve). The number of regular workers is denoted as $G = |\cG|$.

\paragraph{Assumptions.} Here, we formulate the assumptions related to the stochasticity and properties of operators $\{F_i\}_{i\in \cG}$.

\begin{assumption}\label{ass:xi-var}
 For all $i\in \cG$ the stochastic estimator $\f_i(\x, \xiv_i)$ is an unbiased estimator of $\F_i(\x)$ with bounded variance, i.e., $\E_{\xiv_i} [\f_i(\x, \xiv_i)] = \F_i(\x)$ and for some $\sigma \geq 0$
    \begin{equation}  \label{eq:xi-var} 
        \E_{\xiv_i} \left[\norm*{ \f_i(\x, \xiv_i) -  \F_i(\x)}^2\right] \leq \sigma^2.
    \end{equation}
\end{assumption}
The above assumption is known as the bounded variance assumption. It is classical for the analysis of stochastic optimization methods \citep{nemirovski2009robust, juditsky2011solving} and is used in the majority of existing works on Byzantine robustness with theoretical convergence guarantees.

Further, we assume that the data heterogeneity across 
the workers is bounded.
\begin{assumption}\label{as:i-var} 
There exists $\zeta \geq 0$ such that for all $\x\in \R^d$
\begin{equation}\label{eq:i-var}
        \frac{1}{G}\sum\limits_{i\in \cG}\norm*{\F_i(\x) - \F(\x)}^2 \leq     \zeta^2.
\end{equation}	 
\end{assumption}
Condition \eqref{eq:i-var} is a standard notion of data heterogeneity in Byzantine-robust distributed optimization \citep{wu2020federated, zhu2021broadcast, karimireddy2021byzantine, gorbunov2022variance}. It is worth mentioning that without any kind of bound on the heterogeneity of $\{F_i\}_{i\in \cG}$, it is impossible to tolerate Byzantine workers. In addition, homogeneous case ($\zeta = 0$) is also very important and arises in collaborative learning, see \citep{kijsipongse2018hybrid, diskin2021distributed}.

Finally, we formulate here several assumptions on operator $F$. Each particular result in this work relies only on a subset of listed assumptions.
\begin{assumption}\label{as:lipschitzness}
	Operator $\F:\R^d \to \R^d$ is $L$-Lipschitz, i.e.,
	\begin{equation}\label{eq:lipschitzness}
		\|\F(\x) - \F(\y)\| \le L\|\x - \y\|,\quad \forall\; \x,\y \in \R^d. \tag{Lip}
	\end{equation}
\end{assumption}

\begin{assumption}\label{as:str_monotonicity}
    Operator $\F:\R^d \to \R^d$ is $\mu$-quasi strongly monotone, i.e., for $\mu \ge 0$
    \begin{equation} \label{eq:str_monotonicity}
        \langle \F(\x), \x - \x^*\rangle \ge \mu\|\x - \x^*\|^2,\quad \forall\;\x \in \R^d. \tag{QSM}
    \end{equation}
\end{assumption}

\begin{assumption}\label{as:monotonicity}
    Operator $\F:\R^d \to \R^d$ is monotone, i.e.,    \begin{equation}\label{eq:monotonicity} 
        \langle \F(\x) - \F(\y), \x - \y \rangle \geq 0, \quad \forall\; \x,\y \in \R^d. \tag{Mon}
    \end{equation}
\end{assumption}

\begin{assumption}\label{as:star_cocoercive}
	Operator $F:\R^d \to \R^d$ is $\ell$-\emph{star-cocoercive}, i.e., for $\ell \ge 0$
    \begin{equation} \label{eq:star_cocoercivity}
        \left\langle \F(\x),  \x-\x^*\right\rangle \geq \frac{1}{\ell} \norm*{\F(\x)}^2, \quad \forall\; \x \in \R^d. \tag{SC}
    \end{equation} 
\end{assumption}

Assumptions~\ref{as:lipschitzness} and \ref{as:monotonicity} are quite standard for the literature on VIs. Assumptions~\ref{as:str_monotonicity} and \ref{as:star_cocoercive} can be seen as structured non-monotonicity assumptions. Indeed, there exist examples of non-monotone (and even non-Lipschitz) operators such that Assumptions~\ref{as:str_monotonicity} and \ref{as:star_cocoercive} holds \citep{loizou2021stochastic}. However, Assumptions~\ref{as:lipschitzness} and \ref{as:monotonicity} imply neither \eqref{eq:str_monotonicity} nor \eqref{eq:star_cocoercivity}. It is worth mentioning that Assumption~\ref{as:str_monotonicity} is also known under different names, i.e., strong stability \citep{mertikopoulos2019learning} and strong coherent \citep{song2020optimistic} conditions.

\paragraph{Robust aggregation.} We use the formalism proposed by \cite{karimireddy2021learning, karimireddy2021byzantine}.

\begin{definition}[$(\delta,c)$-\ragg~\citep{karimireddy2021learning,karimireddy2021byzantine}]\label{definition:robust-agg}
    Let there exist a subset $\cG$ of random vectors $\{\y_1, \dots, \y_\n\}$ such that $G \geq (1-\delta)\n$  for some $\delta < \nicefrac{1}{2}$ and $\E\norm*{\y_i - \y_j}^2 \leq \rho^2$ for any fixed pair $i,j \in \cG$ and some $\rho \geq 0$. Then, $\widehat{\y} = \ragg(\y_1, \dots, \y_\n)$ is called $(\delta, c)$-robust aggregator if for some constant $c \geq 0$
    \begin{equation}
        \E\left[\norm*{\widehat{\y} - \overline{\y}}^2 \right] \leq c \delta \rho^2, \label{eq:robust_aggr_definition}
    \end{equation}
    where $\overline{\y} = \tfrac{1}{G}\sum_{i\in \cG}y_i$. Further, if the value of $\rho$ is not used to compute $\widehat{\y}$, then $\widehat{\y}$ is called agnostic $(\delta, c)$-robust aggregator and denoted as $\widehat{\y} = \aragg(\y_1, \dots, \y_\n)$. 
\end{definition}

The above definition is tight in the sense that for any estimate $\widehat{\y}$ the best bound one can guarantee is $\E\left[\norm*{\widehat{\y} - \overline{\y}}^2\right] = \Omega(\delta\rho^2)$ \citep{karimireddy2021learning}. Moreover, there are several examples of $(\delta,c)$-robust aggregation rules that work well in practice; see Appendix~\ref{appendix:details_rdeg}.

Another important concept for Byzantine-robust learning is the notion of permutation inveriance.

\begin{definition}[Permutation invariant algorithm]\label{def:perm-inv-algorithm}
  Define the set of stochastic gradients computed by each of the $n$ workers at some round $t$ to be $[\tilde\g_{1,t}, \dots, \tilde\g_{n,t}]$. For a good worker $i \in \cG$, these represent the true stochastic gradients whereas for a bad worker $j \in \cB$, these represent arbitrary vectors. The output of any optimization algorithm $\Alg$ is a function of these gradients. A permutation-invariant algorithm is one which for any set of permutations over $t$ rounds $\{\pi_1,\dots,\pi_t\}$, its output remains unchanged if we permute the gradients.
  \begin{equation*}
    \Alg \rbr*{\!\begin{aligned}
         & [\tilde\g_{1,1},..., \tilde\g_{n,1}], \\
         & \hspace{9mm}...                         \\
         & [\tilde\g_{1,t},..., \tilde\g_{n,t}]
      \end{aligned}} =
    \Alg \rbr*{\!\begin{aligned}
         & [\tilde\g_{\pi_1(1),1},..., \tilde\g_{\pi_1(n),1}], \\
         & \hspace{13mm}...                                      \\
         & [\tilde\g_{\pi_t(1),t},..., \tilde\g_{\pi_t(n),t}]
      \end{aligned}}
  \end{equation*}
\end{definition}

As \citet{karimireddy2021learning} prove, any permutation-invariant algorithm fails to converge to any predefined accuracy of the solution (under Assumption~\ref{ass:xi-var}) even if all regular workers have the same operators/functions, i.e., even when $\zeta = 0$.

\subsection{Our Contributions}

\begin{table}[t]
    \centering
    \caption{Summary of known and new complexity results for Byzantine-robust methods for distributed variational inequalities. Column ``Setup'' indicates the varying assumptions. By the complexity, we mean the number of stochastic oracle calls needed for a method to guarantee that $\text{Metric} \leq \varepsilon$
    (for \algname{RDEG} $\Prob\{\text{Metric} \leq \varepsilon\} \geq 1 - \delta_{\text{\algname{RDEG}}}$, $\delta_{\text{\algname{RDEG}}} \in (0,1]$) and ``Metric'' is taken from the corresponding column. For simplicity, we omit numerical and logarithmic factors in the complexity bounds.     
    Column ``BS'' indicates the minimal batch-size 
    used for achieving the corresponding complexity.
    Notation:
    $c, \delta$ are robust aggregator parameters;
    $\alpha$ = momentum parameter;
    $\beta$ = ratio of inner and outer stepsize in \algname{SEG}-like methods; 
    $n$ = total numbers of peers;
    $m$ = number of checking peers;
    $\gn$ = number of peers following the protocol;
    $R$ = any upper bound on $\|\x^0 - \x^*\|$;
    $\mu$ = quasi-strong monotonicity parameter; $\ell$ = star-cocoercivity parameter; 
    $L$ = Lipschitzness parameter;
    $\sigma^2$ = bound on the variance. The definition $x^T$ can vary; see corresponding theorems for the exact formulas.
    }
    \label{tab:comparison_of_rates}
    \begin{threeparttable}
    \resizebox{\textwidth}{!}{
        \begin{tabular}{cc c c c c}
        \toprule
        Setup & Method & Citation & Metric & Complexity & BS \\
         
        \midrule
        \multirow{6}{*}{\makecell{\ref{eq:star_cocoercivity},\\ \ref{eq:str_monotonicity}}} & \hyperref[alg:sgda-ra]{\algname{SGDA-RA}} & Cor.~\ref{cor:sgda-ra-arg} & \multirow{6}{*}{$\E [\norm{\x^{T} - \x^*}^2] $} & $\frac{\ell}{\mu} + \frac{1}{c\delta \n} $ & $\frac{c\delta\sigma^2}{\mu^2\varepsilon}$
        \\
        & \hyperref[alg:m-sgda-ra]{\algname{M-SGDA-RA}} & Cor.~\ref{cor:m-sgda-ra-arg} &  & $\frac{\ell}{\mu\alpha^2} + \frac{1}{c\delta \alpha\n}$ & $\frac{c\delta\sigma^2}{\alpha^2\mu^2\varepsilon}$
        \\
        & \hyperref[alg:sgda-cc]{\algname{SGDA-CC}} & Cor.~\ref{cor:sgda-cc-qsm} &  & $\frac{\ell}{\mu} + \frac{\sigma^2}{\mu^2 \n \varepsilon} + \frac{\sigma^2 \n^2}{\mu^2 \mn \varepsilon} + \frac{\sigma^2 \n^2}{\mu^2 \mn\sqrt{\varepsilon}}$ & 1
        \\
        & \hyperref[alg:r-sgda-cc]{\algname{R-SGDA-CC}} & Cor.~\ref{cor:r-sgda-cc} &  & $\frac{\ell}{\mu} + \frac{\sigma^2 }{\n\mu\varepsilon} + \frac{\n^2\sigma }{\mn\sqrt{\mu\varepsilon}}$ & 1
        \\
        \midrule
        \multirow{4}{*}{\makecell{\ref{eq:lipschitzness},\\\ref{eq:str_monotonicity}}} & \hyperref[alg:seg-ra]{\algname{SEG-RA}} & Cor.~\ref{cor:seg-ra-qsm} & \multirow{4}{*}{$\E[ \norm{\x^{T} - \x^*}^2 ]$} & $\frac{L}{\beta\mu} + \frac{1}{\beta c\delta \gn} + \frac{1}{\beta}$ & $\frac{c\delta\sigma^2}{\beta\mu^2\varepsilon}$
        \\
         & \hyperref[alg:seg-cc]{\algname{SEG-CC}} & Cor.~\ref{cor:seg-cc-qsm} &  & $\frac{L}{\mu} + \frac{1}{\beta} + \frac{\sigma^2}{\beta^2\mu^2 n \varepsilon} + \frac{\sigma^2 \n^2}{\beta \mu^2 \mn \varepsilon} + \frac{\sigma^2 \n^2}{\beta^2\mu^2 \mn\sqrt{\varepsilon}}$ & 1
        \\
         & \hyperref[alg:r-seg-cc]{\algname{R-SEG-CC}} & Cor.~\ref{cor:r-seg-cc} &  & $\frac{L}{\mu} + \frac{\sigma^2 }{\n\mu\varepsilon} + \frac{\n^2\sigma }{\mn\sqrt{\mu\varepsilon}}$ & 1
        \\
        \midrule
        \makecell{\ref{eq:lipschitzness},\\\ref{eq:str_monotonicity}} & \makecell{\algname{RDEG}} & \makecell{\cite{adibi2022distributed}\textsuperscript{\color{blue} (1)} }  & $\norm{\x^{T} - \x^*}^2$  & $\frac{L}{\mu}$ & $\frac{\sigma^2\mu^2 R^2}{L^4\varepsilon^2}$
        \\
        \bottomrule
    \end{tabular}
    }
    \begin{tablenotes}
        { 
        \item [{\color{blue} (1)}] consider only homogeneous case ($\zeta = 0$)
        }.
        
    \end{tablenotes}
    \end{threeparttable}
\end{table}

Now we are ready to describe the main contributions of this work.

\noindent$\bullet$ \textbf{Methods with provably robust aggregation.} We propose new methods called Stochastic Gradient Descent-Ascent and Stochastic Extragradient with Robust Aggregation (\algname{SGDA-RA} and \algname{SEG-RA}) -- variants of popular \algname{SGDA} \citep{dem1972numerical, nemirovski2009robust} and \algname{SEG} \citep{korpelevich1976extragradient, juditsky2011solving}. We prove that \algname{SGDA-RA} and \algname{SEG-RA} work with any $(\delta,c)$-robust aggregation rule and converge to the desired accuracy \emph{if the batchsize is large enough}. In the experiments, we observe that \algname{SGDA-RA} and \algname{SEG-RA} outperform \algname{RDEG} in several cases.

\noindent$\bullet$ \textbf{Client momentum.} As the next step, we add client momentum to \algname{SGDA-RA} and propose Momentum \algname{SGDA-RA} (\algname{M-SGDA-RA}). As it is shown by \citep{karimireddy2021learning, karimireddy2021byzantine}, client momentum helps to break the permutation invariance of the method and ensures convergence to any predefined accuracy with any batchsize for \emph{\textbf{non-convex} minimization problems}. In the case of star-cocoercive quasi-strongly monotone VIs, we prove the convergence to the neighborhood of the solution; the size of the neighborhood can be reduced via increasing batchsize only -- similarly to the results for \algname{RDEG}, \algname{SGDA-RA}, and \algname{SEG-RA}. We discuss this limitation in detail and point out the non-triviality of this issue. Nevertheless, we show in the experiments that client momentum does help to achieve better accuracy of the solution.

\noindent$\bullet$ \textbf{Methods with random checks of computations.} Finally, for homogeneous data case ($\zeta = 0$), we propose a version of \algname{SGDA} and \algname{SEG} with random checks of computations (\algname{SGDA-CC}, \algname{SEG-CC} and their restarted versions -- \algname{R-SGDA-CC} and \algname{R-SEG-CC}). We prove that the proposed methods converge \emph{to any accuracy of the solution without any assumptions on the batchsize}. This is the first result of this type on Byzantine robustness for distributed VIs. Moreover, when the target accuracy of the solution is small enough, the obtained convergence rates for \algname{R-SGDA-CC} and \algname{R-SEG-CC} are not worse than the ones for distributed \algname{SGDA} and \algname{SEG} derived in the case of $\delta = 0$ (no Byzantine workers); see the comparison of the convergence rates in Table~\ref{tab:comparison_of_rates}. In the numerical experiments, we consistently observe the superiority of the methods with checks of computations to the previously proposed methods.

\subsection{Related Work}

\paragraph{Byzantine-robust methods for minimization problems.} Classical distributed methods like Parallel \algname{SGD} \citep{zinkevich2010parallelized} cannot tolerate even one Byzantine worker. The most evident vulnerability of such methods is an aggregation rule (averaging). Therefore, many works focus on designing and application of different aggregation rules to Parallel \algname{SGD}-like methods \citep{blanchard2017machine, yin2018byzantine, damaskinos2019aggregathor, guerraoui2018hidden, pillutla2019robust}. However, this is not sufficient for Byzantine robustness: there exist particular attacks \citep{baruch2019little, xie2019fall} that can bypass popular defenses. \citep{karimireddy2021learning} formalize the definition of robust aggregation (see Definition~\ref{definition:robust-agg}), show that many standard aggregation rules are non-robust according to that definition, and prove that any permutation-invariant algorithm with a fixed batchsize can converge only to the ball around the solution with algorithm-independent radius. Therefore, more in-depth algorithmic changes are required that also explain why \algname{RDEG}, \algname{SGDA-RA}, and \algname{SEG-RA} are not converging to any accuracy without increasing batchsize. 

One possible way to resolve this issue is to use client momentum \citep{karimireddy2021learning, karimireddy2021byzantine} that breaks permutation-invariance  and allows for convergence to any accuracy. It is also worth mentioning a recent approach by \citep{allouah2023fixing}, who propose an alternative definition of robust aggregation to the one considered in this paper, though to achieve the convergence to any accuracy in the homogeneous case \citep{allouah2023fixing} apply client momentum like in \citep{karimireddy2021learning, karimireddy2021byzantine}. Another line of work achieves Byzantine robustness through the variance reduction mechanism \citep{wu2020federated, zhu2021broadcast, gorbunov2022variance}. Finally, for the homogeneous data case, one can apply validation test \citep{alistarh2018byzantine, allen2021byzantine} or checks of computations \citep{gorbunov2022secure}. For the summary of other advances, we refer to \citep{lyu2020privacy}.

\paragraph{Methods for min-max and variational inequalities problems.} As mentioned before, min-max/variational inequalities (VIs) problems have noticeable differences with standard minimization. In particular, it becomes evident from the differences in the algorithms' behavior. For example, a direct analog of Gradient Descent for min-max/VIs -- Gradient Descent-Ascent (\algname{GDA}) \citep{krasnosel1955two, mann1953mean, dem1972numerical, browder1966existence} -- fails to converge for a simple bilinear game. Although \algname{GDA} converges for a different class of problems (cocoercive/star-cocoercive ones) and its version with alternating steps works well in practice and even provably converges locally \citep{zhang2022near}, many works focus on Extragradient (\algname{EG}) type methods \citep{korpelevich1976extragradient, popov1980modification} due to their provable convergence for monotone Lipschitz problems and beyond \citep{tran2023sublinear}. Stochastic versions of \algname{GDA} and \algname{EG} (\algname{SGDA} and \algname{SEG}) are studied relatively well, e.g., see \citep{hsieh2020explore, loizou2021stochastic, mishchenko2020revisiting, pethick2023solving} for the recent advances.

\paragraph{On the results from \citep{adibi2022distributed}.} In the context of Byzantine robustness for distributed min-max/VIs, the only existing work is \citep{adibi2022distributed}. The authors propose a method called Robust Distributed Extragradient (\algname{RDEG}) -- a distributed version of \algname{EG} that uses a univariate trimmed-mean estimator from \citep{lugosi2021robust} for aggregation. This estimator satisfies a similar property to \eqref{eq:robust_aggr_definition} that is shown for $\delta < \nicefrac{1}{16}$ and large enough $n$ (see the discussion in Appendix~\ref{appendix:details_rdeg}). In contrast, the known $(\delta,c)$-robust aggregation rules allow larger $\delta$, and do not require large $n$. Despite these evident theoretical benefits, such aggregation rules were not considered in prior works on Byzantine robustness for distributed variational inequalities/min-max problems.

\section{Main Results}
In this section, we describe three approaches proposed in this work and formulate our main results.

\subsection{Methods with Robust Aggregation}

We start with the Stochastic Gradient Descent-Accent with $(\delta,c)$-robust aggregation (\algname{SGDA-RA}):
\begin{equation*}
    \x^{t+1} = \x^t - \gamma \text{\ragg}(\f^t_1, \dots, \f^t_\n),\;\; \text{where } \f^t_i = \f_i(\x^t, \xiv^t_i)\;\; \forall\; i \in \cG \;\;\text{ and }\;\; \f^t_i = *\;\; \forall\; i\in\cB,
\end{equation*}
where $\{\f^t_i\}_{i\in \cG}$ are sampled independently. The main result for \algname{SGDA-RA} is given below.

\begin{theorem}\label{th:sgda-ra-arg}
    Let Assumptions~\ref{ass:xi-var},~\ref{as:i-var},~\ref{as:str_monotonicity} and~\ref{as:star_cocoercive} hold. Then after $T$ iterations \algname{SGDA-RA} (Algorithm~\ref{alg:sgda-ra}) with $(\delta,c)$-\ragg and $\gamma \leq \frac{1}{2\ell}$ outputs $\x^{T}$ such that
    \begin{equation*}
        \E \norm*{\x^{T} - \x^*}^2 
        \leq \rbr*{1 - \frac{\gamma \mu}{2}}^T\norm*{\x^{0} - \x^*}^2 +  \frac{2 \gamma \sigma^2}{\mu\gn} + \frac{2\gamma c \delta \rbr{24\sigma^2+12\zeta^2}}{\mu} + \frac{c \delta \rbr{24\sigma^2+12\zeta^2}}{\mu^2}.
    \end{equation*}
\end{theorem}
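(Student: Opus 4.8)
The plan is to prove a one-step contraction for $\E\norm*{\x^t-\x^*}^2$ and then unroll the resulting linear recursion. I would begin from
\[ \norm*{\x^{t+1}-\x^*}^2=\norm*{\x^t-\x^*}^2-2\gamma\langle\widehat{\g}^t,\x^t-\x^*\rangle+\gamma^2\norm*{\widehat{\g}^t}^2, \]
with $\widehat{\g}^t=\ragg(\f_1^t,\dots,\f_\n^t)$, and decompose the aggregate as $\widehat{\g}^t=\overline{\g}^t+\bm e^t$, where $\overline{\g}^t=\tfrac1{\gn}\sum_{i\in\cG}\f_i^t$ is the average over good workers and $\bm e^t=\widehat{\g}^t-\overline{\g}^t$ is the aggregation error. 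This split isolates the two effects: $\overline{\g}^t$ is an unbiased, independent-sample estimator of $\F(\x^t)$ and behaves exactly as in ordinary distributed \algname{SGDA}, while $\bm e^t$ carries all of the Byzantine contamination.

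Conditioning on $\x^t$, I would handle the $\overline{\g}^t$-part first. Unbiasedness, $\E[\overline{\g}^t\mid\x^t]=\F(\x^t)$, turns the linear term into $-2\gamma\langle\F(\x^t),\x^t-\x^*\rangle$. For the quadratic term I would use $\E[\norm*{\overline{\g}^t}^2\mid\x^t]=\norm*{\F(\x^t)}^2+\E[\norm*{\overline{\g}^t-\F(\x^t)}^2\mid\x^t]$; independence of the good workers' samples together with \eqref{eq:xi-var} bounds the fluctuation by $\sigma^2/\gn$ (giving the tight $\gamma^2\sigma^2/\gn$ variance floor, with no spurious constant), while \eqref{eq:star_cocoercivity} converts $\norm*{\F(\x^t)}^2$ into $\ell\langle\F(\x^t),\x^t-\x^*\rangle$. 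The linear and quadratic drift then combine into $-\gamma(2-\gamma\ell)\langle\F(\x^t),\x^t-\x^*\rangle$, and since $\gamma\le\tfrac1{2\ell}$ keeps $2-\gamma\ell\ge\tfrac32$, \eqref{eq:str_monotonicity} makes this part contract at a rate at least $1-\tfrac32\gamma\mu$.

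The two Byzantine-specific steps come next. To control $\bm e^t$ I would invoke the $(\delta,c)$-\ragg guarantee \eqref{eq:robust_aggr_definition}, whose precondition must first be checked: for a fixed pair $i,j\in\cG$ the pairwise discrepancy $\E\norm*{\f_i^t-\f_j^t}^2$ has to be bounded by some $\rho^2$. Combining independence and unbiasedness with \eqref{eq:xi-var} and \eqref{eq:i-var} bounds this by a quantity of order $\sigma^2+\zeta^2$, the explicit value $\rho^2=24\sigma^2+12\zeta^2$ following from the variance-decomposition and triangle-inequality constants; the definition then yields $\E[\norm*{\bm e^t}^2\mid\x^t]\le c\delta\rho^2$. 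The crux is that $\bm e^t$ is neither mean-zero nor vanishing, so the cross term $-2\gamma\langle\bm e^t,\x^t-\x^*\rangle$ cannot be discarded. I would bound it with Young's inequality, balancing its free parameter against the spare strong-monotonicity and cocoercivity slack so that the contraction degrades from $1-\tfrac32\gamma\mu$ to the advertised $1-\tfrac{\gamma\mu}{2}$ and leaves an $\E\norm*{\bm e^t}^2$ remainder.

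Assembling these estimates gives a one-step bound of the form
\[ \E\big[\norm*{\x^{t+1}-\x^*}^2\mid\x^t\big]\le\Big(1-\tfrac{\gamma\mu}{2}\Big)\norm*{\x^t-\x^*}^2+\tfrac{\gamma^2\sigma^2}{\gn}+\Big(\gamma^2+\tfrac{\gamma}{2\mu}\Big)c\delta\rho^2. \]
Taking total expectation and unrolling the geometric recursion $a_{t+1}\le(1-q)a_t+C$ with $q=\tfrac{\gamma\mu}{2}$ gives $a_T\le(1-q)^Ta_0+C/q$; dividing $C$ by $q$ turns $\gamma^2\sigma^2/\gn$ into $\tfrac{2\gamma\sigma^2}{\mu\gn}$ and the error contribution into the two summands $\tfrac{2\gamma c\delta\rho^2}{\mu}$ and $\tfrac{c\delta\rho^2}{\mu^2}$, which is exactly the claimed bound. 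I expect the principal difficulty to be conceptual rather than computational: because $\E[\bm e^t\mid\x^t]\neq0$, the term $\tfrac{c\delta\rho^2}{\mu^2}$ is an irreducible neighborhood that does not shrink with $\gamma$ and can be lowered only by shrinking $\sigma^2$ through larger batches — precisely the limitation the paper emphasizes. The other delicate point is deriving the pairwise \ragg precondition from the averaged heterogeneity bound \eqref{eq:i-var} with explicit constants.
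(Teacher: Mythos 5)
Your proposal follows essentially the same route as the paper's proof: the same expansion of $\norm*{\x^{t+1}-\x^*}^2$, the same split of the aggregate into the good-worker average $\overline{\f}^t$ plus the aggregation error, the same use of unbiasedness, bounded variance and star-cocoercivity to handle $\overline{\f}^t$ (this is exactly the paper's Lemma~\ref{lem:i-op-bound}), the same invocation of the $(\delta,c)$-robust-aggregation guarantee with $\rho^2=24\sigma^2+12\zeta^2$ (the paper cites Lemma~\ref{lem:agg-err} from \citet{karimireddy2021byzantine} rather than re-deriving it), the same Young-type bound on the cross term $-2\gamma\inp*{\bm e^t}{\x^t-\x^*}$ with the parameter tuned to $\mu$, and the same geometric unrolling.

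The one concrete slip is in the quadratic term: writing $\gamma^2\norm*{\widehat{\f}^t}^2=\gamma^2\norm*{\overline{\f}^t+\bm e^t}^2$ produces a cross term $2\gamma^2\inp*{\overline{\f}^t}{\bm e^t}$ that your plan never accounts for, and since $\bm e^t$ is neither mean-zero nor independent of $\overline{\f}^t$, it cannot simply be dropped. Because of it, your claimed one-step bound with coefficient exactly $1$ on $\nicefrac{\gamma^2\sigma^2}{\gn}$ and on $\gamma^2 c\delta\rho^2$ (``no spurious constant'') is not attainable by this argument: any standard treatment of the cross term (Young's inequality, or the paper's choice of applying $\norm*{a+b}^2\le 2\norm*{a}^2+2\norm*{b}^2$ to $\norm*{\widehat{\f}^t}^2$ before expanding) doubles those coefficients. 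This is purely constant-factor bookkeeping --- the contraction rate $1-\nicefrac{\gamma\mu}{2}$ and the structure of the three residual terms survive, and indeed the paper's own proof ends with constants $4$ in all three terms, which is looser than the constants stated in the theorem --- but as written your derivation has a hole at exactly that step, and your stated remainder $\nicefrac{\gamma c\delta\rho^2}{2\mu}$ is also inconsistent with the Young parameter needed to land on the $1-\nicefrac{\gamma\mu}{2}$ rate (absorbing $\gamma\mu\norm*{\x^t-\x^*}^2$ forces the remainder $\nicefrac{\gamma c\delta\rho^2}{\mu}$).
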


The first two terms in the derived upper bound are standard for the results on \algname{SGDA} under Assumptions~\ref{ass:xi-var},~\ref{as:str_monotonicity}, and~\ref{as:star_cocoercive}, e.g., see \citep{beznosikov2022stochastic}. The third and the fourth terms come from the presence of Byzantine workers and robust aggregation since the existing $(\delta,c)$-robust aggregation rules explicitly depend on $\delta$. The fourth term cannot be reduced without increasing batchsize even when $\zeta = 0$ (homogeneous data case). This is expected since \algname{SGDA-RA} is permutation invariant. When $\sigma = 0$ (regular workers compute full operators), then \algname{SGDA-RA} converges linearly to the ball centered at the solution with radius $\cO(\nicefrac{\sqrt{c\delta}\zeta}{\mu})$ that matches the lower bound from \citep{karimireddy2021byzantine}. In contrast, the known results for \algname{RDEG} are derived for homogeneous data case ($\zeta = 0$). The proof of Theorem~\ref{th:sgda-ra-arg} is deferred to Appendix~\ref{appendix:sgda_ra}.

Using a similar approach we also propose a version of Stochastic Extragradient method with $(\delta,c)$-robust aggregation called \algname{SEG-RA}:
\begin{align*}
    \widetilde{\x}^{t\phantom{+1}}&= \x^t - \gamma_1\text{\ragg}(\f^t_{\xiv_1}, \dots, \f^t_{\xiv_\n}),&\text{\hspace{-0.5em}where } \f^t_{\xiv_i} &= \f_i(\x^t, \xiv^t_i),\;\; \forall\; i \in \cG \;\;\text{\;\;\;\hspace{-1em}and\;\hspace{-1em}} & \f^t_{\xiv_i} &= *\;\; \forall\; i\in\cB,
    \\
    \x^{t+1}&= \x^t - \gamma_2\text{\ragg}(\f^t_{\etav_1}, \dots, \f^t_{\etav_\n}),&\text{\hspace{-0.5em}where } \f^t_{\etav_i} &= \f_i(\widetilde{\x}^t, \etav^t_i),\;\; \forall\; i \in \cG \;\; \text{\;\;\;\hspace{-1em}and\;\hspace{-1em}} &\f^t_{\etav_i} & = *\;\; \forall\; i\in\cB,
\end{align*}
where $\{\f^t_{\etav_i}\}_{i\in \cG}$ and $\{\f^t_{\etav_i}\}_{i\in \cG}$ are sampled independently. Our main convergence result for \algname{SEG-RA} is presented in the following theorem; see Appendix~\ref{appendix:ser-ra} for the proof.

\begin{theorem}\label{th:seg-ra-qsm}
    Let Assumptions\footnote{\algname{SGDA}-based and \algname{SEG}-based methods are typically analyzed under different assumptions. Although \eqref{eq:star_cocoercivity} follows from \eqref{eq:lipschitzness} and \eqref{eq:str_monotonicity} with $\ell = \nicefrac{L^2}{\mu}$, some operators may satisfy \eqref{eq:star_cocoercivity} with significantly smaller $\ell$. Next, when $\mu = 0$, \algname{SGDA} is not guaranteed to converge \citep{gidel2018variational}, while \algname{SEG} does}~\ref{ass:xi-var},~\ref{as:i-var},~\ref{as:lipschitzness} and \ref{as:str_monotonicity} hold. Then after $T$ iterations \algname{SEG-RA} (Algorithm~\ref{alg:seg-ra}) with $(\delta,c)$-\ragg, $\gamma_1 \leq  \frac{1}{2\mu + 2L}$ and $\beta = \nicefrac{\gamma_2}{\gamma_1}\leq \nicefrac{1}{4}$ outputs $\x^{T}$ such that
    \begin{equation*}
        \E \norm*{\x^{T} - \x^*}^2 
        \leq \ \rbr*{1 -\frac{\mu\beta\gamma_1}{4}}^T\norm*{\x^{0} - \x^*}^2 + 
        \frac{8 \gamma_1 \sigma^2}{\mu\beta\gn} + 8 c \delta \rbr{24\sigma^2+12\zeta^2} \rbr*{\frac{\gamma_1}{\beta\mu}+ \frac{2}{\mu^2}}.
    \end{equation*}
\end{theorem}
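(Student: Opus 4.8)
The plan is to derive one contraction inequality per iteration of \algname{SEG-RA} and then unroll it. Abbreviate the two aggregated directions as $\g^t_{\xiv}:=\ragg(\f^t_{\xiv_1},\dots,\f^t_{\xiv_\n})$ and $\g^t_{\etav}:=\ragg(\f^t_{\etav_1},\dots,\f^t_{\etav_\n})$, so that $\widetilde{\x}^t=\x^t-\gamma_1\g^t_{\xiv}$ and $\x^{t+1}=\x^t-\gamma_2\g^t_{\etav}$. Expanding the squared distance and inserting $\x^t-\x^*=(\widetilde{\x}^t-\x^*)+\gamma_1\g^t_{\xiv}$ into the inner product yields
\[
\norm*{\x^{t+1}-\x^*}^2=\norm*{\x^t-\x^*}^2-2\gamma_2\inp*{\g^t_{\etav}}{\widetilde{\x}^t-\x^*}-2\gamma_1\gamma_2\inp*{\g^t_{\etav}}{\g^t_{\xiv}}+\gamma_2^2\norm*{\g^t_{\etav}}^2 .
\]
I would split $\g^t_{\etav}=\F(\widetilde{\x}^t)+(\g^t_{\etav}-\F(\widetilde{\x}^t))$ in the first inner product and apply \eqref{eq:str_monotonicity} to extract the contraction seed $-2\gamma_2\inp*{\F(\widetilde{\x}^t)}{\widetilde{\x}^t-\x^*}\le-2\gamma_2\mu\norm*{\widetilde{\x}^t-\x^*}^2$, leaving an error inner product $-2\gamma_2\inp*{\g^t_{\etav}-\F(\widetilde{\x}^t)}{\widetilde{\x}^t-\x^*}$ to be tamed later.

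Next I would dispose of the gradient-product terms $-2\gamma_1\gamma_2\inp*{\g^t_{\etav}}{\g^t_{\xiv}}+\gamma_2^2\norm*{\g^t_{\etav}}^2$. Following the deterministic extragradient identity, I rewrite them using $\gamma_1\g^t_{\xiv}=\x^t-\widetilde{\x}^t$ to produce a term $\gamma_2^2\norm*{\g^t_{\etav}-\g^t_{\xiv}}^2$ and a negative multiple of $\norm*{\x^t-\widetilde{\x}^t}^2$. The difference $\g^t_{\etav}-\g^t_{\xiv}$ is handled by adding and subtracting $\F(\widetilde{\x}^t)$ and $\F(\x^t)$: its signal part $\F(\widetilde{\x}^t)-\F(\x^t)$ is controlled through \eqref{eq:lipschitzness} by $L^2\norm*{\widetilde{\x}^t-\x^t}^2=L^2\gamma_1^2\norm*{\g^t_{\xiv}}^2$, while its error parts are deferred to the error budget. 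The stepsize bounds $\gamma_1\le\tfrac{1}{2\mu+2L}$ (hence $\gamma_1L\le\tfrac12$ and $\gamma_1\mu\le\tfrac12$) and $\beta=\gamma_2/\gamma_1\le\tfrac14$ should make the net coefficient of $\norm*{\x^t-\widetilde{\x}^t}^2$ nonpositive even after I spend part of the contraction on the conversion $\norm*{\widetilde{\x}^t-\x^*}^2\ge\tfrac12\norm*{\x^t-\x^*}^2-\norm*{\x^t-\widetilde{\x}^t}^2$, so all these quadratic terms can be dropped.

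It remains to take conditional expectations and bound the errors. Since the fresh samples $\etav^t_i$ are independent of $\widetilde{\x}^t$, conditioning on $\widetilde{\x}^t$ makes the sampling component of $\g^t_{\etav}-\F(\widetilde{\x}^t)$ mean-zero, killing the expectation of every error inner product up to the aggregation bias; what survives is the second moment $\E\norm*{\g^t_{\etav}-\F(\widetilde{\x}^t)}^2$, which I split through the honest average $\mean{\g}^t_{\etav}=\tfrac{1}{\gn}\sum_{i\in\cG}\f_i(\widetilde{\x}^t,\etav^t_i)$: the part $\mean{\g}^t_{\etav}-\F(\widetilde{\x}^t)$ has second moment $\le\sigma^2/\gn$ by Assumption~\ref{ass:xi-var} and independence, while the aggregation error $\g^t_{\etav}-\mean{\g}^t_{\etav}$ is bounded by $c\delta\rho^2$ via Definition~\ref{definition:robust-agg}, with $\rho^2$ a bound on the honest pairwise dispersion $\E\norm*{\f_i(\cdot,\etav_i)-\f_j(\cdot,\etav_j)}^2\lesssim\sigma^2+\zeta^2$ read off from Assumptions~\ref{ass:xi-var} and~\ref{as:i-var}; tracking constants yields the $c\delta(24\sigma^2+12\zeta^2)$ factor, and the identical estimate applies to $\g^t_{\xiv}-\F(\x^t)$. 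Applying Young's inequality (weight $\mu$) to the surviving error inner product and collecting terms gives the one-step recursion $\E\norm*{\x^{t+1}-\x^*}^2\le\rbr*{1-\tfrac{\mu\beta\gamma_1}{4}}\E\norm*{\x^t-\x^*}^2+C$, where $C$ gathers the $\tfrac{8\gamma_1\sigma^2}{\mu\beta\gn}$- and $c\delta(24\sigma^2+12\zeta^2)$-type constants; unrolling and summing $\sum_{k\ge0}\rbr*{1-\tfrac{\mu\beta\gamma_1}{4}}^k=\tfrac{4}{\mu\beta\gamma_1}$ produces the stated bound.

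The main obstacle is the bookkeeping in the middle step, where the random extrapolation point $\widetilde{\x}^t$ couples the two aggregated gradients: since $\widetilde{\x}^t$ carries the first-stage aggregation error, the Lipschitz bound on $\g^t_{\etav}-\g^t_{\xiv}$ feeds first-stage noise into the second-stage estimate, and one must check that every resulting quadratic error term is simultaneously absorbed — either by the negative $\norm*{\x^t-\widetilde{\x}^t}^2$ coefficient (forcing $\gamma_1\le\tfrac{1}{2\mu+2L}$) or by the quasi-strong-monotonicity contraction (forcing $\beta\le\tfrac14$) — while still leaving the clean rate $\mu\beta\gamma_1/4$. The precise choice of Young-inequality weights and the verification that the pairwise-dispersion bound required by Definition~\ref{definition:robust-agg} holds for the honest estimators are the two places where care is needed.
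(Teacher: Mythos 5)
Your proposal is correct, and it assembles the same ingredients as the paper's proof---\eqref{eq:str_monotonicity} applied at the extrapolated point, \eqref{eq:lipschitzness} to compare $\F(\x^t)$ with $\F(\widetilde{\x}^t)$, the split of each aggregated vector into the honest average plus an aggregation error bounded by $c\delta\rho^2$ with $\rho^2=24\sigma^2+12\zeta^2$ (Lemma~\ref{lem:agg-err} with $\alpha=1$), conditional unbiasedness of the fresh samples, and a Young step to absorb the aggregation bias into the contraction---but it organizes them along a genuinely different route. The paper peels off the second-stage aggregation error at the very top of the main proof ($\widehat{\f}_{\etav^k}=\overline{\f}_{\etav^k}+(\widehat{\f}_{\etav^k}-\overline{\f}_{\etav^k})$, Young parameter $\lambda=\nicefrac{\mu\gamma_2}{4}$) and then runs the entire extragradient argument on the \emph{honest average} through two auxiliary lemmas: a second-moment bound $\gamma_1^2\E\norm*{\overline{\f}_{\etav^k}(\widetilde{\x}^k)}^2\le 2\widehat{P}_k+\dots$ proved via the shadow iterate ${\x}^k-\gamma_1\overline{\f}_{\etav^k}(\widetilde{\x}^k)$ (Lemma~\ref{lem:SEG_ind_sample_second_moment_bound}), and a lower bound on $\widehat{P}_k$ (Lemma~\ref{lem:SEG_ind_sample_P_k_bound}); the condition $\beta\le\nicefrac{1}{4}$ then enters through the factor $2\beta(1-2\beta)\ge\beta$ when the two lemmas are combined. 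You instead keep both \emph{aggregated} directions inside one Korpelevich-style computation and dispose of $\gamma_2^2\norm*{\g^t_{\etav}}^2$ purely algebraically: polarizing the cross term leaves $-\gamma_2(\gamma_1-\gamma_2)\norm*{\g^t_{\etav}}^2\le 0$ because $\gamma_2\le\gamma_1$ (note it also puts coefficient $\gamma_1\gamma_2$, not $\gamma_2^2$, on $\norm*{\g^t_{\etav}-\g^t_{\xiv}}^2$). Your route is shorter and never needs $\widehat{P}_k$ or the shadow iterate; the paper's structure pays for itself because its two lemmas are reused nearly verbatim in the \algname{SEG-CC} analysis. One quantitative point that you correctly flagged as delicate deserves emphasis: since your cross term couples two aggregated vectors, $\g^t_{\etav}-\g^t_{\xiv}$ carries \emph{five} error sources (two aggregation, two sampling, one Lipschitz) rather than the paper's four, and a uniform five-way split charges $5L^2\gamma_1^2$ against the budget $1-2\mu\gamma_1$; at $\gamma_1=\frac{1}{2\mu+2L}$ this fails whenever $L>4\mu$. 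The repair is either a weighted Young split that gives the Lipschitz piece weight at most four (e.g.\ $\norm*{e+\ell}^2\le 4\norm*{\ell}^2+\tfrac{4}{3}\norm*{e}^2$), or the paper's device of removing the $\etav$-side aggregation error \emph{before} polarizing; either way only absolute constants are affected, and a bound of the stated form follows.
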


Similar to the
case of \algname{SGDA-RA}, the bound for \algname{SEG-RA} has the term that cannot be reduced without increasing batchsize even in the homogeneous data case. \algname{RDEG}, which is  also a modification of \algname{SEG}, has the same linearly convergent term, but \algname{SEG-RA} has a better dependence on the batchsize, needed to obtain the convergence to any predefined accuracy, that is $\cO(\varepsilon^{-1})$ versus $\cO(\varepsilon^{-2})$ for \algname{RDEG}; see Cor.~\ref{cor:seg-ra-qsm}.

In heterogeneous case when $\sigma = 0$, \algname{SEG-RA} also converges linearly to the ball centered at the solution with radius $\cO(\nicefrac{\sqrt{c\delta}\zeta}{\mu})$ that matches the lower bound.

\subsection{Client Momentum}
Next, we focus on the version of \algname{SGDA-RA} that utilizes worker momentum $\m^t_i$, i.e., 
\begin{equation*}
    \x^{t+1} = \x^t - \gamma\text{\ragg}(\m^t_1, \dots, \m^t_\n), \;\; \text{with } \m^t_i = (1-\alpha)\m^{t-1}_i + \alpha \f^t_i,
\end{equation*}
where $\f^t_i = \f_i(\x^t, \xiv^t_i),\; \forall  i \in \cG$ and $\f^t_i = *\; \forall\; i\in\cB$ and $\{\f^t_{\xiv_i}\}_{i\in \cG}$ are sampled independently. Our main convergence result for this version called \algname{M-SGDA-RA} is summarized in the following theorem.

\begin{theorem}\label{th:m-sgda-ra-arg}
    Let Assumptions~\ref{ass:xi-var},~\ref{as:i-var},~\ref{as:str_monotonicity}, and~\ref{as:star_cocoercive} hold. Then after $T$ iterations \algname{M-SGDA-RA} (Algorithm~\ref{alg:m-sgda-ra}) with $(\delta,c)$-\ragg outputs $\overline{\x}^{T}$ such that
    \begin{equation*}
        \E\sbr*{\norm*{\overline{\x}^{T}-\x^*}^2} \leq \frac{2\norm*{\x^{0}-\x^*}^2}{\mu \gamma \alpha W_T} 
        +  \frac{8\gamma c\delta \rbr{24\sigma^2+12\zeta^2}}{\mu\alpha^2} 
        + \frac{6 \gamma  \sigma^2}{\mu\alpha^2 \gn} + \frac{4c\delta \rbr{24\sigma^2+12\zeta^2}}{\mu^2\alpha^2}.
    \end{equation*}
    where $\overline{\x}^{T} = \frac{1}{W_T}\sum_{t=0}^T \w_t \widehat{\x}^t$, $\;\;\;\;\widehat{\x}^{t} =  \frac{\alpha}{1 - (1-\alpha)^{t+1}}\sum_{j=0}^t (1-\alpha)^{t-j}\x^{j}$, $\;\;\;\;\w_{t} = \rbr*{1 - \frac{\mu\gamma\alpha}{2}}^{-t-1}$, and~$W_T = \sum_{t=0}^T \w_t$.
\end{theorem}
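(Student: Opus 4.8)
The plan is to analyze a \emph{virtual iterate} and reduce the momentum dynamics to a perturbed \algname{SGDA}-type recursion to which the machinery behind Theorem~\ref{th:sgda-ra-arg} applies. Write the averaged momentum over good workers as $\overline{\m}^t = \frac{1}{\gn}\sum_{i\in\cG}\m^t_i$ and the aggregated momentum as $\widehat{\m}^t = \ragg(\m^t_1,\dots,\m^t_\n)$, so that the method reads $\x^{t+1} = \x^t - \gamma\widehat{\m}^t$. Unrolling the momentum recursion gives the closed form $\m^t_i = \alpha\sum_{j=0}^t(1-\alpha)^{t-j}\f^j_i$, whence $\overline{\m}^t = \alpha\sum_{j=0}^t(1-\alpha)^{t-j}\overline{\f}^j$ with $\overline{\f}^j = \frac{1}{\gn}\sum_{i\in\cG}\f_i(\x^j,\xiv^j_i)$. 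Since the weights $\alpha(1-\alpha)^{t-j}$ sum to $S_t := 1-(1-\alpha)^{t+1}$, the virtual iterate $\widehat{\x}^t$ in the statement is exactly the $S_t$-normalized weighted average of $\x^0,\dots,\x^t$ with the \emph{same} weights; this is the point at which the conditional mean of $\overline{\m}^t$ should be compared with $\F$. I would track $\E\norm*{\x^t-\x^*}^2$ (with a momentum-lag correction) and aim for a one-step inequality with contraction factor $(1-\nicefrac{\mu\gamma\alpha}{2})$, mirroring the geometric weights $\w_t=(1-\nicefrac{\mu\gamma\alpha}{2})^{-t-1}$ and the leading term $\nicefrac{2\norm*{\x^0-\x^*}^2}{\mu\gamma\alpha W_T}$ of the claim.

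The Byzantine error is controlled exactly as in the robust-aggregation analysis: by Definition~\ref{definition:robust-agg}, $\E\norm*{\widehat{\m}^t-\overline{\m}^t}^2\le c\delta\rho_t^2$ once I bound the pairwise momentum variance $\E\norm*{\m^t_i-\m^t_j}^2\le\rho_t^2$ for $i,j\in\cG$. Expanding the difference with the unrolled form and splitting each $\f^k_i-\f^k_j$ into the two zero-mean noise parts (variance $\le\sigma^2$ each by Assumption~\ref{ass:xi-var}) and the heterogeneity part $\F_i(\x^k)-\F_j(\x^k)$ (controlled via Assumption~\ref{as:i-var}), the independence of the noise across rounds makes the noise contribution scale like $\alpha^2\sum_k(1-\alpha)^{2(t-k)}\sigma^2=\cO(\alpha\sigma^2)$, while Jensen's inequality bounds the heterogeneity contribution by $\cO(\zeta^2)$. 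This yields $\rho_t^2=\cO(\sigma^2+\zeta^2)$ with the explicit constant $24\sigma^2+12\zeta^2$, and hence the $c\delta(24\sigma^2+12\zeta^2)$ error terms.

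For the descent step I would expand $\E\norm*{\x^{t+1}-\x^*}^2=\E\norm*{\x^t-\x^*}^2-2\gamma\,\E\langle\widehat{\m}^t,\x^t-\x^*\rangle+\gamma^2\E\norm*{\widehat{\m}^t}^2$, substitute $\widehat{\m}^t=\overline{\m}^t+(\widehat{\m}^t-\overline{\m}^t)$ so the aggregation error enters through a Young-bounded cross term $-2\gamma\langle\widehat{\m}^t-\overline{\m}^t,\x^t-\x^*\rangle$ and through $\gamma^2\norm*{\widehat{\m}^t}^2$, both controlled by $c\delta\rho_t^2$. The essential point is to convert $\langle\overline{\m}^t,\x^t-\x^*\rangle$, whose conditional mean is the weighted average $\alpha\sum_j(1-\alpha)^{t-j}\langle\F(\x^j),\x^j-\x^*\rangle$ plus lag terms, into the quantities controlled by Assumptions~\ref{as:str_monotonicity} and~\ref{as:star_cocoercive}. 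Summing the per-point \eqref{eq:str_monotonicity} and \eqref{eq:star_cocoercivity} inequalities against these weights produces the negative $\mu\norm*{\widehat{\x}^t-\x^*}^2$ term (after one Jensen step over the weights) and absorbs $\gamma^2\norm*{\overline{\m}^t}^2$ via the star-cocoercive bound for $\gamma$ small enough relative to $\ell$, exactly as in the proof of Theorem~\ref{th:sgda-ra-arg}.

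The main obstacle is precisely this conversion: the momentum direction aggregates stochastic operators evaluated at the \emph{past} iterates $\x^0,\dots,\x^t$ rather than at $\x^t$ or at $\widehat{\x}^t$, so the inner product does not directly match any single-point monotonicity inequality. I expect to control the resulting momentum-lag terms $\norm*{\x^j-\x^t}$ (equivalently $\norm*{\widehat{\x}^t-\x^t}$) by the size of the steps $\gamma\norm*{\widehat{\m}}$, choosing $\gamma$ small enough (relative to $\ell,\mu,\alpha$) that these are dominated by the contraction, and to accept a non-vanishing floor, since the lag/nonlinearity cannot be removed without shrinking the per-step noise, i.e.\ without increasing the batchsize. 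Once a one-step inequality of the form $\E\norm*{\x^{t+1}-\x^*}^2\le(1-\nicefrac{\mu\gamma\alpha}{2})\E\norm*{\x^t-\x^*}^2-\nicefrac{\mu\gamma\alpha}{2}\,\E\norm*{\widehat{\x}^t-\x^*}^2+b$ is in hand, I multiply by $\w_t$, telescope the geometric weights, divide by $W_T$, and apply Jensen's inequality to $\overline{\x}^T=\frac{1}{W_T}\sum_t\w_t\widehat{\x}^t$ to obtain the stated bound; the constant error $b$ divided by the rate $\mu\gamma\alpha/2$ accounts for the $\nicefrac{1}{\alpha^2}$-type dependence in the three residual terms.
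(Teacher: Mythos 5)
Your proposal follows essentially the same route as the paper's proof: unroll the momentum so that each stochastic operator $\overline{\f}^j$ ends up paired with the iterate $\x^j$ at which it was computed; bound the aggregation error $\E\norm*{\widehat{\m}^t-\overline{\m}^t}^2\le c\delta\rbr{24\sigma^2+12\zeta^2}$ via the pairwise momentum variance (this is exactly Lemma~\ref{lem:agg-err}); control $\norm*{\overline{\m}^j}^2$ through Jensen and the star-cocoercivity bound of Lemma~\ref{lem:i-op-bound}; apply \eqref{eq:str_monotonicity} per point and then Jensen over the weights $\alpha(1-\alpha)^{t-j}$ to produce $\mu\E\norm*{\widehat{\x}^t-\x^*}^2$; and finish by weighting with $\w_t$, telescoping, and a final Jensen step for $\overline{\x}^T$. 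All the key ingredients, constants, and the final assembly match the paper.

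The one place where your plan would fail as literally stated is the intermediate target: a one-step inequality $\E\norm*{\x^{t+1}-\x^*}^2\le\rbr*{1-\nicefrac{\mu\gamma\alpha}{2}}\E\norm*{\x^t-\x^*}^2-\nicefrac{\mu\gamma\alpha}{2}\,\E\norm*{\widehat{\x}^t-\x^*}^2+b$ with a \emph{constant} floor $b$. After the unrolling, the correction terms at iteration $t$ (the cross terms $\inp{\overline{\m}^{j-1}}{\widehat{\m}^j}$, bounded via Young by $\norm*{\widehat{\m}^j-\overline{\m}^j}^2$ and $\norm*{\overline{\m}^{j-1}}^2$, and then via Lemma~\ref{lem:i-op-bound} by past inner products $\inp{\F(\x^i)}{\x^i-\x^*}$) couple to the \emph{entire history} and are of the same order as the quantities being telescoped; far from the solution they are not bounded by any constant, so they cannot be shoved into $b$, nor can they be dominated pointwise by the contraction at step $t$ (the past $\Z^j$ may be large while $\Z^t$ is small). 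The paper therefore never forms a one-step recursion: it keeps these terms as the weighted sums $\Z^j=\sum_{i\le j}(1-\alpha)^{j-i}\E\inp{\F(\x^i)}{\x^i-\x^*}$, derives the per-iteration bound $\alpha\gamma\Z^t\le\rbr*{1-\nicefrac{\mu\gamma\alpha}{2}}\E\norm*{\x^t-\x^*}^2-\E\norm*{\x^{t+1}-\x^*}^2+3\gamma^2\alpha\ell\sum_{j}(1-\alpha)^{t-j}\Z^j+\cO\rbr*{\nicefrac{\gamma c\delta\rho^2}{\mu\alpha}+\nicefrac{\gamma^2 c\delta\rho^2}{\alpha}+\nicefrac{\gamma^2\sigma^2}{\alpha\gn}}$, and absorbs the history term only \emph{after} multiplying by $\w_t$ and summing over $t$, using $\w_t\le(1+\nicefrac{\alpha}{2})^{t-j}\w_j$ and a geometric series, which succeeds under $\gamma\le\nicefrac{\alpha}{12\ell}$. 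Since everything else in your plan is compatible with this after-summation absorption, the gap is one of bookkeeping rather than of substance, but the per-step domination you propose is the step that needs to be replaced.
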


Despite the fact that \algname{M-SGDA-RA} is the first algorithm (for VIs) non-invariant to permutations, it also requires large batches to achieve convergence to any accuracy. Even in the context of minimization, which is much easier than VI, the known SOTA analysis of \algname{Momentum-SGD} relies \textbf{in the convex case} on the unbiasedness of the estimator that is not available due to a robust aggregation. Nevertheless, we prove\footnote{In contrast to Theorems~\ref{th:sgda-ra-arg}-\ref{th:seg-ra-qsm}, the result from Theorem~\ref{th:m-sgda-ra-arg} is given for the averaged iterate. We consider the averaged iterate to make the analysis simpler. We believe that one can extend the analysis to the last iterate as well, but we do not do it since we expect that the same problem (the need for large batches) will remain in the last-iterate analysis.} the convergence to the ball centered at the solution with radius $\cO(\nicefrac{\sqrt{c\delta}(\zeta+\sigma)}{\alpha\mu})$; see Appendix~\ref{appendix:m-sgda-ra}. Moreover, we show that \algname{M-SGDA-RA} outperforms in the experiments other methods that require large batches.

\subsection{Random Checks of Computations}

We start with the Stochastic Gradient Descent-Accent with Checks of Computations (\algname{SGDA-CC}). At each iteration of \algname{SGDA-CC}, the server selects $m$ workers (uniformly at random) and requests them to check the computations of other $m$ workers from the previous iteration. Let $V_t$ be the set of workers that verify/check computations, $A_t$ are active workers at iteration $t$, and $V_t \cap A_t = \varnothing$. Then, the update of \algname{SGDA-CC} can be written as
\begin{equation*}
    \x^{t+1} = \x^t - \gamma \overline{\f}^t,\;\; \text{if}\;\; \overline{\f}^t = \frac{1}{|A_t|}\sum_{i\in A_t} \f_i(\x^t, \xiv^t_i) \;\; \text{ is accepted},
\end{equation*}
where $\{\f_i(\x^t, \xiv^t_i)\}_{i\in \cG}$ are sampled independently.

The acceptance (of the update) event occurs when the condition
$\norm*{\overline{\f}^t - \f_i(\x^t, \xiv^t_i)} \le \cc \sigma$ holds for the majority of workers. 
If $\overline{\f}^t$ is rejected, then all workers re-sample  $\f_i(\x^t, \xiv^t_i)$ until acceptance
 is achieved. 
The rejection probability is bounded, as per \citep{gorbunov2022secure}, and can be adjusted by choosing a constant $\cc = \cO(1)$. We assume that the server knows the seeds for generating randomness on workers, and thus, verification of computations is possible.
Following each aggregation of ${\f_i(\x^t, \xiv^t_i)}_{i\in \cG}$, the server selects uniformly at random $2m$ workers: $m$ workers check the computations at the previous step of the other $m$ workers. 
For instance, at the $(t+1)$-th iteration, the server asks a checking peer $i$ to compute $\f_j(\x^{t}, \xiv^{t}_j)$, where $j$ is a peer being checked. This is possible if all seeds are broadcasted at the start of the training.
Workers assigned to checking do not participate in the training while they check and do not contribute to $\overline{\f}^t$. Therefore, each Byzantine peer is checked at each iteration with a probability of $\sim \nicefrac{m}{n}$ by some good worker (see the proof of Theorem~\ref{th:sgda-cc-qsm}). If the results are mismatched, then both the checking and checked peers are removed from training.
 
This design ensures that every such mismatch, whether it is caused by honest or Byzantine peers, eliminates at least one Byzantine peer and at most one honest peer (see details in Appendix~\ref{appendix:sgda-cc}). It's worth noting that we assume any information is accessible to Byzantines except when each of them will be checked. As such, Byzantine peers can only reduce their relative numbers, which leads us to the main result for \algname{SGDA-CC}, which is presented below.


\begin{theorem} \label{th:sgda-cc-qsm}
    Let Assumptions~\ref{ass:xi-var},~\ref{as:str_monotonicity} and~\ref{as:star_cocoercive} hold.
    Then after $T$ iterations \algname{SGDA-CC} (Algorithm~\ref{alg:sgda-cc}) with $\gamma \leq \frac{1}{2\ell}$ outputs $\x^{T}$ such that
    \begin{equation*}
        \E \norm*{\x^{T+1} - \x^*}^2 
        \leq \rbr*{1 - \frac{\gamma \mu}{2}}^{T+1}\norm*{\x^{0} - \x^*}^2 +  \frac{4 \gamma \sigma^2}{\mu(\n-2\bn-\mn)} +  \frac{2q\sigma^2\n\bn}{\mn}\rbr*{\frac{\gamma}{\mu} + \gamma^2},
    \end{equation*}
    where $q = 2 \cc^2 + 12 + \frac{12}{\n-2\bn-\mn}$; $q=\cO(1)$ since $\cc=\cO(1)$.
\end{theorem}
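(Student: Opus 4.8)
The plan is to prove a one-step contraction of the form
$\E_t\norm*{\x^{t+1}-\x^*}^2 \le \rbr*{1-\tfrac{\gamma\mu}{2}}\norm*{\x^t-\x^*}^2 + (\text{variance}) + (\text{Byzantine bias})$,
where $\E_t$ denotes the expectation over the samples drawn at round $t$, and then to unroll it with the geometric weights $\rbr*{1-\tfrac{\gamma\mu}{2}}^{-(t+1)}$, controlling the accumulated Byzantine bias through the checking mechanism. Throughout I use $\zeta=0$ (homogeneous case).

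First I would split the accepted direction into an honest signal and a corruption. With $A_t$ the active set, $\gn[t]=\abs{A_t\cap\cG}$, $\bn[t]=\abs{A_t\cap\cB}$, set $\overline{\f}^t_\cG=\frac{1}{\gn[t]}\sum_{i\in A_t\cap\cG}\f_i(\x^t,\xiv^t_i)$ and $b^t=\overline{\f}^t-\overline{\f}^t_\cG$. I would expand $\norm*{\x^{t+1}-\x^*}^2=\norm*{\x^t-\x^*}^2-2\gamma\langle\overline{\f}^t,\x^t-\x^*\rangle+\gamma^2\norm*{\overline{\f}^t}^2$ and take $\E_t$ conditioned on the history and on the (random) active/checking partition. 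Since $\overline{\f}^t_\cG$ is then conditionally unbiased for $\F(\x^t)$ with $\E_t\norm*{\overline{\f}^t_\cG-\F(\x^t)}^2\le\nicefrac{\sigma^2}{\gn[t]}$ (independence of the $\gn[t]$ honest samples), the honest inner product is handled by \eqref{eq:str_monotonicity}, the honest squared norm by \eqref{eq:star_cocoercivity} together with $\gamma\le\nicefrac{1}{2\ell}$ (so the honest gradient contribution is absorbed into the monotonicity term and only the variance $\nicefrac{\sigma^2}{\gn[t]}$ survives), and the terms involving $b^t$ by Young's inequality: $-2\gamma\langle b^t,\x^t-\x^*\rangle\le\frac{\gamma\mu}{2}\norm*{\x^t-\x^*}^2+\frac{2\gamma}{\mu}\norm*{b^t}^2$, with the remaining cross terms in $\norm*{\overline{\f}^t}^2$ absorbed likewise. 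The first of these is exactly what degrades the contraction from $1-\gamma\mu$ to $1-\nicefrac{\gamma\mu}{2}$ and collects the bias into the coefficient $\rbr*{\frac{2\gamma}{\mu}+2\gamma^2}\norm*{b^t}^2$.

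Next I would bound $\norm*{b^t}^2$ through the acceptance test. Upon acceptance, a majority of workers satisfy $\norm*{\overline{\f}^t-\f_i(\x^t,\xiv^t_i)}\le\cc\sigma$, hence (for $\cc=\cO(1)$ large enough) essentially all honest workers do; averaging over the honest active set and feeding the per-worker bound back through $b^t=\frac{1}{\abs{A_t}}\sum_{j\in A_t\cap\cB}(\f_j-\overline{\f}^t_\cG)$ yields $\norm*{b^t}\lesssim\frac{\bn[t]}{\gn[t]}\cc\sigma$ plus an $\nicefrac{\sigma}{\sqrt{\gn[t]}}$ piece from conditioning on acceptance. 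Because Byzantines are a minority, $\bn[t]\le\gn[t]$, so $\nicefrac{\bn[t]^2}{\gn[t]^2}\le\nicefrac{\bn[t]}{\gn[t]}$ and the bias is effectively \emph{linear} in $\bn[t]$: $\norm*{b^t}^2\lesssim\bn[t]\,\sigma^2\rbr*{\cc^2+\nicefrac{1}{\gn[t]}}$. Using $\gn[t]\ge\n-2\bn-\mn$ (worst-case count after removing up to $\bn$ honest workers through mismatches and $\mn$ current checkers) turns this into the constant $q=2\cc^2+12+\nicefrac{12}{(\n-2\bn-\mn)}$ and simultaneously turns the honest variance term into $\nicefrac{\sigma^2}{(\n-2\bn-\mn)}$.

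Finally I would convert the per-step bias into the stated constant via the checking budget and weighted telescoping. The combinatorial fact (as in \citep{gorbunov2022secure}) is that each mismatch removes at least one Byzantine and at most one honest worker, and each still-active Byzantine is checked by an honest worker with probability $\gtrsim\nicefrac{\mn}{\n}$ per round; since at most $\bn$ Byzantines can ever be removed, this gives $\E\sbr*{\sum_{t=0}^{T}\bn[t]}\lesssim\nicefrac{\n\bn}{\mn}$. Multiplying the recursion by $\rbr*{1-\tfrac{\gamma\mu}{2}}^{-(t+1)}$ and summing, the variance term telescopes to $\nicefrac{4\gamma\sigma^2}{(\mu(\n-2\bn-\mn))}$, while for the bias term I bound the geometric weights by $1$ and combine with the budget to get $\rbr*{\frac{2\gamma}{\mu}+2\gamma^2}q\sigma^2\,\E[\sum_t\bn[t]]\lesssim\frac{2q\sigma^2\n\bn}{\mn}\rbr*{\frac{\gamma}{\mu}+\gamma^2}$, which is exactly the claimed bound. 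I expect the main obstacle to be this budget step: it couples the random checks with the optimization trajectory, so one must argue that the $\x^t$-measurable history does not help the omniscient Byzantines evade checking, while simultaneously accounting for the fact that rejection-and-resampling conditions the honest estimators on the acceptance event and thereby perturbs their mean and variance. Making the bias bound and the catching probability hold \emph{conditionally} on the already-revealed randomness, so the expectations compose in the correct order, is the delicate part; the descent estimates of the first two paragraphs are otherwise routine.
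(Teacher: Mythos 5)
Your overall route coincides with the paper's: the same decomposition of the accepted update into the honest active average plus a corruption term, the same Young's inequality with $\lambda=\nicefrac{\gamma\mu}{2}$ (which is exactly how the paper degrades the contraction to $1-\nicefrac{\gamma\mu}{2}$ and produces the coefficient $\nicefrac{2\gamma}{\mu}+2\gamma^2$ on the corruption), the same absorption of the honest second moment via star-cocoercivity and $\gamma\le\nicefrac{1}{2\ell}$ (Lemma~\ref{lem:i-op-bound}), the same acceptance-test bound yielding the constant $q$ (Lemma~\ref{lem:check-average-err}), and the same detection-probability budget followed by geometric unrolling with the weights bounded by $1$.

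There is, however, one genuine flaw. You define $\bn[t]=\abs{A_t\cap\cB}$, the number of \emph{active} Byzantine workers, charge a corruption of order $q\sigma^2\bn[t]$ per step, and then claim $\E\sbr*{\sum_{t}\bn[t]}\lesssim\nicefrac{\n\bn}{\mn}$. With that definition the budget is false: a Byzantine worker that never deviates from the protocol never produces a mismatch (being checked is not enough — removal requires a detected mismatch, which requires an actual deviation), so it is never banned, remains active at every round, and $\sum_t\abs{A_t\cap\cB}$ grows like $\bn T$. If you instead keep the per-active-Byzantine charge and unroll without the budget, the corruption term becomes of order $\nicefrac{q\sigma^2\bn}{\mu^2}$ — a neighborhood that cannot be shrunk by decreasing $\gamma$, which defeats the very point of the theorem (convergence to arbitrary accuracy). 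The resolution, which is what the paper's Lemma~\ref{lem:check-average-err} encodes with the indicator $\indicator_t$, is that a non-deviating Byzantine submits a genuine stochastic gradient computed from its verifiable seed, so its contribution is not bias at all: it folds into the honest average and its variance. The corruption term must therefore be charged only on the event that at least one Byzantine \emph{actually deviates} at round $t$ (or on the count of deviators), and only deviations enter the budget: each deviation is caught at the next round with probability at least $\nicefrac{\mn}{\n}$ and each mismatch removes at least one Byzantine, so the expected total number of deviation rounds is at most $\nicefrac{\n\bn}{\mn}$. With $\bn[t]$ reinterpreted as the number of deviators, your bias bound, budget, and unrolling all go through and reproduce the paper's proof; the measurability concern you flag at the end is legitimate but is resolved by exactly this per-round conditional detection bound, as in the paper.
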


The above theorem (see Appendix~\ref{appendix:sgda-cc} for the proof) provides the first result that does not require large batchsizes to converge to any predefined accuracy. The first and the second terms in the convergence bound correspond to the SOTA results for \algname{SGDA} \citep{loizou2021stochastic}. Similarly to the vanilla \algname{SGDA}, the convergence can be obtained by decreasing stepsize, however, such an approach does not benefit from collaboration, since the dominating term $\frac{\gamma\sigma^2\n\bn}{\mu\mn}$ (coming from the presence of Byzantine workers) is not inversely dependent on $\n$.
Moreover, the result is even worse than for single node \algname{SGDA} in terms of dependence on $\n$. 

To overcome this issue we consider the restart technique for \algname{SGDA-CC} and propose the next algorithm called \algname{R-SGDA-CC}. 
This method consists of $r$ stages. On the $t$-th stage \algname{R-SGDA-CC} runs \algname{SGDA-CC} with $\gamma_t$ for $K_t$ iterations from the starting point $\widehat {\x}^t $, which is the output from the previous stage, and defines the obtained point as $\widehat {\x}^{t+1}$ (see details in Appendix~\ref{appendix:r-sgda-cc}). The main result for \algname{R-SGDA-CC} is given below.
\begin{theorem}\label{th:r-sgda-cc}
    Let Assumptions~\ref{ass:xi-var},~\ref{as:str_monotonicity} and~\ref{as:star_cocoercive} hold.      
    Then, after $r = \left\lceil\log_2\frac{R^2}{\varepsilon} \right\rceil - 1$ restarts  \linebreak \algname{R-SGDA-CC} (Algorithm~\ref{alg:r-sgda-cc}) with $\gamma_t = \min\left\{\frac{1}{2\ell},\sqrt{\frac{(\n-2\bn-\mn)R^2}{6\sigma^2 2^t K_t}}, \sqrt{\frac{\mn^2R^2}{72 q\sigma^2 2^t\bn^2 \n^2 }}\right\}$ and\linebreak$K_t = \left\lceil\max\left\{\frac{8 \ell}{\mu}, \frac{96\sigma^2 2^t}{(\n-2\bn-\mn)\mu^2 R^2}, \frac{34n\sigma\bn\sqrt{q 2^t}}{\mn\mu R }\right\} \right\rceil$,
    where $R \ge \|{\x}^0 - {\x}^*\|$, outputs $\widehat {\x}^r$ such that $\E \norm*{\widehat {\x}^r  - \x^*}^2 \le \varepsilon$. 
    Moreover, the total number of executed iterations of \algname{SGDA-CC} is
    \begin{equation}
        \sum\limits_{t=1}^r K_t = \cO\left(\frac{\ell}{\mu}\log\frac{\mu R_0^2}{\varepsilon} + \frac{\sigma^2 }{(\n-2\bn-\mn)\mu\varepsilon} + \frac{\n{\bn}\sigma }{\mn\sqrt{\mu\varepsilon}}\right). \label{eq:BTARD_SGD_unknown_byz_str_cvx}
    \end{equation}
\end{theorem}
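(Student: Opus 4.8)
The plan is a restart (stagewise) induction built on top of the single-run guarantee of Theorem~\ref{th:sgda-cc-qsm}. I would introduce geometrically shrinking target radii $R_t^2 := R^2/2^t$ and prove by induction on the stage index that $\E\norm*{\widehat{\x}^t - \x^*}^2 \le R_t^2$, with base case $\widehat{\x}^0 = \x^0$ and $\norm*{\x^0 - \x^*}^2 \le R^2 = R_0^2$. Since stage $t$ is nothing but a run of \algname{SGDA-CC} for $K_t$ iterations started from the (random) point $\widehat{\x}^t$, the natural move is to apply Theorem~\ref{th:sgda-cc-qsm} conditionally on $\widehat{\x}^t$ and then take total expectation; because the bound there is affine in the squared initial distance, the tower rule yields
\begin{equation*}
\E\norm*{\widehat{\x}^{t+1}-\x^*}^2 \le \rbr*{1-\tfrac{\gamma_t\mu}{2}}^{K_t}\E\norm*{\widehat{\x}^t-\x^*}^2 + \frac{4\gamma_t\sigma^2}{\mu(\n-2\bn-\mn)} + \frac{2q\sigma^2\n\bn}{\mn}\rbr*{\tfrac{\gamma_t}{\mu}+\gamma_t^2}.
\end{equation*}

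The core of the argument is then a three-way budget split: I would show that the prescribed $\gamma_t$ and $K_t$ force the contraction term and each of the two additive neighborhood terms to be at most a constant fraction (say one third) of the next target $R_{t+1}^2 = R^2/2^{t+1}$, so that the whole right-hand side is at most $R_{t+1}^2$ and the induction advances. Using $\rbr*{1-\gamma_t\mu/2}^{K_t}\le e^{-\gamma_t\mu K_t/2}$, the three entries of the $\max$ defining $K_t$ are exactly what guarantee that $\gamma_t\mu K_t$ exceeds a universal constant in each regime selected by the $\min$ defining $\gamma_t$: the entry $8\ell/\mu$ covers the case $\gamma_t=1/(2\ell)$, while the entries proportional to $2^t$ and to $\sqrt{2^t}$ cover the two stochastic regimes. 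Simultaneously, the cap $\gamma_t\le1/(2\ell)$ makes the stepsize admissible; the second $\min$ entry, coupled with its matching $K_t$ entry $96\sigma^2 2^t/((\n-2\bn-\mn)\mu^2 R^2)$, drives the variance neighborhood down to $\cO(R^2/2^t)$; and the third entry, coupled with its matching $K_t$ entry $34\n\sigma\bn\sqrt{q2^t}/(\mn\mu R)$, is tailored to control the Byzantine neighborhood $(2q\sigma^2\n\bn/\mn)(\gamma_t/\mu+\gamma_t^2)$ (here one uses $\gamma_t\le1/(2\ell)\le1/(2\mu)$, since \eqref{eq:str_monotonicity} and \eqref{eq:star_cocoercivity} force $\ell\ge\mu$). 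With the invariant in hand, after $r=\lceil\log_2(R^2/\varepsilon)\rceil-1$ stages one gets $\E\norm*{\widehat{\x}^r-\x^*}^2\le R^2/2^r\le\varepsilon$, the precise off-by-one being absorbed by the slack in the one-third splitting.

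Finally, I would read off the total iteration count by summing $K_t$ over $t=1,\dots,r$ with elementary geometric-series estimates. The $8\ell/\mu$ entry contributes $\cO\rbr*{\tfrac{\ell}{\mu}\log\tfrac{R^2}{\varepsilon}}$; the entry proportional to $2^t$ contributes, via $\sum_{t=1}^r 2^t = \cO(2^r) = \cO(R^2/\varepsilon)$, the variance term $\cO\rbr*{\tfrac{\sigma^2}{(\n-2\bn-\mn)\mu\varepsilon}}$; and the entry proportional to $\sqrt{2^t}$ contributes, via $\sum_{t=1}^r\sqrt{2^t}=\cO(\sqrt{2^r})=\cO(R/\sqrt{\varepsilon})$, the Byzantine term $\cO\rbr*{\tfrac{\n\bn\sigma}{\mn\sqrt{\mu\varepsilon}}}$, matching \eqref{eq:BTARD_SGD_unknown_byz_str_cvx}. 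It is precisely the fact that the Byzantine neighborhood is matched through the quadratic-in-$\gamma_t$ piece (so that $\gamma_t\propto1/\sqrt{2^t}$ and $K_t\propto\sqrt{2^t}$) that produces the favorable $1/\sqrt{\varepsilon}$ rate rather than a $1/\varepsilon$ one.

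I expect two steps to be the main obstacles. The first is the coupled, regime-by-regime verification that the specific $\min/\max$ formulas simultaneously deliver enough contraction and small enough noise at every stage: $\gamma_t$ itself depends on $K_t$ through the second entry, so one must check all active-term cases and chase the numerical constants until each contribution lands below its one-third budget — this is where the otherwise opaque factors ($8$, $96$, $72$, $34$) must be shown to line up. The second, more conceptual, point is the expectation bookkeeping across stages: the restart point $\widehat{\x}^t$ is random and correlated with the history, so Theorem~\ref{th:sgda-cc-qsm} must be invoked conditionally and integrated via the tower rule, and one must argue that the checks-of-computations accounting — which only ever removes workers and hence never increases the effective fractions entering $\bn,\mn$ relative to $\n$ — keeps the per-stage neighborhood constants valid uniformly over all $r$ stages.
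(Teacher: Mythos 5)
Your plan has a genuine gap at its core step: the three-way budget split cannot be closed when the per-stage tool is the last-iterate bound of Theorem~\ref{th:sgda-cc-qsm}. In that bound the Byzantine contribution $\frac{2q\sigma^2\n\bn}{\mn}\rbr*{\frac{\gamma_t}{\mu}+\gamma_t^2}$ is a true neighborhood: it does not depend on $K_t$, and it contains a piece \emph{linear} in $\gamma_t$. The prescribed stepsize only guarantees $\gamma_t\le\sqrt{\frac{\mn^2R^2}{72q\sigma^2 2^t\bn^2\n^2}}$, and plugging this into the linear piece gives
\begin{equation*}
\frac{2q\sigma^2\n\bn}{\mn}\cdot\frac{\gamma_t}{\mu}\;\le\;\frac{\sqrt{q}\,\sigma R}{3\sqrt{2}\,\mu}\cdot\frac{1}{\sqrt{2^t}},
\end{equation*}
which is $\Theta(2^{-t/2})$ while your induction target is $R^2/2^t=\Theta(2^{-t})$; the induction breaks as soon as $2^{t/2}\gtrsim \mu R/(\sqrt{q}\sigma)$, i.e.\ exactly in the small-$\varepsilon$ regime the theorem addresses. (In the regime where the second entry of the $\min$ is active and $K_t\propto 2^t$, one gets $\gamma_t\approx\frac{(\n-2\bn-\mn)\mu R^2}{24\sigma^2 2^t}$ and the linear piece becomes $\frac{q\n\bn(\n-2\bn-\mn)}{12\mn}\cdot\frac{R^2}{2^t}$ — proportional to the target but with a prefactor that is enormous for any nontrivial $\bn$, so the split still fails.) Your closing remark that the Byzantine neighborhood is ``matched through the quadratic-in-$\gamma_t$ piece'' is precisely where the argument collapses: the linear piece dominates for small $\gamma_t$, and taming it inside the last-iterate framework would force $\gamma_t\lesssim\frac{\mu\mn R^2}{q\sigma^2\n\bn 2^t}$, hence $K_t\gtrsim\frac{q\sigma^2\n\bn 2^t}{\mu^2\mn R^2}$, whose sum yields a $\cO\rbr*{\frac{q\sigma^2\n\bn}{\mu^2\mn\varepsilon}}$ Byzantine term — strictly worse than the claimed $\cO\rbr*{\frac{\n\bn\sigma}{\mn\sqrt{\mu\varepsilon}}}$ in \eqref{eq:BTARD_SGD_unknown_byz_str_cvx}.

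The paper's proof uses a structurally different per-stage guarantee, which is the idea your proposal is missing. First, Algorithm~\ref{alg:r-sgda-cc} restarts from the \emph{averaged} iterate $\widehat{\x}^{t}=\frac{1}{K_t}\sum_{k}\x^{k,t}$, so a last-iterate bound does not even describe the algorithm's output. Second, the per-stage tool is Theorem~\ref{th:sgda-cc-sum}, which bounds $\sum_{k=0}^{K_t-1}\E\sbr*{\langle\F(\x^k),\x^k-\x^*\rangle}\le \frac{2R_0^2}{\gamma_t}$; combined with \eqref{eq:str_monotonicity} and Jensen's inequality this gives $\mu\,\E\norm*{\widehat{\x}^{t}-\x^*}^2\le\frac{2R_0^2}{\gamma_t K_t}$, so per-stage halving requires only $\mu\gamma_t K_t\ge 4$ — exactly what each matched $\min$/$\max$ pair delivers (e.g.\ $\nicefrac{34}{\sqrt{72}}\approx 4$ for the Byzantine pair, which is where the constant $34$ comes from). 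The decisive difference is how Byzantines are charged: in the summed analysis their impact enters through the expected \emph{total} number of protocol violations per stage, $\E\sbr*{\sum_k\indicator_k}\le\nicefrac{\n\bn}{\mn}$, so their contribution to the averaged error is divided by $K_t$ and can be diluted by taking $K_t\propto\sqrt{q2^t}$; in the last-iterate bound no amount of extra iterations shrinks the neighborhood. This $\nicefrac{1}{K_t}$ dilution is what makes $K_t\propto\sqrt{2^t}$ sufficient and produces the $\nicefrac{1}{\sqrt{\mu\varepsilon}}$ complexity term.
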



The above result implies that \algname{R-SGDA-CC} also converges to any accuracy without large batchsizes (see Appendix~\ref{appendix:r-sgda-cc} for details). 
However, as the accuracy tends to zero, the dominant term $\frac{\sigma^2 }{(\n-2\bn-\mn)\mu\varepsilon}$ inversely depends on the number of workers. This makes \algname{R-SGDA-CC} benefit from collaboration, as the algorithm becomes more efficient with an increasing number of workers. Moreover, when $B$ and $m$ are small the derived complexity result for \algname{R-SGDA-CC} matches the one for parallel \algname{SGDA} \citep{loizou2021stochastic}, which is obtained for the case of no Byzantine workers.

Next, we present a modification of Stochastic Extragradient with Checks of Computations (\algname{SEG-CC}):
\begin{align*}
    \widetilde{\x}^{t\phantom{+1}} &= \x^t - \gamma_1\overline{\f}^t_{\xiv}, &\text{if}\;\; \overline{\f}^t_{\xiv} &= {\textstyle\frac{1}{|A_t|}\sum_{i\in A_t}} \f_i(\x^t, \xiv^t_i) \;\; \text{ is accepted},
    \\
    \x^{t+1} &= \x^t - \gamma_2\overline{\f}^t_{\etav}, &\text{if}\;\; \overline{\f}^t_{\etav} &= {\textstyle\frac{1}{|A_t|}\sum_{i\in A_t}} \f_i(\widetilde{\x}^{t}, \etav^t_i) \;\; \text{ is accepted},
\end{align*}
where $\{\f_i(\x^t, \xiv^t_i)\}_{i\in \cG}$ and  $\{\f_i(\widetilde{\x}^{t}, \etav^t_i)\}_{i\in \cG}$ are sampled independently. 
The events of acceptance $\overline{\f}^t_{\etav}\rbr*{\text{or }\overline{\f}^t_{\xiv}}$  happens if $$\norm*{\overline{\f}^t - \f_i(\x^t, \xiv^t_i)} \le \cc \sigma \;\;\rbr*{\text{or }\norm*{\overline{\f}^t_{\etav} - \f_i(\widetilde{\x}^{t}, \etav^t_i)} \le \cc \sigma}$$ holds for the majority of workers. 
An iteration of \algname{SEG-CC} actually represents two subsequent iteration of \algname{SGDA-CC}, so we refer to the beginning of the section for more details. Our main convergence results for \algname{SEG-CC} are summarized in the following theorem; see Appendix~\ref{appendix:seg-cc} for the proof.

\begin{theorem}\label{th:seg-cc-qsm}
	Let Assumptions~\ref{ass:xi-var},~\ref{as:lipschitzness} and~\ref{as:str_monotonicity} hold. Then after $T$ iterations \algname{SEG-CC} (Algorithm~\ref{alg:seg-cc}) with $\gamma_1 \leq  \frac{1}{2\mu + 2L}$ and $\beta = \nicefrac{\gamma_2}{\gamma_1}\leq \nicefrac{1}{4}$ outputs $\x^{T}$ such that
    \begin{eqnarray*}
        \E \norm*{\x^{T} - \x^*}^2 \leq \rbr*{1 - \frac{\mu\beta\gamma_1}{4}}^{T}\norm*{\x^{0} - \x^*}^2 +
        2 \sigma^2 \rbr*{\frac{4 \gamma_1}{\beta\mu^2\rbr*{\n-2\bn-\mn}}+\frac{\gamma_1q\n\bn}{\mn}},
    \end{eqnarray*}
    where $q = 2 \cc^2 + 12 + \frac{12}{\n-2\bn-\mn}$; $q=\cO(1)$ since $\cc=\cO(1)$.
\end{theorem}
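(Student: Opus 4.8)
The plan is to reduce the analysis to a standard stochastic extragradient (\algname{SEG}) recursion in which the accepted aggregate $\overline{\f}^t$ plays the role of the stochastic estimator, and to control the deviation of $\overline{\f}^t$ from $\F$ using the checks-of-computations mechanism. Since, as the text notes, one iteration of \algname{SEG-CC} is two consecutive \algname{SGDA-CC}-type steps, I would first extract from the \algname{SGDA-CC} analysis (Theorem~\ref{th:sgda-cc-qsm} and its appendix) the key estimate on the accepted aggregate: conditioned on the current iterate, $\overline{\f}^t_{\xiv}$ (and likewise $\overline{\f}^t_{\etav}$) is a nearly unbiased estimator of $\F$ at the corresponding point with a second-moment bound of the form $\E\norm{\overline{\f}^t - \F}^2 \lesssim \frac{\sigma^2}{\n - 2\bn - \mn} + \frac{q\sigma^2 \n \bn}{\mn}$. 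The two summands reflect, respectively, the averaging over the surviving good active workers and the worst-case contribution of the Byzantine workers that survive in the active set, whose per-round deviation is capped at $\cc\sigma$ by the acceptance test and whose influence is governed by the probability $\sim \mn/\n$ that a given Byzantine is verified each round; amortizing this survival analysis over the run (as in the \algname{SGDA-CC} proof and \citep{gorbunov2022secure}) produces the $\frac{q\n\bn}{\mn}$ factor.

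Next I would carry out the deterministic-style \algname{SEG} descent. Starting from $\norm{\x^{t+1}-\x^*}^2 = \norm{\x^t - \x^*}^2 - 2\gamma_2\inp{\overline{\f}^t_{\etav}}{\x^t - \x^*} + \gamma_2^2\norm{\overline{\f}^t_{\etav}}^2$ and substituting $\x^t - \x^* = (\widetilde{\x}^t - \x^*) + \gamma_1\overline{\f}^t_{\xiv}$, I would split the inner product into a term $\inp{\overline{\f}^t_{\etav}}{\widetilde{\x}^t - \x^*}$ and a cross term $\gamma_1\inp{\overline{\f}^t_{\etav}}{\overline{\f}^t_{\xiv}}$. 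Taking the conditional expectation and invoking \eqref{eq:str_monotonicity} through $\inp{\F(\widetilde{\x}^t)}{\widetilde{\x}^t - \x^*}\geq \mu\norm{\widetilde{\x}^t - \x^*}^2$ gives the contractive drift, while the cross term and $\gamma_2^2\norm{\overline{\f}^t_{\etav}}^2$ are absorbed via Young's inequality together with \eqref{eq:lipschitzness}, which lets me replace $\norm{\F(\widetilde{\x}^t)}$ by $L\norm{\widetilde{\x}^t - \x^*}$ and bound the extrapolation gap $\norm{\widetilde{\x}^t - \x^t} = \gamma_1\norm{\overline{\f}^t_{\xiv}}$. The stepsize restrictions $\gamma_1 \le \frac{1}{2\mu+2L}$ and $\beta = \gamma_2/\gamma_1 \le \frac14$ are exactly what is needed to make the negative $\mu$-term dominate the positive $L$-terms, yielding a per-step factor $1 - \frac{\mu\beta\gamma_1}{4}$ and leaving only the variance contributions from the two aggregate bounds above.

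Finally I would unroll the resulting one-step recursion $\E\norm{\x^{t+1}-\x^*}^2 \le (1-\frac{\mu\beta\gamma_1}{4})\E\norm{\x^t - \x^*}^2 + D$, where the per-iteration noise $D$ is $\cO(\gamma_1^2)$ in the stepsize and collects the $\frac{\sigma^2}{\n-2\bn-\mn}$ and $\frac{q\sigma^2\n\bn}{\mn}$ terms scaled by $\gamma_1^2$ and $\beta\gamma_1^2$ respectively. Summing the geometric series and using $\sum_{t\ge 0} (1-\frac{\mu\beta\gamma_1}{4})^t \le \frac{4}{\mu\beta\gamma_1}$ converts $D$ into the stated additive term $2\sigma^2\rbr*{\frac{4\gamma_1}{\beta\mu^2(\n-2\bn-\mn)} + \frac{\gamma_1 q\n\bn}{\mn}}$. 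The main obstacle I anticipate is not the \algname{SEG} bookkeeping but the variance/bias estimate for the accepted aggregate in the first step: one must argue carefully that, despite resampling until acceptance, the accepted $\overline{\f}^t$ stays (nearly) unbiased and that the survival analysis of the checks indeed yields the $\frac{q\n\bn}{\mn}$ Byzantine term, keeping precise track of which randomness (worker sampling, stochastic oracles, and the choice of checkers) is being averaged at each level of conditioning. Handling the interaction between the acceptance event and the independence of the good workers' samples is the delicate point; everything downstream is a by-now-standard quasi-strongly-monotone \algname{SEG} contraction.
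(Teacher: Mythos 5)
Your high-level skeleton (SEG one-step contraction, noise bounds from the checking mechanism, unrolling a geometric recursion) matches the paper's, but the way you fold the checking mechanism into the analysis contains a genuine gap. You propose to first prove a \emph{per-iteration} second-moment bound of the form $\E\norm*{\widehat{\f}^t - \F(\x^t)}^2 \lesssim \frac{\sigma^2}{\n-2\bn-\mn} + \frac{q\sigma^2\n\bn}{\mn}$, treating the factor $\nicefrac{\n\bn}{\mn}$ as an amortized per-round variance, and then to push this constant noise $D$ through the recursion and multiply it by the geometric series $\sum_{t\ge 0}(1-\nicefrac{\mu\beta\gamma_1}{4})^t \le \nicefrac{4}{\mu\beta\gamma_1}$. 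This is not how the survival analysis can be used. What the checking mechanism actually yields (Lemma~\ref{lem:check-average-err} plus the catch-probability argument) is the \emph{indicator-weighted} bound $\E_{\xiv}\norm*{\widehat{\f}^t-\overline{\f}^t}^2 \le q\sigma^2\indicator_t$, together with a bound on the \emph{total} violation budget $\sum_{t\ge 0}\E[\indicator_t] \le \nicefrac{\n\bn}{\mn}$; since Byzantines choose when to cheat, no nontrivial bound on an individual $\E[\indicator_t]$ exists (it can equal $1$ at any prescribed round). Accordingly, the paper carries the terms $q\sigma^2\indicator_t$ through the one-step recursion (Lemmas~\ref{lem:check_SEG_ind_sample_second_moment_bound} and~\ref{lem:check_SEG_ind_sample_P_k_bound}) and only \emph{after unrolling} bounds $\sum_{i}(1-\nicefrac{\mu\gamma_2}{4})^{T-i}\E[\indicator_i] \le \sum_i\E[\indicator_i] \le \nicefrac{\n\bn}{\mn}$, so the final Byzantine contribution is (per-violation cost) $\times$ (total number of violations). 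Your ordering --- total budget baked into $D$, then multiplied by the geometric-series length $\Theta(\nicefrac{1}{\mu\beta\gamma_1})$ --- over-counts by exactly that length. Concretely, the cross term $2\gamma_2\langle \overline{\f}^t-\widehat{\f}^t,\, \x^t-\x^*\rangle$ must be absorbed by Young's inequality at cost $\Theta(\nicefrac{\gamma_2}{\mu})\,q\sigma^2$ per violation (it cannot be made $O(\gamma^2)$, see below), so your $D$ would contain $\Theta(\nicefrac{\gamma_2}{\mu})\,q\sigma^2\cdot\nicefrac{\n\bn}{\mn}$, and after the geometric sum the bound would contain $\Theta(\nicefrac{q\sigma^2}{\mu^2})\cdot\nicefrac{\n\bn}{\mn}$ --- a noise floor that does \emph{not} vanish as $\gamma_1\to 0$. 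This contradicts the stated bound (whose Byzantine term is proportional to $\gamma_1$) and would destroy the headline claim that \algname{SEG-CC} converges to any predefined accuracy.

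The second, related inaccuracy is the claim that the accepted aggregate is ``nearly unbiased,'' which is what your $O(\gamma^2)$ (i.e., $\beta\gamma_1^2$-scaled) accounting of the Byzantine noise implicitly relies on. In a round where a Byzantine violates the protocol yet the aggregate passes the acceptance test, $\widehat{\f}^t - \overline{\f}^t$ is an \emph{adversarially chosen} vector of norm up to $\sqrt{q}\sigma$; these are precisely the rounds that matter, and no unbiasedness is available there, so the cross term with $\x^t-\x^*$ has no reason to vanish in expectation. The paper never needs such a claim: it decomposes $\widehat{\f}^t = \overline{\f}^t + (\widehat{\f}^t-\overline{\f}^t)$, where $\overline{\f}^t$ (the average over good workers only) is genuinely unbiased, and charges the adversarial difference per violation via Young/Cauchy--Schwarz as above. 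If you repair your argument by (i) keeping indicator-weighted error terms through the recursion rather than a per-iteration variance, and (ii) applying the budget bound $\sum_t \E[\indicator_t]\le \nicefrac{\n\bn}{\mn}$ only to the unrolled sum, it becomes essentially the paper's proof.
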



Similarly to \algname{SGDA-CC}, \algname{SEG-CC} does not require large batchsizes to converge to any predefined accuracy and does not benefit of collaboration, though the first two terms correspond to the SOTA convergence results for \algname{SEG} under bounded variance assumption \citep{juditsky2011solving}. The last term appears due to the presence of the Byzantine workers. The restart technique can also be applied; see Appendix~\ref{appendix:r-seg-cc} for the proof.

\begin{theorem}\label{th:r-seg-cc}
    Let Assumptions~\ref{ass:xi-var},~\ref{as:lipschitzness},~\ref{as:str_monotonicity} hold.   Then, after $r = \left\lceil\log_2\frac{R^2}{\varepsilon} \right\rceil - 1$ restarts  
    \algname{R-SEG-CC} (Algotithm~\ref{alg:r-seg-cc}) with 
    $\gamma_{1_t} = \min\left\{\frac{1}{2L}, \sqrt{\frac{(\gn-\bn-\mn)R^2}{16\sigma^2 2^t K_t}}, \sqrt{\frac{\mn R^2}{8q \sigma^2  2^t\bn \n }}\right\}$, $\gamma_{2_t} = \min\left\{\frac{1}{4L}, \sqrt{\frac{\mn^2R^2}{64q\sigma^2 2^t\bn^2 \n^2 }}, \sqrt{\frac{(\gn-\bn-\mn)R^2}{64\sigma^2 K_t}}\right\}$ and $K_t = \left\lceil\max\left\{\frac{8 L}{\mu}, \frac{16\n\sigma\bn\sqrt{q 2^t}}{\mn\mu R }, \frac{256\sigma^2 2^t}{(\gn-\bn-\mn)\mu^2 R^2}\right\} \right\rceil$,
    where $R \ge \|{\x}^0 - {\x}^*\|$ outputs $\widehat {\x}^r$ such that $\E \norm*{\widehat {\x}^r  - \x^*}^2 \le \varepsilon$.
    Moreover, the total number of executed iterations of \algname{SEG-CC} is
    \begin{equation}
        \sum\limits_{t=1}^rK_t = \cO\left(\frac{\ell}{\mu}\log\frac{\mu R_0^2}{\varepsilon} + \frac{\sigma^2 }{(\n-2\bn-\mn)\mu\varepsilon} + \frac{\n{\bn}\sigma }{\mn\sqrt{\mu\varepsilon}}\right). 
    \end{equation}
\end{theorem}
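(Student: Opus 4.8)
The plan is to derive Theorem~\ref{th:r-seg-cc} from the single-run guarantee of Theorem~\ref{th:seg-cc-qsm} by a stagewise restart argument, so the heart of the proof is an induction over the $r$ restarts combined with a careful choice of the stagewise parameters $\gamma_{1_t}$, $\gamma_{2_t}$, $K_t$. I would carry the induction hypothesis $\E\norm*{\widehat{\x}^t - \x^*}^2 \le R^2/2^t$, which holds at $t=0$ since $R \ge \norm*{\x^0 - \x^*}$, and show that each stage halves the expected squared distance to $\x^*$. Because the $t$-th stage is exactly a run of \algname{SEG-CC} for $K_t$ iterations started from $\widehat{\x}^t$ (with $\beta = \gamma_{2_t}/\gamma_{1_t} \le \tfrac14$ and $\gamma_{1_t} \le \tfrac{1}{2L}$ enforced by the $\min$'s), Theorem~\ref{th:seg-cc-qsm} applied on this stage gives
\[
\E\norm*{\widehat{\x}^{t+1} - \x^*}^2 \le \rbr*{1 - \tfrac{\mu\beta\gamma_{1_t}}{4}}^{K_t}\E\norm*{\widehat{\x}^t - \x^*}^2 + (\text{additive noise}).
\]
The target is to force the contraction contribution and the noise contribution each to be at most $R^2/2^{t+2}$, so their sum is at most $R^2/2^{t+1}$ and the induction closes.

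Second, I would verify that the three stated parameter choices achieve exactly this. For the contraction term I would use $\rbr*{1 - \tfrac{\mu\beta\gamma_{1_t}}{4}}^{K_t} \le \exp\!\rbr*{-\tfrac{\mu\beta\gamma_{1_t}K_t}{4}}$; demanding this be at most $\tfrac14$ forces $K_t = \Omega\!\rbr*{1/(\mu\beta\gamma_{1_t})}$, and since $\gamma_{1_t} \le \tfrac{1}{2L}$ this is what produces the entry $8L/\mu$ inside $K_t$. For the additive part I would read off from the per-iteration recursion underlying Theorem~\ref{th:seg-cc-qsm} its two constituents — the \emph{variance} piece scaling with $\sigma^2/(\n - 2\bn - \mn)$ and the \emph{Byzantine} piece scaling with $q\n\bn\sigma^2/\mn$ — and impose that each is at most $R^2/2^{t+3}$. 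Solving these two inequalities for $\gamma_{1_t}$ (keeping $K_t$ inside the variance constraint, where the noise accumulates over the stage, and without it in the Byzantine constraint) yields precisely the two square-root expressions defining $\gamma_{1_t}$; the analogous bookkeeping on the extrapolation step gives $\gamma_{2_t}$, and the two non-constant entries of $K_t$ are, up to constants, the value of $1/(\mu\beta\gamma_{1_t})$ when each of these caps is the active one.

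Third, once the induction holds, $\E\norm*{\widehat{\x}^r - \x^*}^2 \le R^2/2^r \le \varepsilon$ for the stated $r = \lceil\log_2(R^2/\varepsilon)\rceil - 1$, and it remains only to sum $\sum_{t=1}^r K_t$. I would split the sum according to which term of the $\max$ is active: the constant term contributes $\cO\!\rbr*{\tfrac{L}{\mu}\log\tfrac{R^2}{\varepsilon}}$; the term growing like $2^t$ is a geometric series dominated by its last element $t=r$, where $2^r \approx R^2/\varepsilon$, contributing $\cO\!\rbr*{\tfrac{\sigma^2}{(\n-2\bn-\mn)\mu\varepsilon}}$; and the term growing like $\sqrt{2^t}$ is a geometric series with ratio $\sqrt2$, again dominated by $t=r$, contributing $\cO\!\rbr*{\tfrac{\n\bn\sigma}{\mn\sqrt{\mu\varepsilon}}}$. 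Adding the three gives the advertised complexity (suppressing the numerical and logarithmic factors).

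The main obstacle I anticipate is the \emph{simultaneous} calibration of $\gamma_{1_t}$, $\gamma_{2_t}$, and $K_t$: the contraction requirement pushes toward large stepsize and few iterations, while the two noise requirements push toward small stepsize, and getting all three to hold with the stated constants while keeping $K_t$ as small as possible is what makes the final bound tight rather than merely finite. The delicate point is the \emph{Byzantine} noise $\propto q\n\bn\sigma^2/\mn$, which — unlike the variance term carrying the factor $1/(\n-2\bn-\mn)$ — does not average out over the honest workers and must be suppressed purely by shrinking the stepsize; tracking how this cap interacts with the $\sqrt{2^t}$ growth across stages and confirming it yields exactly the $\n\bn\sigma/(\mn\sqrt{\mu\varepsilon})$ contribution that dominates at coarse accuracy is the part requiring the most care. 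The remaining ingredients — the base case and the geometric-series summation — are routine.
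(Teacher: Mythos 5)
Your high-level skeleton (restart induction with $\E\norm*{\widehat{\x}^t-\x^*}^2\le R^2/2^t$, per-stage halving, then summing $K_t$ case by case) matches the paper's, but the key ingredient you build on is not the one the paper uses, and it cannot deliver the stated parameters or the stated complexity. The paper does \emph{not} restart the last-iterate bound of Theorem~\ref{th:seg-cc-qsm}; it restarts the bound of Theorem~\ref{th:seg-cc-sum}, namely $\sum_{k=0}^{K-1}\E\sbr*{\inp*{F(\widetilde{\x}^k)}{\widetilde{\x}^k-\x^*}}\le R^2/\gamma_2$, in which all stochastic and Byzantine errors are absorbed into the stepsize constraints rather than appearing as an additive floor. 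Since each stage of Algorithm~\ref{alg:r-seg-cc} outputs the \emph{averaged} iterate $\widehat{\x}^t=\frac{1}{K_t}\sum_k\x^{k,t}$, Jensen's inequality plus \eqref{eq:str_monotonicity} converts that sum bound into $\mu\,\E\norm*{\widehat{\x}^t-\x^*}^2\le R_{t-1}^2/(\gamma_{2_t}K_t)$, and halving needs only $\mu\gamma_{2_t}K_t\ge 2$. This is also why $K_t$ sits \emph{inside} the square roots defining $\gamma_{1_t},\gamma_{2_t}$: over a stage the variance accumulates like $\gamma^2\sigma^2 K$. In Theorem~\ref{th:seg-cc-qsm}, by contrast, the noise term is a $K$-independent floor (the geometric series is already summed), so your statement that you keep ``$K_t$ inside the variance constraint, where the noise accumulates over the stage'' is inconsistent with the theorem you start from. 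A further mismatch: Theorem~\ref{th:seg-cc-qsm} controls the last iterate, while the algorithm's stage output is the average, so it does not directly bound what \algname{R-SEG-CC} returns.

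The decisive quantitative gap is the Byzantine term. The floor in Theorem~\ref{th:seg-cc-qsm} contains the piece $\frac{4\rho^2\gamma_2}{\mu}\cdot\frac{\n\bn}{\mn}$ with $\rho^2=q\sigma^2$ (see its proof). Forcing this below $R_t^2=R^2/2^t$ caps $\gamma_{2_t}\lesssim\frac{\mu\mn R^2}{q\sigma^2\n\bn 2^t}$ (two powers of $R_t$), so the contraction requirement $K_t\gtrsim 1/(\mu\beta\gamma_{1_t})$ produces a term $K_t\gtrsim\frac{q\sigma^2\n\bn 2^t}{\mu^2\mn R^2}$ that grows like $2^t$, not $\sqrt{2^t}$. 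Summed over stages this yields a Byzantine contribution of order $\frac{q\sigma^2\n\bn}{\mu^2\mn\varepsilon}$, i.e.\ a $1/\varepsilon$ dependence, strictly worse than the claimed $\frac{\n\bn\sigma\sqrt{q}}{\mn\sqrt{\mu\varepsilon}}$ whenever $\sqrt{q}\,\sigma\gg\sqrt{\mu\varepsilon}$; in particular you cannot recover the stated entry $\frac{16\n\sigma\bn\sqrt{q2^t}}{\mn\mu R}$ of $K_t$. The paper's route avoids this because in Theorem~\ref{th:seg-cc-sum} the Byzantine damage over a whole stage is a \emph{total} (each Byzantine worker is eliminated after $\cO(\n/\mn)$ violations), compared against $R_{t-1}^2$, which caps $\gamma_{2_t}$ only as $\cO\rbr*{\frac{\mn R_{t-1}}{\sqrt{q}\sigma\n\bn}}$ --- one power of $R_{t-1}$ --- giving $K_t\propto\sqrt{2^t}$ and hence the $1/\sqrt{\mu\varepsilon}$ term. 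So, executed as described, your plan proves a genuinely weaker statement; to close the gap you must replace Theorem~\ref{th:seg-cc-qsm} by the averaged-iterate sum bound of Theorem~\ref{th:seg-cc-sum} as the per-stage engine.
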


The above result states that \algname{R-SEG-CC} also converges to any accuracy without large batchsizes; see Appendix~\ref{appendix:r-seg-cc}. But with accuracy tending to zero ($\varepsilon \rightarrow 0$) the dominating term $\frac{\sigma^2 }{(\n-2\bn-\mn)\mu\varepsilon}$ inversely depends on the number of workers, hence \algname{R-SEG-CC} benefits from collaboration. Moreover, when $B$ and $m$ are small the derived complexity result for \algname{R-SEG-CC} matches the one for parallel/mini-batched \algname{SEG} \citep{juditsky2011solving}, which is obtained for the case of no Byzantine workers.

\section{Numerical Experiments}

\paragraph{Quadratic game.} To illustrate our theoretical results, we conduct numerical experiments on a quadratic game
\begin{equation*}
    \min_{y}\max\limits_{z}\frac{1}{s}\sum\limits_{i=1}^s \frac{1}{2}y^\top \mA_{1,i} y + y^\top \mA_{2,i} z - \frac{1}{2} z^\top \mA_{3,i} z + b_{1,i}^\top y - b_{2,i}^\top z.
\end{equation*}

The above problem can be re-formulated as a special case of~\eqref{eq:op-def} with $\F$ defined as follows:
\begin{gather}\label{quadratic-op}
    \F(\x) = \frac{1}{s}\sum\limits_{i=1}^s \mA_i \x + b_i, \;\;\text{where } \x = (y^\top, z^\top)^\top, \; b_i = (b_{1,i}^\top, b_{2,i}^\top)^\top,
\end{gather}
with symmetric matrices $\mA_{j,i}$ s.t. $\mu \mI \preccurlyeq \mA_{j,i} \preccurlyeq \ell \mI$, $\mA_i \in \R^{d\times d}$ and $b_i \in \R^d$;  see Appendix~\ref{appendix:experiments} for the detailed description.


We set $\ell=100$, $\mu=0.1$, $s=1000$ and $d=50$. Only one peer checked the computations on each iteration ($\mn=1$). We used RFA (geometric median) with bucketing as an aggregator since it showed the best performance.  For approximating the median we used Weiszfeld's method with $10$ iterations and parameter $\nu=0.1$~\citep{pillutla2019robust}. \algname{RDEG}~\citep{adibi2022distributed} provably works only if $\n \geq 100$, so here we provide experiments with $\n = 150$, $\bn=20$, $\gamma=2e-5$. We set the parameter $\alpha = 0.1$ for \algname{M-SGDA-RA}, and the following parameters for \algname{RDEG}: $\alpha_{\text{\algname{RDEG}}} = 0.06, \delta_{\text{\algname{RDEG}}} = 0.9$ and theoretical value of $\epsilon$; see Appendix~\ref{appendix:experiments} for more experiments. We tested the algorithms under the following attacks: bit flipping (BF), random noise (RN), inner product manipulation (IPM)~\cite{xie2019fall} and ``a little is enough'' (ALIE)~\cite{baruch2019little}.

\begin{figure}
    \centering
    \begin{minipage}[htp]{0.24\textwidth}
        \centering
        \includegraphics[width=1\textwidth]{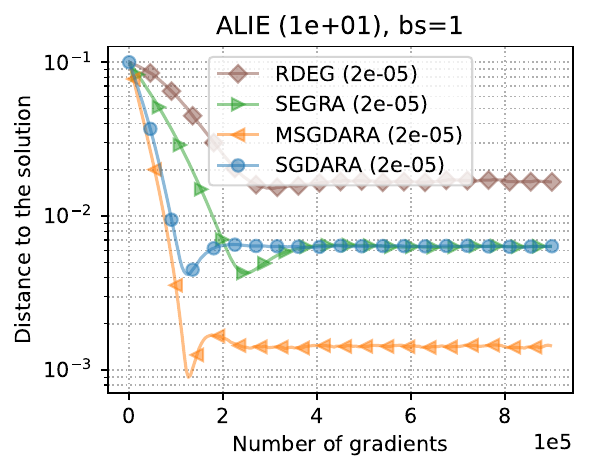}
        \vspace{-0.6cm}%
    \end{minipage}
    \centering
    \begin{minipage}[htp]{0.24\textwidth}
        \centering
        \includegraphics[width=1\textwidth]{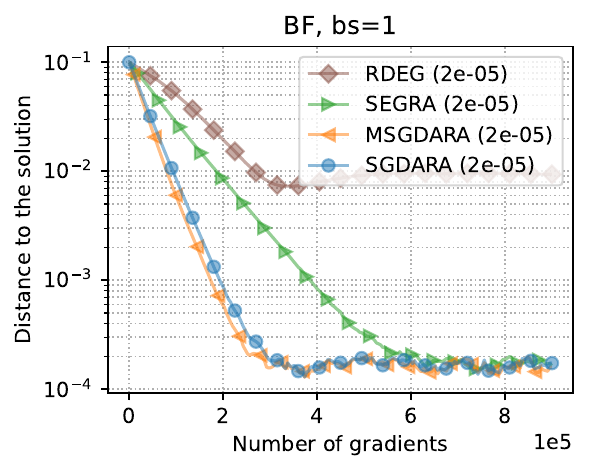}
    \end{minipage}
    \centering
    \begin{minipage}[htp]{0.24\textwidth}
        \centering
        \includegraphics[width=1\textwidth]{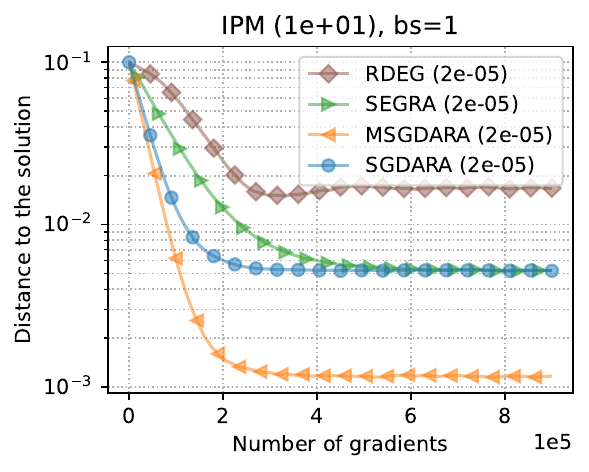}
    \end{minipage}
    \centering
    \begin{minipage}[htp]{0.24\textwidth}
        \centering
        \includegraphics[width=1\textwidth]{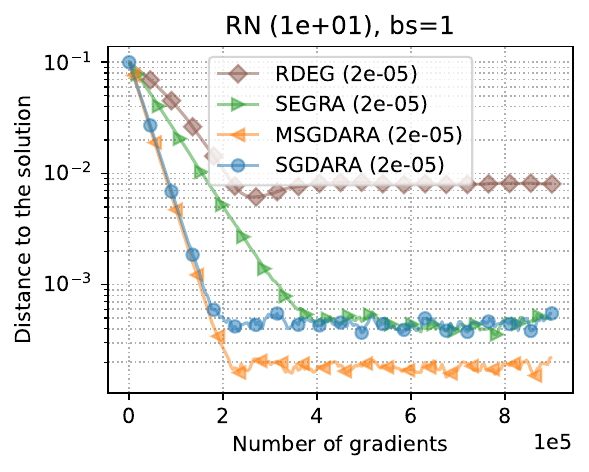}
    \end{minipage}
    \centering
    \begin{minipage}[htp]{0.24\textwidth}
        \centering
        \includegraphics[width=1\textwidth]{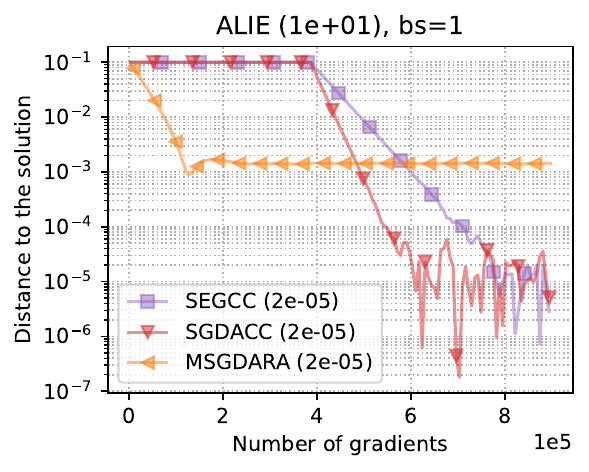}
        \vspace{-0.6cm}%
    \end{minipage}
    \centering
    \begin{minipage}[htp]{0.24\textwidth}
        \centering
        \includegraphics[width=1\textwidth]{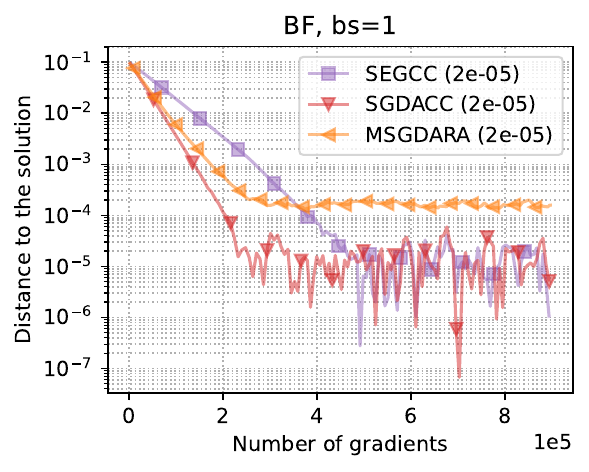}
        \vspace{-0.6cm}%
    \end{minipage}
    \centering
    \begin{minipage}[htp]{0.24\textwidth}
        \centering
        \includegraphics[width=1\textwidth]{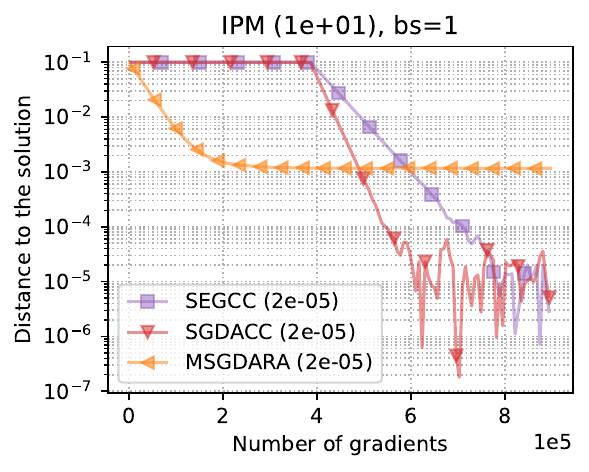}
        \vspace{-0.6cm}%
    \end{minipage}
    \centering
    \begin{minipage}[htp]{0.24\textwidth}
        \centering
        \includegraphics[width=1\textwidth]{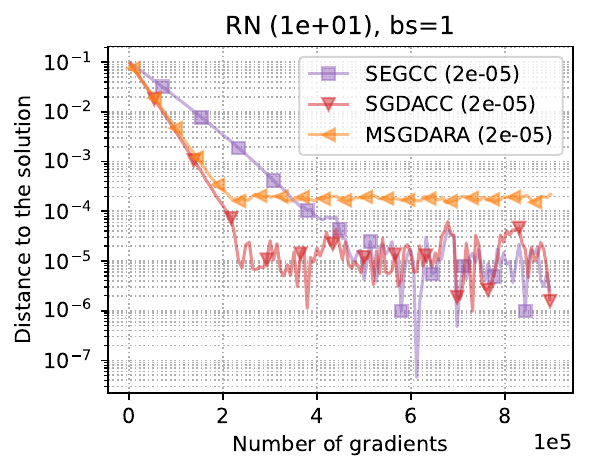}
        \vspace{-0.6cm}%
    \end{minipage}
    \vskip10pt
    \caption{Error plots for quadratic games experiments under different Byzantine attacks. The first row shows the outperformance of \algname{M-SGDA-RA} over methods without checks of computations. The second row illustrates advantages of \algname{SGDA-CC} and \algname{SEG-CC}.}
    \label{fig:qg}
\vskip10pt
    \centering
    \begin{minipage}[htp]{0.45\textwidth}
        \centering
        \includegraphics[width=1\textwidth]{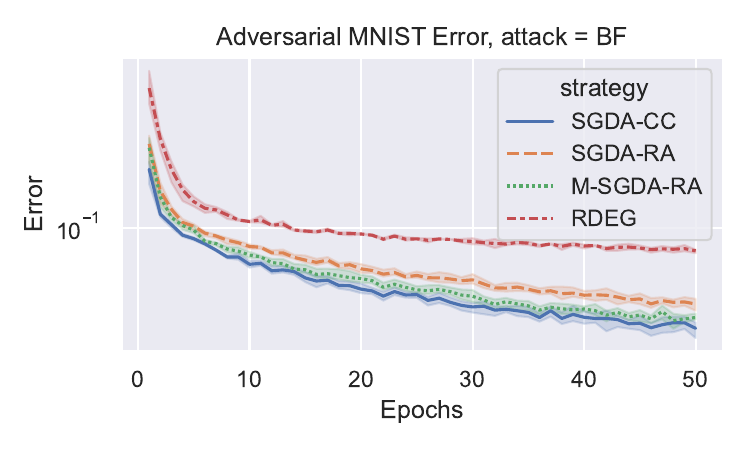}
        \vspace{-0.6cm}
    \end{minipage}
    \centering
    \begin{minipage}[htp]{0.45\textwidth}
        \centering
        \includegraphics[width=1\textwidth]{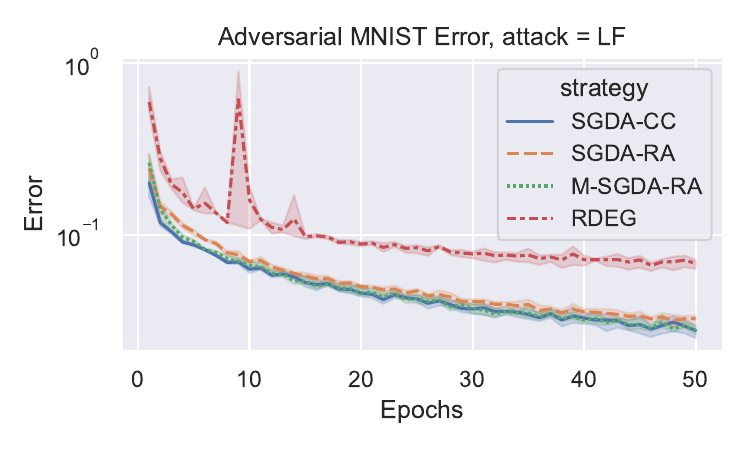}
        \vspace{-0.6cm}
    \end{minipage}
    \centering
    \begin{minipage}[htp]{0.45\textwidth}
        \centering
        \includegraphics[width=1\textwidth]{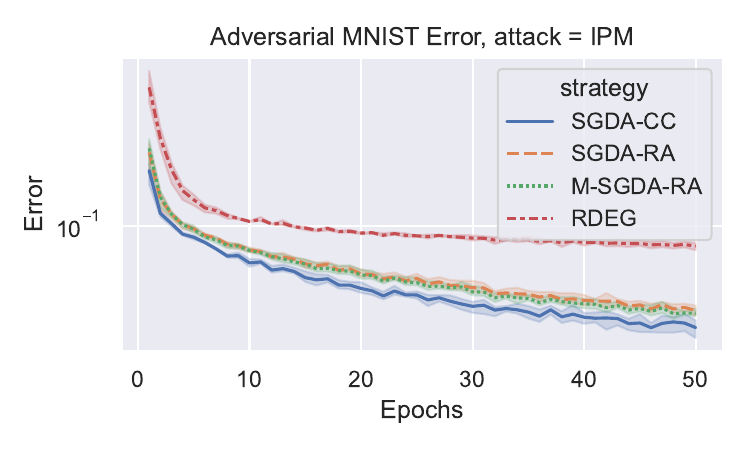}
        \vspace{-0.6cm}
    \end{minipage}
    \centering
    \begin{minipage}[htp]{0.45\textwidth}
        \centering
        \includegraphics[width=1\textwidth]{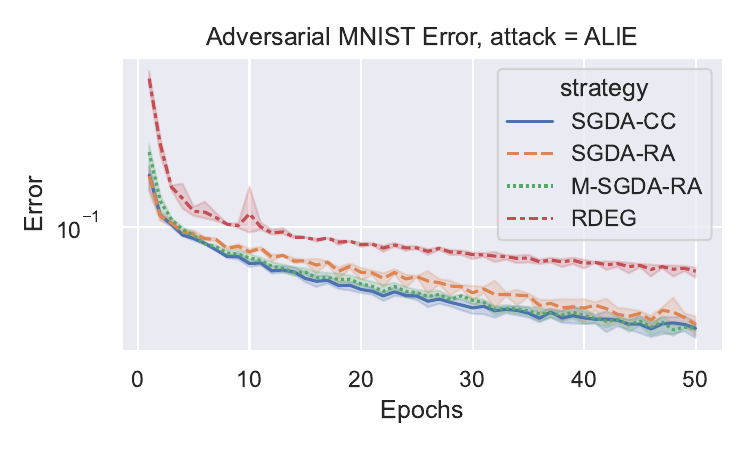}
        \vspace{-0.6cm}
    \end{minipage}
    \centering
    \caption{Error plots for the robust neural network experiment on MNIST under different byzantine attacks (BF, LF, IPM, and ALIE). Each algorithm is shown with a consistent choice of color and style across plots, as indicated in the legends.}
    \label{fig:robust-nn}
\end{figure}

\paragraph{Robust Neural Networks training.} 
Let $f(u;x,y)$ be the loss function of a neural network with parameters $u \in \R^d$ given input $x \in \R^m$ and label $y$. For example, in our experiments, we let $f$ be the cross entropy loss, and $\{(x_i,y_i)\}_1^N$ is the MNIST dataset. Now consider the following objective:
\begin{equation}
    \label{eq:robust-nn-obj}
	\min_{u \in \R^d} \max_{v \in \R^m} \frac{1}{N} \sum_{i=1}^N f(u;x_i+v,y_i) + \frac{\lambda_1}{2} \| u \|_2^2 - \frac{\lambda_2}{2} \| v \|_2^2.
\end{equation}
This min-max objective adds an extra adversarial noise variable to the input data such that it maximizes the loss, so the neural network should become robust to such noise as it minimizes the loss. We can reformulate this objective as a variational inequality with
\begin{equation}
	\x = \begin{pmatrix} u \\ v \end{pmatrix}, \quad
	F_i(\x) = \begin{pmatrix} \nabla_u f(u;x_i+v,y_i) + \lambda_1 u \\ - \nabla_v f(u;x_i+v,y_i) + \lambda_2 v \end{pmatrix}, \quad
	F(\x) = \frac{1}{N} \sum_{i=1}^N F_i(\x).
\end{equation}


We let $\n=20$, $\bn=4$, $\lambda_1=0$, and $\lambda_2=100$. We fix the learning rate to 0.01 and use a batch size of 32. We run the algorithm for 50 epochs and average our results across 3 runs. We test the algorithms under the following attacks: i) bit flipping (BF), ii) label flipping (LF), iii) inner product manipulation (IPM)~\cite{xie2019fall}, and iv) a little is enough (ALIE)~\cite{baruch2019little}. We compare our algorithm \algname{SGDA-CC} against the following algorithms: i) \algname{SGDA-RA}, ii) \algname{M-SGDA-RA}, and iii) \algname{RDEG}~\cite{adibi2022distributed}. We use RFA with bucket size 2 as the robust aggregator. The results are shown in Figure \ref{fig:robust-nn}. Specifically, we show the validation error on MNIST after each epoch. We can see that \algname{SGDA-CC} performs the best, followed closely by \algname{M-SGDA-RA}.

\section{Conclusion}

This paper proposes several new algorithms for Byzantine-robust distributed variational inequalities and provides rigorous theoretical convergence analysis for the developed methods. In particular, we propose the first methods in this setting that provably converge to any predefined accuracy in the case of homogeneous data. We believe this is an important step towards building a strong theory of Byzantine robustness in the case of distributed VIs.

However, our work has several limitations. First of all, one can consider different/more general assumptions about operators \citep{beznosikov2022stochastic, gorbunov2022stochastic, gorbunov2023convergence} in the analysis of the proposed methods. Next, as we mention in the discussion after Theorem~\ref{th:m-sgda-ra-arg}, our result for \algname{M-SGDA-RA} requires large batchsizes, and it remains unclear to us whether this requirement can be removed. Finally, the only results that do not require large batchsizes are derived using the checks of computations that create (although small) computation overhead. Obtaining similar results without checks of computations remains an open problem. Addressing these limitations is a prominent direction for future research.


\begin{ack}
The work of N.~Tupitsa was supported by a grant for research centers in the field of artificial intelligence, provided by the Analytical Center for the Government of the Russian Federation in accordance with the subsidy agreement (agreement identifier 000000D730321P5Q0002) and the agreement with the Moscow Institute of Physics and Technology dated November 1, 2021 No. 70-2021-00138.
\end{ack}

\bibliography{refs}


\clearpage

\appendix
\tableofcontents
\clearpage

\section{Examples of $(\delta,c)$-Robust Aggregation Rules}
\label{appendix:examples_of_robust_aggregation_rules}

This section is about how to construct an aggregator satisfying~\ref{definition:robust-agg}.
\subsection{Aggregators}
This subsection examines various aggregators that lack robustness. It means that new attacks can be easily designed to exploit the aggregation scheme, causing its failure. We analyze three commonly employed defenses that are representative.

\textbf{Krum.} For $i\neq j$, let $i\rightarrow j$ denote that~$\x_j$ belongs to the $\n-q-2$ closest vectors to $\x_i$. Then,
\[
    \krum(\x_1, \ldots, \x_\n) := \argmin_i \sum_{i\rightarrow j}\|\x_i-\x_j\|^2 \,.
\]
Krum is computationally expensive, requiring $\cO(\n^2)$ work by the server~\cite{blanchard2017machine}.
\textbf{CM.} Coordinate-wise median computes for the $k$-th coordinate:
\[
    [\cm(\x_1, \ldots, \x_\n)]_k := \text{median}([\x_1]_k, \dots, [\x_\n]_k) = \argmin_{i} \sum_{j=1}^\n | [\x_i]_k -[\x_j]_k| \,.\vspace{-3mm}
\]
Coordinate-wise median is fast to implement requiring only $\cO(\n)$ time \cite{chen2017distributed}.

\textbf{RFA.} Robust federated averaging (\rfa) computes the geometric median
\[
    \rfa(\x_1, \ldots, \x_\n) := \argmin_{\vv} \sum_{i=1}^\n \|\vv-\x_i\|_2 \,.
\]
Although there is no closed form solution for the geometric median, an approximation technique presented by \cite{pillutla2019robust} involves performing several iterations of the smoothed Weiszfeld algorithm, with each iteration requiring a computation of complexity $\cO(\n)$.

\subsection{Bucketing algorithm}

We use the process of \textit{$s$-bucketing}, propose by~\citep{yang2021basgd,karimireddy2021byzantine} to randomly divide $\n$ inputs, $\x_1$ to $\x_\n$, into $\lceil \n/s\rceil$ buckets, each containing no more than $s$ elements. After averaging the contents of each bucket to create ${\y_1, \dots, \y_{\lceil \n/s\rceil}}$, we input them into the aggregator $\aggr$. The Bucketing Algorithm outlines the procedure. Our approach's main feature is that the resulting set of averaged ${\y_1, \dots, \y_{\lceil \n/s\rceil}}$ are more homogeneous (with lower variance) than the original inputs.

\begin{algorithm}[H]
    \caption*{\textbf{Algorithm} Bucketing Algorithm}
    \begin{algorithmic}[1]
        \State \textbf{input} $\{\x_1, \dots, \x_\n\}$, $s\in\N$, aggregation rule \aggr
        \State pick random permutation $\pi$ of $[\n]$
        \State compute $\y_i \leftarrow \frac{1}{s} \sum_{k = (i-1)\cdot s + 1}^{\min(\n\,,\, i\cdot s)} \x_{\pi(k)}$  for $i =\{1,\ldots, {\lceil \n/s \rceil}\}$
        \State \textbf{output} $\widehat\x \leftarrow \aggr(\y_1, \dots, \y_{\lceil \n/s \rceil})$  \hfill // aggregate after bucketing
    \end{algorithmic}
\end{algorithm}

\subsection{Robust Aggregation examples}
Next we recall the result from~\citep{karimireddy2021byzantine}, that shows that aggregators which we saw, can be made to satisfy~\ref{definition:robust-agg} by combining with bucketing.
\begin{theorem}\label{thm:bucketing}
    Suppose we are given $\n$ inputs $\{\x_1, \dots, \x_\n\}$ such that  $\E\norm*{\x_i - \x_j}^2 \leq \rho^2$ for any fixed pair $i,j \in \cG$ and some $\rho \geq 0$ for some $\delta \le \delta_{\max}$, with $\delta_{\max}$ to be defined. Then, running Bucketing Algorithm with $s = \lfloor \frac{\delta_{\max}}{\delta} \rfloor$ yields the following:
    \begin{itemize}
        \item Krum: $\E \norm*{\krum \circ \resample (\x_1, \dots, \x_\n) - \bar\x}^2 \leq \cO( \delta \rho^2)$ with $\delta_{\max} < \frac{1}{4}$.
        \item Geometric median: $\E \norm*{\rfa \circ \resample (\x_1, \dots, \x_\n) - \bar\x}^2 \leq \cO( \delta \rho^2)$ with $\delta_{\max} < \frac{1}{2}$.
        \item Coordinate-wise median: $\E \norm*{\cm \circ \resample (\x_1, \dots, \x_\n) - \bar\x}^2 \leq \cO(d \delta \rho^2)$ with $\delta_{\max} < \frac{1}{2}$\,.
    \end{itemize}
\end{theorem}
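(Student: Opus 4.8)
The plan is to isolate the two effects that bucketing has on the inputs---shrinking the effective fraction of corrupted points and contracting the variance among the clean ones---and then to invoke the breakdown-point guarantees already known for \krum, \rfa, and \cm applied to the bucketed vectors. The whole argument hinges on a single cancellation: bucketing trades a factor $s$ in the corruption fraction for a factor $1/s$ in the variance, so the product $\tilde\delta\,\tilde\rho^2$ that governs each aggregator's error stays at the level $\delta\rho^2$. For the effective fraction, after drawing the random permutation $\pi$ and forming the $\lceil\n/s\rceil$ averages $\y_i$, I would call a bucket \emph{good} if all of its members lie in $\cG$; since each Byzantine index lands in exactly one bucket, the number of non-good buckets is at most $B=\delta\n$, hence $\tilde\delta \le B/\lceil\n/s\rceil \le \delta s$. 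The prescribed choice $s=\lfloor\delta_{\max}/\delta\rfloor$ then forces $\tilde\delta\le\delta_{\max}$, placing the bucketed problem inside the operating regime of each base rule.

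The variance contraction is the heart of the lemma. Writing $e_i=\x_i-\bar\x$ with $\bar\x=\tfrac1G\sum_{i\in\cG}\x_i$, a good bucket satisfies $\y_i-\bar\x=\tfrac1s\sum_k e_{i_k}$, an average of $s$ distinct good residuals. Taking expectation over $\pi$ (sampling without replacement), using $\sum_{i\in\cG}e_i=0$ so that the cross terms contribute negatively, and invoking the pairwise bound $\E\norm*{\x_i-\x_j}^2\le\rho^2$, one obtains $\E_\pi\norm*{\y_i-\bar\x}^2\le\rho^2/(2s)$. Thus the good buckets meet Definition~\ref{definition:robust-agg} with $\tilde\rho^2=\cO(\rho^2/s)$, and the same computation gives $\E_\pi[\bar\y]=\bar\x$ for the good-bucket mean $\bar\y$ with deviation controlled at the same order.

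On the vectors $\{\y_i\}$, which now satisfy Definition~\ref{definition:robust-agg} with parameters $(\tilde\delta,\tilde\rho)$, the standard per-rule analyses give $\E\norm*{\aggr(\y)-\bar\y}^2\le c\,\tilde\delta\,\tilde\rho^2$ for \krum and \rfa (with $c$ a rule-specific constant), and an extra dimension factor $\cO(d\,\tilde\delta\,\tilde\rho^2)$ for \cm; these hold precisely below the thresholds $\delta_{\max}<\tfrac14$ for \krum and $\delta_{\max}<\tfrac12$ for the two medians. Substituting $\tilde\delta\le\delta s$ and $\tilde\rho^2=\cO(\rho^2/s)$ makes $s$ cancel and leaves $\cO(\delta\rho^2)$, respectively $\cO(d\,\delta\rho^2)$. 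A final split $\E\norm*{\aggr(\y)-\bar\x}^2\le 2\E\norm*{\aggr(\y)-\bar\y}^2+2\E\norm*{\bar\y-\bar\x}^2$ converts this into the stated bound, the second term being of the same order by the previous paragraph.

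The delicate point I anticipate is the variance contraction rather than the aggregator bounds. Because the Byzantines are omniscient, one must argue using only the randomness of $\pi$ that the good buckets genuinely behave like $s$-fold averages of clean residuals and that the $1/s$ gain survives both the sampling-without-replacement dependence and the fact that the identity of the good-bucket set is itself random; making Definition~\ref{definition:robust-agg} applicable to a $\pi$-dependent good set is the subtle ingredient and will likely require a conditioning argument. Establishing the two different breakdown thresholds ($\tfrac14$ for \krum versus $\tfrac12$ for \rfa and \cm) is a secondary matter handled by the geometric arguments specific to each rule.
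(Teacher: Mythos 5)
You should first note that the paper you are working from does not prove Theorem~\ref{thm:bucketing} at all: it is recalled verbatim from \citep{karimireddy2021byzantine}, so your attempt has to be measured against the proof in that reference. The parts of your plan that concern bucketing itself are correct and coincide with Lemma~1 there: each Byzantine index contaminates at most one bucket, so the contaminated fraction of buckets is $\tilde\delta \le s\delta \le \delta_{\max}$; and for a bucket consisting only of good indices, sampling without replacement together with $\sum_{i\in\cG}(\x_i-\bar\x)=0$ gives $\E_\pi\norm*{\y_i-\bar\x}^2\le\nicefrac{\rho^2}{2s}$, with the $\pi$-dependence of the good-bucket set handled by exactly the conditioning argument you anticipate.

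The genuine gap is your per-rule step. You invoke, as a ``standard per-rule analysis,'' the bound $\E\norm*{\aggr(\y_1,\dots,\y_{\lceil n/s\rceil})-\bar\y}^2\le c\,\tilde\delta\,\tilde\rho^2$ for \krum and \rfa (and its $d$-scaled version for \cm) on \emph{any} input family with corruption fraction below the breakdown point. No such bound exists: an error proportional to $\tilde\delta\,\tilde\rho^2$ is precisely the $(\delta,c)$-robustness property of Definition~\ref{definition:robust-agg}, and the very paper this theorem comes from shows by explicit counterexamples (good points split into two clusters at mutual distance of order $\rho$, with Byzantines reinforcing the minority cluster) that plain \krum, \rfa, and \cm incur error $\Omega(\rho^2)$ no matter how small $\delta$ is. If your per-rule lemma were true, these rules would already be $(\delta,c)$-robust without bucketing and the theorem would be vacuous. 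What the rule-specific geometric arguments actually deliver, for corruption $\tilde\delta\le\delta_{\max}$ strictly below breakdown, is $\E\norm*{\aggr(\y)-\bar\y}^2\le C(\delta_{\max})\,\tilde\rho^2$ (resp.\ $C(\delta_{\max})\,d\,\tilde\rho^2$ for \cm), with \emph{no} factor $\tilde\delta$. The factor $\delta$ in the final statement therefore comes entirely from the variance contraction: $\tilde\rho^2\le\nicefrac{\rho^2}{2s}\le\nicefrac{\delta\rho^2}{\delta_{\max}}$, using $s=\lfloor\nicefrac{\delta_{\max}}{\delta}\rfloor\ge\nicefrac{\delta_{\max}}{2\delta}$, while the inflation $\tilde\delta\le s\delta$ is a pure cost that must merely stay below breakdown --- it is what caps how large $s$ may be taken. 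So the ``single cancellation'' of $s$ between $\tilde\delta$ and $\tilde\rho^2$, which you present as the hinge of the whole argument, is not the operative mechanism, and the step encoding it rests on a false statement. The proof is repairable: replace the false lemma by the correct $\cO(\tilde\rho^2)$ bounds (these are exactly the rule-specific arguments you defer as ``secondary''), after which your assembly via $\tilde\delta\le\delta_{\max}$, the variance contraction, and the final triangle inequality goes through.
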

Note that all these methods satisfy the notion of an \emph{agnostic} Byzantine robust aggregator (Definition \ref{definition:robust-agg}). 

\eduard{Here we need to describe the examples}

\clearpage

\section{Further Details on \algname{RDEG}}\label{appendix:details_rdeg}

\eduard{Let us formulate the method from \citep{adibi2022distributed} here, formulate the result about the robustness of univariate trimmed-mean, and point to the limitations (small $\pi$, too large $n$).}

Originally \algname{RDEG} was proposed for min-max problems and represents a variation of \algname{SEG} with Univariate Trimmed-Mean Estimator aggregation rule. For convenience we give here \algname{RDEG} pseudo-code we used in experiments.

In this section we use the notation $\pi \in (0,1)$ for a confidence level.

\begin{algorithm}[H]
    \caption*{\textbf{Robust Distributed Extra-Gradient}  (\algname{RDEG})} 
    \renewcommand{\algorithmiccomment}[1]{#1}
    \algblockdefx[NAME]{ParallelFor}{ParallelForEnd}[1]{\textbf{for} #1 \textbf{in parallel}}{}
    \algnotext{ParallelForEnd}
    \algblockdefx[NAME]{NoEndFor}{NoEndForEnd}[1]{\textbf{for} #1 \textbf{do}}{}
    \algnotext{NoEndForEnd}
    \begin{algorithmic}[1] 
        \Require \trim, $\gamma$
        \NoEndFor{$t=1,...$}
        \ParallelFor{worker $i\in [\n]$}
            \State $\f^t_{\xiv_i} \leftarrow \f_i(\x^t, \xiv_i)$ 
            \State \textbf{send} $\f^t_{\xiv_i}$ if $i \in \cG$, else \textbf{send} $*$ if Byzantine
        \ParallelForEnd
        \State $\widehat{\f}_{\xiv^t}\rbr*{\x^t}$ = \trim($\f^t_{\xiv_1}, \dots, \f^t_{\xiv_\n}$) 
        \State $\widetilde{\x}^{t} \leftarrow \x^t - \gamma_1\widehat{\f}_{\xiv^t}\rbr*{\x^t}$. 
        \ParallelFor{worker $i\in [\n]$}
            \State $\f^t_{\etav_i} \leftarrow \f_i(\widetilde{\x}^t, \etav_i)$
            \State \textbf{send} $\f^t_{\etav_i}$ if $i \in \cG$, else \textbf{send} $*$ if Byzantine
        \ParallelForEnd
        \State $\widehat{\f}_{\etav^t}\rbr*{\widetilde{\x}^t}$ = \trim($\f^t_{\etav_1}, \dots, \f^t_{\etav_\n}$) 
        \State $\x^{t+1} \leftarrow \x^t - \gamma_2\widehat{\f}_{\etav^t}\rbr*{\widetilde{\x}^t}$. 
        \NoEndForEnd
    \end{algorithmic}
\end{algorithm}

\paragraph{Performance of Univariate Trimmed-Mean Estimator.}
The \textsc{Trim} operator takes as input $n$ vectors, and applies coordinatewisely the univariate trimmed mean estimator from \cite{lugosi2021robust}, described bellow here as Univariate Trimmed-Mean Estimator Algorithm. 
\begin{algorithm}
    \caption*{\textbf{Univariate Trimmed-Mean Estimator Algorithm}~\cite{lugosi2021robust}}
    \begin{algorithmic}[1]
        \Require Corrupted data set $Z_1, \ldots, Z_{n/2}$, $\widetilde{Z}_1, \ldots \widetilde{Z}_{n/2}$, corruption fraction $\delta$, and confidence level $\pi$. 
        \State Set $\epsilon = 8 \delta + 24\frac{\log(4/\pi)}{n}$.
        \State Let $Z^*_1 \leq Z^*_2 \leq \cdots \leq  Z^*_{n/2}$ represent a non-decreasing arrangement of $\{Z_i\}_{i\in [n/2]}$. Compute quantiles: $\gamma = Z^*_{\epsilon n/2}$ and $\beta=Z^*_{(1-\epsilon) n/2}$.  
        \State Compute robust mean estimate $\widehat{\mu}_Z$ as follows:
        \begin{equation*}
           \widehat{\mu}_Z = \frac{2}{n}\sum_{i = 1}^{n/2} \phi_{\gamma, \beta}(\widetilde{Z}_i);  
        \phi_{\gamma, \beta}(x) = \begin{cases}
        \beta & x>\beta\\
        x & x \in [\gamma, \beta] \\ 
        \gamma & x < \gamma
        \end{cases} 
        \end{equation*}
    \end{algorithmic}
\end{algorithm}

The following result on the performance of Univariate Trimmed-Mean Estimator plays a key role in the analysis of \textsc{RDEG}. 

\begin{theorem*} \citep[Theorem 1]{adibi2022distributed} 
\label{thm:lugosi}
Consider the trimmed mean estimator. Suppose $\delta \in [0,1/16)$, and let $\pi \in (0,1)$ be such that $\pi \geq 4 e^{-n/2}$. Then, there exists an universal constant $c$, such that with probability at least $1-\pi$,
$$ |\widehat{\mu}_Z - \mu_Z| \leq c \sigma_Z\left(\sqrt{\delta}+\sqrt{\frac{\log(1/\pi)}{n}} \right). $$
\end{theorem*}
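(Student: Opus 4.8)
The plan is to reproduce the sample-splitting argument of \cite{lugosi2021robust}: the two halves of the data make the truncation thresholds $\gamma,\beta$ (computed from $Z_1,\dots,Z_{n/2}$) \emph{independent} of the points $\widetilde Z_1,\dots,\widetilde Z_{n/2}$ that are actually averaged, which is what lets us treat $\gamma,\beta$ as fixed when analyzing the truncated average. I would condition on a ``good thresholds'' event and decompose
\[
|\widehat\mu_Z - \mu_Z| \le \underbrace{\left|\widehat\mu_Z - \E[\phi_{\gamma,\beta}(\widetilde Z)\mid \gamma,\beta]\right|}_{\text{fluctuation}+\text{corruption}} + \underbrace{\left|\E[\phi_{\gamma,\beta}(\widetilde Z)\mid \gamma,\beta] - \mu_Z\right|}_{\text{truncation bias}},
\]
bounding each piece on the good event and folding the probability of that event into the final $1-\pi$ by a union bound.

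First I would establish quantile control, which is the heart of the proof. Let $F$ be the CDF of the clean variable $Z$ and write $Q_\alpha$ for its $\alpha$-quantile. Since at most $\delta n/2 \le (\epsilon/8)\,n/2$ of the first-half points are corrupted, moving or deleting them can shift any empirical order statistic by at most $\delta n/2$ positions; combining this with a binomial (or DKW) tail bound on the clean empirical CDF evaluated near levels $\epsilon/2$ and $1-\epsilon/2$, I would show that with probability at least $1-\pi/2$ the thresholds obey $\gamma \in [Q_{c_1\epsilon}, Q_{c_2\epsilon}]$ and $\beta \in [Q_{1-c_2\epsilon}, Q_{1-c_1\epsilon}]$ for absolute constants $0<c_1<c_2$. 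The slack term $24\log(4/\pi)/n$ inside $\epsilon$ is precisely what renders the binomial deviation negligible relative to the trimming level.

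Given good thresholds I would bound the three remaining contributions. For the truncation bias, since $\phi_{\gamma,\beta}$ is a clipping map and $\gamma\le\mu_Z\le\beta$, one has $|\phi_{\gamma,\beta}(Z)-Z|\le |Z-\mu_Z|\,\indicator\{Z\notin[\gamma,\beta]\}$, so Cauchy--Schwarz gives a bias $\le \sigma_Z\sqrt{\Pr(Z\notin[\gamma,\beta])}\lesssim \sigma_Z\sqrt{\epsilon}$ via the quantile bounds. For the stochastic fluctuation, the clean truncated summands lie in $[\gamma,\beta]$ and, because clipping is $1$-Lipschitz, have variance at most $\sigma_Z^2$; Bernstein's inequality then yields a deviation $\lesssim \sigma_Z\sqrt{\log(1/\pi)/n} + (\beta-\gamma)\log(1/\pi)/n$, and since Chebyshev bounds the interquantile range by $\beta-\gamma\lesssim\sigma_Z/\sqrt\epsilon$ together with $\epsilon\gtrsim\log(1/\pi)/n$, the range term is also $\lesssim \sigma_Z\sqrt{\log(1/\pi)/n}$. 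Finally, the at most $\delta n/2$ corrupted second-half points each move the average by at most $(\beta-\gamma)\cdot 2/n$, for a total of $\le \delta(\beta-\gamma)\lesssim \delta\sigma_Z/\sqrt\epsilon\lesssim \sigma_Z\sqrt\delta$ using $\epsilon\ge 8\delta$. Summing the three and using $\sqrt\epsilon\lesssim\sqrt\delta+\sqrt{\log(1/\pi)/n}$ produces the claimed rate after the union bound.

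The main obstacle is the quantile-control step: one must simultaneously absorb adversarial corruption, which can arbitrarily reposition $\delta n/2$ order statistics, and the statistical fluctuation of the clean empirical quantiles, and then verify that the specific choice $\epsilon = 8\delta + 24\log(4/\pi)/n$ keeps both thresholds strictly inside the $O(\epsilon)$- and $(1-O(\epsilon))$-quantile bands. Everything downstream is careful bias/variance bookkeeping; the delicate part is matching the two terms of $\epsilon$ to the two terms $\sqrt\delta$ and $\sqrt{\log(1/\pi)/n}$ of the final bound and checking that the Bernstein range term never dominates.
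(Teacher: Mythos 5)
You should first note a mismatch of premise: the paper never proves this statement. It is imported verbatim as \citep[Theorem 1]{adibi2022distributed} (which itself defers to \citep{lugosi2021robust}) in the appendix on \algname{RDEG}, where it is only \emph{used} to discuss the limitations of the trimmed-mean aggregator relative to Definition~\ref{definition:robust-agg}. So there is no in-paper argument to compare against; your proposal has to be judged as a reconstruction of the Lugosi--Mendelson proof, and on that score it is essentially correct: sample splitting to decouple the thresholds $(\gamma,\beta)$ from the averaged half, quantile control of the thresholds against both adversarial displacement of order statistics and binomial fluctuation, a Cauchy--Schwarz/Chebyshev bound on the truncation bias of order $\sigma_Z\sqrt{\epsilon}$, Bernstein for the clipped average with the range term killed by $\epsilon\gtrsim\log(1/\pi)/n$, and a contamination term $\lesssim\delta(\beta-\gamma)\lesssim\sigma_Z\sqrt{\delta}$ is exactly the structure of the original proof, and your matching of the two summands of $\epsilon=8\delta+24\log(4/\pi)/n$ to the two terms $\sqrt{\delta}$ and $\sqrt{\log(1/\pi)/n}$ is the right bookkeeping.

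Two details need repair before this is rigorous. First, your corruption accounting assumes at most $\delta n/2$ corrupted points per half, but the adversary controls $\delta n$ of the $n$ values and may place \emph{all} of them in one half, so each half can be corrupted at fraction $2\delta$; the argument survives because $\epsilon\ge 8\delta$ still dominates $2\delta$ with slack, but the constants in your quantile-shift step and in the $\delta(\beta-\gamma)$ term must be restated accordingly. Second, the pointwise inequality $\abs{\phi_{\gamma,\beta}(Z)-Z}\le\abs{Z-\mu_Z}\indicator\{Z\notin[\gamma,\beta]\}$ relies on $\gamma\le\mu_Z\le\beta$, which is not automatic: for sufficiently skewed distributions the mean can lie outside the $[\epsilon,1-\epsilon]$ interquantile interval even on your good event. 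The fix is standard and stays within your toolkit --- bound the bias by $\E[(\gamma-Z)_+]+\E[(Z-\beta)_+]$ and control each piece using the two-sided quantile bounds together with Chebyshev (e.g.\ $(\gamma-\mu_Z)_+\Prob\{Z<\gamma\}\lesssim\sigma_Z\sqrt{\epsilon}$) --- but as written that step is a genuine gap rather than a constant-factor issue.
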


Using the latter componentwise result the authors states that
\begin{equation*}
    \Vert \widehat{\f}_{\xiv^t}\rbr*{\x^t} - \F(\x^t) \Vert \leq  c\sigma \left(\sqrt{\delta} + \sqrt{\frac{\log(1/\pi)}{n}} \right).
\end{equation*}

In fact this  result is very similar to the Definition~\ref{definition:robust-agg}. The main difference is that for Univariate Trimmed-Mean Estimator we have a bound with some probability. The other difference that using the following representation of the result with $\rho^2 = c^2\sigma^2$
\begin{equation*}
    \Vert \widehat{\f}_{\xiv^t}\rbr*{\x^t} - \F(\x^t) \Vert^2 \leq  \delta \rho^2  + \frac{\rho^2\log(1/\pi)}{n}, \quad \text{w.p. $1-\pi$}
\end{equation*}
Univariate Trimmed-Mean Estimator has the additional term inversely depending on the number of workers.

Moreover, the result requires $\delta \in [0,1/16)$ in contrast to  the aggregators we used, that work for wider range of corruption level $\delta \in [0,1/5]$.

\newstuff{
\paragraph{Performance guarantees for \algname{RDEG}.}
The authors of~\citep{adibi2022distributed} consider only homogeneous case.
\begin{theorem*}\citep[Theorem 3]{adibi2022distributed}
\label{thm:SC} Suppose Assumptions \ref{as:lipschitzness} and \ref{as:str_monotonicity} hold in conjunction with  the assumptions on $\delta$ and $n$: the fraction $\delta$ of corrupted devices satisfies $\delta \in [0,1/16)$, and the number of agents $n$ is sufficiently large: $n \geq 48 \log(16dT^2)$. Then, with $\pi=1/(4dT^2)$ and step-size $\eta \leq 1/(4L)$, \algname{RDEG} guarantees the following with probability at least $1-1/T$:
\begin{equation}
     \| \x^{*}-\x^{T+1} \|^2\leq  2e^{-\frac{T}{4\kappa}}R^2 +\frac{8c\sigma R \kappa}{L}  \left(\sqrt{\delta}+\sqrt{\frac{\log(4dT^2)}{n}}\right),
\label{eqn:SC_final_bnd}
\end{equation}
where $\kappa=\mu/L$. 
\end{theorem*}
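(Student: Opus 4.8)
The plan is to decouple the argument into a probabilistic part, controlling the quality of the \trim aggregate along the whole trajectory, and a deterministic part, which is a perturbed extragradient recursion under \eqref{eq:lipschitzness} and \eqref{eq:str_monotonicity}. Write $\widehat{\F}(\x^t) := \trim(\f^t_{\xiv_1},\dots,\f^t_{\xiv_\n})$ and abbreviate the aggregation error by $\Delta := c\sigma\rbr*{\sqrt{\delta} + \sqrt{\log(1/\pi)/\n}}$, so that the cited componentwise guarantee gives $\norm*{\widehat{\F}(\x) - \F(\x)} \le \Delta$ at any \emph{fixed} $\x$ with probability at least $1 - d\pi$ (the vector bound carries no $\sqrt d$ factor because the per-coordinate variances sum to at most $\sigma^2$; the dimension enters only logarithmically through $\pi$).

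First I would set up the good event. The homogeneous assumption $\zeta=0$ is essential: it forces every honest report $\f_i(\x^t,\xiv_i^t)$, $i\in\cG$, to be an unbiased estimate of the \emph{same} vector $\F(\x^t)$ with per-coordinate variance at most $\sigma^2$, so the univariate trimmed-mean theorem applies coordinatewise. Since $\x^t$ and $\widetilde{\x}^t$ are random and depend on past aggregation noise, I would invoke the bound \emph{conditionally}: given the filtration generated up to iteration $t$, the points $\x^t,\widetilde{\x}^t$ are fixed while the fresh samples $\{\xiv_i^t\},\{\etav_i^t\}$ are independent of them, so each aggregation satisfies the bound conditionally with probability $\ge 1 - d\pi$. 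A union bound over all $2(T+1)$ aggregations and $d$ coordinates, with $\pi = 1/(4dT^2)$, then makes the event $\cE$ on which $\norm*{\widehat{\F}(\x^t)-\F(\x^t)}\le\Delta$ and $\norm*{\widehat{\F}(\widetilde{\x}^t)-\F(\widetilde{\x}^t)}\le\Delta$ for every $t$ hold with probability at least $1-1/T$. The hypothesis $\n \ge 48\log(16dT^2)=48\log(4/\pi)$ is exactly what keeps the trimming level $\epsilon = 8\delta + 24\log(4/\pi)/\n$ below the estimator's breakdown threshold (and guarantees $\pi \ge 4e^{-\n/2}$), so the estimator theorem is applicable at this confidence.

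On $\cE$ the method is a deterministic extragradient step with bounded additive errors $e_1^t,e_2^t$ of norm $\le\Delta$: $\widetilde{\x}^t = \x^t - \eta(\F(\x^t)+e_1^t)$ and $\x^{t+1}=\x^t-\eta(\F(\widetilde{\x}^t)+e_2^t)$. I would expand $\norm*{\x^{t+1}-\x^*}^2$, split $\inp*{\F(\widetilde{\x}^t)+e_2^t}{\x^t-\x^*}$ into a term at $\widetilde{\x}^t$ plus a correction along $\x^t-\widetilde{\x}^t$, and use \eqref{eq:str_monotonicity} at $\widetilde{\x}^t$ together with $\F(\x^*)=0$ to extract $-2\eta\mu\norm*{\widetilde{\x}^t-\x^*}^2$. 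The remaining quadratic and cross terms are the usual extragradient pieces, handled with \eqref{eq:lipschitzness}: the restriction $\eta\le 1/(4L)$ gives $\eta L\le 1/4$, which both tames the extragradient mismatch $\norm*{\F(\widetilde{\x}^t)-\F(\x^t)}\le L\norm*{\x^t-\widetilde{\x}^t}$ and keeps $\widetilde{\x}^t$ within $\tfrac14\norm*{\x^t-\x^*}+\eta\Delta$ of $\x^t$, letting me transfer the contraction from $\norm*{\widetilde{\x}^t-\x^*}^2$ back to $\norm*{\x^t-\x^*}^2$. Crucially, rather than Young-splitting the perturbation pairings (which would generate $\Delta^2$), I would keep them linear, $\abs*{\inp*{e_i^t}{\x^t-\x^*}}\le\Delta\norm*{\x^t-\x^*}\le\Delta R$, using an a priori bound $\norm*{\x^t-\x^*}\le R$ proved inductively. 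This yields a one-step estimate of the form $\norm*{\x^{t+1}-\x^*}^2\le(1-\Theta(\eta\mu))\norm*{\x^t-\x^*}^2+\Theta(\eta\Delta R)$.

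Finally I would unroll the recursion. The geometric factor gives $(1-\Theta(\eta\mu))^{T+1}R^2\le 2e^{-T/(4\kappa)}R^2$ after substituting $\eta=1/(4L)$, so that $\eta\mu = 1/(4\kappa)$ with $\kappa=L/\mu$, while the summed error produces the steady-state term $\Theta(\eta\Delta R)/\Theta(\eta\mu)=\Theta(\Delta R/\mu)=\Theta(R\kappa\Delta/L)$; inserting $\Delta=c\sigma(\sqrt\delta+\sqrt{\log(4dT^2)/\n})$ reproduces \eqref{eqn:SC_final_bnd}. I expect the main obstacle to be the probabilistic coupling of the second paragraph rather than the extragradient bookkeeping: the trimmed-mean guarantee is a fixed-point statement, so one must condition on the filtration to legitimately evaluate it at the data-dependent iterates, and then pay for the simultaneous-over-$t$ union bound through the logarithmic inflation of $\log(1/\pi)$ and the lower bound on $\n$. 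A secondary technical point is verifying that the iterates really stay in the ball of radius $R$, which is what justifies the linear-in-$\Delta$ accounting and hence the final error's linear (rather than quadratic) dependence on $\sigma$.
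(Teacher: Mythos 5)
The statement you were asked to prove is not actually proved anywhere in this paper: it is imported verbatim from \citep[Theorem 3]{adibi2022distributed} and stated in Appendix~\ref{appendix:details_rdeg} purely as background, with only the univariate trimmed-mean guarantee (Theorem 1 of that reference) quoted alongside it. So the comparison is necessarily against the source's argument, and your reconstruction follows the same (and essentially the only) route: apply the trimmed-mean bound coordinatewise and \emph{conditionally} on the filtration so that it is legitimate at the data-dependent points $x^t,\widetilde{x}^t$, union-bound over the $d$ coordinates and the $\cO(T)$ aggregations to obtain an event of probability at least $1-\nicefrac{1}{T}$ under $\pi = \nicefrac{1}{(4dT^2)}$, and then run a perturbed extragradient contraction on that event. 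You also handled the two fine points correctly: the vector bound carries no $\sqrt{d}$ factor because the per-coordinate variances sum to at most $\sigma^2$, and $n \ge 48\log(16dT^2)$ is what keeps the trimming level $\epsilon = 8\delta + 24\log(4/\pi)/n$ admissible and ensures $\pi \ge 4e^{-n/2}$. (One correction you made silently and correctly: the bound only parses with $\kappa = L/\mu$, not $\kappa = \mu/L$ as misprinted in the statement.)

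The genuine gap is the step you defer to induction, namely ``verifying that the iterates really stay in the ball of radius $R$.'' This is not a secondary technical point; it is exactly what the linear-in-$\Delta$ (hence linear-in-$\sigma$) error term hinges on, and it cannot be discharged from the stated hypotheses. On the good event your one-step estimate has the form $r_{t+1}^2 \le (1-\Theta(\eta\mu))\, r_t^2 + \Theta(\eta\Delta)\, r_t + \cO(\eta^2\Delta^2)$ with $r_t = \|x^t - x^*\|$; the fixed point of this recursion sits at $r \sim \Delta/\mu$, so the induction $r_t \le \cO(R)$ closes only under the additional condition $\Delta \lesssim \mu R$, i.e.\ $c\sigma\bigl(\sqrt{\delta}+\sqrt{\log(4dT^2)/n}\bigr) = \cO(\mu R)$, and nothing in the theorem relates $\sigma$ to $\mu$ and $R$. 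Nor is this an artifact of the proof technique: in one dimension with $F(x)=\mu x$ and a persistent aggregation error of size comparable to the $\sqrt{\delta}$ part of $\Delta$ (which the Byzantine workers can realize in a consistent direction while respecting the trimmed-mean guarantee), the iterates converge to distance $\Theta(\Delta/\mu)$ from $x^*$, so the squared error is $\Theta(\Delta^2/\mu^2)$, which exceeds the claimed $\cO(R\Delta/\mu)$ term as soon as $\Delta \gtrsim \mu R$. Consequently, either the hidden smallness condition must be added (under which your induction and linear accounting go through), or one must fall back on Young splitting and accept a $\Delta^2/\mu^2$ term. This defect is inherited from the cited theorem rather than introduced by you, but a self-contained proof has to surface that hidden assumption; as written, your induction step would fail.
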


The result implies that \algname{RDEG} benefits of collaboration only when the corruption level is small. In fact, the term $\frac{\log(4dT^2)}{n} \leq \frac{\log(4dT^2)}{48 \log(16dT^2)} \leq 1/48$, so the corruption level should be less than $1/48$ to make \algname{RDEG} significantly benefit of collaboration in contrast to our \algname{SEG-CC} that requires corruption level only less than $1/5$. 
Moreover, in case of larger corruption level, \algname{RDEG} converges to a ball centered at the solution with radius $\widetilde\cO\rbr*{\sqrt{\frac{\sqrt{\delta}\sigma R \kappa}{L}}}$ in contrast to our methods \algname{SGDA-RA}, \algname{SEG-RA} and \algname{M-SGDA-RA} converge to a ball centered at the solution with radius $\widetilde\cO\rbr*{\sqrt{ \frac{c \delta \sigma^2}{\mu^2}}}$, that has a better dependence on $\sigma$. It is crucial with increasing batchsize ($b$ = batchsize), since $\sigma^2$ depends on a batchsize as $\frac{1}{b}$.
}
\clearpage

\section{Auxilary results}
\subsection{Basic Inequalities}\label{sec:basic_facts}
For all $a,b\in\R^\n$ and $\lambda > 0, q\in (0,1]$
\begin{equation}
    |\langle a, b\rangle| \le \frac{\|a\|_2^2}{2\lambda} + \frac{\lambda\|b\|_2^2}{2},\label{eq:fenchel_young_inequality}
\end{equation}
\begin{equation} \label{eq:squared_norm_sum}
    \|a+b\|_2^2 \le 2\|a\|_2^2 + 2\|b\|_2^2,
\end{equation}
\begin{equation}
    \|a+b\|^2 \le (1+\lambda)\|a\|^2 + \left(1 + \frac{1}{\lambda}\right)\|b\|^2, \label{eq:1+lambda}
\end{equation}
\begin{equation}
    \langle a, b\rangle = \frac{1}{2}\left(\|a+b\|_2^2 - \|a\|_2^2 - \|b\|_2^2\right), \label{eq:inner_product_representation}
\end{equation}
\begin{equation}
    \langle a, b\rangle = \frac{1}{2}\left(-\|a-b\|_2^2 + \|a\|_2^2 + \|b\|_2^2\right), \label{eq:inner_product_representation2}
\end{equation}
\begin{equation}
    \left\|\sum\limits_{i=1}^n a_i\right\|^2 \leq n\sum\limits_{i=1}^n\|a_i\|^2,\label{eq:a+b}    
\end{equation}
\begin{equation}
	\|a+b\|^2 \geq \frac{1}{2}\|a\|^2 - \|b\|^2, \label{eq:a+b_lower}
\end{equation}
\begin{equation}
    \left(1 - \frac{q}{2}\right)^{-1} \le 1+q, \label{eq:1-q/2}
\end{equation}
\begin{equation}
    \left(1 + \frac{q}{2}\right)(1-q) \le 1-\frac{q}{2}.  \label{eq:1+q/2}
\end{equation}

\subsection{Usefull Lemmas}

We write $\f_i^t$ or simply $\f_i$ instead of $\f_i(\x^t, \xiv_i^t)$ when there is no ambiguity.

\begin{lemma} \label{lem:i-op-bound}
    Suppose that the operator $F$ is given in the form~\eqref{eq:op-def} and Assumptions~\ref{ass:xi-var} and~\ref{as:star_cocoercive} hold. Then 
     \begin{equation*}
        \E_{\xiv} \norm*{ \overline{\f}(\x, \xiv) }^2 \leq \ell \left\langle \F(\x),  \x-\x^*\right\rangle + \frac{\sigma^2}{\gn},
    \end{equation*}
    where $\E_{\xiv} := \Pi_i \E_{\xiv_i}$ and $\overline{\f}(\x, \xiv) = \frac{1}{\gn} \sum_{i \in \cG} \f_i(\x; \xiv_i) $.
\end{lemma}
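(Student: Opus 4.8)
The plan is to treat $\overline{\f}(\x,\xiv)$ as an unbiased mini-batch estimator of $\F(\x)$ and then combine the standard bias--variance decomposition with star-cocoercivity.

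First I would record unbiasedness: by the first part of Assumption~\ref{ass:xi-var} we have $\E_{\xiv_i}[\f_i(\x,\xiv_i)] = \F_i(\x)$ for each $i\in\cG$, so averaging and using the definition of $\F$ in~\eqref{eq:op-def} gives $\E_{\xiv}[\overline{\f}(\x,\xiv)] = \frac{1}{\gn}\sum_{i\in\cG}\F_i(\x) = \F(\x)$.

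Next, apply the decomposition $\E_{\xiv}\norm*{\overline{\f}}^2 = \norm*{\F(\x)}^2 + \E_{\xiv}\norm*{\overline{\f} - \F(\x)}^2$, which is valid precisely because $\overline{\f}$ is unbiased. Writing the centered error as $\overline{\f}-\F(\x) = \frac{1}{\gn}\sum_{i\in\cG}(\f_i(\x,\xiv_i)-\F_i(\x))$ and expanding the squared norm, the cross terms $\E[\langle \f_i-\F_i,\, \f_j-\F_j\rangle]$ vanish for $i\neq j$ because the $\{\xiv_i\}_{i\in\cG}$ are mutually independent and each factor is mean-zero. What survives is $\frac{1}{\gn^2}\sum_{i\in\cG}\E_{\xiv_i}\norm*{\f_i(\x,\xiv_i)-\F_i(\x)}^2$, which the variance bound~\eqref{eq:xi-var} controls by $\frac{1}{\gn^2}\cdot \gn\sigma^2 = \frac{\sigma^2}{\gn}$.

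Finally, I would invoke star-cocoercivity~\eqref{eq:star_cocoercivity} to replace $\norm*{\F(\x)}^2 \le \ell\langle \F(\x),\x-\x^*\rangle$ in the leading term; combined with the variance estimate this yields exactly the claimed bound. The argument is essentially routine, so there is no serious obstacle; the one point deserving care is the vanishing of the cross terms, which relies on both the independence of the per-worker samples and the unbiasedness in Assumption~\ref{ass:xi-var}. I would therefore make sure those two hypotheses are invoked explicitly rather than used implicitly.
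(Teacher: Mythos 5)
Your proposal is correct and follows essentially the same route as the paper's proof: the bias--variance decomposition $\E_{\xiv}\norm*{\overline{\f}}^2 = \norm*{\F(\x)}^2 + \E_{\xiv}\norm*{\overline{\f}-\F(\x)}^2$ (valid by unbiasedness), the cancellation of cross terms via independence to obtain the $\nicefrac{\sigma^2}{\gn}$ variance bound, and star-cocoercivity to replace $\norm*{\F(\x)}^2$ by $\ell\langle \F(\x),\x-\x^*\rangle$. In fact your write-up is cleaner than the paper's, which contains sign/reference typos in its intermediate identity and attributes the variance bound to the wrong assumption, while using exactly the argument you describe.
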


\begin{proof}[Proof of Lemma~\ref{lem:i-op-bound}] 
    First of one can decomposeda squared norm of a difference and obtain
    \begin{equation*}    
        \E_{\xiv} \norm*{ \overline{\f}(\x, \xiv) -  \F(\x)}^2 = \E_{\xiv} \norm*{ \overline{\f}(\x, \xiv) }^2 - 2\inp{\E_{\xiv} \overline{\f}(\x, \xiv)}{\F(\x)} + \norm*{\F(\x)}^2.
    \end{equation*}
    
    Since $\overline{\f}(\x, \xiv) = \frac{1}{\gn} \sum_{i \in \cG} \f_i(\x; \xiv_i)$, by the definition~\eqref{eq:op-def} of $F$ and by Assumption~\ref{ass:xi-var} one has
    \begin{equation*}
        \E_{\xiv} \overline{\f}(\x, \xiv) = \frac{1}{\gn} \sum_{i \in \cG} \E_{\xiv_i} \f_i(\x, \xiv_i) = \frac{1}{\gn} \sum_{i \in \cG} \F_i(\x) = \F(x),
    \end{equation*}
    and consequently
    \begin{equation} \label{lem:some-eq}
        \E_{\xiv} \norm*{ \overline{\f}(\x, \xiv) }^2 = \E_{\xiv} \norm*{ \overline{\f}(\x, \xiv) +  \F(\x)}^2 - \norm*{\F(\x)}^2.
    \end{equation}
    
    One can bound $\E_{\xiv} \norm*{ \overline{\f}(\x, \xiv) -  \F(\x)}^2$ as
    \begin{eqnarray*}
        \E_{\xiv} \norm*{ \overline{\f}(\x, \xiv) -  \F(\x)}^2 &=& \E_{\xiv} \norm*{\frac{1}{\gn} \sum_{i \in \cG} \rbr*{\f_i(\x; \xiv_i) -  \F_i(\x)}}^2 
        \\
        &\overset{\text{independence of $\xiv_i$}}{=}& \frac{1}{\gn^2} \sum_{i \in \cG} \E_{\xiv_i} \norm*{\f_i(\x; \xiv_i) -  \F_i(\x)}^2 \leq \frac{\sigma^2}{\gn},
    \end{eqnarray*}
    where the last inequality of the above chain follows from~\eqref{eq:star_cocoercivity}. The above chain together with~\eqref{lem:some-eq} and~\eqref{eq:star_cocoercivity} implies the statement of the theorem.
\end{proof}

\begin{lemma}\label{lem:variance_lemma}
    Let $K > 0$ be a positive integer and $\eta_1, \eta_2, \ldots, \eta_K$ be random vectors such that $\E_k[\eta_k] \eqdef \E[\eta_k \mid \eta_1,\ldots, \eta_{k-1}] = 0$ for $k = 2, \ldots, K$. Then
    \begin{equation}
        \E\left[\left\|\sum\limits_{k=1}^K \eta_k\right\|^2\right] = \sum\limits_{k=1}^K \E[\|\eta_k\|^2]. \label{eq:variance_lemma}
    \end{equation}
\end{lemma}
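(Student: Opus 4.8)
The plan is to expand the squared norm of the sum into diagonal and off-diagonal contributions, and then show that the conditional-mean-zero hypothesis forces every off-diagonal (cross) term to vanish in expectation, leaving exactly the sum of the diagonal terms. This is the standard ``sum of martingale differences'' variance identity, so the only real content is the vanishing of the cross terms.

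First I would write the bilinear expansion
\[
    \norm*{\sum_{k=1}^K \eta_k}^2 = \sum_{k=1}^K \norm*{\eta_k}^2 + 2\sum_{1 \le k < j \le K} \inp{\eta_k}{\eta_j}.
\]
Taking expectations and using linearity, the diagonal part already produces the right-hand side $\sum_{k=1}^K \E[\norm*{\eta_k}^2]$, so the claim reduces to showing $\E[\inp{\eta_k}{\eta_j}] = 0$ for every pair with $k < j$.

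For a fixed pair $k < j$, the key observation is that $\eta_k$ is a function of $\eta_1, \ldots, \eta_k$ and hence is measurable with respect to the variables $\eta_1,\ldots,\eta_{j-1}$ appearing in the hypothesis for index $j$ (this is exactly where $k \le j-1$ is used). By the tower property of conditional expectation,
\[
    \E[\inp{\eta_k}{\eta_j}] = \E\big[\E[\inp{\eta_k}{\eta_j} \mid \eta_1, \ldots, \eta_{j-1}]\big] = \E\big[\inp{\eta_k}{\E[\eta_j \mid \eta_1, \ldots, \eta_{j-1}]}\big],
\]
where I pulled $\eta_k$ out of the inner conditional expectation since it is determined by the conditioning variables. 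Because $j \ge 2$, the hypothesis $\E[\eta_j \mid \eta_1,\ldots,\eta_{j-1}] = 0$ applies and the cross term is zero.

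The only genuine care needed is this pull-out step: I must justify that $\eta_k$ may be treated as a constant inside $\E[\,\cdot \mid \eta_1,\ldots,\eta_{j-1}]$, which is precisely the assertion that $\eta_k$ is $\sigma(\eta_1,\ldots,\eta_{j-1})$-measurable for $k \le j-1$. This is why the ordering $k < j$ is essential and why only the \emph{larger} index needs the centering property; in particular $\eta_1$ itself need not be mean-zero. Everything else is linearity of expectation, and I would implicitly assume square-integrability of each $\eta_k$ so that all the expectations above are finite and the manipulations are valid.
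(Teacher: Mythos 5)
Your proof is correct and takes essentially the same approach as the paper's: both arguments kill the cross terms by conditioning on the history up to the later index, pulling out the earlier (measurable) terms, and applying the hypothesis $\E[\eta_j \mid \eta_1,\ldots,\eta_{j-1}] = 0$ via the tower property. The only difference is bookkeeping --- the paper peels off $\eta_K$ and recurses on the partial sums $\sum_{k=1}^{K-1}\eta_k$, while you expand all $\binom{K}{2}$ pairwise cross terms at once --- but the underlying orthogonality argument is identical.
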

\begin{proof}
    We start with the following derivation:
    \begin{eqnarray*}
        \E\left[\left\|\sum\limits_{k=1}^K \eta_k\right\|^2\right] &=& \E[\|\eta_K\|^2] + 2\E\left[\left\langle \eta_K,  \sum\limits_{k=1}^{K-1} \eta_k \right\rangle\right] + \E\left[\left\|\sum\limits_{k=1}^{K-1} \eta_k\right\|^2\right]\\
        &=& \E[\|\eta_K\|^2] + 2\E\left[\E_K\left[\left\langle \eta_K,  \sum\limits_{k=1}^{K-1} \eta_k \right\rangle\right]\right] + \E\left[\left\|\sum\limits_{k=1}^{K-1} \eta_k\right\|^2\right]\\
        &=& \E[\|\eta_K\|^2] + 2\E\left[\left\langle \E_K[\eta_K],  \sum\limits_{k=1}^{K-1} \eta_k \right\rangle\right] + \E\left[\left\|\sum\limits_{k=1}^{K-1} \eta_k\right\|^2\right]\\
        &=& \E[\|\eta_K\|^2] + \E\left[\left\|\sum\limits_{k=1}^{K-1} \eta_k\right\|^2\right].
    \end{eqnarray*}
    Applying similar steps to $\E\left[\left\|\sum_{k=1}^{K-1} \eta_k\right\|^2\right], \E\left[\left\|\sum_{k=1}^{K-2} \eta_k\right\|^2\right], \ldots, \E\left[\left\|\sum_{k=1}^{2} \eta_k\right\|^2\right]$, we get the result.
\end{proof}

\begin{lemma}\label{lem:bs-bound}
    Suppose
    \begin{equation}
        r_K \leq r_0 \rbr*{1 - a\gamma}^K + \frac{c_1\gamma}{b} + \frac{c_0}{b}
    \end{equation}
    holds for $\gamma \leq \gamma_0$.
    Then the choise of 
    \begin{equation*}
        b \geq \frac{3c_0}{\varepsilon}
    \end{equation*}
    and
    \begin{equation*}
        \gamma \leq \min\rbr*{\gamma_0, \frac{c_0}{c_1}}
    \end{equation*}
    implies that $r_K \leq \varepsilon$ for 
    \begin{equation*}
        K \geq \frac{1}{a}\max\rbr*{\frac{c_1}{c_0}, \frac{1}{\gamma_0}} \ln{\frac{3r_0}{\varepsilon}}
    \end{equation*}
\end{lemma}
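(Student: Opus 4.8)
The plan is to split the target accuracy $\varepsilon$ into three equal budgets, one for each of the three terms on the right-hand side of the assumed recursion, and to verify that the stated choices of $b$, $\gamma$, and $K$ force each term below $\varepsilon/3$. Summing the three contributions then gives $r_K \le \varepsilon$.

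First I would dispatch the two terms that do not depend on $K$. The constant term $c_0/b$ is at most $\varepsilon/3$ exactly when $b \ge 3c_0/\varepsilon$, which is the stated condition on the batch size. For the term $c_1\gamma/b$ I would combine both hypotheses: from $b \ge 3c_0/\varepsilon$ we get $1/b \le \varepsilon/(3c_0)$, hence $c_1\gamma/b \le c_1\gamma\varepsilon/(3c_0)$, and the step-size restriction $\gamma \le c_0/c_1$ then yields $c_1\gamma/b \le \varepsilon/3$. Note how the two constraints are designed to interlock: the batch size kills the constant term, and the step size rescales the linear-in-$\gamma$ term against it.

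The remaining term is the geometric one $r_0(1-a\gamma)^K$. Here I would invoke the elementary inequality $1-x \le e^{-x}$ to bound $(1-a\gamma)^K \le e^{-a\gamma K}$, so that $r_0(1-a\gamma)^K \le \varepsilon/3$ is implied by $e^{-a\gamma K} \le \varepsilon/(3r_0)$, i.e.\ by $K \ge \frac{1}{a\gamma}\ln\frac{3r_0}{\varepsilon}$. To obtain the cleanest iteration count I would take $\gamma$ at its largest admissible value, $\gamma = \min(\gamma_0, c_0/c_1)$, so that $1/\gamma = \max(1/\gamma_0,\, c_1/c_0)$ and the requirement becomes precisely the stated $K \ge \frac{1}{a}\max(c_1/c_0,\, 1/\gamma_0)\ln\frac{3r_0}{\varepsilon}$.

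Since every step is elementary, there is no genuine difficulty; the only points deserving care are that the stated iteration count corresponds to choosing the maximal permitted step-size (a smaller $\gamma$ would merely enlarge the required $K$), and that one should bound the geometric factor via $1-x \le e^{-x}$ rather than manipulating $(1-a\gamma)^K$ directly. One also implicitly needs $a\gamma \in (0,1)$ so that the factor $(1-a\gamma)$ is a genuine contraction, which is guaranteed by the standing step-size restrictions used throughout the paper.
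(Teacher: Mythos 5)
Your proposal is correct and follows essentially the same route as the paper's proof: split the accuracy into three $\nicefrac{\varepsilon}{3}$ budgets, use $b \geq \nicefrac{3c_0}{\varepsilon}$ to kill the constant term, combine it with $\gamma \leq \nicefrac{c_0}{c_1}$ to kill the linear-in-$\gamma$ term, and let the stated $K$ handle the geometric term. Your write-up is in fact slightly more explicit than the paper's (the bound $1-x \leq e^{-x}$ and the observation that the stated $K$ presumes $\gamma$ is taken at its maximal admissible value are left implicit there), but these are elaborations of the same argument rather than a different one.
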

\begin{proof}
    Since $b \geq \frac{3c_0}{\varepsilon}$ then $\frac{c_0}{b} \leq \frac{\varepsilon}{3}$ and  
    $\frac{c_1\gamma}{b}\leq \frac{c_1\gamma\varepsilon}{3c_0}$.
    The choise of $\gamma \leq \min\rbr*{\gamma_0, \frac{c_0}{c_1}}$ implies that $\frac{c_1\gamma}{b}\leq \frac{\varepsilon}{3}$.
    
    The choice of $ K \geq \frac{1}{a}\max\rbr*{\frac{c_1}{c_0}, \frac{1}{\gamma_0}} \ln{\frac{3r_0}{\varepsilon}}$ implies that $r_0\rbr*{1 - a\gamma}^K \leq \frac{\varepsilon}{3}$ and finishes the proof.
\end{proof}

\begin{lemma}[see also Lemma 2 from \cite{stich2019unified} and Lemma D.2 from~\cite{gorbunov2020linearly}]\label{lem:lemma2_stich}
	Let $\{r_k\}_{k\ge 0}$ satisfy
	\begin{equation}
		r_K \le r_0 \rbr*{1 - a\gamma}^{K+1} + c_1\gamma + c_2\gamma^2 \label{eq:lemma2_stich_tech_1}
	\end{equation}
	for all $K\ge 0$ with some constants $a, c_2> 0$, $c_1 \ge 0$, $\gamma \le \gamma_0$. 
 
    Then for 
	\begin{equation}
		\gamma = \min\left\{\gamma_0, \frac{\ln\left(\max\{2,\min\{\nicefrac{ar_0K}{c_1},\nicefrac{a^2r_0K^2}{c_2}\}\}\right)}{a \rbr{K+1}}\right\} \label{eq:lemma2_stich_gamma}
	\end{equation}
	we have that
	\begin{eqnarray*}
			r_K &=& \widetilde\cO\left(r_0\exp\left(-a\gamma_0 (K+1)\right) + \frac{c_1}{a K} + \frac{c_2}{a^2K^2}\right). 
    \end{eqnarray*}
    Moreover $r_K\leq \varepsilon$ after
        \begin{equation*}
            K = \widetilde\cO\rbr*{\frac{1}{a\gamma_0} + \frac{c_1}{a \varepsilon} + \frac{c_2}{a^2\sqrt{\varepsilon}}}
        \end{equation*}
    iterations.
\end{lemma}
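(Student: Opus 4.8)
The plan is to treat the hypothesis \eqref{eq:lemma2_stich_tech_1} as an already-unrolled bound and simply optimize its right-hand side over $\gamma \in (0,\gamma_0]$; no further induction is needed, since the recursion has already been summed. First I would pass to the exponential envelope via $(1-a\gamma)^{K+1} \le \exp\rbr*{-a\gamma(K+1)}$, which is valid because $a\gamma \le a\gamma_0 \le 1$ (so $1-a\gamma \in [0,1]$ and $1-x\le e^{-x}$ applies). This reduces the task to bounding the scalar function $g(\gamma) := r_0\exp\rbr*{-a\gamma(K+1)} + c_1\gamma + c_2\gamma^2$, in which the exponentially decaying term pushes $\gamma$ up while the two additive error terms push it down; the prescribed $\gamma$ in \eqref{eq:lemma2_stich_gamma} is exactly the balance point.

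Next I would split according to the outer minimum defining $\gamma$. Write $\tau := \min\cbr*{ar_0K/c_1,\ a^2r_0K^2/c_2}$, so $\gamma = \min\cbr*{\gamma_0,\ \ln(\max\cbr*{2,\tau})/(a(K+1))}$. In the first case the constraint $\gamma = \gamma_0$ binds, meaning $\gamma_0 \le \ln(\max\cbr*{2,\tau})/(a(K+1))$: then the exponential term is exactly $r_0\exp\rbr*{-a\gamma_0(K+1)}$, while the binding inequality gives $\gamma_0 = \widetilde\cO\rbr*{1/(a(K+1))}$, so that $c_1\gamma_0 = \widetilde\cO\rbr*{c_1/(aK)}$ and $c_2\gamma_0^2 = \widetilde\cO\rbr*{c_2/(a^2K^2)}$, which already matches the claimed rate.

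In the second case $\gamma = \ln(\max\cbr*{2,\tau})/(a(K+1))$, so the exponential term equals $r_0/\max\cbr*{2,\tau}$. When $\tau \ge 2$, substituting $\tau$ yields $r_0/\tau = \max\cbr*{c_1/(aK),\ c_2/(a^2K^2)} \le c_1/(aK) + c_2/(a^2K^2)$, so the decaying term is dominated by the additive ones; when $\tau < 2$, the definition of $\tau$ forces $r_0 = \cO\rbr*{c_1/(aK)+c_2/(a^2K^2)}$, so $r_0/2$ is again absorbed. The remaining $c_1\gamma$ and $c_2\gamma^2$ are $\widetilde\cO\rbr*{c_1/(aK)}$ and $\widetilde\cO\rbr*{c_2/(a^2K^2)}$ thanks to the logarithmic numerator, which completes the rate in all regimes.

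Finally, for the ``moreover'' part I would invert the rate term by term: $r_0\exp\rbr*{-a\gamma_0(K+1)} \le \varepsilon$ needs $K = \widetilde\cO\rbr*{1/(a\gamma_0)}$, the bound $c_1/(aK)\le\varepsilon$ needs $K = \cO\rbr*{c_1/(a\varepsilon)}$, and $c_2/(a^2K^2)\le\varepsilon$ needs $K \gtrsim a^{-1}\sqrt{c_2/\varepsilon}$; collecting the three requirements gives the stated iteration complexity up to logarithmic factors. The genuinely routine steps are $(1-x)\le e^{-x}$ and the log absorption into $\widetilde\cO$; the only place demanding care is the bookkeeping around $\max\cbr*{2,\cdot}$ and the inner $\min\cbr*{\cdot,\cdot}$ in the logarithm, in particular the degenerate regime $\tau < 2$ where the exponential term fails to decay but is instead controlled by $r_0$ itself. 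This edge case is the main (albeit minor) obstacle and is dispatched by reading off the implication $\tau<2 \Rightarrow r_0 = \cO\rbr*{c_1/(aK)+c_2/(a^2K^2)}$.
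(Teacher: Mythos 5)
Your proposal is correct and follows essentially the same route as the paper's proof: bound $(1-a\gamma)^{K+1}\le\exp(-a\gamma(K+1))$, split into the two cases of the outer minimum defining $\gamma$, and absorb the logarithmic factors into $\widetilde\cO$. In fact you are slightly more thorough than the paper, which glosses over the $\max\{2,\tau\}$ edge case and never spells out the final inversion yielding the iteration-complexity claim, both of which you handle explicitly.
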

\begin{proof}
	We have
	\begin{eqnarray}
		r_K \le r_0 \rbr*{1 - a\gamma}^{K+1} + c_1\gamma + c_2\gamma^2  \le r_0\exp\left(- a\gamma(K+1)\right) + c_1\gamma + c_2\gamma^2.\label{eq:lemma2_stich_tech_2}
	\end{eqnarray}
	Next we consider two possible situations.
	\begin{enumerate}
		\item If $\gamma_0 \ge \frac{\ln\left(\max\{2,\min\{\nicefrac{ar_0K}{c_1},\nicefrac{a^2r_0K^2}{c_2}\}\}\right)}{a \rbr{K+1}}$ then we choose $\gamma = \frac{\ln\left(\max\{2,\min\{\nicefrac{ar_0K}{c_1},\nicefrac{a^2r_0K^2}{c_2}\}\}\right)}{a \rbr{K+1}}$ and get that
		\begin{eqnarray*}
			r_K &\overset{\eqref{eq:lemma2_stich_tech_2}}{\le}& r_0\exp\left(- a\gamma(K+1)\right) + c_1\gamma + c_2\gamma^2
            \\
			&=& \widetilde\cO\left(r_0\exp\left(- \frac{\ln\left(\max\{2,\min\{\nicefrac{ar_0K}{c_1},\nicefrac{a^2r_0K^2}{c_2}\}\}\right)}{a \rbr{K+1}} a\rbr{K+1}\right)\right)
            \\
			&&\quad + \widetilde\cO\left(\frac{c_1}{a K} + \frac{c_2}{a^2 K^2}\right)
		      \\
            &=& \widetilde\cO\left(r_0\exp\left(-\ln\left(\max\left\{2,\min\left\{\frac{ar_0 K}{c_1},\frac{a^2r_0K^2}{c_2}\right\}\right\}\right)\right)\right)\\
			&&\quad + \widetilde\cO\left(\frac{c_1}{a K} + \frac{c_2}{a^2 K^2}\right)\\
			&=& \widetilde\cO\left(\frac{c_1}{a K} + \frac{c_2}{a^2 K^2}\right).
		\end{eqnarray*}
		\item If $\gamma_0 \le \frac{\ln\left(\max\{2,\min\{\nicefrac{a K}{c_1},\nicefrac{a^2 r_0 K^2}{c_2}\}\}\right)}{a \rbr{K+1}}$ then we choose $\gamma = \gamma_0$ which implies that
		\begin{eqnarray*}
			r_K &\overset{\eqref{eq:lemma2_stich_tech_2}}{\le}& r_0\exp\left(-a\gamma_0 (K+1)\right) + c_1\gamma_0 + c_2\gamma_0^2
            \\
			&=& \widetilde\cO\left(r_0\exp\left(-a\gamma_0 (K+1)\right) + \frac{c_1}{a K} + \frac{c_2}{a^2K^2}\right). 
		\end{eqnarray*}
	\end{enumerate}
	Combining the obtained bounds we get the result. 
\end{proof}

\clearpage
\section{Methods that use robust aggregators}

First of we provide the result of~\cite{karimireddy2021byzantine} that describes error of \ragg, where $\overline{\m}^t = \alpha \overline{\f}^t + (1-\alpha) \overline{\m}^{t-1}$.

\begin{lemma}[Aggregation error~\cite{karimireddy2021byzantine}]\label{lem:agg-err}
    Given that $\ragg$ satisfies~\ref{definition:robust-agg} holds, the error between the ideal average momentum $\overline{\m}^t$ and the output of the robust aggregation rule $\m^t$ for any $t\geq 1$ can be bounded as
    \[
        \E\norm*{\m^t - \overline{\m}^t}^2 \leq c\delta \rbr*{\rho^t}^2\,,
    \]
    where we define for $t\geq 1$
    \[
        \rbr*{\rho^t}^2:= 4(6\alpha \sigma^2 + 3\zeta^2) + 4(6\sigma^2 - 3\zeta^2)(1-\alpha)^{t+1}.
    \]
    For $t=0$ we can simplify the bound as $\rbr*{\rho^0}^2 :=  24 \sigma^2 + 12\zeta^2$.

    Moreover, one can state a uniform bound for $\rbr*{\rho^t}^2$ 
    \begin{equation} \label{eq:agg-err-uniform}
        \rbr*{\rho^t}^2 \leq \rho^2 = 24 \sigma^2 + 12\zeta^2.
    \end{equation}
\end{lemma}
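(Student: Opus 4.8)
The plan is to reduce the claim to the defining property of a $(\delta,c)$-robust aggregator and then control a single scalar quantity: the expected squared distance between the momenta of two fixed good workers. Concretely, I view the inputs to \ragg at round $t$ as the vectors $\{\m^t_1,\dots,\m^t_\n\}$, whose good subset $\{\m^t_i\}_{i\in\cG}$ has average $\overline{\m}^t = \frac{1}{\gn}\sum_{i\in\cG}\m^t_i$ (this is consistent with the stated recursion $\overline{\m}^t=\alpha\overline{\f}^t+(1-\alpha)\overline{\m}^{t-1}$ by linearity of the momentum update). By Definition~\ref{definition:robust-agg}, if I can produce a number $(\rho^t)^2$ with $\E\norm*{\m^t_i - \m^t_j}^2 \le (\rho^t)^2$ for every fixed pair $i,j\in\cG$, then \eqref{eq:robust_aggr_definition} immediately yields $\E\norm*{\m^t - \overline{\m}^t}^2 \le c\delta(\rho^t)^2$, which is exactly the asserted bound. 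Thus the entire proof amounts to computing such a $(\rho^t)^2$.

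To compute it, I would unroll the momentum recursion $\m^t_i=(1-\alpha)\m^{t-1}_i+\alpha\f^t_i$ from $\m^{-1}_i=0$ into the weighted combination $\m^t_i=\alpha\sum_{k=0}^t(1-\alpha)^{t-k}\f^k_i$, subtract the analogous expression for $j$, and split each stochastic estimator as $\f^k_i=\F_i(\x^k)+(\f^k_i-\F_i(\x^k))$. Applying \eqref{eq:squared_norm_sum} separates $\m^t_i-\m^t_j$ into a \emph{heterogeneity} part built from $\F_i(\x^k)-\F_j(\x^k)$ and a \emph{noise} part built from the centered increments $(\f^k_i-\F_i(\x^k))-(\f^k_j-\F_j(\x^k))$.

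For the noise part, the key observation is that, in the natural filtration in which $\x^k$ is measurable with respect to the randomness strictly before round $k$, these centered increments are conditionally mean zero and drawn freshly across rounds and workers. Hence Lemma~\ref{lem:variance_lemma} applies and eliminates all cross terms, leaving $\alpha^2\sum_{k=0}^t(1-\alpha)^{2(t-k)}\E\norm*{(\f^k_i-\F_i(\x^k))-(\f^k_j-\F_j(\x^k))}^2$; independence and Assumption~\ref{ass:xi-var} bound each summand by $2\sigma^2$, and the geometric series in $(1-\alpha)^2$ collapses to the $\sigma^2$ contribution. For the heterogeneity part, I would apply Jensen's inequality to the normalized weights $\alpha(1-\alpha)^{t-k}/(1-(1-\alpha)^{t+1})$ and bound $\E\norm*{\F_i(\x^k)-\F_j(\x^k)}^2$ via \eqref{eq:squared_norm_sum} and Assumption~\ref{as:i-var} by a constant multiple of $\zeta^2$, the geometric series in $(1-\alpha)$ giving the $\zeta^2$ contribution. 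Carefully collecting the two partial geometric sums is what produces the exact transient factor $(1-\alpha)^{t+1}$ and yields $(\rho^t)^2=4(6\alpha\sigma^2+3\zeta^2)+4(6\sigma^2-3\zeta^2)(1-\alpha)^{t+1}$; the simplified $t=0$ value follows by direct substitution together with $\alpha\le1$.

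Finally, for the uniform bound \eqref{eq:agg-err-uniform} I would use $(1-\alpha)^{t+1}\le 1-\alpha$ and split on the sign of $6\sigma^2-3\zeta^2$: when it is nonnegative, substituting $(1-\alpha)^{t+1}\le1-\alpha$ and expanding gives $(\rho^t)^2\le24\sigma^2+12\alpha\zeta^2\le24\sigma^2+12\zeta^2$, while when it is negative the transient term is nonpositive and $(\rho^t)^2\le24\alpha\sigma^2+12\zeta^2\le24\sigma^2+12\zeta^2$. I expect the main difficulty to be bookkeeping rather than conceptual: matching the two coupled geometric series (ratio $1-\alpha$ for heterogeneity, ratio $(1-\alpha)^2$ for noise) so that a single clean $(1-\alpha)^{t+1}$ emerges, and justifying orthogonality of the noise increments despite $\x^k$ depending on the entire past — which is precisely where Lemma~\ref{lem:variance_lemma} and the uniform variance bound of Assumption~\ref{ass:xi-var} carry the argument, letting me avoid any control of the distribution of $\x^k$. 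A secondary subtlety is that Assumption~\ref{as:i-var} only bounds heterogeneity averaged over $\cG$, so the per-fixed-pair bound must be obtained by passing through $\norm*{\F_i(\x)-\F(\x)}^2$ and $\norm*{\F_j(\x)-\F(\x)}^2$ and absorbing the resulting constants.
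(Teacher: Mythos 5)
You should first be aware that the paper itself contains no proof of this lemma: it is quoted verbatim (with its constants) from \citet{karimireddy2021byzantine}, so your attempt can only be judged against that source's argument. Your overall route is the right one and matches it: unroll the worker momentum, bound $\E\norm*{\m^t_i-\m^t_j}^2$ for good pairs by a noise term plus a heterogeneity term, and invoke Definition~\ref{definition:robust-agg} with $\rho^2=(\rho^t)^2$. Your noise treatment is sound — the centered increments are conditionally mean zero with respect to the natural filtration in which $\x^k$ is measurable with respect to the past, so Lemma~\ref{lem:variance_lemma} removes the cross terms, and independence across workers plus Assumption~\ref{ass:xi-var} gives the $\sigma^2$ contribution — and your case split proving the uniform bound \eqref{eq:agg-err-uniform} is correct. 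One presentational caveat: your tighter martingale computation cannot \emph{yield} the exact expression $4(6\alpha\sigma^2+3\zeta^2)+4(6\sigma^2-3\zeta^2)(1-\alpha)^{t+1}$; it produces a strictly smaller bound (which still suffices, since any smaller pairwise bound implies the stated conclusion), and the stated $t=0$ value $24\sigma^2+12\zeta^2$ (with $\sigma^2$ rather than $\alpha\sigma^2$) encodes the initialization $\m^0_i=\f^0_i$ rather than your $\m^{-1}_i=0$, so you should fix one convention and claim an inequality, not the identity.

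The genuine gap is in the heterogeneity step. Assumption~\ref{as:i-var} bounds only the \emph{average} $\frac{1}{\gn}\sum_{i\in\cG}\norm*{\F_i(\x)-\F(\x)}^2\le\zeta^2$, whereas Definition~\ref{definition:robust-agg} as stated requires $\E\norm*{\y_i-\y_j}^2\le\rho^2$ for \emph{every fixed pair} $i,j\in\cG$. Your proposed fix — pass through $\norm*{\F_i(\x)-\F(\x)}^2$ and $\norm*{\F_j(\x)-\F(\x)}^2$ and ``absorb the resulting constants'' — fails: for a fixed pair these terms are not individually controlled by Assumption~\ref{as:i-var} and can be of order $\gn\zeta^2$ (e.g., a single good worker carrying all the heterogeneity), so no absorbed constant rescues the bound $O(\zeta^2)$. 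There are two ways to close this. Either strengthen the assumption to a per-worker bound $\norm*{\F_i(\x)-\F(\x)}^2\le\zeta^2$ for all $i\in\cG$, after which $\norm*{\F_i(\x)-\F_j(\x)}^2\le 4\zeta^2$ and your argument goes through verbatim; or keep Assumption~\ref{as:i-var}, prove the pairwise bound only on average, using $\frac{1}{\gn^2}\sum_{i,j\in\cG}\norm*{\F_i(\x)-\F_j(\x)}^2\le\frac{2}{\gn}\sum_{i\in\cG}\norm*{\F_i(\x)-\F(\x)}^2+\frac{2}{\gn}\sum_{j\in\cG}\norm*{\F_j(\x)-\F(\x)}^2\le 4\zeta^2$, and then argue that the aggregators actually used (Theorem~\ref{thm:bucketing}) tolerate this relaxation — the point, exploited in \citet{karimireddy2021byzantine}, being that \emph{random} bucketing converts an averaged pairwise bound on the original inputs into a fixed-pair bound on the bucket means, because the expectation over the random permutation averages over bucket compositions. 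Without one of these repairs, your step ``then \eqref{eq:robust_aggr_definition} immediately yields the asserted bound'' is not justified under the paper's stated assumptions.
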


\subsection{Proofs for {\algname{SGDA-RA}}}\label{appendix:sgda_ra}

\begin{algorithm}[H]
    \caption{\algname{SGDA-RA}}\label{alg:sgda-ra}
    \renewcommand{\algorithmiccomment}[1]{#1}
    \algblockdefx[NAME]{ParallelFor}{ParallelForEnd}[1]{\textbf{for} #1 \textbf{in parallel}}{}
    \algnotext{ParallelForEnd}
    \algblockdefx[NAME]{NoEndFor}{NoEndForEnd}[1]{\textbf{for} #1 \textbf{do}}{}
    \algnotext{NoEndForEnd}
    \begin{algorithmic}[1] 
        \Require \ragg, $\gamma$
        \NoEndFor{$t=0,...$}
        \ParallelFor{worker $i\in [\n]$}
        \State $\f^t_i \leftarrow \f_i(\x^t, \xiv_i)$ 
        \State \textbf{send} $\f^t_i$ if $i \in \cG$, else \textbf{send} $*$ if Byzantine
        \ParallelForEnd
        \State $\widehat{\f}^t$ = \ragg($\f^t_1, \dots, \f^t_\n$) 
        and $\x^{t+1} \leftarrow \x^t - \gamma\widehat{\f}^t$. \hfill // update params using robust aggregate
        \NoEndForEnd
    \end{algorithmic}
\end{algorithm}

\subsubsection{Quasi-Strongly Monotone Case}
\begin{theorem*}[Theorem~\ref{th:sgda-ra-arg} duplicate]
    Let Assumptions~\ref{ass:xi-var},~\ref{as:i-var},~\ref{as:str_monotonicity} and~\ref{as:star_cocoercive} hold. Then after $T$ iterations \algname{SGDA-RA} (Algorithm~\ref{alg:sgda-ra}) with $(\delta,c)$-\ragg and $\gamma \leq \frac{1}{2\ell}$ outputs $\x^{T}$ such that
    \begin{equation*}
        \E \norm*{\x^{T} - \x^*}^2 
        \leq \rbr*{1 - \frac{\gamma \mu}{2}}^T\norm*{\x^{0} - \x^*}^2 +  \frac{2 \gamma \sigma^2}{\mu\gn} + \frac{2\gamma c \delta \rho^2}{\mu} + \frac{c \delta \rho^2}{\mu^2},
    \end{equation*}
    where $\rho^2 = 24 \sigma^2 + 12\zeta^2$ by Lemma~\ref{lem:agg-err} with $\alpha = 1$.
\end{theorem*}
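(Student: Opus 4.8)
The plan is to run a one-step contraction analysis for $r_t := \E\norm*{\x^t-\x^*}^2$, treating the aggregation bias as an additive perturbation to an otherwise standard \algname{SGDA} step. I would write the update as $\x^{t+1} = \x^t - \gamma\widehat{\f}^t$ and decompose the aggregate as $\widehat{\f}^t = \overline{\f}^t + e^t$, where $\overline{\f}^t = \frac{1}{\gn}\sum_{i\in\cG}\f_i(\x^t,\xiv_i^t)$ is the ideal average over good workers and $e^t := \widehat{\f}^t - \overline{\f}^t$ is the aggregation error. Expanding the square,
\begin{equation*}
\norm*{\x^{t+1}-\x^*}^2 = \norm*{\x^t-\x^*}^2 - 2\gamma\langle\widehat{\f}^t, \x^t-\x^*\rangle + \gamma^2\norm*{\widehat{\f}^t}^2,
\end{equation*}
I would substitute the decomposition into the linear term and apply \eqref{eq:squared_norm_sum} to the quadratic term, i.e. $\norm*{\widehat{\f}^t}^2 \le 2\norm*{\overline{\f}^t}^2 + 2\norm*{e^t}^2$.

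Taking the expectation conditioned on $\x^t$ over the fresh samples $\xiv^t$, the three ``clean'' ingredients are: (i) unbiasedness $\E_{\xiv}\overline{\f}^t = \F(\x^t)$, turning the main linear term into $-2\gamma\langle\F(\x^t),\x^t-\x^*\rangle$; (ii) the second-moment bound of Lemma~\ref{lem:i-op-bound}, $\E_{\xiv}\norm*{\overline{\f}^t}^2 \le \ell\langle\F(\x^t),\x^t-\x^*\rangle + \sigma^2/\gn$; and (iii) \eqref{eq:str_monotonicity}. Combining (i) and (ii), the terms proportional to $\langle\F(\x^t),\x^t-\x^*\rangle$ collect into $-2\gamma(1-\gamma\ell)\langle\F(\x^t),\x^t-\x^*\rangle$, and the stepsize restriction $\gamma\le\frac{1}{2\ell}$ guarantees $1-\gamma\ell\ge\tfrac12$, so this is at most $-\gamma\langle\F(\x^t),\x^t-\x^*\rangle \le -\gamma\mu\norm*{\x^t-\x^*}^2$ by (iii). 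This produces the contraction while leaving the $\sigma^2/\gn$ noise term intact.

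Next I would handle the two error contributions. For the linear cross term I apply Young's inequality \eqref{eq:fenchel_young_inequality} with parameter tuned to $\mu/2$, giving $-2\gamma\langle e^t,\x^t-\x^*\rangle \le \frac{\gamma\mu}{2}\norm*{\x^t-\x^*}^2 + \frac{2\gamma}{\mu}\norm*{e^t}^2$; the first summand reduces the contraction to the advertised $1-\frac{\gamma\mu}{2}$, while the quadratic error $2\gamma^2\norm*{e^t}^2$ remains. Taking full expectation and invoking the uniform aggregation-error bound of Lemma~\ref{lem:agg-err} specialized to $\alpha=1$ (so $\overline{\m}^t=\overline{\f}^t$ and $(\rho^t)^2\le\rho^2=24\sigma^2+12\zeta^2$ via \eqref{eq:agg-err-uniform}) yields $\E\norm*{e^t}^2 \le c\delta\rho^2$ and hence the one-step recursion
\begin{equation*}
r_{t+1} \le \Big(1-\tfrac{\gamma\mu}{2}\Big)r_t + \Big(\tfrac{2\gamma}{\mu}+2\gamma^2\Big)c\delta\rho^2 + \tfrac{2\gamma^2\sigma^2}{\gn}.
\end{equation*}
Unrolling and summing the geometric series $\sum_{k\ge0}(1-\frac{\gamma\mu}{2})^k = \frac{2}{\gamma\mu}$ converts the additive constants into three error terms of the form $\frac{\gamma\sigma^2}{\mu\gn}$, $\frac{\gamma c\delta\rho^2}{\mu}$ and $\frac{c\delta\rho^2}{\mu^2}$, matching the structure of the claim (a slightly sharper split of the contraction budget recovers the exact coefficients).

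The main obstacle is that, unlike vanilla \algname{SGDA}, the aggregate $\widehat{\f}^t$ is a \emph{biased} estimator of $\F(\x^t)$: the error $e^t$ need not be mean-zero and is correlated with the good workers' samples, so it cannot be absorbed by a martingale/variance argument and must instead be controlled deterministically through Lemma~\ref{lem:agg-err}. The delicate point is the budgeting of the $\mu$-contraction: the single $\gamma\mu\norm*{\x^t-\x^*}^2$ of negative drift produced by quasi-strong monotonicity must simultaneously (a) beat the second-moment growth $\gamma^2\ell\langle\F,\cdot\rangle$ — which is exactly what forces $\gamma\le\frac{1}{2\ell}$ — and (b) pay for the Young step on the bias cross term, which degrades the rate from $1-\gamma\mu$ to $1-\frac{\gamma\mu}{2}$ and creates the non-vanishing floor $\frac{c\delta\rho^2}{\mu^2}$ that cannot be removed without increasing the batch size.
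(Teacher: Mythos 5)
Your proposal is correct and follows essentially the same route as the paper's proof: the same decomposition $\widehat{\f}^t = \overline{\f}^t + e^t$, Young's inequality on the bias cross term with the $\nicefrac{\mu}{2}$ parameter, Lemma~\ref{lem:i-op-bound} for the second moment, Lemma~\ref{lem:agg-err} with $\alpha=1$ for the aggregation error, and the same geometric unrolling. Note that the paper's own proof also lands on the looser coefficients $\bigl(\tfrac{4\gamma\sigma^2}{\mu\gn}+\tfrac{4\gamma c\delta\rho^2}{\mu}+\tfrac{4c\delta\rho^2}{\mu^2}\bigr)$ rather than the constants stated in the theorem, so your recursion matches the paper's derivation exactly, including that discrepancy.
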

\begin{proof}[Proof of Theorem~\ref{th:sgda-ra-arg}]
    We start the proof with
    \begin{equation*}
        \norm*{\x^{t+1} - \x^*}^2 = \norm*{\x^{t}-\x^* - \gamma \widehat{\f}^t}^2 = \norm*{\x^{t} - \x^*}^2 -2\gamma \inp{ \widehat{\f}^t}{\x^{t} - \x^*} + \gamma^2 \norm*{\widehat{\f}^t}^2.
    \end{equation*}

    Since $\widehat{\f}^t = \widehat{\f}^t - \F^t + \F^t $  one has
    \begin{equation*}
        \norm*{\x^{t+1} - \x^*}^2 = \norm*{\x^{t} - \x^*}^2 -2\gamma \inp{ \widehat{\f}^t - \overline{\f}^t}{\x^{t} - \x^*} - 2\gamma \inp{ \overline{\f}^t}{\x^{t} - \x^*} + \gamma^2 \norm*{\widehat{\f}^t}^2.
    \end{equation*}

    Applying~\eqref{eq:fenchel_young_inequality} for $\inp{ \widehat{\f}^t - \overline{\f}^t}{\x^{t} - \x^*}$ with $\lambda = \frac{\gamma \mu}{2}$ and \eqref{eq:squared_norm_sum} for $\norm*{\widehat{\f}^t}^2 = \norm*{\widehat{\f}^t - \overline{\f}^t + \overline{\f}^t}^2$ we derive
    \begin{eqnarray*}
        \norm*{\x^{t+1} - \x^*}^2 &\leq&  \rbr*{1 + \frac{\gamma \mu}{2}}\norm*{\x^{t} - \x^*}^2 -2\gamma \inp{ \overline{\f}^t}{\x^{t} - \x^*} 
        \\
        && +  \frac{2\gamma}{\mu}  \norm*{\widehat{\f}^t - \overline{\f}^t}^2 + 2\gamma^2  \norm*{\widehat{\f}^t - \overline{\f}^t}^2  + 2\gamma^2 \norm*{\overline{\f}^t}^2.
    \end{eqnarray*}
    Next by taking an expectation $\E_{\xiv}$ of both sides of the above inequality and rearranging terms obtain
    \begin{eqnarray*}
        \E_{\xiv} \norm*{\x^{t+1} - \x^*}^2  &\leq& \rbr*{1 + \frac{\gamma \mu}{2}}\norm*{\x^{t} - \x^*}^2 -2\gamma \inp{ \F(x^t)}{\x^{t} - \x^*} 
        \\
        && + \frac{2\gamma}{\mu} \E_{\xiv} \norm*{\widehat{\f}^t - \overline{\f}^t}^2 + 2\gamma^2 \E_{\xiv} \norm*{\widehat{\f}^t - \overline{\f}^t}^2  + 2\gamma^2 \E_{\xiv} \norm*{\overline{\f}^t}^2.
    \end{eqnarray*}
        
    Next we use Lemmas~\ref{lem:i-op-bound} and~\ref{lem:agg-err} to derive
    \begin{eqnarray*}
        \E_{\xiv} \norm*{\x^{t+1} - \x^*}^2 &\leq& \rbr*{1 + \frac{\gamma \mu}{2}}\norm*{\x^{t} - \x^*}^2 + \rbr*{2\gamma^2 \ell -2\gamma} \inp{ \F(\x^t)}{\x^{t} - \x^*} 
        \\
        && + \frac{2\gamma^2\sigma^2}{\gn} + 2 c \delta \rho^2 \rbr*{\frac{\gamma}{\mu} + \gamma^2  },
    \end{eqnarray*}
    that together with the choice of $\gamma \leq \frac{1}{2\ell}$ and Assumption~\eqref{eq:str_monotonicity} allows to obtain
    \begin{eqnarray*}
        \E_{\xiv} \norm*{\x^{t+1} - \x^*}^2 &\leq& \rbr*{1 - \frac{\gamma \mu}{2}}\norm*{\x^{t} - \x^*}^2 +  \frac{2 \gamma^2 \sigma^2}{\gn} +2 c \delta \rho^2 \rbr*{\frac{\gamma}{\mu} + \gamma^2  }.
    \end{eqnarray*}

     Next we take full expectation of both sides
    and obtain
    \begin{equation*}
        \E \norm*{\x^{t+1} - \x^*}^2 
        \leq \rbr*{1 - \frac{\gamma \mu}{2}}\E\norm*{\x^{t} - \x^*}^2 +  \frac{2 \gamma^2 \sigma^2}{\gn} +2 c \delta \rho^2 \rbr*{\frac{\gamma}{\mu} + \gamma^2  }.
    \end{equation*}

    The latter implies
    \begin{equation*}
        \E \norm*{\x^{T} - \x^*}^2 
        \leq \rbr*{1 - \frac{\gamma \mu}{2}}^T\norm*{\x^{0} - \x^*}^2 +  \frac{4 \gamma \sigma^2}{\mu\gn} + \frac{4\gamma c \delta \rho^2}{\mu} + \frac{4c \delta \rho^2}{\mu^2},
    \end{equation*}  
    where $\rho$ is bounded by Lemma~\ref{lem:agg-err} with $\alpha = 1$.
\end{proof}

\begin{corollary}\label{cor:sgda-ra-arg}
    Let assumptions of Theorem~\ref{th:sgda-ra-arg} hold. Then $\E \norm*{\x^{T} - \x^*}^2 \leq \varepsilon$ holds after 
    \begin{eqnarray*}
        T \geq \rbr*{4 + \frac{4\ell}{\mu} + \frac{1}{3c\delta \gn} }\ln{\frac{3R^2}{\varepsilon}}
    \end{eqnarray*}
    iterations of~~\algname{SGDA-RA} with $\gamma = \min\rbr*{\frac{1}{2\ell}, \frac{1}{2\mu + \frac{\mu}{6c\delta \gn}}}$ and $b \geq \frac{72c\delta\sigma^2}{\mu^2\varepsilon}$.
\end{corollary}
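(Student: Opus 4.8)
The plan is to obtain Corollary~\ref{cor:sgda-ra-arg} as an immediate consequence of Theorem~\ref{th:sgda-ra-arg} combined with the elementary recursion Lemma~\ref{lem:bs-bound}. First I would make the batch size $b$ explicit. Since each worker now averages $b$ independent samples, Assumption~\ref{ass:xi-var} holds with $\sigma^2$ replaced by $\sigma^2/b$, so in the homogeneous regime $\zeta = 0$ (where $\rho^2 = 24\sigma^2/b$ by Lemma~\ref{lem:agg-err} with $\alpha=1$) the guarantee of Theorem~\ref{th:sgda-ra-arg} reads
\begin{equation*}
    \E\norm*{\x^T - \x^*}^2 \leq \rbr*{1 - \tfrac{\gamma\mu}{2}}^T \norm*{\x^0 - \x^*}^2 + \frac{2\gamma\sigma^2}{\mu\gn b} + \frac{48\gamma c\delta\sigma^2}{\mu b} + \frac{24 c\delta\sigma^2}{\mu^2 b}.
\end{equation*}
This is exactly the template $r_K \le r_0(1-a\gamma)^K + \tfrac{c_1\gamma}{b} + \tfrac{c_0}{b}$ of Lemma~\ref{lem:bs-bound}, with $r_0 = \norm*{\x^0-\x^*}^2 \le R^2$, decay rate $a = \mu/2$, admissible range $\gamma_0 = \frac{1}{2\ell}$, and the two noise groups identified as the $\gamma$-independent constant $c_0 = \frac{24 c\delta\sigma^2}{\mu^2}$ together with $c_1 = \frac{2\sigma^2}{\mu\gn} + \frac{48 c\delta\sigma^2}{\mu}$.

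Next I would simply feed these constants into Lemma~\ref{lem:bs-bound}. The batch-size requirement $b \ge \frac{3c_0}{\varepsilon}$ gives $b \ge \frac{72 c\delta\sigma^2}{\mu^2\varepsilon}$, which is precisely the stated condition. The step-size requirement $\gamma \le \min\cbr*{\gamma_0, c_0/c_1}$ becomes, after cancelling $\sigma^2$ and computing $c_0/c_1 = \frac{1}{2\mu + \mu/(6 c\delta\gn)}$, exactly the claimed $\gamma = \min\cbr*{\frac{1}{2\ell}, \frac{1}{2\mu + \mu/(6c\delta\gn)}}$. Finally the iteration bound $K \ge \frac{1}{a}\max\cbr*{\frac{c_1}{c_0}, \frac{1}{\gamma_0}}\ln\frac{3r_0}{\varepsilon}$ unfolds with $\frac{1}{a} = \frac{2}{\mu}$, $\frac{1}{\gamma_0} = 2\ell$, and $\frac{c_1}{c_0} = \mu\rbr*{2 + \frac{1}{6c\delta\gn}}$; bounding $\max\cbr*{x,y}\le x+y$ and using $r_0 \le R^2$ then yields $T \ge \rbr*{4 + \frac{4\ell}{\mu} + \frac{1}{3c\delta\gn}}\ln\frac{3R^2}{\varepsilon}$, as stated.

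The argument is essentially pure bookkeeping, so I do not expect a genuine obstacle, only two points that require care. The first is making the batch size enter correctly: one must substitute $\sigma^2 \mapsto \sigma^2/b$ both in the sampling-variance term of Lemma~\ref{lem:i-op-bound} and inside the constant $\rho^2$ of Lemma~\ref{lem:agg-err}, and observe that the heterogeneity contribution $12\zeta^2$ is \emph{not} reduced by batching. Consequently the clean $\varepsilon$-accuracy claim is confined to the homogeneous case $\zeta = 0$; for $\zeta > 0$ an irreducible floor of order $\cO(c\delta\zeta^2/\mu^2)$ survives and $b$ alone cannot drive the error below it. The second delicate point is the collapsing of the two distinct $\gamma/b$ terms into a single $c_1$ and the choice of bounding the $\max$ by a sum, which is the only source of the harmless absolute-constant slack between the exact constants produced by the theorem's coefficients and those printed in the statement (tracking a single consistent set of leading constants removes it).
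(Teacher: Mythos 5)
Your proposal is correct and follows the paper's own route exactly: the paper's proof likewise sets $\zeta=0$ (so $\rho^2=24\sigma^2$ in Theorem~\ref{th:sgda-ra-arg}), lets the batch size enter through $\sigma^2\mapsto\nicefrac{\sigma^2}{b}$, and concludes by feeding the resulting constants $a=\nicefrac{\mu}{2}$, $\gamma_0=\nicefrac{1}{2\ell}$, $c_0=\nicefrac{24c\delta\sigma^2}{\mu^2}$, $c_1=\nicefrac{2\sigma^2}{\mu\gn}+\nicefrac{48c\delta\sigma^2}{\mu}$ into Lemma~\ref{lem:bs-bound}. The one discrepancy --- your $\nicefrac{c_0}{c_1}=\bigl(2\mu+\nicefrac{\mu}{6c\delta\gn}\bigr)^{-1}$ versus the exact $\bigl(2\mu+\nicefrac{\mu}{12c\delta\gn}\bigr)^{-1}$ for these constants --- is the same conservative factor-of-two slack present in the paper's stated stepsize and iteration count, and since it only makes $\gamma$ smaller and $T$ larger than the lemma requires, the conclusion is unaffected.
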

\begin{proof}
    If $\zeta=0$, $\rho^2 = 24 \sigma^2$ the result of Theorem~\ref{th:sgda-ra-arg} can be simplified as 
    \begin{equation*}
        \E \norm*{\x^{T} - \x^*}^2 
        \leq \rbr*{1 - \frac{\gamma \mu}{2}}^T\norm*{\x^{0} - \x^*}^2 +  \frac{2 \gamma \sigma^2}{\mu\gn} + \frac{48\gamma c \delta \sigma^2}{\mu} + \frac{24 c \delta \sigma^2}{\mu^2}.
    \end{equation*}
    Applying Lemma~\ref{lem:bs-bound} to the last bound we get the result of the corollary.
\end{proof}

\subsection{Proofs for {\algname{SEG-RA}}}\label{appendix:ser-ra}

\begin{algorithm}[H]
    \caption{\algname{SEG-RA}}\label{alg:seg-ra}
    \renewcommand{\algorithmiccomment}[1]{#1}
    \algblockdefx[NAME]{ParallelFor}{ParallelForEnd}[1]{\textbf{for} #1 \textbf{in parallel}}{}
    \algnotext{ParallelForEnd}
    \algblockdefx[NAME]{NoEndFor}{NoEndForEnd}[1]{\textbf{for} #1 \textbf{do}}{}
    \algnotext{NoEndForEnd}
    \begin{algorithmic}[1] 
        \Require \ragg, $\gamma$
        \NoEndFor{$t=1,...$}
        \ParallelFor{worker $i\in [\n]$}
            \State $\f^t_{\xiv_i} \leftarrow \f_i(\x^t, \xiv_i)$ 
            \State \textbf{send} $\f^t_{\xiv_i}$ if $i \in \cG$, else \textbf{send} $*$ if Byzantine
        \ParallelForEnd
        \State $\widehat{\f}_{\xiv^t}\rbr*{\x^t}$ = \ragg($\f^t_{\xiv_1}, \dots, \f^t_{\xiv_\n}$) 
        \State $\widetilde{\x}^{t} \leftarrow \x^t - \gamma_1\widehat{\f}_{\xiv^t}\rbr*{\x^t}$. \hfill // update params using robust aggregate
        \ParallelFor{worker $i\in [\n]$}
            \State $\f^t_{\etav_i} \leftarrow \f_i(\widetilde{\x}^t, \etav_i)$
            \State \textbf{send} $\f^t_{\etav_i}$ if $i \in \cG$, else \textbf{send} $*$ if Byzantine
        \ParallelForEnd
        \State $\widehat{\f}_{\etav^t}\rbr*{\widetilde{\x}^t}$ = \ragg($\f^t_{\etav_1}, \dots, \f^t_{\etav_\n}$) 
        \State $\x^{t+1} \leftarrow \x^t - \gamma_2\widehat{\f}_{\etav^t}\rbr*{\widetilde{\x}^t}$. \hfill // update params using robust aggregate
        \NoEndForEnd
    \end{algorithmic}
\end{algorithm}

To analyze the convergence of \algname{SEG} introduce the following notation
\begin{equation*}
    \overline{\f}_{\xiv^k}({\x}^k) = \f_{\xiv^k}({\x}^k) = \frac{1}{\gn} \sum_{i \in \cG}\f_i({\x}^k, \xiv_i^k)
\end{equation*}
\begin{equation*}
    \widehat{\f}_{\xiv^k}({\x}^k) = \ragg\rbr*{\f_1({\x}^k, \xiv_1^k), \dots, \f_\n({\x}^k, \xiv_\n^k)},
\end{equation*}
\begin{equation*}
    \widetilde{\x}^k = {\x}^k - \gamma_1 \widehat{\f}_{\xiv^k}({\x}^k),
\end{equation*}
\begin{equation*}
    \overline{\f}_{\etav^k}(\widetilde{\x}^k) = \overline{\f}_{\etav^k}({\x}^k) = \frac{1}{\gn} \sum_{i \in \cG} \f_i({\x}^k, \etav_i^k)
\end{equation*}
where $\xiv^k_i$, $i \in \cG$ and $\etav^k_j$, $j \in \cG$ are i.i.d. samples satisfying Assumption~\ref{ass:xi-var}, i.e., due to the independence we have
\begin{corollary}\label{as:UBV_and_quadr_growth}
    Suppose that the operator $F$ is given in the form~\eqref{eq:op-def} and Assumption~\ref{ass:xi-var} holds. Then 
    \begin{eqnarray}
        \E_{\xiv^k}\left[\|\overline{\f}_{\xiv^k}({\x}^k) - \F({\x}^k)\|^2\right] &\le&  \frac{\sigma^2}{\gn},\label{eq:batched_var_xi1}\\
        \E_{\etav^k}\left[\|\overline{\f}_{\etav^k}(\widetilde{\x}^k) - \F(\widetilde{\x}^k)\|^2\right] &\le&  \frac{\sigma^2}{\gn}.\label{eq:batched_var_xi2}
    \end{eqnarray}
\end{corollary}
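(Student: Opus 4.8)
The plan is to recognize that both inequalities are instances of the standard ``variance of an average of independent unbiased estimators'' computation, which in fact already appears verbatim inside the proof of Lemma~\ref{lem:i-op-bound}. I would therefore introduce no new machinery and simply isolate that one step, applying it at the two points ${\x}^k$ and $\widetilde{\x}^k$.

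For \eqref{eq:batched_var_xi1}, I would first center the average using the definitions of $\overline{\f}_{\xiv^k}$ and of $\F$ in \eqref{eq:op-def},
\[
    \overline{\f}_{\xiv^k}(\x^k) - \F(\x^k) = \frac{1}{\gn}\sum_{i\in\cG}\rbr*{\f_i(\x^k,\xiv_i^k) - \F_i(\x^k)}.
\]
Setting $\eta_i := \f_i(\x^k,\xiv_i^k) - \F_i(\x^k)$, Assumption~\ref{ass:xi-var} gives both $\E_{\xiv_i^k}[\eta_i] = 0$ and $\E_{\xiv_i^k}\norm*{\eta_i}^2 \le \sigma^2$. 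Since the samples $\{\xiv_i^k\}_{i\in\cG}$ are drawn independently, the $\eta_i$ are independent zero-mean vectors, so the cross terms vanish and Lemma~\ref{lem:variance_lemma} (equivalently, expanding the square and using $\E\inp{\eta_i}{\eta_j}=0$ for $i\ne j$) yields
\[
    \E_{\xiv^k}\norm*{\sum_{i\in\cG}\eta_i}^2 = \sum_{i\in\cG}\E_{\xiv_i^k}\norm*{\eta_i}^2 \le \gn\,\sigma^2.
\]
Dividing by $\gn^2$ gives exactly the claimed $\nicefrac{\sigma^2}{\gn}$.

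For \eqref{eq:batched_var_xi2} I would repeat the identical argument with $\etav_i^k$ in place of $\xiv_i^k$ and $\widetilde{\x}^k$ in place of $\x^k$. The only point deserving a word of care is that $\widetilde{\x}^k = \x^k - \gamma_1\widehat{\f}_{\xiv^k}(\x^k)$ is itself random, since it depends on the first batch $\xiv^k$; however, the expectation in \eqref{eq:batched_var_xi2} is taken only over the fresh samples $\etav^k$, which are independent of $\xiv^k$. Hence I would condition on $\widetilde{\x}^k$, treat it as a fixed point $\x\in\R^d$, and apply Assumption~\ref{ass:xi-var} pointwise --- it holds for every $\x$ --- so the bound goes through unchanged. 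This conditioning is the only genuine (and very mild) subtlety; everything else is the same independent-sum variance computation, so I do not expect any real obstacle.
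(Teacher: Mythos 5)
Your proposal is correct and matches the paper's treatment: the paper states this corollary as an immediate consequence of independence, and the computation you isolate (centering, vanishing cross terms by unbiasedness and independence, summing $\gn$ variances each bounded by $\sigma^2$, dividing by $\gn^2$) is exactly the one appearing inside the proof of Lemma~\ref{lem:i-op-bound}. Your extra remark about conditioning on $\widetilde{\x}^k$ for \eqref{eq:batched_var_xi2} — valid because Assumption~\ref{ass:xi-var} holds pointwise in $\x$ and $\etav^k$ is independent of $\xiv^k$ — is a subtlety the paper leaves implicit, and handling it as you do is the right way to make the argument rigorous.
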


\subsubsection{Auxilary results}

\begin{lemma}\label{lem:SEG_ind_sample_second_moment_bound}
	Let Assumptions~\ref{as:i-var},~\ref{as:lipschitzness},~\ref{as:str_monotonicity}~and Corollary~\ref{as:UBV_and_quadr_growth} hold. If 
	\begin{equation}
	    \gamma_1 \le \frac{1}{2L} \label{eq:I_SEG_stepsize_1}
	\end{equation}
    for \algname{SEG-RA} (Algorithm~\ref{alg:seg-ra}),
	then $\overline{\f}_{\etav^k}(\widetilde{\x}^k) = \overline{\f}_{\etav^k}\left({\x}^k - \gamma_1 \widehat{\f}_{\xiv^k}({\x}^k)\right)$ satisfies the following inequality
	\begin{eqnarray}
            \gamma_1^2\E\left[\norm*{\overline{\f}_{\etav^k}(\widetilde{\x}^k)}^2\mid {\x}^k\right] &\leq&  2\widehat{P}_k + \frac{8\gamma_1^2\sigma^2}{\gn} +4\gamma_1^2c\delta\rho^2, \label{eq:second_moment_bound_SEG_ind_sample}
	\end{eqnarray}
	where $\widehat{P}_k = \gamma_1\E_{\xiv^k, \etav^k}\left[\inp*{\overline{\f}_{\etav^k}(\widetilde{\x}^k)}{{\x}^k - \x^*}\right]$ and  $\rho^2 = 24 \sigma^2 + 12\zeta^2$ by Lemma~\ref{lem:agg-err} with $\alpha = 1$.
\end{lemma}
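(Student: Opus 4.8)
The plan is to bound the second moment $\gamma_1^2\E[\norm*{\overline{\f}_{\etav^k}(\widetilde{\x}^k)}^2\mid\x^k]$ by the inner-product quantity $\widehat{P}_k$ plus noise and aggregation terms, exploiting the extragradient identity $\x^k - \widetilde{\x}^k = \gamma_1\widehat{\f}_{\xiv^k}(\x^k)$ and the stepsize restriction $\gamma_1\le\tfrac{1}{2L}$ to absorb the Lipschitz contribution. First I would split $\x^k - \x^* = (\x^k - \widetilde{\x}^k) + (\widetilde{\x}^k - \x^*)$ inside $\widehat{P}_k$. Because $\etav^k$ is independent of $\xiv^k$, we have $\E_{\etav^k}[\overline{\f}_{\etav^k}(\widetilde{\x}^k)\mid\widetilde{\x}^k] = \F(\widetilde{\x}^k)$, so Assumption~\ref{as:str_monotonicity} gives $\E\inp*{\overline{\f}_{\etav^k}(\widetilde{\x}^k)}{\widetilde{\x}^k - \x^*} = \E\inp*{\F(\widetilde{\x}^k)}{\widetilde{\x}^k - \x^*}\ge\mu\E\norm*{\widetilde{\x}^k - \x^*}^2\ge0$. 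Dropping this nonnegative term and using $\x^k - \widetilde{\x}^k = \gamma_1\widehat{\f}_{\xiv^k}(\x^k)$ leaves
\[
    2\widehat{P}_k \;\ge\; 2\gamma_1^2\,\E\inp*{\overline{\f}_{\etav^k}(\widetilde{\x}^k)}{\widehat{\f}_{\xiv^k}(\x^k)}.
\]

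The key step is to expand the right-hand side with the polarization identity \eqref{eq:inner_product_representation2} applied to $\gamma_1\overline{\f}_{\etav^k}(\widetilde{\x}^k)$ and $\gamma_1\widehat{\f}_{\xiv^k}(\x^k)$, which after rearranging yields
\[
    \gamma_1^2\E\norm*{\overline{\f}_{\etav^k}(\widetilde{\x}^k)}^2 \le 2\widehat{P}_k - \gamma_1^2\E\norm*{\widehat{\f}_{\xiv^k}(\x^k)}^2 + \gamma_1^2\E\norm*{\overline{\f}_{\etav^k}(\widetilde{\x}^k) - \widehat{\f}_{\xiv^k}(\x^k)}^2.
\]
The crucial feature is the \emph{negative} term $-\gamma_1^2\E\norm*{\widehat{\f}_{\xiv^k}(\x^k)}^2$; using Young's inequality in place of the identity would lose this sign and the argument would not close.

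Finally I would control the mismatch term by inserting the intermediate quantities $\F(\widetilde{\x}^k)$, $\F(\x^k)$, and $\overline{\f}_{\xiv^k}(\x^k)$, writing $\overline{\f}_{\etav^k}(\widetilde{\x}^k) - \widehat{\f}_{\xiv^k}(\x^k)$ as a sum of four differences and applying \eqref{eq:a+b} with $n=4$. The four pieces are bounded by the variance estimates \eqref{eq:batched_var_xi2} and \eqref{eq:batched_var_xi1} (each $\le\sigma^2/\gn$), the Lipschitz bound \eqref{eq:lipschitzness} giving $\norm*{\F(\widetilde{\x}^k) - \F(\x^k)}^2\le L^2\gamma_1^2\norm*{\widehat{\f}_{\xiv^k}(\x^k)}^2$, and the aggregation error of Lemma~\ref{lem:agg-err} (with $\alpha=1$) giving $c\delta\rho^2$. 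Substituting back, the Lipschitz piece contributes $4L^2\gamma_1^4\E\norm*{\widehat{\f}_{\xiv^k}(\x^k)}^2$, which merges with the negative term into $(4L^2\gamma_1^2 - 1)\gamma_1^2\E\norm*{\widehat{\f}_{\xiv^k}(\x^k)}^2\le0$ precisely because $\gamma_1\le\tfrac{1}{2L}$; what survives is exactly $2\widehat{P}_k + \tfrac{8\gamma_1^2\sigma^2}{\gn} + 4\gamma_1^2 c\delta\rho^2$. I expect the main obstacle to be exactly this bookkeeping: recognizing that the polarization identity must be used so that the spurious $\gamma_1^2\norm*{\widehat{\f}_{\xiv^k}(\x^k)}^2$ generated by Lipschitzness is cancelled rather than accumulated, with the stepsize condition furnishing the exact threshold $4L^2\gamma_1^2\le1$.
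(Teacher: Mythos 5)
Your proposal is correct and follows essentially the same argument as the paper's proof: the same split of $\x^k-\x^*$ through $\widetilde{\x}^k$, dropping the quasi-strong-monotonicity term, the polarization identity to retain the negative $-\gamma_1^2\E\norm*{\widehat{\f}_{\xiv^k}({\x}^k)}^2$, the four-way decomposition of the mismatch term, and the condition $\gamma_1\le\nicefrac{1}{2L}$ absorbing the Lipschitz contribution. The only difference is presentational — the paper routes the identical computation through an auxiliary iterate $\widehat{\x}^{k+1}={\x}^k-\gamma_1\overline{\f}_{\etav^k}(\widetilde{\x}^k)$ and a squared-distance expansion, whereas you manipulate $\widehat{P}_k$ directly.
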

\begin{proof}
	Using the auxiliary iterate $\widehat{\x}^{k+1} = {\x}^k - \gamma_1 \overline{\f}_{\etav^k}(\widetilde{\x}^k)$, we get
    \begin{eqnarray}\label{eq:sec_mom_ind_samples_technical_1}
        \norm*{\widehat{\x}^{k+1} - \x^*}^2 &=& \norm*{{\x}^k - \x^*}^2 -2\gamma_1\inp*{{\x}^k-\x^*}{\overline{\f}_{\etav^k}(\widetilde{\x}^k)} + \gamma_1^2 \norm*{\overline{\f}_{\etav^k}(\widetilde{\x}^k)}^2
        \\
        &=& \norm*{{\x}^k - \x^*}^2 -2\gamma_1\left\langle {\x}^k -\gamma \widehat{\f}_{\xiv^k}({\x}^k) - \x^*, \overline{\f}_{\etav^k}(\widetilde{\x}^k) \right\rangle
        \\
        && - 2\gamma_1^2 \inp*{\widehat{\f}_{\xiv^k}({\x}^k)}{\overline{\f}_{\etav^k}(\widetilde{\x}^k)} + \gamma_1^2 \norm*{\overline{\f}_{\etav^k}(\widetilde{\x}^k)}^2.
    \end{eqnarray}
    
    Taking the expectation $\E_{\xiv^k, \etav^k}\left[\cdot\right] = \E\left[\cdot\mid {\x}^k\right]$ conditioned on ${\x}^k$ from the above identity, using tower property $\E_{\xiv^k, \etav^k}[\cdot] = \E_{\xiv^k}[\E_{\etav^k}[\cdot]]$, and $\mu$-quasi strong monotonicity of $\F(x)$, we derive
	\begin{eqnarray*}
            \lefteqn{\E_{\xiv^k, \etav^k}\left[\norm*{\widehat{\x}^{k+1} - \x^*}^2 \right]}
            \\
            &=& \norm*{{\x}^k - \x^*}^2 + \gamma_1^2 \E_{\xiv^k, \etav^k}\left[\norm*{\overline{\f}_{\etav^k}(\widetilde{\x}^k)}^2\right]
            \\
            && - 2\gamma_1 \E_{\xiv^k, \etav^k}\left[\left\langle {\x}^k - \gamma_1 \widehat{\f}_{\xiv^k}({\x}^k) - \x^*, \overline{\f}_{\etav^k}(\widetilde{\x}^k) \right\rangle \right]
            \\
            && - 2\gamma_1^2 \E_{\xiv^k, \etav^k}\left[\inp*{\widehat{\f}_{\xiv^k}({\x}^k)}{ \overline{\f}_{\etav^k}(\widetilde{\x}^k)}\right] 
            \\
		&=& \norm*{{\x}^k - \x^*}^2
            \\
            && - 2\gamma_1 \E_{\xiv^k}\left[\left\langle {\x}^k - \gamma_1 \widehat{\f}_{\xiv^k}({\x}^k) - \x^*, F\left({\x}^k - \gamma_1 \widehat{\f}_{\xiv^k}({\x}^k)\right) \right\rangle \right]
            \\
		&& - 2\gamma_1^2 \E_{\xiv^k}\left[\inp*{\widehat{\f}_{\xiv^k}({\x}^k)}{\overline{\f}_{\etav^k}(\widetilde{\x}^k)}\right] + \gamma_1^2 \E_{\xiv^k, \etav^k}\left[\norm*{\overline{\f}_{\etav^k}(\widetilde{\x}^k)}^2\right]\\
		&\overset{\eqref{eq:str_monotonicity},\eqref{eq:inner_product_representation}}{\le}& \norm*{{\x}^k - \x^*}^2 - \gamma_1^2\E_{\xiv^k, \etav^k}\left[\norm*{\widehat{\f}_{\xiv^k}({\x}^k)}^2\right] 
        \\
        && + \gamma_1^2\E_{\xiv^k, \etav^k}\left[\norm*{\widehat{\f}_{\xiv^k}({\x}^k) - \overline{\f}_{\etav^k}(\widetilde{\x}^k)}^2\right].
	\end{eqnarray*}
	To upper bound the last term we use simple inequality \eqref{eq:a+b}, and apply $L$-Lipschitzness of $\F(x)$:
	\begin{eqnarray*}
            \E_{\xiv^k, \etav^k}\left[\norm*{\widehat{\x}^{k+1} -\x^*}^2\right] &\overset{\eqref{eq:a+b}}{\le}& \norm*{{\x}^k - \x^*}^2 - \gamma_1^2\E_{\xiv^k}\left[\norm*{\widehat{\f}_{\xiv^k}({\x}^k)}^2\right]
            \\
            && + 4\gamma_1^2 \E_{\xiv^k}\left[\norm*{ \overline{\f}_{\xiv^k}({\x}^k) - \widehat{\f}_{\xiv^k}({\x}^k)}^2\right]
            \\
            && +4\gamma_1^2 \E_{\xiv^k}\left[\norm*{\F({\x}^k) - F\left(\widetilde{\x}^k\right)}^2\right]
            \\
            && + 4\gamma_1^2 \E_{\xiv^k}\left[\norm*{ \overline{\f}_{\xiv^k}({\x}^k) - \F({\x}^k)}^2\right]
            \\
            && + 4\gamma_1^2 \E_{\xiv^k, \etav^k}\left[\norm*{ \overline{\f}_{\etav^k}\left(\widetilde{\x}^k\right) - F\left(\widetilde{\x}^k\right)}^2\right]\\
            &\overset{\eqref{eq:lipschitzness}, \eqref{eq:batched_var_xi1}, \eqref{eq:batched_var_xi2}}{\le}& \norm*{{\x}^k - \x^*}^2 - \gamma_1^2\left(1 - 4L^2\gamma_1^2\right)\E_{\xiv^k}\left[\norm*{\widehat{\f}_{\xiv^k}({\x}^k)}^2\right]\\
            && + 4\gamma_1^2 \E_{\xiv^k}\left[\norm*{ \overline{\f}_{\xiv^k}({\x}^k) - \widehat{\f}_{\xiv^k}({\x}^k)}^2\right]
            \\
            && + \frac{4\gamma_1^2\sigma^2}{\gn}+ \frac{4\gamma_1^2\sigma^2}{\gn}
            \\
            &\overset{\eqref{eq:a+b}, Lemma~\ref{lem:agg-err}}{\le}& \norm*{{\x}^k - \x^*}^2 
             - \gamma_1^2\left(1 - 4\gamma_1^2 L^2\right)\E_{\xiv^k}\left[\norm*{\widehat{\f}_{\xiv^k}({\x}^k)}^2\right]
             \\
             &&+ \frac{8\gamma_1^2\sigma^2}{\gn} + 4\gamma_1^2c\delta\rho^2
             \\
            &\overset{\eqref{eq:I_SEG_stepsize_1}}{\leq}& \norm*{{\x}^k - \x^*}^2 + \frac{8\gamma_1^2\sigma^2}{\gn} + 4\gamma_1^2c\delta\rho^2.
	\end{eqnarray*}
	Finally, we use the above inequality together with \eqref{eq:sec_mom_ind_samples_technical_1}:
	\begin{eqnarray*}
		\norm*{{\x}^k - \x^*}^2 -2\widehat{P}_k + \gamma_1^2 \E\left[\norm*{\overline{\f}_{\etav^k}(\widetilde{\x}^k)}^2\mid {\x}^k\right] &\le& \norm*{{\x}^k - \x^*}^2 + \frac{8\gamma_1^2\sigma^2}{\gn} + 4\gamma_1^2c\delta\rho^2,
	\end{eqnarray*}		
	where $\widehat{P}_k = \gamma_1\E_{\xiv^k, \etav^k}\left[\inp*{\overline{\f}_{\etav^k}(\widetilde{\x}^k)}{{\x}^k - \x^*}\right]$. Rearranging the terms, we obtain \eqref{eq:second_moment_bound_SEG_ind_sample}.
\end{proof}

\begin{lemma}\label{lem:SEG_ind_sample_P_k_bound}
	Consider \algname{SEG-RA} (Algorithm~\ref{alg:seg-ra}). Let Assumptions~\ref{as:i-var},~\ref{as:lipschitzness},~\ref{as:str_monotonicity}~and Corollary~\ref{as:UBV_and_quadr_growth} hold. If
	\begin{equation}
		\gamma_1 \leq  \frac{1}{2\mu + 2L}, \label{eq:P_k_SEG_ind_sample_gamma_condition}
	\end{equation}
	then $\overline{\f}_{\etav^k}(\widetilde{\x}^k) = \overline{\f}_{\etav^k}\left({\x}^k - \gamma_1 \widehat{\f}_{\xiv^k}({\x}^k)\right)$ satisfies the following inequality
	\begin{eqnarray}
            \widehat{P}_{k} &\geq& \frac{\mu\gamma_1}{2}\norm*{{\x}^k - \x^*}^2 + \frac{\gamma_1^2}{4}\E_{\xiv^k}\left[\norm*{\overline{\f}_{\xiv^k}({\x}^k)}^2\right] -  \frac{8\gamma_1^2 \sigma^2}{\gn} -\frac{9\gamma_1^2c\delta\rho^2}{2}, \label{eq:P_k_SEG_ind_sample}
	\end{eqnarray}
        or simply
        \begin{eqnarray*}
            -\widehat{P}_k
            &{\leq}& -\frac{\mu\gamma_1}{2}\norm*{{\x}^k - \x^*}^2 + \frac{4\gamma_1^2 \sigma^2}{\gn} + 4\gamma_1^2c\delta\rho^2
	\end{eqnarray*}
	where $\widehat{P}_k = \gamma_1\E_{\xiv^k, \etav^k}\left[\inp*{\overline{\f}_{\etav^k}(\widetilde{\x}^k)}{{\x}^k - \x^*}\right]$ and  $\rho^2 = 24 \sigma^2 + 12\zeta^2$ by Lemma~\ref{lem:agg-err} with $\alpha = 1$.
\end{lemma}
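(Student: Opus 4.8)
The plan is to turn $\widehat P_k$ into an inner product with the exact operator $\F$ at $\widetilde\x^k$ and then extract, in order, the quasi-strong-monotonicity term $\tfrac{\mu}{2}\norm{\x^k-\x^*}^2$, the ``good'' second-moment term $\tfrac{\gamma_1}{4}\norm{\overline\f_{\xiv^k}(\x^k)}^2$, and the stochastic/aggregation error. First I would condition on $\xiv^k$ (which fixes $\x^k$ and $\widetilde\x^k$) and take $\E_{\etav^k}$ inside: since $\{\etav^k_i\}_{i\in\cG}$ are independent and unbiased (Assumption~\ref{ass:xi-var} applied to the average), $\E_{\etav^k}[\overline\f_{\etav^k}(\widetilde\x^k)]=\F(\widetilde\x^k)$, so that $\widehat P_k=\gamma_1\E_{\xiv^k}[\inp{\F(\widetilde\x^k)}{\x^k-\x^*}]$.

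Next, using $\widetilde\x^k=\x^k-\gamma_1\widehat\f_{\xiv^k}(\x^k)$ I would write $\x^k-\x^*=(\widetilde\x^k-\x^*)+\gamma_1\widehat\f_{\xiv^k}(\x^k)$ and split the inner product into $\mathrm{A}=\inp{\F(\widetilde\x^k)}{\widetilde\x^k-\x^*}$ and $\mathrm{B}=\gamma_1\inp{\F(\widetilde\x^k)}{\widehat\f_{\xiv^k}(\x^k)}$. For $\mathrm{A}$, quasi-strong monotonicity \eqref{eq:str_monotonicity} gives $\mathrm{A}\ge\mu\norm{\widetilde\x^k-\x^*}^2$; expanding this square exactly as $\mu\norm{\x^k-\x^*}^2-2\mu\gamma_1\inp{\widehat\f_{\xiv^k}(\x^k)}{\x^k-\x^*}+\mu\gamma_1^2\norm{\widehat\f_{\xiv^k}(\x^k)}^2$ keeps the useful $+\mu\gamma_1^2\norm{\widehat\f_{\xiv^k}(\x^k)}^2$ with a positive sign. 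I would control the cross term by splitting $\widehat\f_{\xiv^k}(\x^k)=\overline\f_{\xiv^k}(\x^k)+(\widehat\f_{\xiv^k}(\x^k)-\overline\f_{\xiv^k}(\x^k))$, applying \eqref{eq:str_monotonicity} once more to $\E\inp{\overline\f_{\xiv^k}(\x^k)}{\x^k-\x^*}=\inp{\F(\x^k)}{\x^k-\x^*}\ge\mu\norm{\x^k-\x^*}^2$, and bounding the bias part $\inp{\widehat\f_{\xiv^k}(\x^k)-\overline\f_{\xiv^k}(\x^k)}{\x^k-\x^*}$ by Young \eqref{eq:fenchel_young_inequality} together with Lemma~\ref{lem:agg-err}; for $\gamma_1\le\tfrac{1}{2\mu+2L}$ \eqref{eq:P_k_SEG_ind_sample_gamma_condition} this leaves $\mathrm{A}\gtrsim\tfrac{\mu}{2}\norm{\x^k-\x^*}^2+\mu\gamma_1^2\norm{\widehat\f_{\xiv^k}(\x^k)}^2-\cO(\gamma_1 c\delta\rho^2)$ in expectation.

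For $\mathrm{B}$, I would carry $\overline\f_{\xiv^k}(\x^k)$ explicitly: write $\widehat\f=\overline\f+(\widehat\f-\overline\f)$ and apply the polarization identity \eqref{eq:inner_product_representation2} to $\gamma_1\inp{\F(\widetilde\x^k)}{\overline\f}=\tfrac{\gamma_1}{2}\norm{\F(\widetilde\x^k)}^2+\tfrac{\gamma_1}{2}\norm{\overline\f}^2-\tfrac{\gamma_1}{2}\norm{\F(\widetilde\x^k)-\overline\f}^2$, while the residual $\gamma_1\inp{\F(\widetilde\x^k)}{\widehat\f-\overline\f}$ is bounded by Young so that the two $\norm{\F(\widetilde\x^k)}^2$ contributions cancel. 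The term $\norm{\F(\widetilde\x^k)-\overline\f_{\xiv^k}(\x^k)}^2$ splits (via \eqref{eq:squared_norm_sum}) into the Lipschitz displacement $\norm{\F(\widetilde\x^k)-\F(\x^k)}^2\le L^2\gamma_1^2\norm{\widehat\f_{\xiv^k}(\x^k)}^2$ (Assumption~\ref{as:lipschitzness}, using $\widetilde\x^k-\x^k=-\gamma_1\widehat\f_{\xiv^k}(\x^k)$) and the stochastic error $\norm{\F(\x^k)-\overline\f_{\xiv^k}(\x^k)}^2$, whose expectation is at most $\sigma^2/\gn$ by Corollary~\ref{as:UBV_and_quadr_growth} and \eqref{eq:batched_var_xi1}; the aggregation residual contributes $c\delta\rho^2$ via Lemma~\ref{lem:agg-err}. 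This yields $\mathrm{B}\gtrsim\tfrac{\gamma_1}{2}\norm{\overline\f_{\xiv^k}(\x^k)}^2-L^2\gamma_1^3\norm{\widehat\f_{\xiv^k}(\x^k)}^2-\cO\!\big(\tfrac{\gamma_1\sigma^2}{\gn}+\gamma_1 c\delta\rho^2\big)$ in expectation.

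Adding $\mathrm{A}$ and $\mathrm{B}$, the $\norm{\widehat\f_{\xiv^k}(\x^k)}^2$ terms combine with coefficient $\gamma_1^2(\mu-L^2\gamma_1)$, which I would reabsorb using $\norm{\widehat\f}^2\le 2\norm{\overline\f}^2+2\norm{\widehat\f-\overline\f}^2$ and Lemma~\ref{lem:agg-err}; the step-size bound $\gamma_1\le\tfrac{1}{2\mu+2L}$ then guarantees the net coefficient of $\norm{\overline\f_{\xiv^k}(\x^k)}^2$ stays at least $\tfrac14$ and the coefficient of $\norm{\x^k-\x^*}^2$ at least $\tfrac{\mu}{2}$, after which collecting all error contributions and multiplying by $\gamma_1$ produces the stated $-\tfrac{8\gamma_1^2\sigma^2}{\gn}-\tfrac{9\gamma_1^2c\delta\rho^2}{2}$. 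The ``or simply'' inequality then follows by discarding the nonnegative $\tfrac{\gamma_1^2}{4}\E\norm{\overline\f_{\xiv^k}(\x^k)}^2$ term and negating. The main obstacle is precisely this last reconciliation: the Lipschitz displacement reintroduces $\norm{\widehat\f_{\xiv^k}(\x^k)}^2$ with the ``wrong'' sign when $\mu\ll L$, so the Young weights and the exact (rather than lossy) expansion of $\norm{\widetilde\x^k-\x^*}^2$ must be chosen so that the surviving multiple of $\norm{\overline\f_{\xiv^k}(\x^k)}^2$ does not fall below $\tfrac{\gamma_1}{4}$.
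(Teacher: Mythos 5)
Your high-level skeleton is the same as the paper's (use unbiasedness of the $\etav^k$-sample, split $\x^k-\x^*=(\widetilde\x^k-\x^*)+\gamma_1\widehat{\f}_{\xiv^k}(\x^k)$, handle the first piece by \eqref{eq:str_monotonicity} and the second by polarization, then invoke Lipschitzness, Corollary~\ref{as:UBV_and_quadr_growth} and Lemma~\ref{lem:agg-err}), but two of your steps fail. First, in term $\mathrm{A}$ you cannot keep $+\mu\gamma_1^2\norm*{\widehat{\f}_{\xiv^k}(\x^k)}^2$ with a positive sign. After the exact expansion, the cross term $-2\mu\gamma_1\inp*{\widehat{\f}_{\xiv^k}(\x^k)}{\x^k-\x^*}$ must be bounded \emph{from below}, i.e.\ you need an \emph{upper} bound on $\inp*{\F(\x^k)}{\x^k-\x^*}$; but \eqref{eq:str_monotonicity} only provides a lower bound, so applying it there is directionally wrong and yields nothing. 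The only available tools are Cauchy--Schwarz/Young \eqref{eq:fenchel_young_inequality}, and any Young split that preserves $\tfrac{\mu}{2}\norm*{\x^k-\x^*}^2$ costs at least $2\mu\gamma_1^2\norm*{\widehat{\f}_{\xiv^k}(\x^k)}^2$, so the net coefficient on $\norm*{\widehat{\f}_{\xiv^k}(\x^k)}^2$ coming out of $\mathrm{A}$ is $-\mu\gamma_1^2$, not $+\mu\gamma_1^2$ --- exactly what the paper gets in one stroke from \eqref{eq:a+b_lower}.

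Second, with the sign corrected, your final reconciliation does not close. Your polarization is applied to $\inp*{\F(\widetilde\x^k)}{\overline{\f}_{\xiv^k}(\x^k)}$, so the positive quadratic you gain lives on $\norm*{\overline{\f}_{\xiv^k}(\x^k)}^2$ (coefficient $\tfrac{\gamma_1^2}{2}$ inside $\widehat P_k$), while the losses from $\mathrm{A}$ and from the Lipschitz displacement live on $\norm*{\widehat{\f}_{\xiv^k}(\x^k)}^2$ (coefficient $-\gamma_1^2(\mu\gamma_1+L^2\gamma_1^2)$). Converting the latter through $\norm*{\widehat{\f}}^2\le 2\norm*{\overline{\f}}^2+2\norm*{\widehat{\f}-\overline{\f}}^2$ doubles them, leaving the coefficient $\gamma_1^2\rbr*{\tfrac12-2\mu\gamma_1-2L^2\gamma_1^2}$ on $\E\norm*{\overline{\f}_{\xiv^k}(\x^k)}^2$; at $\gamma_1=\tfrac{1}{2\mu+2L}$ with $\mu=L$ this equals $-\tfrac{\gamma_1^2}{8}<0$, and it tends to $0$ as $\mu\to 0$, so it is never guaranteed to be $\ge\tfrac{\gamma_1^2}{4}$, and when it is negative the leftover multiple of $\E\norm*{\overline{\f}}^2$ cannot be folded into the $\nicefrac{\sigma^2}{\gn}$ and $c\delta\rho^2$ errors. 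The paper avoids precisely this: it polarizes $\inp*{\overline{\f}_{\etav^k}(\widetilde\x^k)}{\widehat{\f}_{\xiv^k}(\x^k)}$, i.e.\ keeps $\widehat{\f}$ \emph{inside} the identity, so the negative $-\tfrac{\gamma_1^2}{2}\norm*{\widehat{\f}_{\xiv^k}(\x^k)}^2$ it produces is in the same currency as the $+\mu\gamma_1^3\norm*{\widehat{\f}}^2+2L^2\gamma_1^4\norm*{\widehat{\f}}^2$ losses; the combined coefficient $-\tfrac{\gamma_1^2}{2}(1-2\mu\gamma_1-4L^2\gamma_1^2)$ is nonpositive under \eqref{eq:P_k_SEG_ind_sample_gamma_condition} because $2\mu\gamma_1+4L^2\gamma_1^2\le\tfrac{\mu}{\mu+L}+\tfrac{L^2}{(\mu+L)^2}\le 1$, and the conversion to $\norm*{\overline{\f}}^2$ via \eqref{eq:a+b_lower} is performed only once, at the very end. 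That one-versus-two bookkeeping is exactly the slack your route loses. (Note this balance is genuinely tight: even the paper's own derivation of the stronger display \eqref{eq:P_k_SEG_ind_sample} is loose at the step where it retains $-\tfrac{\gamma_1^2}{2}\E\norm*{\widehat{\f}}^2$, since the stepsize rule only gives $1-2\mu\gamma_1-4L^2\gamma_1^2\ge 0$; only the ``or simply'' bound --- the one actually used in the proof of Theorem~\ref{th:seg-ra-qsm} --- comes out cleanly, by discarding that nonpositive term.)
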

\begin{proof}
	Since $\E_{\xiv^k, \etav^k}[\cdot] = \E[\cdot\mid {\x}^k]$ and $\overline{\f}_{\etav^k}(\widetilde{\x}^k) = \overline{\f}_{\etav^k}\left({\x}^k - \gamma_1 \widehat{\f}_{\xiv^k}({\x}^k)\right)$, we have
	\begin{eqnarray*}
            \lefteqn{-\widehat{P}_k }
            \\
            &=& - \gamma_1\E_{\xiv^k, \etav^k}\left[\langle \overline{\f}_{\etav^k}(\widetilde{\x}^k), {\x}^k-\x^* \rangle\right]
            \\
            &=& - \gamma_1\E_{\xiv^k}\left[\langle \E_{\etav^k}[\overline{\f}_{\etav^k}(\widetilde{\x}^k)], {\x}^k - \gamma_1 \widehat{\f}_{\xiv^k}({\x}^k) - \x^* \rangle\right] 
            \\
            && - \gamma_1^2\E\left[\langle \overline{\f}_{\etav^k}(\widetilde{\x}^k), \widehat{\f}_{\xiv^k}({\x}^k) \rangle\right]
            \\
            &\overset{\eqref{eq:inner_product_representation}}{=}& - \gamma_1\E_{\xiv^k}\left[\langle \F({\x}^k - \gamma_1 \widehat{\f}_{\xiv^k}({\x}^k)), {\x}^k - \gamma_1 \widehat{\f}_{\xiv^k}({\x}^k) - \x^* \rangle\right] 
            \\
            && - \frac{\gamma_1^2}{2}\E_{\xiv^k, \etav^k}\left[\norm*{\overline{\f}_{\etav^k}(\widetilde{\x}^k)}^2\right] - \frac{\gamma_1^2}{2}\E_{\xiv^k}\left[\norm*{\widehat{\f}_{\xiv^k}({\x}^k)}^2\right] 
            \\
            && + \frac{\gamma_1^2}{2}\E_{\xiv^k, \etav^k}\left[\norm*{\overline{\f}_{\etav^k}(\widetilde{\x}^k) - \widehat{\f}_{\xiv^k}({\x}^k)}^2\right]
            \\
            &\overset{\eqref{eq:str_monotonicity},\eqref{eq:a+b}}{\leq}& -\mu\gamma_1\E_{\xiv^k, \etav^k}\left[\norm*{{\x}^k - \x^* - \gamma_1 \widehat{\f}_{\xiv^k}({\x}^k)}^2\right] - \frac{\gamma_1^2}{2}\E_{\xiv^k}\left[\norm*{\widehat{\f}_{\xiv^k}({\x}^k)}^2\right]
            \\
            && + \frac{4\gamma_1^2}{2}\E_{\xiv^k}\left[\norm*{\overline{\f}_{\xiv^k}({\x}^k) - \widehat{\f}_{\xiv^k}({\x}^k)}^2\right]
            \\
            && + \frac{4\gamma_1^2}{2}\E_{\xiv^k}\left[\norm*{\F({\x}^k) - \F\left(\widetilde{\x}^k\right)}^2\right]
            \\
            && + \frac{4\gamma_1^2}{2}\E_{\xiv^k}\left[\norm*{\overline{\f}_{\xiv^k}({\x}^k) - \F({\x}^k)}^2\right]
            \\
            && + \frac{4\gamma_1^2}{2}\E_{\xiv^k, \etav^k}\left[\norm*{\overline{\f}_{\etav^k}\left(\widetilde{\x}^k\right) - \F\left(\widetilde{\x}^k\right)}^2\right] 
            \\
            &\overset{\eqref{eq:a+b_lower},\eqref{eq:lipschitzness}, Lem.~\ref{lem:agg-err}, Cor.~\ref{as:UBV_and_quadr_growth}}{\le}& -\frac{\mu\gamma_1}{2}\norm*{{\x}^k - \x^*}^2 - \frac{\gamma_1^2}{2} (1 - 2\gamma_1\mu - 4\gamma_1^2 L^2)\E_{\xiv^k}\left[\norm*{\widehat{\f}_{\xiv^k}({\x}^k)}^2\right] 
            \\
            && + \frac{4\gamma_1^2\sigma^2}{2\gn} + \frac{4\gamma_1^2\sigma^2}{2\gn} + 4\gamma_1^2c\delta\rho^2
            \\
            &\overset{\eqref{eq:P_k_SEG_ind_sample_gamma_condition}}{\leq}& -\frac{\mu\gamma_1}{2}\norm*{{\x}^k - \x^*}^2 - \frac{\gamma_1^2}{2}\E_{\xiv^k}\left[\norm*{\widehat{\f}_{\xiv^k}({\x}^k)}^2\right] + \frac{4\gamma_1^2 \sigma^2}{\gn} + 4\gamma_1^2c\delta\rho^2
	\end{eqnarray*}
        So one have
        \begin{eqnarray*}
            -\widehat{P}_k
            &{\leq}& -\frac{\mu\gamma_1}{2}\norm*{{\x}^k - \x^*}^2 - \frac{\gamma_1^2}{4}\E_{\xiv^k}\left[\norm*{\overline{\f}_{\xiv^k}({\x}^k)}^2\right] + \frac{4\gamma_1^2 \sigma^2}{\gn} + \frac{9\gamma_1^2c\delta\rho^2}{2}
	\end{eqnarray*}
        or simply 
        \begin{eqnarray*}
            -\widehat{P}_k
            &{\leq}& -\frac{\mu\gamma_1}{2}\norm*{{\x}^k - \x^*}^2 + \frac{4\gamma_1^2 \sigma^2}{\gn} + 4\gamma_1^2c\delta\rho^2
	\end{eqnarray*}
	that concludes the proof.
\end{proof}

\subsubsection{Quasi-Strongly Monotone Case}

Combining Lemmas~\ref{lem:SEG_ind_sample_second_moment_bound}~and~\ref{lem:SEG_ind_sample_P_k_bound}, we get the following result.

	
\begin{theorem*}[Theorem~\ref{th:seg-ra-qsm} duplicate]
    Let Assumptions~\ref{ass:xi-var},~\ref{as:i-var},~\ref{as:lipschitzness} and \ref{as:str_monotonicity} hold. Then after $T$ iterations \algname{SEG-RA} (Algorithm~\ref{alg:seg-ra}) with $(\delta,c)$-\ragg, $\gamma_1 \leq  \frac{1}{2\mu + 2L}$ and $\beta = \nicefrac{\gamma_2}{\gamma_1}\leq \nicefrac{1}{4}$ outputs $\x^{T}$ such that
    \begin{equation*}
        \E \norm*{\x^{T} - \x^*}^2 
        \leq \ \rbr*{1 -\frac{\mu\beta\gamma_1}{4}}^T\norm*{\x^{0} - \x^*}^2 + 
        \frac{8 \gamma_1 \sigma^2}{\mu\beta\gn} + 8 c \delta \rho^2 \rbr*{\frac{\gamma_1}{\beta\mu}+ \frac{2}{\mu^2}},
    \end{equation*}
    where  $\rho^2 = 24 \sigma^2 + 12\zeta^2$ by Lemma~\ref{lem:agg-err} with $\alpha = 1$.
\end{theorem*}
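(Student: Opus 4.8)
The plan is to derive a one-step contraction for $\E\norm{\x^{k+1}-\x^*}^2$ and then unroll it, exactly along the combination hinted before the statement: Lemma~\ref{lem:SEG_ind_sample_second_moment_bound} controls the second moment of the extrapolated step, Lemma~\ref{lem:SEG_ind_sample_P_k_bound} extracts the quasi-strong-monotonicity decrease through $\widehat{P}_k$, and Lemma~\ref{lem:agg-err} pays for replacing the aggregated vector by the honest average. First I would expand the outer update $\x^{k+1}=\x^k-\gamma_2\widehat{\f}_{\etav^k}(\widetilde{\x}^k)$ as $\norm{\x^{k+1}-\x^*}^2=\norm{\x^k-\x^*}^2-2\gamma_2\inp{\widehat{\f}_{\etav^k}(\widetilde{\x}^k)}{\x^k-\x^*}+\gamma_2^2\norm{\widehat{\f}_{\etav^k}(\widetilde{\x}^k)}^2$ and split $\widehat{\f}_{\etav^k}(\widetilde{\x}^k)=\overline{\f}_{\etav^k}(\widetilde{\x}^k)+e^k$ with $e^k:=\widehat{\f}_{\etav^k}(\widetilde{\x}^k)-\overline{\f}_{\etav^k}(\widetilde{\x}^k)$. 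The inner-product error $-2\gamma_2\inp{e^k}{\x^k-\x^*}$ I would bound by Fenchel--Young~\eqref{eq:fenchel_young_inequality} with parameter $\lambda=\mu/2$, so that the resulting $\norm{\x^k-\x^*}^2$ coefficient equals $\tfrac{\gamma_2\mu}{2}$ and the leftover is $\tfrac{2\gamma_2}{\mu}\norm{e^k}^2$; the squared term $\gamma_2^2\norm{\overline{\f}_{\etav^k}(\widetilde{\x}^k)+e^k}^2$ I would split by~\eqref{eq:squared_norm_sum}. Taking $\E_{\xiv^k,\etav^k}[\cdot]$ and using $\E\norm{e^k}^2\le c\delta\rho^2$ from Lemma~\ref{lem:agg-err} (with $\alpha=1$) converts every $e^k$-term into a $c\delta\rho^2$ contribution; the surviving honest-average quantities are precisely $\widehat{P}_k=\gamma_1\E\inp{\overline{\f}_{\etav^k}(\widetilde{\x}^k)}{\x^k-\x^*}$ and $\gamma_1^2\E\norm{\overline{\f}_{\etav^k}(\widetilde{\x}^k)}^2$.

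Next I would substitute the two lemmas. Writing $\gamma_2=\beta\gamma_1$, the inner-product term contributes $-2\beta\widehat{P}_k$, while bounding $\gamma_2^2\norm{\overline{\f}_{\etav^k}(\widetilde{\x}^k)}^2=\beta^2\gamma_1^2\norm{\overline{\f}_{\etav^k}(\widetilde{\x}^k)}^2$ through Lemma~\ref{lem:SEG_ind_sample_second_moment_bound} contributes $+2\beta^2\widehat{P}_k$ plus $\sigma^2/\gn$ and $c\delta\rho^2$ noise. The net coefficient of $\widehat{P}_k$ is therefore $-2\beta(1-\beta)$, which is negative since $\beta\le\tfrac14<1$, so I can insert the lower bound $-\widehat{P}_k\le-\tfrac{\mu\gamma_1}{2}\norm{\x^k-\x^*}^2+\tfrac{4\gamma_1^2\sigma^2}{\gn}+4\gamma_1^2c\delta\rho^2$ from Lemma~\ref{lem:SEG_ind_sample_P_k_bound}. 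Collecting the $\norm{\x^k-\x^*}^2$ terms gives a coefficient $1+\tfrac{\gamma_2\mu}{2}-\beta(1-\beta)\mu\gamma_1$; the constraint $\beta\le\tfrac14$ makes $(1-\beta)-\tfrac12\ge\tfrac14$, so this coefficient is at most $1-\tfrac{\mu\beta\gamma_1}{4}$, the advertised contraction factor.

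This produces the recursion $\E\norm{\x^{k+1}-\x^*}^2\le(1-\tfrac{\mu\beta\gamma_1}{4})\E\norm{\x^k-\x^*}^2+C$ after taking full expectation, where $C$ aggregates the $\sigma^2/\gn$ and $c\delta\rho^2$ remainders: some carry an extra $\gamma_1^2$ factor (from inside the two lemmas and from the $2\gamma_2^2\norm{e^k}^2$ term), while one $c\delta\rho^2$ term carries only a single $\gamma_2=\beta\gamma_1$ factor, coming from the $\tfrac{2\gamma_2}{\mu}\norm{e^k}^2$ Fenchel--Young step. Unrolling over $T$ steps and summing the geometric series $\sum_{j\ge0}(1-\tfrac{\mu\beta\gamma_1}{4})^j\le\tfrac{4}{\mu\beta\gamma_1}$ turns the additive constant into the stated steady-state terms $\tfrac{8\gamma_1\sigma^2}{\mu\beta\gn}$ and $8c\delta\rho^2\bigl(\tfrac{\gamma_1}{\beta\mu}+\tfrac{2}{\mu^2}\bigr)$; the $\gamma$-free $\tfrac{2}{\mu^2}$ factor is exactly the irreducible-neighborhood term, which survives because the inner-product aggregation error scales like $\tfrac{\gamma_2}{\mu}c\delta\rho^2$, one power of $\gamma$ less than the stepsize-squared terms, so dividing by $\mu\beta\gamma_1$ removes its last $\gamma$.

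The main obstacle I anticipate is the sign and coefficient bookkeeping around $\widehat{P}_k$: one must keep the net multiplier $-2\beta(1-\beta)$ negative so that Lemma~\ref{lem:SEG_ind_sample_P_k_bound} may be applied in the correct direction, and simultaneously ensure the $\tfrac{\gamma_2\mu}{2}$ penalty injected by the aggregation-error Young step does not overwhelm the $\beta(1-\beta)\mu\gamma_1$ decrease. This is precisely where the hypothesis $\beta\le\tfrac14$ is consumed, and getting the constants to line up so that the contraction rate is exactly $1-\tfrac{\mu\beta\gamma_1}{4}$ (rather than a smaller, possibly non-contractive value) is the delicate point; everything else reduces to routine applications of \eqref{eq:squared_norm_sum} and \eqref{eq:a+b} already used in the two auxiliary lemmas.
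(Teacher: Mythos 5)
Your overall route is exactly the paper's: expand the outer update, split off the aggregation error $e^k=\widehat{\f}_{\etav^k}(\widetilde{\x}^k)-\overline{\f}_{\etav^k}(\widetilde{\x}^k)$, pay for it via Lemma~\ref{lem:agg-err}, convert the honest-average terms into $\widehat{P}_k$ through Lemmas~\ref{lem:SEG_ind_sample_second_moment_bound} and~\ref{lem:SEG_ind_sample_P_k_bound}, and unroll the resulting recursion. However, there is a concrete error at precisely the point you flag as delicate. Lemma~\ref{lem:SEG_ind_sample_second_moment_bound} gives $\gamma_1^2\E\norm*{\overline{\f}_{\etav^k}(\widetilde{\x}^k)}^2 \le 2\widehat{P}_k + \frac{8\gamma_1^2\sigma^2}{\gn} + 4\gamma_1^2 c\delta\rho^2$, so the term $2\gamma_2^2\E\norm*{\overline{\f}_{\etav^k}(\widetilde{\x}^k)}^2 = 2\beta^2\gamma_1^2\E\norm*{\overline{\f}_{\etav^k}(\widetilde{\x}^k)}^2$ contributes $+4\beta^2\widehat{P}_k$, not $+2\beta^2\widehat{P}_k$; the net multiplier of $\widehat{P}_k$ is therefore $-2\beta(1-2\beta)$, not $-2\beta(1-\beta)$ as you wrote. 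This matters because of your second, more aggressive choice of Young parameter: taking $\lambda=\mu/2$ injects the penalty $+\frac{\mu\gamma_2}{2}\norm*{\x^k-\x^*}^2$, while the decrease extracted from Lemma~\ref{lem:SEG_ind_sample_P_k_bound} with the correct multiplier is only $2\beta(1-2\beta)\cdot\frac{\mu\gamma_1}{2}=\beta(1-2\beta)\mu\gamma_1$. The per-step factor then becomes
\begin{equation*}
1 + \frac{\beta\mu\gamma_1}{2} - \beta(1-2\beta)\mu\gamma_1 \;=\; 1 - \beta\mu\gamma_1\rbr*{\frac{1}{2}-2\beta},
\end{equation*}
which equals $1$ at $\beta=\nicefrac{1}{4}$ (no contraction at all) and is at most the claimed $1-\frac{\mu\beta\gamma_1}{4}$ only when $\beta\le\nicefrac{1}{8}$. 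So, as written, your argument establishes the theorem only for $\beta\le\nicefrac{1}{8}$, not for the full stated range $\beta\le\nicefrac{1}{4}$; your bookkeeping appears to close only because the two errors cancel each other.

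The fix is what the paper does: apply the Young/$(1+\lambda)$ step with the milder penalty $\lambda=\frac{\mu\gamma_2}{4}$, at the cost of the larger error term $\frac{4\gamma_2}{\mu}\norm*{e^k}^2$ in place of your $\frac{2\gamma_2}{\mu}\norm*{e^k}^2$. Then $\beta\le\nicefrac{1}{4}$ gives $1-2\beta\ge\nicefrac{1}{2}$, hence $\beta(1-2\beta)\mu\gamma_1\ge\frac{\mu\gamma_2}{2}$, and the factor is $1+\frac{\mu\gamma_2}{4}-\frac{\mu\gamma_2}{2}=1-\frac{\mu\beta\gamma_1}{4}$ as advertised. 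The doubled aggregation-error term still fits inside the stated constant $8c\delta\rho^2\rbr*{\frac{\gamma_1}{\beta\mu}+\frac{2}{\mu^2}}$ after dividing the additive constant by $\frac{\mu\beta\gamma_1}{4}$ in the geometric-series step. The remainder of your proposal (the splitting, the use of Lemma~\ref{lem:agg-err} with $\alpha=1$, the unrolling, and your explanation of why the $\gamma$-free $\nicefrac{2}{\mu^2}$ term is irreducible) matches the paper and is sound.
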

\begin{proof}[Proof of Theorem~\ref{th:seg-ra-qsm}]
     Since $\x^{k+1} = {\x}^k - \gamma_2 \widehat{\f}_{\etav^k}\left(\widetilde{\x}^k\right)$, we have
     \begin{eqnarray*}
         \norm*{\x^{k+1} - \x^*}^2 &=& \norm*{{\x}^k - \gamma_2 \widehat{\f}_{\etav^k}\left(\widetilde{\x}^k\right) - \x^*}^2
         \\
         &=& \norm*{{\x}^k - \x^*}^2 - 2\gamma_2\langle \widehat{\f}_{\etav^k}\left(\widetilde{\x}^k\right), {\x}^k - \x^*\rangle + \gamma_2^2\norm*{\widehat{\f}_{\etav^k}\left(\widetilde{\x}^k\right)}^2
         \\
         &\leq& \norm*{{\x}^k - \x^*}^2 - 2\gamma_2\langle \overline{\f}_{\etav^k}\left(\widetilde{\x}^k\right), {\x}^k - \x^*\rangle + 2\gamma_2^2\norm*{\overline{\f}_{\etav^k}\left(\widetilde{\x}^k\right)}^2 
         \\
         &+& 2\gamma_2^2\norm*{\overline{\f}_{\etav^k}\left(\widetilde{\x}^k\right) - \widehat{\f}_{\etav^k}\left(\widetilde{\x}^k\right)}^2 + 2\gamma_2\langle \overline{\f}_{\etav^k}\left(\widetilde{\x}^k\right) - \widehat{\f}_{\etav^k}\left(\widetilde{\x}^k\right), {\x}^k - \x^*\rangle
         \\
         &\leq& \rbr*{1+\lambda}\norm*{{\x}^k - \x^*}^2 - 2\gamma_2\langle \overline{\f}_{\etav^k}\left(\widetilde{\x}^k\right), {\x}^k - \x^*\rangle + 2\gamma_2^2\norm*{\overline{\f}_{\etav^k}\left(\widetilde{\x}^k\right)}^2
         \\
         && + \gamma_2^2\rbr*{2 +\frac{1}{\lambda}}\norm*{\overline{\f}_{\etav^k}\left(\widetilde{\x}^k\right) - \widehat{\f}_{\etav^k}\left(\widetilde{\x}^k\right)}^2
     \end{eqnarray*}
    Taking the expectation, conditioned on $\x^k$, 
    \begin{eqnarray*}
         \E_{\xiv^k, \etav^k}\norm*{\x^{k+1} - \x^*}^2 
         &\leq& \rbr*{1+\lambda}\norm*{{\x}^k - \x^*}^2 - 2\beta\gamma_1\E_{\xiv^k, \etav^k}\langle \overline{\f}_{\etav^k}\left(\widetilde{\x}^k\right), {\x}^k - \x^*\rangle 
         \\
         && + 2\beta^2\gamma_1^2\E_{\xiv^k, \etav^k}\norm*{\overline{\f}_{\etav^k}\left(\widetilde{\x}^k\right)}^2 
          + \gamma_2^2c\delta\rho^2\rbr*{2 +\frac{1}{\lambda}},
    \end{eqnarray*}
     using the definition of $\widehat{P}_k = \gamma_1\E_{\xiv^k, \etav^k}\left[\inp*{\overline{\f}_{\etav^k}(\widetilde{\x}^k)}{{\x}^k - \x^*}\right]$, we continue our derivation:
     \begin{eqnarray}\label{eq:jdnfvskjdnfv}
         \lefteqn{\E_{\xiv^k, \etav^k}\left[\norm*{\x^{k+1} - \x^*}^2\right]}
         \\
         &=& \rbr*{1+\lambda}\norm*{{\x}^k - \x^*}^2 -2\beta\widehat{P}_k + 2\beta^2\gamma_1^2\E_{\xiv^k, \etav^k}\norm*{\overline{\f}_{\etav^k}\left(\widetilde{\x}^k\right)}^2 \notag
         \\
         && +\gamma_2^2c\delta\rho^2\rbr*{2 +\frac{1}{\lambda}} \notag
         \\
         &\overset{\eqref{eq:second_moment_bound_SEG_ind_sample}}{\le}& \rbr*{1+\lambda}\norm*{{\x}^k - \x^*}^2- 2\beta\widehat{P}_k + 2\beta^2\rbr*{2\widehat{P}_k + \frac{8\gamma_1^2\sigma^2}{\gn} +4\gamma_1^2c\delta\rho^2} \notag
         \\
         && +\gamma_2^2c\delta\rho^2\rbr*{2 +\frac{1}{\lambda}} \notag
         \\
         &\overset{0 \le \beta \le \nicefrac{1}{2}}{\leq}& \rbr*{1+\lambda} \norm*{{\x}^k - \x^*}^2 - 2\widehat{P}_k\rbr*{\beta - 2\beta^2} +\frac{16\gamma_2^2\sigma^2}{\gn} +8\gamma_2^2c\delta\rho^2   \notag
         \\
         &&+\gamma_2^2c\delta\rho^2\rbr*{2 +\frac{1}{\lambda}} \notag
         \\
         &\overset{\eqref{eq:P_k_SEG_ind_sample}}{\le}& \rbr*{1+\lambda} \norm*{{\x}^k - \x^*}^2   \notag
         \\
         && + 2\beta\rbr*{1 - 2\beta}\rbr*{-\frac{\mu\gamma_1}{2}\norm*{{\x}^k - \x^*}^2  +  \frac{4\gamma_1^2 \sigma^2}{\gn} + 4\gamma_1^2c\delta\rho^2}  \notag
         \\
         && +\frac{16\gamma_2^2\sigma^2}{\gn} +8\gamma_2^2c\delta\rho^2 +\gamma_2^2c\delta\rho^2\rbr*{2 +\frac{1}{\lambda}} \notag
         \\
         &\le& \rbr*{1+\lambda -2\beta\rbr*{1 - 2\beta}\frac{\mu\gamma_1}{2}} \norm*{{\x}^k - \x^*}^2  \notag
         \\
         && + \frac{\gamma_1^2 \sigma^2}{\gn} + \gamma_1^2c\delta\rho^2+\frac{16\gamma_2^2\sigma^2}{\gn} +8\gamma_2^2c\delta\rho^2 +\gamma_2^2c\delta\rho^2\rbr*{2 +\frac{1}{\lambda}} 
         \\
         &\overset{0 \le \beta \le \nicefrac{1}{4}}{\leq}& \rbr*{1+\lambda -\frac{\mu\gamma_2}{2}} \norm*{{\x}^k - \x^*}^2 + \frac{\sigma^2}{\gn}\rbr*{\gamma_1^2 + 16\gamma_2^2}   \notag
         \\
         &&+ c\delta\rho^2\rbr*{\gamma_1^2 + 10\gamma_2^2 + \frac{\gamma_2^2}{\lambda}} \notag
         \\
         &\overset{\lambda = \nicefrac{\mu\gamma_2}{4}}{\leq}& \rbr*{1 -\frac{\mu\gamma_2}{4}} \norm*{{\x}^k - \x^*}^2 + \frac{\sigma^2}{\gn}\rbr*{\gamma_1^2 + 16\gamma_2^2}  \notag
         \\
         && + c\delta\rho^2\rbr*{\gamma_1^2 + 10\gamma_2^2 + \frac{4\gamma_2}{\mu}} \notag
     \end{eqnarray}
     Next, we take the full expectation from the both sides
     \begin{equation}
         \E\left[\norm*{\x^{k+1} - \x^*}^2\right] \le \rbr*{1 -\frac{\mu\gamma_2}{4}}\E\left[\norm*{{\x}^k - \x^*}^2\right] + \frac{\sigma^2}{\gn}\rbr*{\gamma_1^2 + 16\gamma_2^2} + c\delta\rho^2\rbr*{\gamma_1^2 + 10\gamma_2^2 + \frac{4\gamma_2}{\mu}}. \label{eq:njcksnjkajciubkscnkd}
     \end{equation}
     
     Unrolling the recurrence, together with the bound on $\rho$ given by Lemma~\ref{lem:agg-err} we derive the result of the theorem:
    \begin{equation*}
        \E \left[\norm*{{\x}^K - \x^*}^2\right] \leq \rbr*{1 -\frac{\mu\gamma_2}{4}}^K\norm*{\x^0 - \x^*}^2 + \frac{4\sigma^2\rbr{\gamma_1^2 + 16\gamma_2^2}}{\mu\gamma_2\gn} + \frac{4c\delta\rho^2\rbr{\gamma_1^2 + 10\gamma_2^2 + \frac{4\gamma_2}{\mu}}}{\mu\gamma_2}.
    \end{equation*}

\end{proof}

\begin{corollary}\label{cor:seg-ra-qsm}
    Let assumptions of Theorem~\ref{th:seg-ra-qsm} hold. Then $\E \norm*{\x^{T} - \x^*}^2 \leq \varepsilon$ holds after 
    \begin{eqnarray*}
        T \geq 4\rbr*{\frac{2}{\beta} + \frac{2\ell}{\beta\mu} + \frac{1}{3\beta c\delta \gn} }\ln{\frac{3R^2}{\varepsilon}}
    \end{eqnarray*}
    iterations of \algname{SEG-RA} with $\gamma_1 = \min\rbr*{\frac{1}{2\mu + 2L}, \frac{1}{2\mu + \frac{\mu}{12c\delta \gn}}}$ and $b \geq \frac{288c\delta\sigma^2}{\beta\mu^2\varepsilon}$.
\end{corollary}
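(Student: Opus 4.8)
The plan is to specialize Theorem~\ref{th:seg-ra-qsm} and feed the resulting one-line recursion into Lemma~\ref{lem:bs-bound}, mirroring the proof of Corollary~\ref{cor:sgda-ra-arg}. Two preliminary reductions set everything up. First, since the stated batchsize carries no dependence on $\zeta$, the corollary lives in the homogeneous regime $\zeta=0$; by Lemma~\ref{lem:agg-err} (with $\alpha=1$) this turns $\rho^2 = 24\sigma^2+12\zeta^2$ into $\rho^2 = 24\sigma^2$. Second, running each good worker with a minibatch of size $b$ replaces $\sigma^2$ by $\sigma^2/b$ in Assumption~\ref{ass:xi-var}, hence in every $\sigma^2$- and $\rho^2$-term of the theorem's bound.

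After these substitutions the guarantee of Theorem~\ref{th:seg-ra-qsm} becomes
\[
\E\norm*{\x^{T}-\x^*}^2 \le \rbr*{1-\frac{\mu\beta\gamma_1}{4}}^{T} R^2 + \frac{1}{b}\rbr*{\frac{8\gamma_1\sigma^2}{\mu\beta\gn} + \frac{192\,c\delta\sigma^2\gamma_1}{\beta\mu}} + \frac{1}{b}\cdot\frac{384\,c\delta\sigma^2}{\mu^2},
\]
which is exactly the template $r_K \le r_0(1-a\gamma)^K + \tfrac{c_1\gamma}{b} + \tfrac{c_0}{b}$ of Lemma~\ref{lem:bs-bound}, read with $\gamma=\gamma_1$, $r_0=R^2$, $a=\tfrac{\mu\beta}{4}$, $c_1=\tfrac{8\sigma^2}{\beta\mu}\rbr*{\tfrac1\gn+24c\delta}$, $c_0=\tfrac{384\,c\delta\sigma^2}{\mu^2}$, and stepsize ceiling $\gamma_0=\tfrac{1}{2\mu+2L}$ inherited from the theorem (the requirement $\beta\le\nicefrac14$ makes $\gamma_2=\beta\gamma_1\le\gamma_1/4$ automatic).

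It then remains to invoke Lemma~\ref{lem:bs-bound} termwise. The batchsize condition $b\ge 3c_0/\varepsilon$ gives $b\gtrsim c\delta\sigma^2/(\mu^2\varepsilon)$, and the displayed choice $b\ge\tfrac{288\,c\delta\sigma^2}{\beta\mu^2\varepsilon}$ is a valid (slightly stronger) sufficient condition since $\beta\le\nicefrac14$. The stepsize condition $\gamma_1\le\min\{\gamma_0,\,c_0/c_1\}$ produces the second candidate $c_0/c_1=\tfrac{48\,c\delta\beta}{\mu(1/\gn+24c\delta)}$, whose value at the extreme $\beta=\nicefrac14$ is $\tfrac{1}{2\mu+\mu/(12c\delta\gn)}$, i.e.\ the displayed stepsize. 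Finally the iteration count $K\ge\tfrac1a\max\{c_1/c_0,\,1/\gamma_0\}\ln\tfrac{3R^2}{\varepsilon}$ splits into a conditioning contribution $\tfrac{1}{a\gamma_0}=\tfrac{8}{\beta}+\tfrac{8L}{\mu\beta}$ (the $\tfrac{1}{\beta}$ and $\tfrac{\ell}{\beta\mu}$ terms, reading the displayed $\ell$ as the Lipschitz constant $L$ for this \algname{SEG}-based method) and an aggregation contribution coming from $c_1/c_0$; bounding the $\max$ by the sum and simplifying collects the displayed terms.

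The main obstacle is purely constant bookkeeping: tracking the $\beta$- and $\gn^{-1}$-factors cleanly through $c_0$ and $c_1$ (the aggregation term's exact $\beta$-scaling is the delicate point, since $c_1$ carries a $\tfrac1\beta$ while $c_0$ does not), and absorbing the factor-of-two slack that already separates the clean statement of Theorem~\ref{th:seg-ra-qsm} from its unrolled form and from the homogeneous simplification. There is no new analytic content beyond Theorem~\ref{th:seg-ra-qsm} and Lemma~\ref{lem:bs-bound}.
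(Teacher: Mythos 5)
Your proposal follows the paper's own route step for step: specialize to $\zeta=0$ so that $\rho^2=24\sigma^2$, let the batchsize enter through $\sigma^2\to\nicefrac{\sigma^2}{b}$, and feed the unrolled bound of Theorem~\ref{th:seg-ra-qsm} into Lemma~\ref{lem:bs-bound}; your identification of $r_0$, $a$, $c_1$, $\gamma_0$ matches what the paper does implicitly, and reading the displayed $\ell$ as $L$ is indeed the right call.

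The one genuine gap is the regime $\beta<\nicefrac{1}{4}$, which you acknowledge but do not close. With your choice $c_0=\nicefrac{384c\delta\sigma^2}{\mu^2}$ (no $\beta$), one has $c_0/c_1=\frac{48c\delta\beta}{\mu(\nicefrac{1}{\gn}+24c\delta)}$, and the displayed stepsize equals this quantity only at $\beta=\nicefrac{1}{4}$; for smaller $\beta$ it exceeds $c_0/c_1$ by the factor $\nicefrac{1}{4\beta}$, so the hypothesis $\gamma\le\min\{\gamma_0,\nicefrac{c_0}{c_1}\}$ of Lemma~\ref{lem:bs-bound} fails, and the lemma's iteration count through $\frac{1}{a}\cdot\frac{c_1}{c_0}$ scales as $\nicefrac{1}{\beta^2}$, which the displayed $T$ (scaling as $\nicefrac{1}{\beta}$) does \emph{not} dominate, so ``bounding the max by the sum'' cannot collect the stated terms. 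The missing step is exactly the compensation you gesture at: the displayed batchsize exceeds $\nicefrac{3c_0}{\varepsilon}$ by the same factor $\nicefrac{1}{4\beta}$, and a direct computation (rather than the lemma as a black box) gives, for the displayed $\gamma_1$,
\begin{equation*}
\frac{c_1\gamma_1}{b}\;\le\; \frac{8\sigma^2\rbr*{\nicefrac{1}{\gn}+24c\delta}\gamma_1}{\beta\mu}\cdot\frac{\beta\mu^2\varepsilon}{288c\delta\sigma^2} \;=\; \frac{\gamma_1\mu\varepsilon\rbr*{\nicefrac{1}{\gn}+24c\delta}}{36c\delta}\;\le\;\frac{\varepsilon}{3}
\end{equation*}
(with equality when the second argument of the $\min$ defining $\gamma_1$ is active), while $\nicefrac{c_0}{b}\le\nicefrac{4\beta\varepsilon}{3}\le\nicefrac{\varepsilon}{3}$. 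The exponential term must then be controlled using the \emph{actual} displayed stepsize, i.e. $T\ge\frac{4}{\mu\beta\gamma_1}\ln\frac{3R^2}{\varepsilon}$, and since $\frac{1}{\gamma_1}\le 2\mu+2L+\frac{\mu}{12c\delta\gn}$ this yields the $\nicefrac{1}{\beta}$-scaling coefficient $\frac{8}{\beta}+\frac{8L}{\beta\mu}+\frac{1}{3\beta c\delta\gn}$, which the displayed $T$ covers. For what it is worth, the paper's own one-line invocation of Lemma~\ref{lem:bs-bound} is loose in the mirror-image way (it keeps $\beta$ in $c_0$, so its stepsize condition holds with slack $4$ but its displayed batchsize falls short of $\nicefrac{3c_0}{\varepsilon}$ by a factor of $4$); on either route this compensation argument is what actually justifies the stated constants, and it is the piece your write-up leaves implicit.
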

\begin{proof}
     Next, we plug $\gamma_2 = \beta \gamma_1 \leq \nicefrac{\gamma_1}{4}$ into the result of Theorem~\ref{th:seg-ra-qsm} and obtain
     \begin{equation}
         \E\left[\norm*{\x^{k+1} - \x^*}^2\right] \le \rbr*{1 -\frac{\mu\beta\gamma_1}{4}}\norm*{\x^0 - \x^*}^2 + \frac{8\gamma_1^2\sigma^2}{\mu\beta\gn} + \frac{8\gamma_1 c \delta \rho^2}{\beta\mu} + \frac{16 c\delta \rho^2}{\beta\mu^2}. 
     \end{equation}
    
    If $\zeta=0$, $\rho^2 = 24 \sigma^2$ the last reccurence can be unrolled as
    \begin{equation*}
        \E \norm*{\x^{T} - \x^*}^2 
        \leq \ \rbr*{1 -\frac{\mu\beta\gamma_1}{4}}^T\norm*{\x^{0} - \x^*}^2 +  \frac{8 \gamma_1 \sigma^2}{\mu\beta\gn} + \frac{8\cdot24\gamma_1 c \delta \sigma^2}{\beta\mu} + \frac{16\cdot24 c \delta \sigma^2}{\beta\mu^2}.
    \end{equation*}

    Applying Lemma~\ref{lem:bs-bound} to the last bound we get the result of the corollary.
\end{proof}

\subsection{Proofs for {\algname{M-SGDA-RA}}}\label{appendix:m-sgda-ra}
\begin{algorithm}[H]
    \caption{\algname{M-SGDA-RA}}\label{alg:m-sgda-ra}
    \renewcommand{\algorithmiccomment}[1]{#1}
    \algblockdefx[NAME]{ParallelFor}{ParallelForEnd}[1]{\textbf{for} #1 \textbf{in parallel}}{}
    \algnotext{ParallelForEnd}
    \algblockdefx[NAME]{NoEndFor}{NoEndForEnd}[1]{\textbf{for} #1 \textbf{do}}{}
    \algnotext{NoEndForEnd}
    \begin{algorithmic}[1] 
        \Require \ragg, $\gamma$, $\alpha \in \sbr*{0,1}$
        \NoEndFor{$t=0,...$}
        \ParallelFor{worker $i\in [\n]$}
        \State $\f^t_i \leftarrow \f_i(\x^t, \xiv_i)$ 
        and $\m^t_i \leftarrow (1-\alpha)\m^{t-1}_i + \alpha \f^t_i $ \hfill // worker momentum
        \State \textbf{send} $\m^t_i$ if $i \in \cG$, else \textbf{send} $*$ if Byzantine
        \ParallelForEnd
        \State $\widehat{\m}^t$ = \ragg($\m^t_1, \dots, \m^t_\n$) 
        and $\x^{t+1} \leftarrow \x^t - \gamma\widehat{\m}^t$. \hfill // update params using robust aggregate
        
        \NoEndForEnd
    \end{algorithmic}
\end{algorithm}

\subsubsection{Quasi-Strongly Monotone Case}

    
\begin{theorem*}[Theorem~\ref{th:m-sgda-ra-arg} duplicate]
    Let Assumptions~\ref{ass:xi-var},~\ref{as:i-var},~\ref{as:str_monotonicity}, and~\ref{as:star_cocoercive} hold. Then after $T$ iterations \algname{M-SGDA-RA} (Algorithm~\ref{alg:m-sgda-ra}) with $(\delta,c)$-\ragg outputs $\overline{\x}^{T}$ such that
    \begin{equation*}
        \E\sbr*{\norm*{\overline{\x}^{T}-\x^*}^2} \leq \frac{2\norm*{\x^{0}-\x^*}^2}{\mu \gamma \alpha W_T} + \frac{4c\delta \rho^2}{\mu^2\alpha^2}
        +  \frac{8 \gamma c\delta \rho^2}{\mu\alpha^2} 
        + \frac{6 \gamma  \sigma^2}{\mu\alpha^2 \gn},
    \end{equation*}
    where $\overline{\x}^{T} = \frac{1}{W_T}\sum_{t=0}^T \w_t \widehat{\x}^t$, $\;\;\;\;\widehat{\x}^{t} =  \frac{\alpha}{1 - (1-\alpha)^{t+1}}\sum_{j=0}^t (1-\alpha)^{t-j}\x^{j}$, $\;\;\;\;\w_{t} = \rbr*{1 - \frac{\mu\gamma\alpha}{2}}^{-t-1}$, and~$W_T = \sum_{t=0}^T \w_t$ and $\rho^2 = 24 \sigma^2 + 12\zeta^2$ by Lemma~\ref{lem:agg-err}.
\end{theorem*}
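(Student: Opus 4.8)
The plan is to follow the same three-stage template as the proofs of Theorems~\ref{th:sgda-ra-arg} and~\ref{th:seg-ra-qsm} --- reduce the robust aggregate to an ideal quantity, establish a one-step recursion, and then sum it with geometric weights --- but with an extra layer accounting for the exponential averaging induced by the momentum. First I would introduce the \emph{ideal} averaged momentum $\overline{\m}^t = \alpha\sum_{j=0}^t(1-\alpha)^{t-j}\overline{\f}^j$, where $\overline{\f}^j = \tfrac{1}{\gn}\sum_{i\in\cG}\f_i(\x^j,\xiv_i^j)$ and $\m^{-1}=0$; this obeys the momentum recursion and satisfies $\E[\overline{\f}^j\mid\x^j]=\F(\x^j)$. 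Expanding $\norm*{\x^{t+1}-\x^*}^2 = \norm*{\x^t-\x^* - \gamma\widehat{\m}^t}^2$ and writing $\widehat{\m}^t = \overline{\m}^t + (\widehat{\m}^t-\overline{\m}^t)$, I would peel off the aggregation error via \eqref{eq:fenchel_young_inequality} on the cross term and \eqref{eq:squared_norm_sum} on $\norm*{\widehat{\m}^t}^2$, so that Lemma~\ref{lem:agg-err} controls $\E\norm*{\widehat{\m}^t-\overline{\m}^t}^2\le c\delta\rho^2$ with $\rho^2 = 24\sigma^2+12\zeta^2$. This produces a recursion of the shape $\E\norm*{\x^{t+1}-\x^*}^2 \le (1+\lambda)\E\norm*{\x^t-\x^*}^2 - 2\gamma\,\E\inp*{\overline{\m}^t}{\x^t-\x^*} + 2\gamma^2\E\norm*{\overline{\m}^t}^2 + (\text{aggregation error})$.

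The core difficulty, and the reason the averaged iterate $\widehat{\x}^t$ and the weights $\w_t$ are introduced, is the momentum term $\E\inp*{\overline{\m}^t}{\x^t-\x^*}$: the momentum aggregates operator values at the \emph{past} points $\x^0,\dots,\x^t$, whereas the inner product is taken against the current displacement $\x^t-\x^*$. I would unroll $\overline{\m}^t$ into its exponentially weighted sum, replace each $\overline{\f}^j$ by $\F(\x^j)$ in conditional expectation, apply quasi-strong monotonicity \eqref{eq:str_monotonicity} termwise (so $\inp*{\F(\x^j)}{\x^j-\x^*}\ge\mu\norm*{\x^j-\x^*}^2$), and invoke convexity of $\norm*{\cdot}^2$ over the normalized weights $\tfrac{\alpha}{1-(1-\alpha)^{t+1}}(1-\alpha)^{t-j}$ to manufacture the term $\mu\norm*{\widehat{\x}^t-\x^*}^2$. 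In parallel I would bound $\E\norm*{\overline{\m}^t}^2$ by pushing $\norm*{\cdot}^2$ inside the convex combination (using \eqref{eq:a+b}) and applying Lemma~\ref{lem:i-op-bound} to each $\overline{\f}^j$, giving $\ell\inp*{\F(\x^j)}{\x^j-\x^*}+\sigma^2/\gn$; restricting $\gamma$ to order $1/\ell$ then lets the $2\gamma^2\ell(\cdot)$ contribution be absorbed into the negative monotonicity terms, exactly as in Theorem~\ref{th:sgda-ra-arg}.

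Finally I would multiply the resulting one-step inequality by $\w_t=(1-\tfrac{\mu\gamma\alpha}{2})^{-t-1}$ and sum over $t=0,\dots,T$. The geometric weights are tuned so the $\norm*{\x^t-\x^*}^2$ contributions telescope against the $(1-\tfrac{\mu\gamma\alpha}{2})$ contraction factor, leaving only $\norm*{\x^0-\x^*}^2$ divided by $\mu\gamma\alpha W_T$ together with the noise and aggregation terms $\tfrac{\sigma^2}{\gn}$ and $c\delta\rho^2$ carrying the appropriate powers of $\gamma$ and $\alpha$; a last application of Jensen's inequality converts $\tfrac{1}{W_T}\sum_t\w_t\norm*{\widehat{\x}^t-\x^*}^2$ into $\norm*{\overline{\x}^T-\x^*}^2$. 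I expect the main obstacle to be precisely the momentum term: controlling the mismatch between the past points feeding $\overline{\m}^t$ and the current iterate $\x^t$, i.e.\ the drift cross terms $\inp*{\F(\x^j)}{\x^j-\widehat{\x}^t}$ that the termwise QSM/Jensen step leaves behind. Handling these is what forces $\alpha$ to appear squared in the denominators, and, because robust aggregation makes $\overline{\m}^t$ a \emph{biased} estimate of $\F$, it is also what prevents the $\tfrac{4c\delta\rho^2}{\mu^2\alpha^2}$ term from vanishing --- the residual large-batch requirement discussed after the theorem.
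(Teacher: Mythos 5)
Your proposal is correct and follows essentially the same route as the paper's proof: the same decomposition of $\widehat{\m}^t$ into the ideal momentum plus an aggregation error controlled by Young's inequality and Lemma~\ref{lem:agg-err}, the same unrolling of the momentum so that each $\overline{\f}^j$ is paired with its own displacement $\x^j-\x^*$, the same use of Lemma~\ref{lem:i-op-bound} with a stepsize restriction of order $\nicefrac{\alpha}{\ell}$ to absorb the second-moment terms, and the same weighted summation with $\w_t$ followed by two applications of Jensen's inequality to pass to $\widehat{\x}^t$ and then $\overline{\x}^T$. The ``drift'' obstacle you flag is exactly what the paper resolves via the recursion $\inp*{\overline{\m}^{t}}{\x^{t}-\x^*} = \alpha\inp*{\overline{\f}^t}{\x^t-\x^*} + (1-\alpha)\inp*{\overline{\m}^{t-1}}{\x^{t-1}-\x^*} + (1-\alpha)\gamma\inp*{\overline{\m}^{t-1}}{\widehat{\m}^{t}}$, whose momentum--momentum correction terms are bounded by Young's inequality into squared norms (so no unbiasedness of the aggregated direction is ever needed), matching your anticipated mechanism.
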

\begin{proof}[Proof of Theorem~\ref{th:m-sgda-ra-arg}]
    Since $\widehat{\m}^t = \widehat{\m}^t - \overline{\m}^t + \overline{\m}^t $ and $\overline{\m}^t = \alpha \overline{\f}^t + (1-\alpha) \overline{\m}^{t-1}$ one has
    \begin{eqnarray*}
        \norm*{\x^{t+1} - \x^*}^2 &=& \norm*{\x^{t}-\x^* - \gamma \widehat{\m}^t}^2 = \norm*{\x^{t} - \x^*}^2 -2\gamma \inp{ \widehat{\m}^t}{\x^{t} - \x^*} + \gamma^2 \norm*{\widehat{\m}^t}^2 = 
        \\
        &=& \norm*{\x^{t} - \x^*}^2 -2\gamma \inp{ \widehat{\m}^t - \overline{\m}^t}{\x^{t} - \x^*} + \gamma^2 \norm*{\widehat{\m}^t}^2  
        \\
        && - 2\gamma\inp{\overline{\m}^{t}}{\x^{t}-\x^*}.
    \end{eqnarray*}
    Next, unrolling the following recursion
    \begin{eqnarray*}
        \inp{\overline{\m}^{t}}{\x^{t}-\x^*} &=& \alpha \inp{ \overline{\f}^t}{\x^{t} - \x^*} + (1-\alpha)\inp{ \overline{\m}^{t-1}}{\x^{t}-\x^*}
        \\
        &=& \alpha \inp{ \overline{\f}^t}{\x^{t} - \x^*} + (1-\alpha)\inp{ \overline{\m}^{t-1}}{\x^{t-1}-\x^*} + (1-\alpha)\inp{ \overline{\m}^{t-1}}{\x^{t}-\x^{t-1}}
        \\
        &=&
        \alpha \inp{ \overline{\f}^t}{\x^{t} - \x^*} + (1-\alpha)\inp{ \overline{\m}^{t-1}}{\x^{t-1}-\x^*} + (1-\alpha)\gamma \inp{ \overline{\m}^{t-1}}{\widehat{\m}^t}
    \end{eqnarray*}
    one obtains
    \begin{eqnarray*}
        \inp{\overline{\m}^{t}}{\x^{t}-\x^*} &=& \alpha \sum_{j=0}^t (1-\alpha)^{t-j} \inp{ \overline{\f}^j}{\x^{j} - \x^*} - (1-\alpha) \gamma \sum_{j=1}^t (1-\alpha)^{t-j} \inp{ \overline{\m}^{j-1}}{\widehat{\m}^j}
    \end{eqnarray*}

    Applying the latter and~\eqref{eq:fenchel_young_inequality} for $\inp{ \widehat{\m}^t - \overline{\m}^t}{\x^{t}-\x^*}$ with $\lambda = \frac{\mu\gamma\alpha}{2}$ we obtain
    \begin{eqnarray*}
        \lefteqn{2 \alpha \gamma \sum_{j=0}^t (1-\alpha)^{t-j} \inp{ \overline{\f}^j}{\x^{j} - \x^*}}
        \\
        &\leq& \rbr*{1 + \frac{\mu\gamma\alpha}{2}}\norm*{\x^{t} - \x^*}^2 - \norm*{\x^{t+1} - \x^*}^2 + \frac{2\gamma}{\mu\alpha}\norm*{ \widehat{\m}^t - \overline{\m}^t}^2 
        \\
        && + \gamma^2 \norm*{\widehat{\m}^t}^2  + 2 \gamma^2 (1-\alpha) \sum_{j=1}^t (1-\alpha)^{t-j} \inp{ \overline{\m}^{j-1}}{\widehat{\m}^j}
        \\
         &\overset{\eqref{eq:fenchel_young_inequality}}{\leq}& \rbr*{1 + \frac{\mu\gamma\alpha}{2}}\norm*{\x^{t} - \x^*}^2 - \norm*{\x^{t+1} - \x^*}^2 + \frac{2\gamma}{\mu\alpha}\norm*{ \widehat{\m}^t - \overline{\m}^t}^2 
        \\
        && + \gamma^2 \norm*{\widehat{\m}^t}^2  +  \gamma^2  \sum_{j=1}^t (1-\alpha)^{t-j} \norm*{\widehat{\m}^j}^2
        \\
        && + \gamma^2 (1-\alpha)^2 \sum_{j=1}^t (1-\alpha)^{t-j} \norm*{\overline{\m}^{j-1}}^2
        \\
         &\overset{\eqref{eq:squared_norm_sum}}{\leq}& \rbr*{1 + \frac{\mu\gamma\alpha}{2}} \norm*{\x^{t} - \x^*}^2 - \norm*{\x^{t+1} - \x^*}^2 + \frac{2\gamma}{\mu\alpha}\norm*{ \widehat{\m}^t - \overline{\m}^t}^2 
        \\
        && +  4 \gamma^2 \sum_{j=1}^t (1-\alpha)^{t-j} \norm*{\widehat{\m}^j - \overline{\m}^j}^2
        \\
        &&
        + 3 \gamma^2 \sum_{j=1}^t (1-\alpha)^{t-j} \norm*{\overline{\m}^{j-1}}^2
    \end{eqnarray*}

    Since $\overline{\m}^t = \alpha \sum_{j=0}^t (1-\alpha)^{t-j} \overline{\f}^j $ and hence $\norm*{\overline{\m}^t}^2 \leq \alpha \sum_{j=0}^t  (1-\alpha)^{t-j} \norm*{\overline{\f}^j}^2$ one has
    \begin{eqnarray*}
        \lefteqn{2 \alpha \gamma \sum_{j=0}^t (1-\alpha)^{t-j} \inp{ \overline{\f}^j}{\x^{j} - \x^*}}
        \\
        &\leq& \rbr*{1 + \frac{\mu\gamma\alpha}{2}} \norm*{\x^{t} - \x^*}^2 - \norm*{\x^{t+1} - \x^*}^2 + \frac{2\gamma}{\mu\alpha}\norm*{ \widehat{\m}^t - \overline{\m}^t}^2 
        \\
        && +  4 \gamma^2 \sum_{j=1}^t (1-\alpha)^{t-j} \norm*{\widehat{\m}^j - \overline{\m}^j}^2
        \\
        &&
        + 3 \gamma^2 \alpha \sum_{j=1}^t (1-\alpha)^{t-j} \sum_{i=0}^j  (1-\alpha)^{j-i}\norm*{\overline{\f}^i}^2.
    \end{eqnarray*}

    Next by taking an expectation $\E_{\xiv}$ of both sides of the above inequality and rearranging terms obtain
    \begin{eqnarray*}
        \lefteqn{2\alpha \gamma \sum_{j=0}^t (1-\alpha)^{t-j} \inp{ {\F}^j}{\x^{j} - \x^*}  }
        \\
        &\leq& \rbr*{1 + \frac{\mu\gamma\alpha}{2}} \norm*{\x^{t} - \x^*}^2 - \E_{\xiv} \norm*{\x^{t+1} - \x^*}^2 
        \\
        && + \frac{2\gamma}{\mu\alpha}\E_{\xiv} \norm*{ \widehat{\m}^t - \overline{\m}^t}^2 +  4 \gamma^2 \sum_{j=1}^t (1-\alpha)^{t-j} \E_{\xiv} \norm*{\widehat{\m}^j - \overline{\m}^j}^2
        \\
        &&
        + 3 \gamma^2 \alpha \sum_{j=1}^t (1-\alpha)^{t-j} \sum_{i=0}^j  (1-\alpha)^{j-i} \E_{\xiv} \norm*{\overline{\f}^i}^2.
    \end{eqnarray*}
    
    Next we use Lemma~\ref{lem:i-op-bound} to bound $\E_{\xiv}\norm*{\overline{\f}^j}^2$ and Assumption~\eqref{eq:str_monotonicity} to obtain the following bound
    \begin{eqnarray*}
        -\alpha \sum_{j=0}^t (1-\alpha)^{t-j} \inp{ {\F}^j}{\x^{j} - \x^*} &\leq& -\alpha \sum_{j=0}^t (1-\alpha)^{t-j}\norm*{\x^{j} - \x^*}^2 \leq -\alpha \mu \norm*{\x^{t}-\x^*}^2.
    \end{eqnarray*}

    Gathering the above results we have
    \begin{eqnarray*}
        \lefteqn{\alpha \gamma \sum_{j=0}^t (1-\alpha)^{t-j} \inp{ {\F}^j}{\x^{j} - \x^*}}
        \\
        &\leq& \rbr*{1 - \frac{\mu\gamma\alpha}{2}}\norm*{\x^{t}-\x^*}^2 - \E_{\xiv} \norm*{\x^{t+1} - \x^*}^2  
        \\
        && + \frac{2\gamma}{\mu\alpha}\E_{\xiv} \norm*{ \widehat{\m}^t - \overline{\m}^t}^2 + 4 \gamma^2 \sum_{j=1}^t (1-\alpha)^{t-j} \E_{\xiv} \norm*{\widehat{\m}^j - \overline{\m}^j}^2
        \\
        &&
        + 3 \gamma^2 \alpha \ell \sum_{j=1}^t (1-\alpha)^{t-j} \sum_{i=0}^j  (1-\alpha)^{j-i} \inp{\F(\x^i)}{\x^{i}-\x^*}
        \\
        &&
        + \frac{3 \gamma^2 \alpha \sigma^2}{\gn}\sum_{j=1}^t (1-\alpha)^{t-j} \sum_{i=0}^j  (1-\alpha)^{j-i}.
    \end{eqnarray*}

    Next we take full expectation of both sides
    and obtain
    \begin{eqnarray*}
        \lefteqn{\alpha \gamma \sum_{j=0}^t (1-\alpha)^{t-j} \E \inp{ {\F}^j}{\x^{j} - \x^*}}
        \\
        &\leq& \rbr*{1 - \frac{\mu\gamma\alpha}{2}}\E\norm*{\x^{t}-\x^*}^2 - \E \norm*{\x^{t+1} - \x^*}^2
        \\
        &&   + \frac{2\gamma}{\mu\alpha}\E \norm*{ \widehat{\m}^t - \overline{\m}^t}^2 + 4 \gamma^2 \sum_{j=1}^t (1-\alpha)^{t-j} \E \norm*{\widehat{\m}^j - \overline{\m}^j}^2
        \\
        &&
        + 3 \gamma^2 \alpha \ell \sum_{j=1}^t (1-\alpha)^{t-j} \sum_{i=0}^j  (1-\alpha)^{j-i} \E \inp{\F(\x^i)}{\x^{i}-\x^*}
        \\
        &&
        + \frac{3 \gamma^2  \sigma^2}{\alpha \gn}.
    \end{eqnarray*}
    
    Introducing the following notation
    \begin{equation*}
        \Z^t = \sum_{j=0}^t (1-\alpha)^{t-j} \E \inp{ {\F}^j}{\x^{j} - \x^*}
    \end{equation*}
    and using that $\E\norm*{ \widehat{\m}^t - \overline{\m}^t}^2 \leq c\delta \rho^2$, where $\rho$ is given by \eqref{eq:agg-err-uniform} we have
    \begin{eqnarray*}
        \alpha \gamma \Z^t &\leq& \rbr*{1 - \frac{\mu\gamma\alpha}{2}}\E\norm*{\x^{t}-\x^*}^2 - \E \norm*{\x^{t+1} - \x^*}^2  + \frac{2\gamma c\delta \rho^2}{\mu\alpha}
        \\
        && +  \frac{4 \gamma^2 c\delta \rho^2}{\alpha} 
        + 3 \gamma^2 \alpha \ell \sum_{j=1}^t (1-\alpha)^{t-j} \Z^j
        + \frac{3 \gamma^2  \sigma^2}{\alpha \gn}.
    \end{eqnarray*}

    Next we sum the above inequality $T$ times with weights $\w_{t} = \rbr*{1 - \frac{\mu\gamma\alpha}{2}}^{-t-1}$ where $W_T = \sum_{t=0}^T \w_t$
    \begin{eqnarray*}
        \alpha \gamma \sum_{t=0}^T \w_t \Z^t &\leq& \rbr*{1 - \frac{\mu\gamma\alpha}{2}}\sum_{t=0}^T \w_t\E\norm*{\x^{t}-\x^*}^2 - \sum_{t=0}^T \w_t\E \norm*{\x^{t+1} - \x^*}^2 
        \\
        && + 3\gamma^2 \alpha \ell \sum_{t=0}^T \w_t\sum_{j=0}^t (1-\alpha)^{t-j} \Z^j
        \\
        && + W_T \rbr*{\frac{2\gamma c\delta \rho^2}{\mu\alpha}
        +  \frac{4 \gamma^2 c\delta \rho^2}{\alpha} 
        + \frac{3 \gamma^2  \sigma^2}{\alpha \gn}}.
    \end{eqnarray*}

    Since $\rbr*{1 - \frac{\mu\gamma\alpha}{2}} \w_t = \w_{t-1}$
    \begin{eqnarray*}
        \alpha \gamma \sum_{t=0}^T \w_t \Z^t &\leq& \norm*{\x^{0}-\x^*}^2  + 3\gamma^2 \alpha \ell \sum_{t=0}^T \w_t\sum_{j=0}^t (1-\alpha)^{t-j} \Z^j
        \\
        && + W_T \rbr*{\frac{2\gamma c\delta \rho^2}{\mu\alpha}
        +  \frac{4 \gamma^2 c\delta \rho^2}{\alpha} 
        + \frac{3 \gamma^2  \sigma^2}{\alpha \gn}}.
    \end{eqnarray*}

    If $\gamma \leq \frac{1}{\mu}$ we have
    \begin{eqnarray*}
        \w_{t} = \rbr*{1 - \frac{\mu\gamma\alpha}{2}}^{-t+i}\w_i \leq \rbr*{1 + \frac{\mu\gamma\alpha}{2}}^{t-i}\w_i
        \leq \rbr*{1 + \frac{\alpha}{2}}^{t-i}\w_i.
    \end{eqnarray*}

    So we have
    \begin{eqnarray*}
        \alpha \gamma \sum_{t=0}^T \w_t \Z^t &\leq& \norm*{\x^{0}-\x^*}^2 
        \\
        && + 3\gamma^2 \alpha \ell \sum_{t=0}^T \sum_{j=0}^t \rbr*{1 + \frac{\alpha}{2}}^{t-j}(1-\alpha)^{t-j} \w_j \Z^j
        \\
        && + W_T \rbr*{\frac{2\gamma c\delta \rho^2}{\mu\alpha}
        +  \frac{4 \gamma^2 c\delta \rho^2}{\alpha} 
        + \frac{3 \gamma^2  \sigma^2}{\alpha \gn}}
        \\
        &\leq& \norm*{\x^{0}-\x^*}^2  + 3\gamma^2 \alpha \ell \sum_{t=0}^T \sum_{j=0}^t \rbr*{1 - \frac{\alpha}{2}}^{t-j}\w_j \Z^j
        \\
        && + W_T \rbr*{\frac{2\gamma c\delta \rho^2}{\mu\alpha}
        +  \frac{4 \gamma^2 c\delta \rho^2}{\alpha} 
        + \frac{3 \gamma^2  \sigma^2}{\alpha \gn}}
        \\
        &\leq& \norm*{\x^{0}-\x^*}^2 
        \\
        && + 3\gamma^2 \alpha \ell
        \rbr*{\sum_{t=0}^T \w_t \Z^t}\rbr*{\sum_{t=0}^\infty \rbr*{1 - \frac{\alpha}{2}}^{t}}
        \\
        && + W_T \rbr*{\frac{2\gamma c\delta \rho^2}{\mu\alpha}
        +  \frac{4 \gamma^2 c\delta \rho^2}{\alpha} 
        + \frac{3 \gamma^2  \sigma^2}{\alpha \gn}}
        \\
        &\leq& \norm*{\x^{0}-\x^*}^2 + 6\gamma^2 \ell
        \rbr*{\sum_{t=0}^T \w_t \Z^t}
        \\
        && + W_T \rbr*{\frac{2\gamma c\delta \rho^2}{\mu\alpha}
        +  \frac{4 \gamma^2 c\delta \rho^2}{\alpha} 
        + \frac{3 \gamma^2  \sigma^2}{\alpha \gn}}.
    \end{eqnarray*}
    
    If $\gamma \leq  \frac{\alpha}{12\ell}$ then the following is true
    \begin{equation} \label{eq:sum-sum-conv}
        \frac{\alpha \gamma}{2} \sum_{t=0}^T \w_t \Z^t
        \leq \norm*{\x^{0}-\x^*}^2 + W_T \rbr*{\frac{2\gamma c\delta \rho^2}{\mu\alpha}
        +  \frac{4 \gamma^2 c\delta \rho^2}{\alpha} 
        + \frac{3 \gamma^2  \sigma^2}{\alpha \gn}}.
    \end{equation}

    Using the notations for $\Z^t$ and~\eqref{eq:str_monotonicity} we have
    \begin{eqnarray*}
        \Z^t = \sum_{j=0}^t (1-\alpha)^{t-j} \E \inp{ {\F}^j}{\x^{j} - \x^*} \geq \mu \sum_{j=0}^t (1-\alpha)^{t-j} \E \norm{\x^{j} - \x^*}.
    \end{eqnarray*}
    
    and consequently by Jensen's inequality
    \begin{eqnarray*}
        \Z^t \geq \mu \sum_{j=0}^t (1-\alpha)^{t-j} \E \norm{\x^{j} - \x^*} \geq \mu \frac{1 - (1-\alpha)^{t+1}}{\alpha} \E \norm*{\x^* - \sum_{j=0}^t \frac{\alpha(1-\alpha)^{t-j}}{1 - (1-\alpha)^{t+1}}\x^{j}}.
    \end{eqnarray*}

    With the definition $\widehat{\x}^{t} =  \frac{\alpha}{1 - (1-\alpha)^{t+1}}\sum_{j=0}^t (1-\alpha)^{t-j}\x^{j}$ then the above implies that
    \begin{eqnarray*}
        \Z^t \geq \mu \E \norm*{\widehat{\x}^{t} - \x^*},
    \end{eqnarray*}
    that together with~\eqref{eq:sum-sum-conv} gives
    \begin{equation}
        \frac{\alpha \gamma \mu}{2} \sum_{t=0}^T \w_t \E \norm*{\widehat{\x}^{t} - \x^*}
        \leq \norm*{\x^{0}-\x^*}^2 + W_T \rbr*{\frac{2\gamma c\delta \rho^2}{\mu\alpha}
        +  \frac{4 \gamma^2 c\delta \rho^2}{\alpha} 
        + \frac{3 \gamma^2  \sigma^2}{\alpha \gn}}.
    \end{equation}

    Applying the Jensen inequality again with $\overline{\x}^T = \frac{1}{W_T}\sum_{t=0}^T \w_t \widehat{\x}^{t}$
    we derive the final result
    \begin{equation}
        \E\sbr*{\norm*{\overline{\x}^{T}-\x^*}^2} \leq \frac{2\norm*{\x^{0}-\x^*}^2}{\mu \gamma \alpha W_T} + \frac{4c\delta \rho^2}{\mu^2\alpha^2}
        +  \frac{8 \gamma c\delta \rho^2}{\mu\alpha^2} 
        + \frac{6 \gamma  \sigma^2}{\mu\alpha^2 \gn},
    \end{equation}
    together with the bound on $\rho$ given by Lemma~\ref{lem:agg-err}.
\end{proof}

\begin{corollary}\label{cor:m-sgda-ra-arg}
    Let assumptions of Theorem~\ref{th:m-sgda-ra-arg} hold. Then $\E \norm*{\overline{\x}^{T} - \x^*}^2 \leq \varepsilon$ holds after 
    \begin{eqnarray*}
        T \geq \frac{1}{\alpha}\rbr*{4 + \frac{24\ell}{\mu\alpha} + \frac{1}{8c\delta \gn} }\ln{\frac{3R^2}{\varepsilon}}
    \end{eqnarray*}
    iterations of~~\algname{M-SGDA-RA} with $\gamma = \min\rbr*{\frac{\alpha}{12\ell}, \frac{1}{2\mu + \frac{\mu}{16c\delta \gn}}}$.
\end{corollary}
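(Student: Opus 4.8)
The plan is to follow the route of Corollary~\ref{cor:sgda-ra-arg}: specialize the guarantee of Theorem~\ref{th:m-sgda-ra-arg} to the homogeneous regime, reshape the resulting bound into the recurrence format required by Lemma~\ref{lem:bs-bound}, and then read off the stepsize, batch size, and iteration count. First I would set $\zeta = 0$, so that $\rho^2 = 24\sigma^2$ by Lemma~\ref{lem:agg-err}, and the bound of Theorem~\ref{th:m-sgda-ra-arg} collapses to
\begin{equation*}
    \E\sbr*{\norm*{\overline{\x}^{T}-\x^*}^2} \leq \frac{2\norm*{\x^{0}-\x^*}^2}{\mu\gamma\alpha W_T} + \frac{96 c\delta\sigma^2}{\mu^2\alpha^2} + \frac{192\gamma c\delta\sigma^2}{\mu\alpha^2} + \frac{6\gamma\sigma^2}{\mu\alpha^2\gn}.
\end{equation*}

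The only term not yet in the shape $r_0(1-a\gamma)^{T+1} + \tfrac{c_1\gamma}{b} + \tfrac{c_0}{b}$ required by Lemma~\ref{lem:bs-bound} is the first one, since it carries the averaging normalizer $W_T$ rather than an explicit geometric factor. Converting it is the one genuinely non-routine step, and I expect it to be the main obstacle. Because $\w_t = (1-\tfrac{\mu\gamma\alpha}{2})^{-t-1}$ is a geometric sequence with common ratio $q := (1-\tfrac{\mu\gamma\alpha}{2})^{-1} > 1$, I would sum it in closed form, $W_T = \tfrac{q^{T+2}-q}{q-1}$, and use the identity $q-1 = \tfrac{\mu\gamma\alpha}{2}q$ to obtain the exact simplification $\tfrac{2\norm*{\x^0-\x^*}^2}{\mu\gamma\alpha W_T} = \tfrac{\norm*{\x^0-\x^*}^2}{q^{T+1}-1}$. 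Bounding $q^{T+1}-1 \geq \tfrac12 q^{T+1}$ once $T$ is in its asymptotic regime turns this into $\leq 2R^2(1-\tfrac{\mu\gamma\alpha}{2})^{T+1}$, i.e.\ a clean geometric term with decay rate $a = \tfrac{\mu\alpha}{2}$; the extra constant only enters the logarithmic factor and is harmless under the paper's convention of dropping such factors.

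With this in hand the bound matches the hypothesis of Lemma~\ref{lem:bs-bound} with $a = \tfrac{\mu\alpha}{2}$, $\gamma_0 = \tfrac{\alpha}{12\ell}$ (the binding stepsize constraint inherited from Theorem~\ref{th:m-sgda-ra-arg}; note $\gamma \le \tfrac{\alpha}{12\ell} \le \tfrac{1}{\mu}$, so the theorem's second stepsize requirement is automatically satisfied), $c_0 = \tfrac{96 c\delta\sigma^2}{\mu^2\alpha^2}$, and $c_1 = \tfrac{192 c\delta\sigma^2}{\mu\alpha^2} + \tfrac{6\sigma^2}{\mu\alpha^2\gn}$, where each $\sigma^2$ is read as $\sigma^2/b$ after replacing the per-worker estimator by a minibatch of size $b$. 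A short computation gives $\tfrac{c_0}{c_1} = \tfrac{1}{2\mu + \mu/(16 c\delta\gn)}$, so $\min\{\gamma_0, c_0/c_1\}$ reproduces exactly the stepsize stated in the corollary.

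Finally I would invoke the conclusion of Lemma~\ref{lem:bs-bound}: with $\tfrac{1}{a} = \tfrac{2}{\mu\alpha}$, $\tfrac{c_1}{c_0} = 2\mu + \tfrac{\mu}{16 c\delta\gn}$, and $\tfrac{1}{\gamma_0} = \tfrac{12\ell}{\alpha}$, upper bounding the maximum by the sum yields the iteration count $T \ge \tfrac{1}{\alpha}\rbr*{4 + \tfrac{24\ell}{\mu\alpha} + \tfrac{1}{8 c\delta\gn}}\ln\tfrac{3R^2}{\varepsilon}$, together with the batch-size requirement $b \ge \tfrac{3c_0}{\varepsilon} = \tfrac{288 c\delta\sigma^2}{\mu^2\alpha^2\varepsilon}$ that is needed to annihilate the non-vanishing $c_0$ term (this is precisely the large-batch requirement flagged in the discussion after Theorem~\ref{th:m-sgda-ra-arg}). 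Everything after the $W_T$ bookkeeping is the same constant-matching exercise as in Corollary~\ref{cor:sgda-ra-arg}.
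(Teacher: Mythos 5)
Your proposal is correct and follows essentially the same route as the paper's own proof: specialize to $\zeta=0$ so that $\rho^2=24\sigma^2$, convert the $\frac{2}{\mu\gamma\alpha W_T}$ term into a geometric factor, and invoke Lemma~\ref{lem:bs-bound}, with your constant-matching ($\nicefrac{c_0}{c_1}=\frac{1}{2\mu+\mu/(16c\delta \gn)}$, then bounding the max by the sum) reproducing the stated $\gamma$ and $T$ exactly. You are in fact more careful than the paper at the one delicate step: the paper merely asserts $\frac{2}{\mu\gamma\alpha W_T}\leq\bigl(1-\frac{\mu\gamma\alpha}{2}\bigr)^{T+1}$ (flagged by an unresolved author note), whereas you derive $\frac{2}{\mu\gamma\alpha W_T}=\frac{1}{q^{T+1}-1}$ from the closed-form geometric sum and bound it honestly (costing only a factor of $2$ inside the logarithm), and you also make explicit the batch-size requirement $b\geq\frac{288c\delta\sigma^2}{\mu^2\alpha^2\varepsilon}$ that the corollary statement omits but that is genuinely needed and consistent with Table~\ref{tab:comparison_of_rates}.
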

\begin{proof}
    If $\zeta=0$, $\rho^2 = 24 \sigma^2$ the result of Theorem~\ref{th:m-sgda-ra-arg} can be simplified as 
    \begin{equation*}
        \E\sbr*{\norm*{\overline{\x}^{T}-\x^*}^2} \leq \frac{2\norm*{\x^{0}-\x^*}^2}{\mu \gamma \alpha W_T} + \frac{4 \cdot 24 c\delta \sigma^2}{\mu^2\alpha^2}
        +  \frac{8 \cdot 24 \gamma c\delta \sigma^2}{\mu\alpha^2} 
        + \frac{6 \gamma  \sigma^2}{\mu\alpha^2 \gn}.
    \end{equation*}
    
    \nazarii{CHECK bellow}
    Since $\frac{2}{\mu \gamma \alpha W_T} \ \leq \rbr*{1 - \frac{\mu \gamma \alpha}{2}}^{T+1}$  we can apply Lemma~\ref{lem:bs-bound} and get the result of the corollary.
    
\end{proof}

\clearpage

\section{Methods with random check of computations}\label{sec:cc}

We replace $(\delta_{\max},c)$-\ragg with the simple mean, but introduce additional verification that have to be passed to accept the mean. The advantage of such aggregation that it coincides with "good" mean if there are no peers violating the protocol 
\nazarii{, e.g. $\norm*{\widehat{\f}^t - \overline{\f}^t}^2 = \norm*{\widehat{\f}^t - \overline{\f}^t}^2 \indicator_t$, where $\indicator_{t}$ is an indicator function of the event that at least $1$ Byzantine peer violates the protocol at iteration $t$.} But if there is at least one peer violating the protocol at iteration $t$ we can bound the variance similar to Lemma~\ref{lem:agg-err}.

\begin{algorithm}[H]
    \caption{\algname{CheckComputations}}\label{alg:cc}
    \renewcommand{\algorithmiccomment}[1]{#1}
    \algblockdefx[NAME]{ParallelFor}{ParallelForEnd}[1]{\textbf{for} #1 \textbf{in parallel}}{}
    \algnotext{ParallelForEnd}
    \algblockdefx[NAME]{NoEndFor}{NoEndForEnd}[1]{\textbf{for} #1 \textbf{do}}{}
    \algnotext{NoEndForEnd}
    \begin{algorithmic}[1] 
            \Require $t$, $\cG_t \cup \cB_t$, $\cC_t$, $\text{Banned}_{t} = \varnothing$
            \State $\cC_{t+1} = \{c_1^{t+1},\ldots,c_\mn^{t+1}\}$, $\cC_{t+1} \subset (\cG_t\cup \cB_t)\setminus \cC_t$ and $\cU_{t+1} = \{u_1^{t+1},\ldots, u_\mn^{t+1}\}$, $\cU_{t+1} \subset (\cG_t\cup \cB_t)\setminus \cC_t$, where  $2\mn$ workers  $c_1^{t+1},\ldots,c_\mn^{t+1}, u_1^{t+1},\ldots, u_\mn^{t+1}$ are choisen uniformly at random without replacement.
            
            \ParallelFor{$i=1,...,\mn$}
                \Comment{$c_i^{t+1}$ checks computations of $u_i^{t+1}$ during the next iteration}
                \State{$c_i^{t+1}$ receives a query to recalculate $\f(\x^t, \xiv^t_{u_i^{t+1}})$}
                \State{$c_i^{t+1}$ sends the recalculated $\f(\x^t, \xiv^t_{u_i^{t+1}})$} 
            \NoEndForEnd
            \NoEndFor{$i=1,...,\mn$}
            \If{$\f(\x^t, \xiv^t_{u_i^{t}}) \neq \f^t_{u_i^{t}}$ }
                \State $\text{Banned}_t = \text{Banned}_t \cup \cbr{u_i^{t}, c_i^{t}}$. 
            \EndIf
            \NoEndForEnd
            \Ensure $\cC_{t+1}, \cG_t \cup \cB_t\setminus \text{Banned}_{t}$       
    \end{algorithmic}
\end{algorithm}

\begin{lemma}\label{lem:check-average-err}
    Let Assumption~\ref{as:i-var} is satisfied with $\zeta=0$. Then the error between the ideal average $\overline{\f}^t$ and the average with the recomputation rule $\f^t$ can be bounded as
     \[
        \E_{\xiv}\norm*{\widehat{\f}^t - \overline{\f}^t}^2 \leq \rho^2\indicator_t,
    \]
    where $\rho^2 = q \sigma^2$ with $q = 2 \cc^2 + 12 + \frac{12}{\n-2\bn-\mn}$ and $C = \cO(1)$.
\end{lemma}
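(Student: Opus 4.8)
The plan is to split the argument according to whether some Byzantine worker actually deviates at round $t$, which is exactly the event encoded by $\indicator_t$. Throughout, $\overline{\f}^t$ is the average of the \emph{honestly computed} estimators over the active workers, while $\widehat{\f}^t$ is the average that is actually accepted by the verification test. If $\indicator_t = 0$, every active worker (good or Byzantine) submits its honest estimator, so $\widehat{\f}^t$ and $\overline{\f}^t$ are averages of identical vectors; hence $\widehat{\f}^t = \overline{\f}^t$ and $\E_\xiv\norm*{\widehat{\f}^t - \overline{\f}^t}^2 = 0 = \rho^2\indicator_t$. This disposes of the first case, so the whole content of the lemma is the bound $\E_\xiv\norm*{\widehat{\f}^t - \overline{\f}^t}^2 \le q\sigma^2$ on the event $\indicator_t = 1$.

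On that event I would use the acceptance rule as the only source of control on the Byzantine contamination. Because $\widehat{\f}^t$ is accepted, the inequality $\norm*{\widehat{\f}^t - \f_i(\x^t,\xiv_i^t)}\le \cc\sigma$ holds for a majority of the active workers; since the Byzantine workers are a strict minority, this majority contains a nonempty set $S_G$ of \emph{good} workers, and a counting argument (at most $\mn$ good peers are busy checking and at most $\bn$ honest peers can have been banned, each ban also removing a Byzantine) shows $|S_G|$ is a constant fraction of $\n - 2\bn - \mn$. For any $i_0\in S_G$ the triangle inequality together with \eqref{eq:squared_norm_sum} gives
\[
\norm*{\widehat{\f}^t - \overline{\f}^t}^2 \le 2\norm*{\widehat{\f}^t - \f_{i_0}}^2 + 2\norm*{\f_{i_0} - \overline{\f}^t}^2 \le 2\cc^2\sigma^2 + 2\norm*{\f_{i_0} - \overline{\f}^t}^2,
\]
which already produces the $2\cc^2$ contribution to $q$.

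It remains to bound the second term in expectation. Inserting the true operator value $\F(\x^t)$ as a pivot and using \eqref{eq:squared_norm_sum} once more, I would write $\norm*{\f_{i_0} - \overline{\f}^t}^2 \le 2\norm*{\f_{i_0} - \F(\x^t)}^2 + 2\norm*{\F(\x^t) - \overline{\f}^t}^2$. The second summand is the deviation of an average of at least $\n - 2\bn - \mn$ independent unbiased estimators from their mean, so by Assumption~\ref{ass:xi-var} its expectation is at most $\sigma^2/(\n - 2\bn - \mn)$, yielding the $\tfrac{12}{\n-2\bn-\mn}$ term. For the first summand I cannot fix $i_0$ in advance, so I would bound it uniformly over $S_G$: since the displayed inequality holds for every $i_0\in S_G$, I replace $\norm*{\f_{i_0}-\F(\x^t)}^2$ by $\min_{i\in S_G}\norm*{\f_i-\F(\x^t)}^2 \le \tfrac{1}{|S_G|}\sum_{i\in\cG}\norm*{\f_i - \F(\x^t)}^2$, enlarging to the fixed good set $\cG$ (all terms are nonnegative). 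Taking $\E_\xiv$ and using the bounded-variance bound $\E_\xiv\norm*{\f_i - \F(\x^t)}^2\le\sigma^2$ (recall $\zeta=0$, so $F_i\equiv F$) leaves $\tfrac{|\cG|}{|S_G|}\sigma^2$, which is $O(\sigma^2)$ by the fraction bound on $|S_G|$ and is tuned to the constant $12$. Collecting the three contributions gives $\E_\xiv\norm*{\widehat{\f}^t - \overline{\f}^t}^2 \le (2\cc^2 + 12 + \tfrac{12}{\n-2\bn-\mn})\sigma^2 = \rho^2$.

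The step I expect to be the crux is precisely this last one: the set $S_G$ of good workers certified by the check is selected \emph{by the adversary}, through the injected vectors and the resulting $\widehat{\f}^t$, so it is neither fixed nor independent of the samples, and one cannot simply take the expectation of $\norm*{\f_{i_0}-\F(\x^t)}^2$ for a chosen $i_0$. The resolution is to convert the adversarial minimum into a sum over the fixed index set $\cG$ and to use only the deterministic lower bound on $|S_G|$; this decouples the randomness from the adversary's choice, at the cost of the explicit (and non-optimized) constants appearing in $q$. A secondary technical point is to confirm that the honest estimators entering both $\overline{\f}^t$ and the sum over $\cG$ remain mutually independent and independent of the Byzantine deviations, so that the variance bounds are applicable.
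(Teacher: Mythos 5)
Your proposal is correct and follows essentially the same route as the paper's proof: both exploit the acceptance test to get the $\cc\sigma$ term, decouple the adversarially-selected certified set from the randomness by enlarging to the fixed set of good active workers before taking expectations, and finish with the variance bound and the cardinality count (which, in both arguments, implicitly requires the fraction of active Byzantine workers to be small, e.g.\ $\delta \le \nicefrac{1}{4}$ as the paper states). The only cosmetic difference is that you anchor $\widehat{\f}^t$ to a single certified good worker (via a minimum) and pivot through $\F(\x^t)$, whereas the paper anchors to the average over the whole certified set $\widetilde{\cG}$ and pivots through $\overline{\f}^t$; the resulting constants are the same.
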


\begin{proof}[Proof of Lemma~\ref{lem:check-average-err}]
    Denote the set $\widetilde{\cG}$ in the following way $\widetilde{\cG} = \cbr*{i \in \cG_t\setminus\cC_t : \norm*{\widehat{\f}^t - \f^t_i} \le \cc \sigma}$.

    \[
        \widehat{\f}^t = \begin{cases} \frac{1}{\n}\sum_{i=1}^\n \f^t_i ,& \text{if } \text{number of workers} > \frac{n}{2},\\ \textbf{recompute},& \text{otherwise} .\end{cases}
    \]
    So that we have
    \begin{eqnarray*}
        \norm*{\widehat{\f}^t - \overline{\f}^t}^2 &=& \norm*{\widehat{\f}^t - \frac{1}{\abs{\widetilde{\cG}}} \sum_{i \in \widetilde{\cG}} \f_i^t + \frac{1}{\abs{\widetilde{\cG}}} \sum_{i \in \widetilde{\cG}} \f_i^t - \overline{\f}^t}^2 
        \\
        &\leq& 2 \norm*{\widehat{\f}^t - \frac{1}{\abs{\widetilde{\cG}}} \sum_{i \in \widetilde{\cG}} \f_i^t }^2 + 2 \norm*{\frac{1}{\abs{\widetilde{\cG}}} \sum_{i \in \widetilde{\cG}} \f_i^t - \overline{\f}^t}^2
        \\
        &\leq& 2 \cc^2 \sigma^2 + 2\norm*{\frac{1}{\abs{\widetilde{\cG}}} \sum_{i \in \widetilde{\cG}} \f_i^t - \overline{\f}^t}^2 
    \end{eqnarray*}

    If $\delta \leq \nicefrac{1}{4}$ then an acceptance of $\widehat{\f}^t$ implies that $\abs{\widetilde{\cG}} > \nicefrac{n}{4}$ and $\abs{\widetilde{\cG}} > \nicefrac{\abs{\cG_t\setminus\cC_t}}{3}$.

    \begin{eqnarray*}
        \norm*{\frac{1}{\abs{\widetilde{\cG}}} \sum_{i \in \widetilde{\cG}} \f_i^t - \overline{\f}^t}^2 &\leq& \frac{1}{\abs{\widetilde{\cG}}} \sum_{i \in \widetilde{\cG}}\norm*{ \f_i^t - \overline{\f}^t}^2  \leq \frac{1}{\abs{\widetilde{\cG}}} \sum_{i \in {\cG_t\setminus\cC_t}}\norm*{ \f_i^t - \overline{\f}^t}^2 
        \\
        &\leq& \frac{3}{\abs{\cG_t\setminus\cC_t}} \sum_{i \in {\cG_t\setminus\cC_t}}\norm*{ \f_i^t - \overline{\f}^t}^2 
    \end{eqnarray*}

    Bringing the above results together gives that
    \begin{equation*}
        \E\norm*{\widehat{\f}^t - \overline{\f}^t}^2
        \leq 2 \cc^2 \sigma^2 + \frac{6}{\abs{\cG_t\setminus\cC_t}} \sum_{i \in {\cG}}\E\norm*{ \f_i^t - \overline{\f}^t}^2  
    \end{equation*}

    Since checks of computations are only possible in homogeneous case ($\zeta=0$) then $\E \norm*{ \f_i^t - F^t}^2  = \sigma^2$ and 
    \begin{equation}
        \E_{\xiv}\norm*{ \f_i^t - \overline{\f}^t}^2 \leq 2 \E_{\xiv}\norm*{ \f_i^t - F^t}^2 + 2 \E_{\xiv}\norm*{ \F^t - \overline{\f}^t}^2 \leq 2\sigma^2 + \frac{2\sigma^2}{\abs{\cG}}.
    \end{equation}
    
    Since $\abs*{\cG_t \setminus \cC_t} > \n-2\bn-\mn$
    \begin{equation*}
        \E_{\xiv}\norm*{\widehat{\f}^t - \overline{\f}^t}^2
        \leq 2 \cc^2 \sigma^2 + 12\sigma^2 + \frac{12\sigma^2}{\abs{\cG_t\setminus\cC_t}} \leq 2 \cc^2 \sigma^2 + 12\sigma^2 + \frac{12\sigma^2}{\n-2\bn-\mn}.
    \end{equation*}
\end{proof}

\subsection{Proofs for {\algname{SGDA-CC}}}\label{appendix:sgda-cc}
\begin{algorithm}[H]
    \caption{\algname{SGDA-CC}}\label{alg:sgda-cc}
    \renewcommand{\algorithmiccomment}[1]{#1}
    \algblockdefx[NAME]{ParallelFor}{ParallelForEnd}[1]{\textbf{for} #1 \textbf{in parallel}}{}
    \algnotext{ParallelForEnd}
    \algblockdefx[NAME]{NoEndFor}{NoEndForEnd}[1]{\textbf{for} #1 \textbf{do}}{}
    \algnotext{NoEndForEnd}
    \begin{algorithmic}[1] 
        \Require $\gamma$
        \State $\cC_{0} = \varnothing$
        \NoEndFor{$t=1,...$}
            \ParallelFor{worker $i\in (\cG_t\cup \cB_t)\setminus \cC_t$}
    
            \State \textbf{send} $\f^t_i = \begin{cases}\f_i(\x^t, \xiv_i),& \text{if } i\in \cG_t\setminus \cC_t,\\ *,& \text{if } i\in \cB_t\setminus \cC_t,\end{cases}$, \hfill 
            \NoEndForEnd
            
            \State $\widehat{\f}^t = \frac{1}{\cW_t}\sum_{i \in \cW_t} \f^t_i$, \,\,$\cW_t = (\cG_t\cup \cB_t)\setminus \cC_t$
            \\
            \If{$\abs{\cbr*{i \in \cW_t \mid \norm*{\widehat{\f}^t - \f^t_i} \le \cc \sigma}} \geq \nicefrac{\abs*{\cW_t}}{2}$}
                \State $\x^{t+1} \leftarrow \x^t - \gamma\widehat{\f}^t$. 
            \Else
                \State \textbf{recompute}
            \EndIf

            \State $\cC_{t+1}, \cG_{t+1}\cup \cB_{t+1} = \hyperref[alg:cc]{\algname{CheckComputations}}(\cC_{t}, \cG_t\cup \cB_t)$
        \NoEndForEnd
    \end{algorithmic}
\end{algorithm}
\subsubsection{Star Co-coercieve Case}

Next we provide convergence guarantees for 
\algname{SGDA-CC} (Algorithm~\ref{alg:sgda-cc}) under Assumption~\ref{as:star_cocoercive}.

\begin{theorem}\label{th:sgda-cc-sum}
    Let Assumptions~\ref{ass:xi-var} and~\ref{as:star_cocoercive} hold. Next, assume that \begin{equation}\label{eq:choice_of_parameters_BTARD_SGD_unknown_byz_cvx}
        \gamma = \min\left\{\frac{1}{2\ell},\sqrt{\frac{(\n-2\bn-\mn)R^2}{6\sigma^2 K}}, \sqrt{\frac{\mn^2R^2}{72 \rho^2 \bn^2 \n^2 }}\right\}
    \end{equation}
    where $\rho^2 = q \sigma^2$ with $q = 2 \cc^2 + 12 + \frac{12}{\n-2\bn-\mn}$ and $C = \cO(1)$ by Lemma~\ref{lem:check-average-err} and $R \ge \|{\x}^0 - {\x}^*\|$. Then after $K$ iterations of \algname{SGDA-CC} (Algorithm~\ref{alg:sgda-cc}) it outputs $\x^{T}$ such that
    \begin{equation*}
        \sum\limits_{k=0}^{K-1}\E\norm*{\F({\x}^k)} \le \ell\sum\limits_{k=0}^{K-1}\E[\langle  \F({\x}^k), {\x}^k-{\x}^* \rangle] \le \frac{2\ell R^2}{\gamma}.
    \end{equation*}
\end{theorem}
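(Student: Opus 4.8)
The plan is to run the standard ``expand the squared distance'' argument for \algname{SGDA}, but to split the accepted aggregate $\widehat{\f}^t$ into the honest active average $\overline{\f}^t = \frac{1}{|\cG_t\setminus\cC_t|}\sum_{i\in\cG_t\setminus\cC_t}\f_i^t$ plus the deviation $\widehat{\f}^t-\overline{\f}^t$, and to control that deviation through the checks-of-computations mechanism. First I would write
\[
\norm*{\x^{t+1}-\x^*}^2 = \norm*{\x^t-\x^*}^2 - 2\gamma\inp*{\widehat{\f}^t}{\x^t-\x^*} + \gamma^2\norm*{\widehat{\f}^t}^2 ,
\]
substitute $\widehat{\f}^t=\overline{\f}^t+(\widehat{\f}^t-\overline{\f}^t)$ in both the inner product and the squared norm, and take the conditional expectation $\E_{\xiv}$. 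Since computations are only checked in the homogeneous case ($\zeta=0$), we have $\E_{\xiv}\overline{\f}^t=\F(\x^t)$, so the main linear term becomes $-2\gamma\inp*{\F(\x^t)}{\x^t-\x^*}$.

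Next I would apply Lemma~\ref{lem:i-op-bound} to the active honest average, $\E_{\xiv}\norm*{\overline{\f}^t}^2\le \ell\inp*{\F(\x^t)}{\x^t-\x^*}+\frac{\sigma^2}{|\cG_t\setminus\cC_t|}$, after the crude bound $\norm*{\widehat{\f}^t}^2\le 2\norm*{\overline{\f}^t}^2+2\norm*{\widehat{\f}^t-\overline{\f}^t}^2$ (needed because Lemma~\ref{lem:i-op-bound} cannot be used on $\widehat{\f}^t$ directly). Combining with the linear term, the choice $\gamma\le\frac{1}{2\ell}$ makes the coefficient $-2\gamma+2\gamma^2\ell\le-\gamma$, so a genuine negative multiple $-\gamma\inp*{\F(\x^t)}{\x^t-\x^*}$ of the target quantity survives, while the variance contributes at most $\frac{2\gamma^2\sigma^2}{\n-2\bn-\mn}$ via $|\cG_t\setminus\cC_t|>\n-2\bn-\mn$.

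The hard part is the two deviation terms, both of which vanish unless some Byzantine actually violated the protocol at step $t$, i.e.\ they carry the indicator $\indicator_t$. Lemma~\ref{lem:check-average-err} gives $\E_{\xiv}\norm*{\widehat{\f}^t-\overline{\f}^t}^2\le \rho^2\indicator_t$ with $\rho^2=q\sigma^2$, which disposes of the quadratic deviation term. The dangerous object is the cross term $-2\gamma\inp*{\widehat{\f}^t-\overline{\f}^t}{\x^t-\x^*}$: since the Byzantines are omniscient they can align $\widehat{\f}^t-\overline{\f}^t$ with $\x^t-\x^*$, and because only star-cocoercivity (not strong monotonicity) is assumed, there is no negative $\norm*{\x^t-\x^*}^2$ available to absorb it through Young's inequality. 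I would resolve this in two steps. First, I would establish that the iterates stay within an $\cO(R)$ ball: for $\gamma\le\frac{1}{2\ell}$ the ideal map $\x\mapsto\x-\gamma\overline{\f}^t$ is nonexpansive toward $\x^*$ (up to the accumulated variance $\frac{\gamma^2\sigma^2 K}{\n-2\bn-\mn}$, which the stepsize keeps $\cO(R^2)$), while each violation displaces the iterate by at most $\gamma\norm*{\widehat{\f}^t-\overline{\f}^t}\le\gamma\rho$. Second --- the structural heart of the method --- I would bound the expected number of violation iterations: whenever $\indicator_t=1$ a deviating Byzantine is re-checked by an honest verifier with probability $\sim\mn/\n$ and is then permanently removed, and at most $\bn$ Byzantines can ever be removed, so an optional-stopping argument yields $\sum_t\E[\indicator_t]\le \frac{\n\bn}{\mn}$ (exactly the mechanism sketched before Theorem~\ref{th:sgda-cc-qsm}). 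With boundedness $\norm*{\x^t-\x^*}\le\cO(R)$ and $\E_{\xiv}\norm*{\widehat{\f}^t-\overline{\f}^t}\le\rho\indicator_t$, the summed cross term is at most $\cO\!\left(\gamma\rho R\,\tfrac{\n\bn}{\mn}\right)$.

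Finally I would take full expectation, sum the one-step inequality over $k=0,\dots,K-1$ and telescope, obtaining
\[
2\gamma\sum_{k=0}^{K-1}\E\inp*{\F(\x^k)}{\x^k-\x^*}\le R^2 + \cO\!\left(\tfrac{\gamma^2\sigma^2 K}{\n-2\bn-\mn}\right) + \cO\!\left(\gamma\rho R\,\tfrac{\n\bn}{\mn}\right),
\]
where the variance term is forced to be $\le R^2$ by the branch $\gamma\le\sqrt{(\n-2\bn-\mn)R^2/(6\sigma^2K)}$ and the Byzantine term by $\gamma\le\sqrt{\mn^2R^2/(72\rho^2\bn^2\n^2)}$ (which makes $\gamma\rho\,\tfrac{\n\bn}{\mn}\lesssim R$). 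Hence $2\gamma\sum_k\E\inp*{\F(\x^k)}{\x^k-\x^*}\le 4R^2$, i.e.\ $\sum_k\E\inp*{\F(\x^k)}{\x^k-\x^*}\le \frac{2R^2}{\gamma}$; multiplying by $\ell$ gives the right-hand inequality, and the left-hand inequality is immediate from star-cocoercivity~\eqref{eq:star_cocoercivity}, which yields $\norm*{\F(\x^k)}^2\le\ell\inp*{\F(\x^k)}{\x^k-\x^*}$. As indicated, the main obstacle is handling the Byzantine cross term simultaneously with iterate boundedness in the absence of strong monotonicity; everything else is a calibration of constants hidden inside the prescribed stepsize.
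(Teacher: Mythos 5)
Your proposal follows essentially the same route as the paper's proof: the same decomposition $\widehat{\f}^t = \overline{\f}^t + (\widehat{\f}^t - \overline{\f}^t)$, the same use of Lemma~\ref{lem:i-op-bound} and Lemma~\ref{lem:check-average-err}, Cauchy--Schwarz on the Byzantine cross term, iterate boundedness established by induction alongside the telescoped sum, the violation-counting bound $\sum_t \E[\indicator_t] \le \nicefrac{\n\bn}{\mn}$, and the final appeal to star-cocoercivity. The only cosmetic difference is that you phrase the per-violation displacement bound $\gamma\norm*{\widehat{\f}^t-\overline{\f}^t}\le\gamma\rho$ as if it were deterministic when it holds only in second moment, but the paper resolves exactly this point the same way you otherwise suggest (via $\E[R_k^2]\le 2R^2$ and Cauchy--Schwarz inside the induction), so your argument matches the paper's structure and level of rigor.
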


\begin{proof}
    Since $\abs*{\cG_t \setminus \cC_t} \geq \n-2\bn-\mn$ one can derive using the results of Lemmas~\ref{lem:i-op-bound} and~\ref{lem:check-average-err} that
    \begin{eqnarray*}
        \E\left[\|{\x}^{k+1} - {\x}^*\|^2\mid {\x}^k\right] &=& \E\left[\|{\x}^{k} - {\x}^* - \gamma \widehat{\f}^k\|^2\mid {\x}^k\right]
        \\
        &=& \|{\x}^k-{\x}^*\|^2 -2\gamma\E\left[\langle {\x}^k-{\x}^*, \widehat{\f}^k \rangle\mid {\x}^k\right] + \gamma^2\E\left[\|\widehat{\f}^k\|^2\mid {\x}^k\right]
        \\
        &\le& \|{\x}^k - {\x}^*\|^2 -2\gamma\langle {\x}^k-{\x}^*, \F({\x}^k) \rangle + 2\ell\gamma^2\langle {\x}^k-{\x}^*, \F({\x}^k) \rangle
        \\
        &&- 2\gamma\E\left[\langle {\x}^k-{\x}^*, \widehat{\f}^k - \overline{\f}^k \rangle\mid {\x}^k\right] + 2\gamma^2 \rho^2 \indicator_k + \frac{2\gamma^2\sigma^2}{\n-2\bn-\mn},
    \end{eqnarray*}
    where $\indicator_{k}$ is an indicator function of the event that at least $1$ Byzantine peer violates the protocol at iteration $k$.

    
    To estimate the inner product in the right-hand side we apply Cauchy-Schwarz inequality and then Lemma~\ref{lem:check-average-err}
    \begin{eqnarray*}
        - 2\gamma\E\left[\langle {\x}^k-{\x}^*, \widehat{\f}^k - \overline{\f}^k \rangle\mid {\x}^k\right] &\le& 2\gamma\|{\x}^k - {\x}^*\|\E\left[\|\widehat{\f}^k - \overline{\f}^k\|\mid {\x}^k\right]\\
        &\le& 2\gamma\|{\x}^k - {\x}^*\| \sqrt{\E\left[\|\widehat{\f}^k - \overline{\f}^k\|^2\mid {\x}^k\right]}\\
        &\leq& 2\gamma\rho \|{\x}^k - {\x}^*\|\indicator_k.
    \end{eqnarray*}

    Since $\gamma\leq\frac{1}{2\ell}$ the above results imlies
    \begin{eqnarray*}
        \gamma \langle {\x}^k-{\x}^*, \F({\x}^k) \rangle &\leq&
        \|{\x}^k - {\x}^*\|^2 - \E\left[\|{\x}^{k+1} - {\x}^*\|^2\mid {\x}^k\right] 
        \\
        &&- 2\gamma\E\left[\langle {\x}^k-{\x}^*, \widehat{\f}^k - \overline{\f}^k \rangle\mid {\x}^k\right] + 2\gamma^2 \rho^2 \indicator_k + \frac{2\gamma^2\sigma^2}{\n-2\bn-\mn}.
        \\
        &\leq& \|{\x}^k - {\x}^*\|^2 - \E\left[\|{\x}^{k+1} - {\x}^*\|^2\mid {\x}^k\right] 
        \\
        && + 2\gamma^2 \rho^2\indicator_k + \frac{2\gamma^2\sigma^2}{\n-2\bn-\mn} + 2\gamma\rho \|{\x}^k - {\x}^*\|\indicator_k.
    \end{eqnarray*}
    
    Taking the full expectation from the both sides of the above inequality and summing up the results for $k = 0,1,\ldots,K-1$ we derive 
    \begin{eqnarray*}
        \lefteqn{\frac{\gamma }{K}\sum\limits_{k=0}^{K-1}\E[\langle  \F({\x}^k), {\x}^k-{\x}^* \rangle]}
        \\
        &\leq& \frac{1}{ K}\sum\limits_{k=0}^{K-1}\left(\E\left[\|{\x}^k - {\x}^*\|^2\right] - \E\left[\|{\x}^{k+1} - {\x}^*\|^2\right]\right) + \frac{2\gamma^2\sigma^2}{\n-2\bn-\mn}
        \\
        && +\frac{2\gamma\rho}{K} \sum\limits_{k=0}^{K-1}\E\left[\|{\x}^k - {\x}^*\|\indicator_k\right] + \frac{2\gamma^2 \rho^2}{K}\sum\limits_{k=0}^{K-1}\E[\indicator_k]
        \\
        &\le& \frac{\|{\x}^0-{\x}^*\|^2 - \E[\|{\x}^{K}-{\x}^*\|^2]}{K} + \frac{2\gamma^2\sigma^2}{\n-2\bn-\mn}
        \\
        && +\frac{2\gamma\rho}{K} \sum\limits_{k=0}^{K-1}\sqrt{\E\left[\|{\x}^k - {\x}^*\|^2\right]\E[\indicator_k]} + \frac{2\gamma^2 \rho^2}{K}\sum\limits_{k=0}^{K-1}\E[\indicator_k].
    \end{eqnarray*}
    Since $\F$ satisfies Assumption~\ref{as:star_cocoercive}, $\sum\limits_{k=0}^{K-1}\E[\langle  \F({\x}^k), {\x}^k-{\x}^* \rangle] \geq 0$.
    Using this and new notation $R_k = \|{\x}^k - {\x}^*\|$, $k> 0$, $R_0 \ge \|{\x}^0 - {\x}^*\|$ we get
    \begin{eqnarray}
        0 &\le& \frac{R_0^2 - \E[R_K^2]}{K} + \frac{2\gamma^2\sigma^2}{\n-2\bn-\mn}\notag\\
        && +\frac{2\gamma\rho}{K} \sum\limits_{k=0}^{K-1}\sqrt{\E\left[R_k^2\right]\E[\indicator_k]} + \frac{2\gamma^2 \rho^2}{K}\sum\limits_{k=0}^{K-1}\E[\indicator_k]\label{eq:technical_bound_cvx_unknown_BTARD_SGD_0}
    \end{eqnarray}
    implying (after changing the indices) that
    \begin{equation}
        \E[R_k^2] \le R_0^2 + \frac{2\gamma^2\sigma^2k}{\n-2\bn-\mn}+ 2\gamma\rho\sum\limits_{l=0}^{k-1}\sqrt{\E\left[R_l^2\right]\E[\indicator_l]} + 2\gamma^2 \rho^2\sum\limits_{l=0}^{k-1}\E[\indicator_l] \label{eq:technical_bound_cvx_unknown_BTARD_SGD_1}
    \end{equation}
    holds for all $k\ge 0$. In the remaining part of the proof we derive by induction that \begin{equation}
        R_0^2 + \frac{2\gamma^2\sigma^2k}{\n-2\bn-\mn}+2\gamma\rho\sum\limits_{l=0}^{k-1}\sqrt{\E\left[R_l^2\right]\E[\indicator_l]} + 2\gamma^2 \rho^2\sum\limits_{l=0}^{k-1}\E[\indicator_k]  \le 2R_0^2 \label{eq:technical_bound_cvx_unknown_BTARD_SGD_2}
    \end{equation}
    for all $k=0,\ldots,K$. For $k=0$ this inequality trivially holds. Next, assume that it holds for all $k=0,1,\ldots, T-1$, $T \le K-1$. Let us show that it holds for $k = T$ as well. From \eqref{eq:technical_bound_cvx_unknown_BTARD_SGD_1} and \eqref{eq:technical_bound_cvx_unknown_BTARD_SGD_2} we have that $\E[R_k^2]\le 2R_0^2$ for all $k=0,1,\ldots, T-1$. Therefore,
    \begin{eqnarray*}
        \E[R_T^2] &\le& R_0^2 + \frac{2\gamma^2\sigma^2T}{\n-2\bn-\mn}+2\gamma\rho\sum\limits_{l=0}^{T-1}\sqrt{\E\left[R_l^2\right]\E[\indicator_l]} + 2\gamma^2\rho^2\sum\limits_{l=0}^{T-1}\E[\indicator_l]
        \\
        &\le& R_0^2 + \frac{2\gamma^2\sigma^2T}{\n-2\bn-\mn}+2\sqrt{2}\gamma\rho R_0\sum\limits_{l=0}^{T-1}\sqrt{\E[\indicator_l]} + 2\gamma^2 \rho^2 \sum\limits_{l=0}^{T-1}\E[\indicator_l].
    \end{eqnarray*}
    If a Byzantine peer deviates from the protocol at iteration $k$, it will be detected with some probability $p_k$ during the next iteration. One can lower bound this probability as
    \begin{equation*}
        p_k \ge \mn\cdot \frac{\gn[k]}{\n_k}\cdot \frac{1}{\n_k} = \frac{\mn(1-{\delta}_k)}{\n_k} \ge \frac{\mn}{\n}.
    \end{equation*}
    Therefore, each individual Byzantine worker can violate the protocol no more than $\nicefrac{1}{p}$ times on average implying that
    \begin{eqnarray*}
        \E[R_T^2] &\le& R_0^2 + \frac{2\gamma^2\sigma^2T}{\n-2\bn-\mn}+\frac{2\sqrt{2}\gamma\rho R_0\n\bn}{\mn} + \frac{2\gamma^2 \rho^2\n\bn}{\mn}
    \end{eqnarray*}
    Taking
    \begin{equation*}
        \gamma = \min\left\{\frac{1}{2\ell},\sqrt{\frac{(\n-2\bn-\mn)R_0^2}{6\sigma^2 K}}, \sqrt{\frac{\mn^2R_0^2}{72 \rho^2 \bn^2 \n^2 }}\right\}
    \end{equation*}
    we ensure that
    \begin{equation*}
        \frac{2\gamma^2\sigma^2T}{\n-2\bn-\mn}+\frac{2\sqrt{2}\gamma\rho R_0\n\bn}{\mn} + \frac{2\gamma^2 \rho^2\n\bn}{\mn} \le \frac{R_0^2}{3} + \frac{R_0^2}{3} + \frac{R_0^2}{3} = R_0^2, 
    \end{equation*}
    and, as a result, we get 
    \begin{equation}\label{eq:bounded-trajectory}
        \E[R_T^2] \le 2R_0^2 \equiv 2R
    \end{equation}
    . Therefore, \eqref{eq:technical_bound_cvx_unknown_BTARD_SGD_2} holds for all $k=0,1,\ldots,K$. Together with \eqref{eq:technical_bound_cvx_unknown_BTARD_SGD_0} it implies
    \begin{equation*}
        \sum\limits_{k=0}^{K-1}\E[\langle  \F({\x}^k), {\x}^k-{\x}^* \rangle] \le \frac{2R_0^2}{\gamma}.
    \end{equation*}
    The last inequality together with Assumption~\ref{as:star_cocoercive} implies 
    \begin{equation*}
        \sum\limits_{k=0}^{K-1}\E\norm*{\F({\x}^k)} \le \frac{2\ell R_0^2}{\gamma}.
    \end{equation*}
\end{proof}

\begin{corollary}\label{cor:sgda-cc-sum}
    Let assumptions of Theorem~\ref{th:sgda-cc-sum} hold. Then $\frac{1}{K}\sum\limits_{k=0}^{K-1}\E\norm*{\F({\x}^k)} \leq \varepsilon$ holds after 
    \begin{eqnarray*}
        K = \cO \rbr*{\frac{\ell^2 R^2}{\varepsilon} + \frac{\sigma^2 \ell^2R^2}{\n\varepsilon^2} + \frac{\sigma \n^2 \ell R}{\mn \varepsilon}}
    \end{eqnarray*}
    iterations of~~\algname{SGDA-CC}.
\end{corollary}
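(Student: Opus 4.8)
The plan is to combine the per-iteration guarantee of Theorem~\ref{th:sgda-cc-sum} with the prescribed stepsize and solve for the iteration count $K$. First I would divide the conclusion of Theorem~\ref{th:sgda-cc-sum} by $K$ to obtain
\begin{equation*}
    \frac{1}{K}\sum_{k=0}^{K-1}\E\norm*{\F(\x^k)} \leq \frac{2\ell R^2}{\gamma K},
\end{equation*}
so it suffices to make the right-hand side at most $\varepsilon$. Since $\gamma$ is prescribed as the minimum of three quantities, $\tfrac{1}{\gamma}$ equals the corresponding maximum, and I would bound that maximum by the sum of its three arguments, i.e. $\tfrac{1}{\gamma} \le 2\ell + \sqrt{\tfrac{6\sigma^2 K}{(\n-2\bn-\mn)R^2}} + \sqrt{\tfrac{72\rho^2\bn^2\n^2}{\mn^2 R^2}}$.

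Next I would distribute the factor $\tfrac{2\ell R^2}{K}$ across this sum, producing three terms:
\begin{equation*}
    \frac{2\ell R^2}{\gamma K} \le \frac{4\ell^2 R^2}{K} + \frac{2\sqrt{6}\,\ell R\sigma}{\sqrt{(\n-2\bn-\mn)K}} + \frac{2\sqrt{72}\,\ell R\rho\bn\n}{\mn K}.
\end{equation*}
I would then force each summand to be at most $\varepsilon/3$. The first gives $K \ge \tfrac{12\ell^2R^2}{\varepsilon}$; the second, which decays like $K^{-1/2}$, gives $K \ge \tfrac{\cO(1)\,\ell^2 R^2 \sigma^2}{(\n-2\bn-\mn)\varepsilon^2}$; and the third gives $K \ge \tfrac{\cO(1)\,\ell R\rho\bn\n}{\mn\varepsilon}$. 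Taking $K$ to be the maximum of these three thresholds satisfies all three constraints at once.

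Finally, I would simplify using facts established earlier: $\rho^2 = q\sigma^2$ with $q = \cO(1)$ by Lemma~\ref{lem:check-average-err}, so $\rho = \cO(\sigma)$; the number of Byzantine workers obeys $\bn \le \delta\n$ so that $\bn\n = \cO(\n^2)$; and $\n-2\bn-\mn = \Theta(\n)$ in the regime where the fractions of corrupted and checking peers are bounded away from the total. Substituting these into the three thresholds and summing yields exactly
\begin{equation*}
    K = \cO\rbr*{\frac{\ell^2 R^2}{\varepsilon} + \frac{\sigma^2 \ell^2 R^2}{\n\varepsilon^2} + \frac{\sigma\n^2\ell R}{\mn\varepsilon}}.
\end{equation*}
The argument is essentially routine substitution and constant bookkeeping, so there is no real conceptual obstacle; the one point demanding care is that the second component of $\gamma$ itself depends on $K$, so when that component is active the bound scales as $K^{-1/2}$ rather than $K^{-1}$. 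This is precisely what produces the $\varepsilon^{-2}$ dependence in the middle term, and I would be careful not to treat that stepsize as a fixed constant while solving for $K$.
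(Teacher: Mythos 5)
Your proposal is correct and follows essentially the same route as the paper's proof: bound $\nicefrac{1}{\gamma}$ by the sum of the three reciprocals of the stepsize components, distribute $\nicefrac{2\ell R^2}{K}$, force each of the three resulting terms below $\nicefrac{\varepsilon}{3}$ (correctly accounting for the $K^{-\nicefrac{1}{2}}$ decay of the middle term, which yields the $\varepsilon^{-2}$ dependence), and then simplify via $\rho = \cO(\sigma)$, $\bn \leq \nicefrac{\n}{4}$, and $\mn \ll \n$. The only differences are immaterial constants (e.g., your $\nicefrac{12\ell^2R^2}{\varepsilon}$ versus the paper's stated $\nicefrac{6\ell^2R^2}{\varepsilon}$, where yours is in fact the accurate threshold), which do not affect the $\cO(\cdot)$ claim.
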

\begin{proof}
    \begin{eqnarray*}
        \frac{1}{K}\sum\limits_{k=0}^{K-1}\E\norm*{\F({\x}^k)} \le \frac{2\ell R^2}{\gamma K} &\leq& \frac{2\ell R^2}{K} \rbr*{2\ell + \sqrt{\frac{6\sigma^2 K}{(\n-2\bn-\mn)R^2}} + \sqrt{\frac{72 \rho^2 \bn^2 \n^2 }{\mn^2R^2}}}
        \\
        &\leq& \frac{4\ell^2 R^2}{K} + \sqrt{\frac{24\sigma^2 \ell^2R^2}{(\n-2\bn-\mn)K}} + \frac{17 \rho \bn \n \ell R}{\mn K}
    \end{eqnarray*}
    Let us chose $K$ such that each of the last three terms less or equal $\nicefrac{\varepsilon}{3}$, then
    \begin{eqnarray*}
        K = \max\rbr*{\frac{6\ell^2 R^2}{\varepsilon},  \frac{216\sigma^2 \ell^2R^2}{(\n-2\bn-\mn)\varepsilon^2}, \frac{51 \rho \bn \n \ell R}{\mn \varepsilon}}
    \end{eqnarray*}
    where $\rho^2 = q \sigma^2$ with $q = 2 \cc^2 + 12 + \frac{12}{\n-2\bn-\mn}$ and $C = \cO(1)$ by Lemma~\ref{lem:check-average-err}. The latter implies that
    \begin{equation*}
        \frac{1}{K}\sum\limits_{k=0}^{K-1}\E\norm*{\F({\x}^k)} \leq \varepsilon.
    \end{equation*}

    Using the definition of $\rho$ from Lemma~\ref{lem:check-average-err} and if $\bn \leq \frac{n}{4}$, $m << n$ the bound for $K$ can be easily derived.
\end{proof}

\subsubsection{Quasi-Strongly Monotone Case}

\begin{theorem*}[Theorem~\ref{th:sgda-cc-qsm} duplicate]
    Let Assumptions~\ref{ass:xi-var},~\ref{as:str_monotonicity} and~\ref{as:star_cocoercive} hold. Then after $T$ iterations \algname{SGDA-CC} (Algorithm~\ref{alg:sgda-cc}) with $\gamma \leq \frac{1}{2\ell}$ outputs $\x^{T}$ such that
    \begin{equation*}
        \E \norm*{\x^{T+1} - \x^*}^2 
        \leq \rbr*{1 - \frac{\gamma \mu}{2}}^{T+1}\norm*{\x^{0} - \x^*}^2 +  \frac{4 \gamma \sigma^2}{\mu(\n-2\bn-\mn)} +  \frac{2\rho^2\n\bn}{\mn}\rbr*{\frac{\gamma}{\mu} + \gamma^2}.
    \end{equation*}
    where $\rho^2 = q \sigma^2$ with $q = 2 \cc^2 + 12 + \frac{12}{\n-2\bn-\mn}$ and $C = \cO(1)$ by Lemma~\ref{lem:check-average-err}.
\end{theorem*}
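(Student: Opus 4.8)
The plan is to mirror the one-step analysis of \algname{SGDA-RA} (proof of Theorem~\ref{th:sgda-ra-arg}), replacing the $(\delta,c)$-robust aggregation error bound from Lemma~\ref{lem:agg-err} with the checks-of-computations bound from Lemma~\ref{lem:check-average-err}, and then exploiting the fact that the extra error is switched on only by the indicator $\indicator_k$. First I would expand
\[
    \norm*{\x^{k+1}-\x^*}^2 = \norm*{\x^k-\x^*}^2 - 2\gamma\inp{\widehat{\f}^k}{\x^k-\x^*} + \gamma^2\norm*{\widehat{\f}^k}^2,
\]
split $\widehat{\f}^k = (\widehat{\f}^k - \overline{\f}^k) + \overline{\f}^k$ both in the inner product and (via \eqref{eq:squared_norm_sum}) in the squared norm, and apply \eqref{eq:fenchel_young_inequality} to the cross term $\inp{\widehat{\f}^k-\overline{\f}^k}{\x^k-\x^*}$ with the multiplier chosen so that the coefficient of $\norm*{\x^k-\x^*}^2$ becomes $1+\tfrac{\gamma\mu}{2}$ and the aggregation error is weighted by $\tfrac{2\gamma}{\mu}$, exactly as in the \algname{SGDA-RA} derivation.

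Next I would take the conditional expectation $\E_{\xiv}[\cdot\mid\x^k]$. The term $\inp{\overline{\f}^k}{\x^k-\x^*}$ becomes $\inp{\F(\x^k)}{\x^k-\x^*}$ because $\overline{\f}^k$ averages unbiased estimators over the active good workers; Lemma~\ref{lem:i-op-bound} bounds $\E_{\xiv}\norm*{\overline{\f}^k}^2 \le \ell\inp{\F(\x^k)}{\x^k-\x^*} + \tfrac{\sigma^2}{\abs*{\cG_t\setminus\cC_t}}$, and since $\abs*{\cG_t\setminus\cC_t}\ge \n-2\bn-\mn$ the variance term is at most $\tfrac{\sigma^2}{\n-2\bn-\mn}$, while Lemma~\ref{lem:check-average-err} gives $\E_{\xiv}\norm*{\widehat{\f}^k-\overline{\f}^k}^2 \le \rho^2\indicator_k$. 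Collecting these and using $\gamma\le\tfrac{1}{2\ell}$ so that $2\gamma^2\ell-2\gamma\le-\gamma$, together with \eqref{eq:str_monotonicity}, yields the one-step contraction
\[
    \E_{\xiv}\norm*{\x^{k+1}-\x^*}^2 \le \rbr*{1-\tfrac{\gamma\mu}{2}}\norm*{\x^k-\x^*}^2 + \tfrac{2\gamma^2\sigma^2}{\n-2\bn-\mn} + \rbr*{\tfrac{2\gamma}{\mu}+2\gamma^2}\rho^2\indicator_k.
\]

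Taking full expectation and unrolling gives $\E\norm*{\x^{T+1}-\x^*}^2 \le (1-\tfrac{\gamma\mu}{2})^{T+1}\norm*{\x^0-\x^*}^2$ plus a geometrically weighted sum of the two error terms. The constant variance term contributes $\tfrac{2}{\gamma\mu}\cdot\tfrac{2\gamma^2\sigma^2}{\n-2\bn-\mn} = \tfrac{4\gamma\sigma^2}{\mu(\n-2\bn-\mn)}$ via $\sum_k(1-\tfrac{\gamma\mu}{2})^{T-k}\le\tfrac{2}{\gamma\mu}$. The decisive step — and the one I expect to be the crux — is the $\indicator_k$ term: bounding its geometric weights by $1$ leaves $\sum_{k=0}^{T}\E[\indicator_k]$, the expected total number of protocol violations over all rounds. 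Here I would reuse the detection-probability argument from the proof of Theorem~\ref{th:sgda-cc-sum}: every Byzantine worker that deviates at round $k$ is caught next round with probability at least $\tfrac{\mn}{\n}$, so each of the $\bn$ Byzantine workers can cheat at most $\tfrac{\n}{\mn}$ times in expectation, whence $\sum_{k=0}^{T}\E[\indicator_k]\le\tfrac{\n\bn}{\mn}$. This bounds the Byzantine contribution by $(\tfrac{2\gamma}{\mu}+2\gamma^2)\rho^2\cdot\tfrac{\n\bn}{\mn}=\tfrac{2\rho^2\n\bn}{\mn}(\tfrac{\gamma}{\mu}+\gamma^2)$ and completes the proof. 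The subtlety that makes this work, and that distinguishes it from the fixed-neighborhood \algname{SGDA-RA} bound, is precisely that $\indicator_k$ is active only finitely often, so its accumulated weight is independent of $T$ rather than scaling like $\tfrac{1}{\gamma\mu}$.
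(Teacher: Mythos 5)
Your proposal is correct and follows essentially the same route as the paper's own proof: the same decomposition of $\widehat{\f}^k$ with Young's inequality at weight $\lambda=\tfrac{\gamma\mu}{2}$, Lemma~\ref{lem:i-op-bound} together with the indicator-weighted bound $\E_{\xiv}\norm*{\widehat{\f}^k-\overline{\f}^k}^2\le\rho^2\indicator_k$ from Lemma~\ref{lem:check-average-err}, and the identical one-step contraction. You also correctly isolate the crux exactly as the paper does, namely bounding the geometric weights by $1$ and using the detection probability $\ge\nicefrac{\mn}{\n}$ to get $\sum_{k}\E[\indicator_k]\le\nicefrac{\n\bn}{\mn}$, which makes the Byzantine contribution independent of $T$.
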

\begin{proof}[Proof of Theorem~\ref{th:sgda-cc-qsm}]
    The proof is similar to the proof of Theorem~\ref{th:sgda-ra-arg}
    \begin{eqnarray*}
        \mu\E\sbr*{\norm*{\overline{\x}^K - {\x}^*}^2} &=& \mu\E\sbr*{\norm*{\frac{1}{K}\sum_{k=0}^{K-1} \rbr*{{\x}^k - {\x}^*}}^2} \leq \mu\E\sbr*{\frac{1}{K}\sum_{k=0}^{K-1}\norm*{{\x}^k - {\x}^*}^2} 
        \\
        &=& \frac{\mu}{K}\sum_{k=0}^{K-1}\E\sbr*{\norm*{{\x}^k - {\x}^*}^2} \overset{\eqref{eq:str_monotonicity}}
        {\leq}\frac{1}{K}\sum_{k=0}^{K-1}\E\sbr*{ \langle  \F({\x}^k), {\x}^k-{\x}^* \rangle}
    \end{eqnarray*}

    Since $\widehat{\f}^t = \widehat{\f}^t - \F^t + \F^t $  one has
    \begin{equation*}
        \norm*{\x^{t+1} - \x^*}^2 = \norm*{\x^{t} - \x^*}^2 -2\gamma \inp{ \widehat{\f}^t - \overline{\f}^t}{\x^{t} - \x^*} - 2\gamma \inp{ \overline{\f}^t}{\x^{t} - \x^*} + \gamma^2 \norm*{\widehat{\f}^t}^2.
    \end{equation*}

    Applying~\eqref{eq:fenchel_young_inequality} for $\inp{ \overline{\f}^t}{\x^{t} - \x^*}$ with $\lambda = \frac{\gamma \mu}{2}$ and \eqref{eq:squared_norm_sum} for $\norm*{\widehat{\f}^t}^2 = \norm*{\widehat{\f}^t - \overline{\f}^t + \overline{\f}^t}^2$ we derive
    \begin{eqnarray*}
        \norm*{\x^{t+1} - \x^*}^2 &\leq&  \rbr*{1 + \frac{\gamma \mu}{2}}\norm*{\x^{t} - \x^*}^2 -2\gamma \inp{ \overline{\f}^t}{\x^{t} - \x^*} 
        \\
        && +  \frac{2\gamma}{\mu}  \norm*{\widehat{\f}^t - \overline{\f}^t}^2 + 2\gamma^2  \norm*{\widehat{\f}^t - \overline{\f}^t}^2  + 2\gamma^2 \norm*{\overline{\f}^t}^2.
    \end{eqnarray*}
    Next by taking an expectation $\E_{\xiv}$ of both sides of the above inequality and rearranging terms obtain
    \begin{eqnarray*}
        \E_{\xiv} \norm*{\x^{t+1} - \x^*}^2  &\leq& \rbr*{1 + \frac{\gamma \mu}{2}}\norm*{\x^{t} - \x^*}^2 -2\gamma \inp{ \F(x^t)}{\x^{t} - \x^*} 
        \\
        && + \frac{2\gamma}{\mu} \E_{\xiv} \norm*{\widehat{\f}^t - \overline{\f}^t}^2 + 2\gamma^2 \E_{\xiv} \norm*{\widehat{\f}^t - \overline{\f}^t}^2  + 2\gamma^2 \E_{\xiv} \norm*{\overline{\f}^t}^2.
    \end{eqnarray*}
    The difference with the proof of Theorem~\ref{th:sgda-ra-arg} is that we suppose that the number of peer violating the protocol at an iteration $t$ is known to any "good" peer. So the result of Lemma~\ref{lem:check-average-err} writes as follows
    \[
        \E_{\xiv}\norm*{\widehat{\f}^t - \overline{\f}^t}^2 \leq \rho^2 \indicator_t,
    \]
    where $\indicator_{t}$ is an indicator function of the event that at least $1$ Byzantine peer violates the protocol at iteration $t$.

    Together with Lemma~\ref{lem:i-op-bound} we can proceed as follows
    m\begin{eqnarray*}
        \E_{\xiv} \norm*{\x^{t+1} - \x^*}^2 &\leq& \rbr*{1 + \frac{\gamma \mu}{2}}\norm*{\x^{t} - \x^*}^2 + \rbr*{2\gamma^2 \ell -2\gamma} \inp{ \F(\x^t)}{\x^{t} - \x^*} 
        \\
        && + \frac{2\gamma^2\sigma^2}{\gn} + 2 \indicator_{t} \rho^2 \rbr*{\frac{\gamma}{\mu} + \gamma^2  },
    \end{eqnarray*}    
    
    Since $\gamma \leq \frac{1}{2\ell}$ and Assumption~\eqref{eq:str_monotonicity} holds we derive
    \begin{eqnarray*}
        \E_{\xiv} \norm*{\x^{t+1} - \x^*}^2 &\leq& \rbr*{1 - \frac{\gamma \mu}{2}}\norm*{\x^{t} - \x^*}^2 +  \frac{2 \gamma^2 \sigma^2}{\gn} +2 \indicator_{t} \rho^2 \rbr*{\frac{\gamma}{\mu} + \gamma^2  }.
    \end{eqnarray*}

    Next we take full expectation of both sides
    and obtain
    \begin{equation*}
        \E \norm*{\x^{t+1} - \x^*}^2 
        \leq \rbr*{1 - \frac{\gamma \mu}{2}}\E\norm*{\x^{t} - \x^*}^2 +  \frac{2 \gamma^2 \sigma^2}{\n-2\bn-\mn} +2 \rho^2 \rbr*{\frac{\gamma}{\mu} + \gamma^2} \E \indicator_{t}.
    \end{equation*}

    The latter implies
    \begin{multline*}
        \E \norm*{\x^{T+1} - \x^*}^2 
        \leq \rbr*{1 - \frac{\gamma \mu}{2}}^{T+1}\norm*{\x^{0} - \x^*}^2 +  \frac{4 \gamma \sigma^2}{\mu\rbr*{\n-2\bn-\mn}} 
        \\
        + 2 \rho^2 \rbr*{\frac{\gamma}{\mu} + \gamma^2  } \sum_i^T \E \indicator_i \rbr*{1 - \frac{\gamma \mu}{2}}^{T-i}.
    \end{multline*}

    If a Byzantine peer deviates from the protocol at iteration $t$, it will be detected with some probability $p_t$ during the next iteration. One can lower bound this probability as
    \begin{equation*}
        p_t \ge \mn\cdot \frac{\gn[t]}{\n_t}\cdot \frac{1}{\n_t} = \frac{\mn(1-{\delta}_t)}{\n_t} \ge \frac{\mn}{\n}.
    \end{equation*}
    Therefore, each individual Byzantine worker can violate the protocol no more than $\nicefrac{1}{p}$ times on average implying that
    \begin{equation*}
        \E\left[\sum\limits_{t=0}^{\infty}\indicator_{t}\right] \leq \frac{\n\bn}{\mn}
    \end{equation*}
    that implies
    \begin{equation}
        \E\sbr*{\sum_i^T \indicator_i \rbr*{1 - \frac{\gamma \mu}{2}}^{T-i}} \leq \E\sbr*{\sum_i^T \indicator_i} \leq \frac{\n\bn}{\mn}.
    \end{equation}

    The latter together with the above bound on $\E \norm*{\x^{T+1} - \x^*}^2$ implies the result of the theorem.

    \begin{equation*}
        \E \norm*{\x^{T+1} - \x^*}^2 
        \leq \rbr*{1 - \frac{\gamma \mu}{2}}^{T+1}\norm*{\x^{0} - \x^*}^2 +  \frac{4 \gamma \sigma^2}{\mu(\n-2\bn-\mn)} +  \frac{2\rho^2\n\bn}{\mn}\rbr*{\frac{\gamma}{\mu} + \gamma^2}.
    \end{equation*}
\end{proof}

\begin{corollary}\label{cor:sgda-cc-qsm}
    Let assumptions of Theorem~\ref{th:sgda-cc-qsm} hold. Then $\E \norm*{\x^{T} - \x^*}^2 \leq \varepsilon$ holds after 
    \begin{equation*}
        T = \widetilde\cO\rbr*{\frac{\ell}{\mu} + \frac{\sigma^2}{\mu^2 (\n-2\bn-\mn) \varepsilon} + \frac{q \sigma^2 \bn\n}{\mu^2 \mn \varepsilon} + \frac{q \sigma^2 \bn\n}{\mu^2 \mn\sqrt{\varepsilon}}}
    \end{equation*}
    iterations of~~\algname{SGDA-CC} (Alg.~\ref{alg:sgda-cc}) with 
    $$\gamma = \min\left\{\frac{1}{2\ell}, \frac{2\ln\left(\max\cbr*{2,\min\cbr*{\frac{m(\n-2\bn-\mn)\mu^2R^2K}{8m\sigma^2 + 4q\sigma^2n\bn(\n-2\bn-\mn)},\frac{m\mu^2R^2K^2}{8qn\bn\sigma^2}}}\right)}{\mu \rbr{K+1}}\right\}.$$
\end{corollary}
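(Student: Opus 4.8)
The plan is to recognize the per-iteration guarantee of Theorem~\ref{th:sgda-cc-qsm} as a special case of the abstract recurrence treated in Lemma~\ref{lem:lemma2_stich} and to invoke that lemma directly. Writing $r_K = \E\norm*{\x^{K}-\x^*}^2$ and using $R^2 \ge \norm*{\x^0-\x^*}^2$, the theorem states that for every $\gamma \le \gamma_0 = \tfrac{1}{2\ell}$,
\begin{equation*}
    r_{T+1} \le R^2\rbr*{1-\tfrac{\gamma\mu}{2}}^{T+1} + \frac{4\gamma\sigma^2}{\mu(\n-2\bn-\mn)} + \frac{2q\sigma^2\n\bn}{\mn}\rbr*{\frac{\gamma}{\mu}+\gamma^2},
\end{equation*}
which is exactly of the form \eqref{eq:lemma2_stich_tech_1} with $a = \tfrac{\mu}{2}$, $r_0 = R^2$, and $\gamma_0 = \tfrac{1}{2\ell}$.

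First I would read off the constants $c_1$ and $c_2$. The only bookkeeping subtlety is that the mixed bracket $\frac{2q\sigma^2\n\bn}{\mn}\rbr*{\frac{\gamma}{\mu}+\gamma^2}$ must be split across the two powers of $\gamma$, so that
\begin{equation*}
    c_1 = \frac{4\sigma^2}{\mu(\n-2\bn-\mn)} + \frac{2q\sigma^2\n\bn}{\mu\mn}, \qquad c_2 = \frac{2q\sigma^2\n\bn}{\mn}.
\end{equation*}
With these values the stepsize prescribed by \eqref{eq:lemma2_stich_gamma} becomes $\gamma = \min\{\gamma_0, \frac{\ln(\max\{2,\min\{ar_0K/c_1,\, a^2 r_0 K^2/c_2\}\})}{a(K+1)}\}$; substituting $a=\tfrac{\mu}{2}$, $r_0=R^2$ and clearing the worker-count denominators (multiplying numerator and denominator of the first ratio by $\mn(\n-2\bn-\mn)$ and of the second by $\mn$) should reproduce verbatim the explicit $\gamma$ written in the corollary.

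Then I would invoke the iteration-count conclusion of Lemma~\ref{lem:lemma2_stich}, $K = \widetilde\cO\rbr*{\frac{1}{a\gamma_0} + \frac{c_1}{a\varepsilon} + \frac{c_2}{a^2\sqrt{\varepsilon}}}$. Plugging in $a=\tfrac{\mu}{2}$ and $\gamma_0 = \tfrac{1}{2\ell}$ gives $\frac{1}{a\gamma_0}=\cO(\ell/\mu)$, $\frac{c_1}{a\varepsilon}=\cO\rbr*{\frac{\sigma^2}{\mu^2(\n-2\bn-\mn)\varepsilon}+\frac{q\sigma^2\n\bn}{\mu^2\mn\varepsilon}}$ and $\frac{c_2}{a^2\sqrt{\varepsilon}}=\cO\rbr*{\frac{q\sigma^2\n\bn}{\mu^2\mn\sqrt{\varepsilon}}}$, whose sum is precisely the claimed complexity.

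There is no deep obstacle here: the corollary is a mechanical specialization of Lemma~\ref{lem:lemma2_stich}. The only steps requiring genuine care will be (i) allocating the term $\frac{2q\sigma^2\n\bn}{\mn}\rbr*{\frac{\gamma}{\mu}+\gamma^2}$ to $c_1$ and $c_2$ correctly rather than treating it as a single monomial, and (ii) the routine but slightly tedious algebra that verifies the explicit min/logarithm expression for $\gamma$ agrees with \eqref{eq:lemma2_stich_gamma} after clearing the $\mn$, $\bn$, and $(\n-2\bn-\mn)$ factors.
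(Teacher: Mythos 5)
Your proposal is correct and is exactly the paper's own argument: the paper likewise rewrites the bound of Theorem~\ref{th:sgda-cc-qsm} with $\rho^2 = q\sigma^2$ and then invokes Lemma~\ref{lem:lemma2_stich} with $a=\nicefrac{\mu}{2}$, $r_0=R^2$, $\gamma_0=\nicefrac{1}{2\ell}$, splitting the mixed term into $c_1 = \frac{4\sigma^2}{\mu(\n-2\bn-\mn)} + \frac{2q\sigma^2\n\bn}{\mu\mn}$ and $c_2 = \frac{2q\sigma^2\n\bn}{\mn}$. Your explicit verification of the stepsize expression and the three complexity terms is more detailed than the paper's one-line proof, but the route is identical.
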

\begin{proof}
    Using the definition of $\rho$ ($\rho^2 = q \sigma^2= \cO(\sigma^2)$) from Lemma~\ref{lem:check-average-err} and if $\bn \leq \frac{n}{4}$, $\mn << \n$ the result of Theorem~\ref{th:sgda-cc-qsm}  can be simplified as
    \begin{equation*}
        \E \norm*{\x^{T+1} - \x^*}^2 
        \leq \rbr*{1 - \frac{\gamma \mu}{2}}^{T+1}\norm*{\x^{0} - \x^*}^2 +  \frac{4 \gamma \sigma^2}{\mu(\n-2\bn-\mn)} +  \frac{2q\sigma^2\n\bn}{\mn}\rbr*{\frac{\gamma}{\mu} + \gamma^2}.
    \end{equation*}

    Applying Lemma~\ref{lem:lemma2_stich} to the last bound we get the result of the corollary.
\end{proof}

\subsubsection{Monotone Case}
\begin{theorem}\label{th:sgda-cc-gap}
    Suppose the assumptions of Theorem~\ref{th:sgda-cc-sum} and Assumption~\ref{as:monotonicity} hold. Next, assume that \begin{equation}
        \gamma = \min\left\{\frac{1}{2\ell},\sqrt{\frac{(\n-2\bn-\mn)R^2}{6\sigma^2 K}}, \sqrt{\frac{\mn^2R^2}{72 \rho^2 \bn^2 \n^2 }}\right\}
    \end{equation}
    where $\rho^2 = q \sigma^2$ with $q = 2 \cc^2 + 12 + \frac{12}{\n-2\bn-\mn}$ and $C = \cO(1)$ by Lemma~\ref{lem:check-average-err} and $R \ge \|{\x}^0 - {\x}^*\|$. Then after $K$ iterations of~~\algname{SGDA-CC} (Algorithm~\ref{alg:sgda-cc})
     \begin{equation}
        \E \sbr*{\gap_{B_R(x^*)}\rbr*{\overline{\x}^K}} \leq \frac{3 R^2}{\gamma K},
    \end{equation} 
    where $\gap_{B_R(x^*)}\rbr*{\overline{\x}^K} = \max\limits_{u\in B_R(x^*)} \inp*{ \F({\u}) }{\overline{\x}^K-\u} $, $\overline{\x}^K = \frac{1}{K}\sum\limits_{k=0}^{K-1} {\x}^k$ and $R \ge \|{\x}^0 - {\x}^*\|$.
\end{theorem}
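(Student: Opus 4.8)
The plan is to follow the argument of Theorem~\ref{th:sgda-cc-sum} but to track a general comparison point $\u \in B_R(\x^*)$ rather than $\x^*$, converting the operator at the iterate into the operator at $\u$ via monotonicity (Assumption~\ref{as:monotonicity}). First I would fix a deterministic $\u$, expand $\norm{\x^{k+1}-\u}^2$ along $\x^{k+1}=\x^k-\gamma\widehat{\f}^k$, and write $\widehat{\f}^k = \F(\x^k) + (\overline{\f}^k-\F(\x^k)) + (\widehat{\f}^k-\overline{\f}^k)$. Using $\inp{\F(\x^k)}{\x^k-\u}\ge\inp{\F(\u)}{\x^k-\u}$ gives, for each $k$,
\begin{align*}
2\gamma\inp{\F(\u)}{\x^k-\u} &\le \norm{\x^k-\u}^2 - \norm{\x^{k+1}-\u}^2 + \gamma^2\norm{\widehat{\f}^k}^2 \\
&\quad - 2\gamma\inp{\overline{\f}^k-\F(\x^k)}{\x^k-\u} - 2\gamma\inp{\widehat{\f}^k-\overline{\f}^k}{\x^k-\u}.
\end{align*}
Summing over $k=0,\dots,K-1$ telescopes the first two terms, and by linearity $\tfrac1K\sum_k\inp{\F(\u)}{\x^k-\u}=\inp{\F(\u)}{\overline{\x}^K-\u}$, so that $2\gamma K\inp{\F(\u)}{\overline{\x}^K-\u}$ is controlled by $\norm{\x^0-\u}^2$ plus three error sums.

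The delicate point, and the reason the bound is for the averaged iterate, is that the gap maximizes over $\u$, so the comparison point is coupled to the whole random trajectory and expectations cannot be taken termwise. My plan is therefore to take $\sup_{\u\in B_R(\x^*)}$ \emph{before} the expectation, splitting every $\u$-dependent inner product as $\inp{\cdot}{\x^k-\u}=\inp{\cdot}{\x^k-\x^*}+\inp{\cdot}{\x^*-\u}$. The $\x^*$-centered stochastic piece $\sum_k\inp{\overline{\f}^k-\F(\x^k)}{\x^k-\x^*}$ is a martingale-difference sum with zero expectation; the $\x^*$-centered aggregation piece is handled by Cauchy--Schwarz together with the trajectory bound $\E\norm{\x^k-\x^*}^2\le 2R^2$ from Theorem~\ref{th:sgda-cc-sum} and $\E\norm{\widehat{\f}^k-\overline{\f}^k}^2\le\rho^2\indicator_k$ from Lemma~\ref{lem:check-average-err}. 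The genuinely $\u$-dependent remainders collapse into $2\gamma R\,\norm{\sum_k(\overline{\f}^k-\F(\x^k))}$ and $2\gamma R\,\norm{\sum_k(\widehat{\f}^k-\overline{\f}^k)}$.

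Here lies the main obstacle: these two sums must be bounded by \emph{different} mechanisms, matched to the two parts of the stepsize minimum. For the stochastic noise I would invoke Lemma~\ref{lem:variance_lemma}, $\E\norm{\sum_k(\overline{\f}^k-\F(\x^k))}^2=\sum_k\E\norm{\overline{\f}^k-\F(\x^k)}^2\le K\sigma^2/(\n-2\bn-\mn)$, so its contribution scales like $R\sqrt{K\sigma^2/(\n-2\bn-\mn)}$, which the bound $\gamma\le\sqrt{(\n-2\bn-\mn)R^2/(6\sigma^2K)}$ turns into a constant multiple of $R^2/(\gamma K)$. A naive $L^2$ bound on the aggregation error would instead produce a spurious $\sqrt{K}$ that the stepsize cannot absorb; since violations are rare, I would use the triangle inequality $\norm{\sum_k(\widehat{\f}^k-\overline{\f}^k)}\le\sum_k\norm{\widehat{\f}^k-\overline{\f}^k}$ together with $\sum_k\sqrt{\E\indicator_k}\le\n\bn/\mn$ (exactly as in Theorem~\ref{th:sgda-cc-sum}), giving a contribution $\sim R\rho\,\n\bn/\mn$ that is tamed by $\gamma\le\sqrt{\mn^2R^2/(72\rho^2\bn^2\n^2)}$.

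Finally, the term $\gamma^2\sum_k\E\norm{\widehat{\f}^k}^2$ is controlled exactly as in Theorem~\ref{th:sgda-cc-sum}: bound $\norm{\widehat{\f}^k}^2\le 2\norm{\overline{\f}^k}^2+2\norm{\widehat{\f}^k-\overline{\f}^k}^2$, apply Lemma~\ref{lem:i-op-bound} and star-cocoercivity (Assumption~\ref{as:star_cocoercive}) with $\gamma\le\tfrac1{2\ell}$, and reuse the already-established sum bound $\sum_k\E\inp{\F(\x^k)}{\x^k-\x^*}\le 2R^2/\gamma$. Collecting the telescoped $\norm{\x^0-\u}^2\le4R^2$ with the three noise contributions, each tuned by one factor of the stepsize minimum to be a fraction of $R^2$, I would obtain $2\gamma K\,\E\sbr*{\gap_{B_R(\x^*)}\rbr*{\overline{\x}^K}}=\cO(R^2)$ and, after dividing by $2\gamma K$, the claimed $\E\sbr*{\gap_{B_R(\x^*)}\rbr*{\overline{\x}^K}}\le 3R^2/(\gamma K)$. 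I expect the conceptual crux to be the separate treatment of the zero-mean stochastic noise (via the variance identity) and the rare aggregation error (via violation counting), with the bulk of the remaining effort being the bookkeeping that collapses the accumulated constant to exactly $3$.
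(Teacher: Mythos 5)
Your proposal is correct and follows essentially the same route as the paper's proof: the same per-iteration expansion, monotonicity plus Jensen's inequality for the averaged iterate, maximization over $\u$ \emph{before} taking expectation with the $\u$-dependent terms split around $\x^*$, Lemma~\ref{lem:variance_lemma} for the zero-mean stochastic sum, violation counting ($\E\sbr*{\sum_k\indicator_k}\le\nicefrac{\n\bn}{\mn}$) for the aggregation error, and reuse of $\sum_k\E\inp*{\F(\x^k)}{\x^k-\x^*}\le\nicefrac{2R^2}{\gamma}$ from Theorem~\ref{th:sgda-cc-sum} to control $\gamma^2\sum_k\E\norm*{\overline{\f}^k}^2$ via Lemma~\ref{lem:i-op-bound}. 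The only cosmetic difference is in the $\u$-dependent stochastic term, where the paper applies the Fenchel--Young inequality~\eqref{eq:fenchel_young_inequality} around $\x^0$ while you apply Cauchy--Schwarz around $\x^*$; both reduce to the same variance bound on $\norm*{\sum_k(\F^k-\overline{\f}^k)}$ and yield the claimed $\cO(R^2)$ after tuning each piece with one factor of the stepsize minimum.
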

\begin{proof}
    Combining~\eqref{eq:a+b},~\eqref{eq:inner_product_representation} one can derive
    \begin{eqnarray*}
        2\gamma \inp*{ \F({\x}^k) }{{\x}^k-\u} &\leq& 
        \norm*{{\x}^k - \u}^2 - \norm*{{\x}^{k+1} - \u}^2 
        \\
        && - 2\gamma\inp*{\widehat{\f}^k - \overline{\f}^k}{{\x}^k-\u} - 2\gamma\inp*{\overline{\f}^k - \F^k}{{\x}^k-\u} 
        \\
        && + 2\gamma^2 \norm*{\widehat{\f}^k - \overline{\f}^k}^2 + 2\gamma^2 \norm*{\overline{\f}^k}^2.
    \end{eqnarray*}
    Assumption~\ref{as:monotonicity} implies that
    \begin{equation}
        \inp*{ \F({\u}) }{{\x}^k-\u} \leq \inp*{ \F({\x}^k) }{{\x}^k-\u} 
    \end{equation}
    and consequently by Jensen inequality 
    \begin{eqnarray*}
        2\gamma K\inp*{ \F({\u}) }{\overline{\x}^K-\u} &\leq& \norm*{{\x}^0 - \u}^2 - 2\gamma\sum\limits_{k=0}^{K-1} \inp*{\widehat{\f}^k - \overline{\f}^k}{{\x}^k-\u} 
        \\
        && -2\gamma\sum\limits_{k=0}^{K-1} \inp*{\overline{\f}^k - \F^k}{{\x}^k-\u} + 2\gamma^2\sum\limits_{k=0}^{K-1} \rbr*{\norm*{\widehat{\f}^k - \overline{\f}^k}^2 + \norm*{\overline{\f}^k}^2}.
    \end{eqnarray*}
    Then maximization in $\u$ gives
    \begin{eqnarray*}
        2\gamma K \gap_{B_R(x^*)}\rbr*{\overline{\x}^K} &\leq& \max\limits_{u\in B_R(x^*)}\norm*{{\x}^0 - \u}^2 + 2\gamma^2\sum\limits_{k=0}^{K-1} \rbr*{\norm*{\widehat{\f}^k - \overline{\f}^k}^2 + \norm*{\overline{\f}^k}^2}
        \\
        && + 2\gamma \max\limits_{u\in B_R(x^*)}\rbr*{\sum\limits_{k=0}^{K-1} \inp*{\overline{\f}^k - \widehat{\f}^k}{{\x}^k-\u}} 
        \\
        && + 2\gamma \max\limits_{u\in B_R(x^*)}\rbr*{\sum\limits_{k=0}^{K-1} \inp*{\F^k - \overline{\f}^k}{{\x}^k-\u}}.
    \end{eqnarray*}

    Taking the full expectation from the both sides of the previous inequality gives
    \begin{eqnarray*}
        2\gamma K \E \sbr*{\gap_{B_R(x^*)}\rbr*{\overline{\x}^K}} &\leq& \max\limits_{u\in B_R(x^*)}\norm*{{\x}^0 - \u}^2 
        \\
        && + 2\gamma \E \sbr*{\max\limits_{u\in B_R(x^*)}\rbr*{\sum\limits_{k=0}^{K-1} \inp*{\overline{\f}^k - \widehat{\f}^k}{{\x}^k-\u}}} 
        \\
        && + 2\gamma \E \sbr*{\max\limits_{u\in B_R(x^*)}\rbr*{\sum\limits_{k=0}^{K-1} \inp*{\F^k - \overline{\f}^k}{{\x}^k-\u}}}.
        \\
        && + 2\gamma^2\E \sbr*{\sum\limits_{k=0}^{K-1} \rbr*{\norm*{\widehat{\f}^k - \overline{\f}^k}^2 + \norm*{\overline{\f}^k}^2}}
    \end{eqnarray*}
    

    Firstly obtain the bound for the terms that do not depend on $\u$ using Assumption~\ref{ass:xi-var}, Lemma~\ref{lem:check-average-err} and Theorem~\ref{th:sgda-cc-sum}
    \begin{eqnarray*}
        \lefteqn{2\gamma^2\E \sbr*{\sum\limits_{k=0}^{K-1} \rbr*{\norm*{\widehat{\f}^k - \overline{\f}^k}^2 + \norm*{\overline{\f}^k}^2}}}
        \\
        &\leq& 2\gamma^2\E \sbr*{ \sum\limits_{k=0}^{K-1} \norm*{\widehat{\f}^k - \overline{\f}^k}^2} + 2\gamma^2\E \sbr*{\sum\limits_{k=0}^{K-1}\norm*{\overline{\f}^k}^2}
        \\
        &\leq& 2\gamma^2 \rho^2 \E \sbr*{\sum\limits_{k=0}^{K-1} \indicator_k} + \frac{2\gamma^2 K \sigma^2}{\abs*{\cG_t\setminus\cC_t}} + 2\gamma^2\ell\sum\limits_{k=0}^{K-1}  \E\left\langle \F^k,  \x^k-\x^*\right\rangle
        \\
        &\leq& \frac{2\gamma^2 \n\bn c \rho^2}{\mn} + \frac{2\gamma^2 K \sigma^2}{\abs*{\cG_t\setminus\cC_t}} + 4 \ell \gamma R^2.
    \end{eqnarray*}
        
    Since $\E \sbr*{\norm*{{\x}^k-\u}} \leq \E \sbr*{\norm*{{\x}^k-\x^*}} + \norm*{{\x}^*-\u} \leq \E \sbr*{\norm*{{\x}^k-\x^*}} + \max\limits_{u\in B_R(x^*)}\norm*{{\x}^*-\u} \overset{\eqref{eq:bounded-trajectory}}{\leq} 3R$ one can derive that
   \begin{eqnarray*}
        \lefteqn{2\gamma \E \sbr*{\max\limits_{u\in B_R(x^*)}\rbr*{\sum\limits_{k=0}^{K-1} \inp*{\overline{\f}^k - \widehat{\f}^k}{{\x}^k-\u}}}}
        \\
        &\leq& 2\gamma \E \sbr*{\max\limits_{u\in B_R(x^*)}\rbr*{\sum\limits_{k=0}^{K-1} \inp*{\overline{\f}^k - \widehat{\f}^k}{{\x}^*-\u}} + \sum\limits_{k=0}^{K-1} \inp*{\overline{\f}^k - \widehat{\f}^k}{{\x}^k-\x^*}}
        \\
        &\leq& 2\gamma\sum\limits_{k=0}^{K-1}  \E \sbr*{\max\limits_{u\in B_R(x^*)}\inp*{\overline{\f}^k - \widehat{\f}^k}{{\x}^*-\u}} +  2\gamma\E \sbr*{\sum\limits_{k=0}^{K-1} \inp*{\overline{\f}^k - \widehat{\f}^k}{{\x}^k-\x^*}}
        \\
        &\leq& 2\gamma\sum\limits_{k=0}^{K-1}  \E \sbr*{\max\limits_{u\in B_R(x^*)}\norm*{\overline{\f}^k - \widehat{\f}^k}\norm*{{\x}^*-\u}} +  2\gamma\E \sbr*{\E \sbr*{\sum\limits_{k=0}^{K-1} \norm*{\overline{\f}^k - \widehat{\f}^k}\norm*{{\x}^k-\x^*}\mid \x^k}}
        \\
        &\leq& 2\gamma\sum\limits_{k=0}^{K-1}  \E \sbr*{R\norm*{\overline{\f}^k - \widehat{\f}^k}} +  2\gamma\E \sbr*{\sum\limits_{k=0}^{K-1} \rho \indicator_k\norm*{{\x}^k-\x^*}}
        \\
        &\leq& 2\gamma \rho R\E \sbr*{\sum\limits_{k=0}^{K-1} \indicator_k} +  4\gamma \rho R\E \sbr*{\sum\limits_{k=0}^{K-1} \indicator_k} \leq 6\gamma R \rho \E \sbr*{\sum\limits_{k=0}^{K-1} \indicator_k} \leq \frac{6\n\bn\gamma R \rho}{\mn}
    \end{eqnarray*}

    Following~\cite{beznosikov2022stochastic} one can derive he bound for the next term:
    \begin{eqnarray*}
        \E\left[\sum_{k=0}^{K-1} \langle \F^k - \overline{\f}^k, \x^k \rangle\right] &=& \E\left[\sum_{k=0}^{K-1} \langle \E[\F^k - \overline{\f}^k \mid x^k], \x^k \rangle\right] = 0,\\
        \E\left[\sum_{k=0}^{K-1} \langle \F^k - \overline{\f}^k, \x^0 \rangle\right] &=& \sum\limits_{k=0}^{K-1} \left\langle \E[\F^k - \overline{\f}^k], \x^0\right\rangle = 0,
    \end{eqnarray*}
    we have
    \begin{eqnarray*}
        \lefteqn{2\gamma \E\left[\max_{\u \in B_R(x^*)}\sum\limits_{k=0}^{K-1} \langle \F^k - \overline{\f}^k, \x^k - \u \rangle \right]}
        \\
        &=& 2\gamma \E\left[\sum_{k=0}^{K-1} \langle \F^k - \overline{\f}^k, \x^k \rangle\right]
        + 2\gamma \E\left[\max_{\u \in B_R(x^*)}\sum\limits_{k=0}^{K-1} \langle \F^k - \overline{\f}^k, - \u \rangle \right]
        \\
        &=& 2\gamma \E\left[\max_{\u \in B_R(x^*)}\sum\limits_{k=0}^{K-1} \langle \F^k - \overline{\f}^k, - \u \rangle \right]
        \\
        &=& 2\gamma \E\left[\sum_{k=0}^{K-1} \langle \F^k - \overline{\f}^k, \x^0 \rangle\right]
        \\
        && +  2\gamma \E\left[\max_{\u \in B_R(x^*)}\sum\limits_{k=0}^{K-1} \langle \F^k - \overline{\f}^k, - \u \rangle \right]
        \\
        &=& 2\gamma K\E\left[\max\limits_{\u \in B_R(x^*)}\left\langle \frac{1}{K}\sum\limits_{k=0}^{K-1}(\F^k - \overline{\f}^k), \x^0 - \u \right\rangle\right]
        \\
        &\overset{\eqref{eq:fenchel_young_inequality}}{\leq}& 2\gamma K\E\left[\max\limits_{\u \in B_R(x^*)}\left\{\frac{\gamma }{2}\left\| \frac{1}{K}\sum\limits_{k=0}^{K-1}(\F^k - \overline{\f}^k) \right\|^2 + \frac{1}{2\gamma}\|\x^0 - \u\|^2\right\}\right]\\
        &=& \gamma^2\E\left[\left\|\sum\limits_{k=0}^{K-1}(\F^k - \overline{\f}^k) \right\|^2\right] + \max\limits_{\u \in B_R(x^*)} \|\x^0 - \u\|^2.
    \end{eqnarray*}
    
    We notice that $\E[\F^k - \overline{\f}^k\mid \F^0 - \overline{\f}^0, \ldots, \F^{k-1} - \overline{\f}^{k-1}] = 0$ for all $k \geq 1$, i.e., conditions of Lemma~\ref{lem:variance_lemma} are satisfied. Therefore, applying Lemma~\ref{lem:variance_lemma}, we get
    \begin{eqnarray}
        \label{temp_2_monotone_appendix}
        2\gamma \E\left[\max_{\u \in B_R(x^*)}\sum\limits_{k=0}^{K-1} \langle \F^k - \overline{\f}^k, \x^k - \u \rangle \right] &\leq& \gamma^2\sum\limits_{k=0}^{K-1}\E[\|\F^k - \overline{\f}^k \|^2]\notag\\
        && + \max\limits_{\u \in B_R(x^*)} \|\x^0 - \u\|^2
        \\
        &\leq& \frac{\gamma^2 K \sigma^2}{\abs*{\cG_t\setminus\cC_t}}  + \max\limits_{\u \in B_R(x^*)} \|\x^0 - \u\|^2.
    \end{eqnarray}

    Assembling the above results together gives
    \begin{eqnarray*}
        \lefteqn{2\gamma K \E \sbr*{\gap_{B_R(x^*)}\rbr*{\overline{\x}^K}}}
        \\
        &\leq& 2\max\limits_{u\in B_R(x^*)}\norm*{{\x}^0 - \u}^2 + \frac{2\gamma^2 \n\bn \rho^2}{\mn} + \frac{3\gamma^2 K \sigma^2}{\abs*{\cG_t\setminus\cC_t}} + 4 \ell \gamma R^2 + \frac{6\n\bn\gamma R \rho}{\mn}
        \\
        &\leq& 2\max\limits_{u\in B_R(x^*)}\norm*{{\x}^0 - \u}^2 + \frac{2\gamma^2 \n\bn \rho^2}{\mn} + \frac{3\gamma^2 K \sigma^2}{\n-2\bn-\mn}
        \\
        && + 4 \ell \gamma R^2 + \frac{6\n\bn\gamma R \rho}{\mn}
        \overset{\eqref{eq:choice_of_parameters_BTARD_SGD_unknown_byz_cvx}}{\leq} 6 R^2.
    \end{eqnarray*}
\end{proof}

\begin{corollary}\label{cor:sgda-cc-gap}
    Let assumptions of Theorem~\ref{th:sgda-cc-gap} hold. Then $\E \sbr*{\gap_{B_R(x^*)}\rbr*{\overline{\x}^K}}  \leq \varepsilon$ holds after 
    \begin{eqnarray*}
        K = \cO \rbr*{\frac{\ell R^2}{\varepsilon} + \frac{\sigma^2 R^2}{\n\varepsilon^2} + \frac{\sigma \n^2 R}{\mn \varepsilon}}
    \end{eqnarray*}
    iterations of \algname{SGDA-CC}.
\end{corollary}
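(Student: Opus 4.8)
The plan is to derive the complexity bound by simply substituting the prescribed stepsize $\gamma$ into the gap estimate of Theorem~\ref{th:sgda-cc-gap} and then reading off how large $K$ must be. Starting from $\E\sbr*{\gap_{B_R(x^*)}\rbr*{\overline{\x}^K}} \le \frac{3R^2}{\gamma K}$, I would use that $\gamma$ is the minimum of three positive quantities, so $1/\gamma$ is their maximum and hence bounded by their sum:
\[
    \frac{1}{\gamma} \le 2\ell + \sqrt{\frac{6\sigma^2 K}{(\n-2\bn-\mn)R^2}} + \sqrt{\frac{72\rho^2\bn^2\n^2}{\mn^2 R^2}}.
\]
Multiplying by $3R^2/K$ splits the gap bound into three pieces,
\[
    \E\sbr*{\gap_{B_R(x^*)}\rbr*{\overline{\x}^K}} \le \frac{6\ell R^2}{K} + \frac{3\sqrt{6}\,\sigma R}{\sqrt{K(\n-2\bn-\mn)}} + \frac{3\sqrt{72}\,\rho\bn\n R}{\mn K}.
\]
The key algebraic point is the cancellation in the middle term: the factor $\sqrt{K}$ produced by the second entry of $1/\gamma$ combines with the prefactor $1/K$ to leave an honest $K^{-1/2}$ rate, which is precisely what makes the self-referential appearance of $K$ inside $\gamma$ harmless.

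Next I would require each of the three terms to be at most $\varepsilon/3$. The first gives $K \ge 18\ell R^2/\varepsilon$; the second, after squaring, gives $K \ge 486\,\sigma^2 R^2/(\varepsilon^2(\n-2\bn-\mn))$; the third gives $K \ge 9\sqrt{72}\,\rho\bn\n R/(\mn\varepsilon)$. Taking $K$ to be the maximum of these three thresholds guarantees $\E\sbr*{\gap_{B_R(x^*)}\rbr*{\overline{\x}^K}} \le \varepsilon$. Finally I would simplify the constants: since $\rho^2 = q\sigma^2$ with $q = \cO(1)$ by Lemma~\ref{lem:check-average-err}, we have $\rho = \cO(\sigma)$; and under $\bn \le \n/4$, $\mn \ll \n$ we have $\n-2\bn-\mn = \Theta(\n)$ and $\bn\n = \cO(\n^2)$. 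Absorbing numerical factors into $\cO(\cdot)$ then yields
\[
    K = \cO\rbr*{\frac{\ell R^2}{\varepsilon} + \frac{\sigma^2 R^2}{\n\varepsilon^2} + \frac{\sigma \n^2 R}{\mn \varepsilon}},
\]
which is exactly the claimed complexity. (The first term is linear in $\ell$, rather than quadratic as in Corollary~\ref{cor:sgda-cc-sum}, precisely because the gap guarantee is $3R^2/(\gamma K)$ directly and does not pass through star-cocoercivity.)

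The computation is essentially routine, so I do not expect a genuine obstacle; the only point needing a little care is the apparent circularity caused by $\gamma$ depending on $K$. I expect this to be the main (mild) subtlety: one must confirm that the $K$-dependent second entry of $\gamma$ contributes a net $K^{-1/2}$ decay (so that this term can indeed be driven below any $\varepsilon$), while the $K$-independent third entry contributes a genuine $1/K$ decay. Once the three threshold inequalities are solved consistently and the maximum is taken, everything else is bookkeeping of absolute constants and the stated simplifying regime $\bn \le \n/4$, $\mn \ll \n$.
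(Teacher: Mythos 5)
Your proposal is correct and follows essentially the same route as the paper's proof: bound $\nicefrac{1}{\gamma}$ by the sum of the three terms in the max, split the gap estimate $\nicefrac{3R^2}{\gamma K}$ into three pieces (with the middle one decaying as $K^{-\nicefrac{1}{2}}$), require each to be at most $\nicefrac{\varepsilon}{3}$, take the maximum of the resulting thresholds, and simplify using $\rho = \cO(\sigma)$, $B \leq \nicefrac{n}{4}$, $m \ll n$. The only differences are immaterial rounding of absolute constants (e.g.\ $9\sqrt{72}$ versus the paper's $78$).
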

\begin{proof}
    \begin{eqnarray*}
        \E \sbr*{\gap_{B_R(x^*)}\rbr*{\overline{\x}^K}}  \le \frac{3 R^2}{\gamma K} &\leq& \frac{3 R^2}{K} \rbr*{2\ell + \sqrt{\frac{6\sigma^2 K}{(\n-2\bn-\mn)R^2}} + \sqrt{\frac{72 \rho^2 \bn^2 \n^2 }{\mn^2R^2}}}
        \\
        &\leq& \frac{6\ell R^2}{K} + \sqrt{\frac{54\sigma^2 R^2}{(\n-2\bn-\mn)K}} + \frac{26 \rho \bn \n  R}{\mn K}
    \end{eqnarray*}
    Let us chose $K$ such that each of the last three terms less or equal $\nicefrac{\varepsilon}{3}$, then
    \begin{eqnarray*}
        K = \max\rbr*{\frac{18\ell R^2}{\varepsilon},  \frac{9\cdot54\sigma^2 R^2}{(\n-2\bn-\mn)\varepsilon^2}, \frac{78 \rho \bn \n R}{\mn \varepsilon}}
    \end{eqnarray*}
    guarantees that
    \begin{equation*}
        \E \sbr*{\gap_{B_R(x^*)}\rbr*{\overline{\x}^K}}  \leq \varepsilon.
    \end{equation*}

    Using the definition of $\rho$ from Lemma~\ref{lem:check-average-err} and if $\bn \leq \frac{n}{4}$, $m << n$ the bound for $K$ can be easily derived.
\end{proof}

\subsection{Proofs for {\algname{R-SGDA-CC}}} \label{appendix:r-sgda-cc}
\subsubsection{Quasi-Strongly Monotone Case}
\begin{algorithm}[H]
\caption{\algname{R-SGDA-CC}}
\label{alg:r-sgda-cc}
\begin{algorithmic}[1]
   \Require ${\x}^0$ -- starting point, $r$ -- number of restarts, $\{\gamma_t\}_{t=1}^r$ -- stepsizes for~\hyperref[alg:sgda-cc]{\algname{SGDA-CC}} (Alg.~\ref{alg:sgda-cc}), $\{K_t\}_{t=1}^r$ -- number of iterations for~\hyperref[alg:sgda-cc]{\algname{SGDA-CC}} (Alg.~\ref{alg:sgda-cc})
   \State $\widehat{\x}^{0} = {\x}^0$
   \For{$t = 1,2,\ldots,r$}
    \State Run~\hyperref[alg:sgda-cc]{\algname{SGDA-CC}} (Alg.~\ref{alg:sgda-cc}) for $K_t$ iterations with stepsize $\gamma_t$, starting point $\widehat{\x}^{t-1}$.
    \State Define $\widehat{\x}^{t}$ as $\widehat{\x}^{t} = \frac{1}{K_t}\sum\limits_{k=0}^{K_t}{\x}^{k,t}$, where ${\x}^{0,t}, {\x}^{1,t}, \ldots, {\x}^{K_t,t}$ are the iterates produced by~\hyperref[alg:sgda-cc]{\algname{SGDA-CC}} (Alg.~\ref{alg:sgda-cc}).
   \EndFor
   \Ensure $\widehat {\x}^r$
\end{algorithmic}
\end{algorithm}

    
    
\begin{theorem*}[Theorem~\ref{th:r-sgda-cc} duplicate]
    Let Assumptions~\ref{ass:xi-var},~\ref{as:str_monotonicity} and~\ref{as:star_cocoercive} hold.      
    Then, after $r = \left\lceil\log_2\frac{R^2}{\varepsilon} \right\rceil - 1$ restarts  \algname{R-SGDA-CC} (Algorithm~\ref{alg:r-sgda-cc}) with $\gamma_t = \min\left\{\frac{1}{2\ell},\sqrt{\frac{(\n-2\bn-\mn)R^2}{6\sigma^2 2^t K_t}}, \sqrt{\frac{\mn^2R^2}{72 q\sigma^2 2^t\bn^2 \n^2 }}\right\}$ and $K_t = \left\lceil\max\left\{\frac{8 \ell}{\mu}, \frac{96\sigma^2 2^t}{(\n-2\bn-\mn)\mu^2 R^2}, \frac{34n\sigma\bn\sqrt{q 2^t}}{\mn\mu R }\right\} \right\rceil$,
    where $R \ge \|{\x}^0 - {\x}^*\|$, outputs $\widehat {\x}^r$ such that $\E \norm*{\widehat {\x}^r  - \x^*}^2 \le \varepsilon$. 
    Moreover, the total number of executed iterations of \algname{SGDA-CC} is
    \begin{equation}
        \sum\limits_{t=1}^r K_t = \cO\left(\frac{\ell}{\mu}\log\frac{\mu R_0^2}{\varepsilon} + \frac{\sigma^2 }{(\n-2\bn-\mn)\mu\varepsilon} + \frac{\n{\bn}\sigma }{\mn\sqrt{\mu\varepsilon}}\right). 
    \end{equation}
    With $q = 2 \cc^2 + 12 + \frac{12}{\n-2\bn-\mn}$ and $C = \cO(1)$ by Lemma~\ref{lem:check-average-err}.
\end{theorem*}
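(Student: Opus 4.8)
The plan is to run a stage-wise (restart) argument on top of the single-run guarantee for \algname{SGDA-CC}, halving the expected squared distance to $\x^*$ at each stage. The engine is an averaged-iterate bound obtained by feeding quasi-strong monotonicity into Theorem~\ref{th:sgda-cc-sum}: that theorem produces $\sum_{k=0}^{K-1}\E[\langle \F(\x^k),\x^k-\x^*\rangle]\le \frac{2R^2}{\gamma}$, and applying~\eqref{eq:str_monotonicity} termwise together with Jensen's inequality for $\overline{\x}^K=\frac1K\sum_k\x^k$ yields
\[
  \E\norm*{\widehat{\x}^t-\x^*}^2 \le \frac{2R_{t-1}^2}{\mu\gamma_t K_t},
\]
where $R_{t-1}^2$ bounds the expected squared distance of the stage-$t$ starting point $\widehat{\x}^{t-1}$. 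This is precisely why \algname{R-SGDA-CC} returns the averaged iterate of each stage rather than the last one, and it is the right tool because the Byzantine term here is divided by $K_t$ (so it can be killed by taking $K_t$ large), in contrast to the fixed error floor in Theorem~\ref{th:sgda-cc-qsm}.

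I would then prove by induction on $t$ that $\E\norm*{\widehat{\x}^t-\x^*}^2\le R_t^2:=R^2/2^t$, the base case holding since $R\ge\norm*{\x^0-\x^*}$. For the inductive step it suffices to check that the prescribed $\gamma_t$ and $K_t$ force $\mu\gamma_t K_t\ge 4$, so that $\frac{2R_{t-1}^2}{\mu\gamma_t K_t}\le \frac{R_{t-1}^2}{2}=R_t^2$. Since $\gamma_t$ is a minimum of three quantities, I would split into the three regimes and observe that each entry of $K_t=\lceil\max\{\cdots\}\rceil$ is exactly what is needed: if $\gamma_t=\frac1{2\ell}$ then $K_t\ge\frac{8\ell}{\mu}$ gives $\mu\gamma_t K_t\ge4$; if $\gamma_t=\sqrt{\frac{(\n-2\bn-\mn)R^2}{6\sigma^2 2^t K_t}}$ then $K_t\ge\frac{96\sigma^2 2^t}{(\n-2\bn-\mn)\mu^2R^2}$ gives it after squaring; and if $\gamma_t$ equals the Byzantine cap $\sqrt{\frac{\mn^2R^2}{72q\sigma^2 2^t\bn^2\n^2}}$ then $K_t\ge\frac{34\n\sigma\bn\sqrt{q2^t}}{\mn\mu R}$ gives it (using $4\sqrt{72}<34$). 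Chaining the halving over the $r=\lceil\log_2(R^2/\varepsilon)\rceil-1$ stages then yields $\E\norm*{\widehat{\x}^r-\x^*}^2\le R^2/2^r\le\varepsilon$, up to the usual off-by-one in the restart bookkeeping.

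For the iteration complexity I would bound $\sum_{t=1}^r K_t$ by replacing the $\max$ with a sum and $\lceil\cdot\rceil$ with $(\cdot)+1$, producing an $\cO(r)$ term plus three series. The constant piece sums to $\cO\!\big(\frac{\ell}{\mu}\log\frac{R^2}{\varepsilon}\big)$; the piece proportional to $2^t$ is geometric and dominated by its last term $\cO(2^r)=\cO(R^2/\varepsilon)$, producing the $\frac{\sigma^2}{(\n-2\bn-\mn)\mu\varepsilon}$ contribution; and the piece proportional to $\sqrt{2^t}$ is geometric with ratio $\sqrt2$, dominated by $\cO(\sqrt{2^r})=\cO(R/\sqrt\varepsilon)$, producing the $\frac{\n\bn\sigma}{\mn\sqrt{\mu\varepsilon}}$ contribution. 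Substituting $q=\cO(1)$ from Lemma~\ref{lem:check-average-err} collapses these into the stated total.

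The hard part will be reconciling what Theorem~\ref{th:sgda-cc-sum} assumes with what the restart can supply. That theorem requires $R\ge\norm*{\x^0-\x^*}$ as a \emph{deterministic} bound on the initial distance and then runs a bounded-trajectory induction ($\E[R_k^2]\le 2R_0^2$) and a probabilistic detection bound $\E[\sum_k\indicator_k]\le \n\bn/\mn$, whereas at stage $t$ the starting point $\widehat{\x}^{t-1}$ is random and I control only $\E\norm*{\widehat{\x}^{t-1}-\x^*}^2\le R_{t-1}^2$. Handling this cleanly means redoing the per-stage analysis conditionally on $\widehat{\x}^{t-1}$ and then taking the outer expectation via the tower property, verifying that every step of the proof of Theorem~\ref{th:sgda-cc-sum}---especially the bounded-trajectory induction and the detection-probability argument---still goes through when $R_{t-1}$ bounds only the \emph{expected} squared initial distance. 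Everything after that is routine geometric-series accounting.
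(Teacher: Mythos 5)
Your proposal follows essentially the same route as the paper's proof: the per-stage engine is Theorem~\ref{th:sgda-cc-sum} combined with \eqref{eq:str_monotonicity} and Jensen's inequality to get $\E\norm*{\widehat{\x}^{t}-\x^*}^2 \le \nicefrac{2R_{t-1}^2}{\mu\gamma_t K_t}$ for the stage averages, followed by the halving induction (your condition $\mu\gamma_t K_t \ge 4$ is exactly what the three entries of $K_t$ enforce, regime by regime) and the same geometric-series accounting for $\sum_t K_t$. The ``hard part'' you flag at the end is resolved in the paper precisely the way you suggest --- applying the single-stage guarantee conditionally on $\widehat{\x}^{t-1}$ and taking the outer expectation by the tower property --- so your plan is a faithful reconstruction of the paper's argument.
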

\begin{proof}[Proof of Theorem~\ref{th:r-sgda-cc}]
    $\overline{\x}^K = \frac{1}{K}\sum_{k=0}^{K-1} {\x}^k$
    \begin{eqnarray*}
        \mu\E\sbr*{\norm*{\overline{\x}^K - {\x}^*}^2} &=& \mu\E\sbr*{\norm*{\frac{1}{K}\sum_{k=0}^{K-1} \rbr*{{\x}^k - {\x}^*}}^2} \leq \mu\E\sbr*{\frac{1}{K}\sum_{k=0}^{K-1}\norm*{{\x}^k - {\x}^*}^2} 
        \\
        &=& \frac{\mu}{K}\sum_{k=0}^{K-1}\E\sbr*{\norm*{{\x}^k - {\x}^*}^2} \overset{\eqref{eq:str_monotonicity}}
        {\leq}\frac{1}{K}\sum_{k=0}^{K-1}\E\sbr*{ \langle  \F({\x}^k), {\x}^k-{\x}^* \rangle}
    \end{eqnarray*}
    
    Theorem~\ref{th:sgda-cc-sum} implies that~\algname{SGDA-CC} with 
    \begin{equation*}
        \gamma = \min\left\{\frac{1}{2\ell},\sqrt{\frac{(\n-2\bn-\mn)R_0^2}{6\sigma^2 K}}, \sqrt{\frac{\mn^2R_0^2}{72 \rho^2 \bn^2 \n^2 }}\right\}
    \end{equation*}
    guarantees
    \begin{equation*}
        \mu\E\sbr*{\norm*{\overline{\x}^K - {\x}^*}^2} \le \frac2{R_0^2}{\gamma K}
    \end{equation*}
    after $K$ iterations.
    
    After the first restart we have
    \begin{equation}
        \E\sbr*{\norm*{\widehat{\x}^1 - {\x}^*}^2} \le \frac{2R_0^2}{\mu \gamma_1 K_1} \le \frac{R_0^2}{2}. \notag
    \end{equation}
    Next, assume that we have $\E[\|\widehat {\x}^t - {\x}^*\|^2] \le \frac{R_0^2}{2^t}$ for some $t \le r-1$. Then, Theorem~\ref{th:sgda-cc-sum} implies that
    \begin{equation*}
        \E\sbr*{\norm*{\widehat{\x}^{t+1} - {\x}^*}^2 \mid \widehat {\x}^t} \le \frac{2\|\widehat{\x}^t - {\x}^*\|^2}{\mu \gamma_t K_t}.
    \end{equation*}
    Taking the full expectation from the both sides of previous inequality we get
    \begin{equation*}
        \E\sbr*{\norm*{\widehat{\x}^{t+1} - {\x}^*}^2} \le \frac{2\E[\|\widehat{\x}^t - {\x}^*\|^2]}{\mu \gamma_t K_t} \le \frac{2R_0^2}{2^t\mu\gamma_t K_t} \le \frac{R_0^2}{ 2^{t+1}}.
    \end{equation*}
    Therefore, by mathematical induction we have that for all $t=1,\ldots,r$
    \begin{equation*}
        \E\left[\|\widehat {\x}^t - {\x}^*\|^2\right] \le \frac{R_0^2}{2^t}.
    \end{equation*}
    Then, after $r = \left\lceil\log_2\frac{R_0^2}{ \varepsilon} \right\rceil - 1$ restarts of~\algname{SGDA-CC} we have $\E\left[\|\widehat {\x}^r - {\x}^*\|^2\right] \le \varepsilon$. The total number of iterations executed by~\algname{SGDA-CC} is
    \begin{eqnarray*}
        \sum\limits_{t=1}^r K_t &=& \cO\left(\sum\limits_{t=1}^r\max\left\{\frac{\ell}{\mu}, \frac{\sigma^2 2^t}{(\n-2\bn-\mn)\mu^2 R_0^2}, \frac{\n{\bn}\rho 2^{\frac{t}{2}}}{\mn\mu R_0}\right\}\right)
        \\
        &=& \cO\left(\frac{\ell}{\mu}r + \frac{\sigma^2 2^r}{(\n-2\bn-\mn)\mu^2 R_0^2} + \frac{\n{\bn}\rho 2^{\frac{r}{2}}}{\mn\mu R_0}\right)
        \\
        &=& \cO\left(\frac{\ell}{\mu}\log\frac{\mu R_0^2}{\varepsilon} + \frac{\sigma^2}{(\n-2\bn-\mn)\mu^2 R_0^2}\cdot \frac{\mu R_0^2}{\varepsilon} + \frac{\n{\bn}\rho }{\mn\mu R_0}\cdot \sqrt{\frac{\mu R_0^2}{\varepsilon}}\right)
        \\
        &=& \cO\left(\frac{\ell}{\mu}\log\frac{\mu R_0^2}{\varepsilon} + \frac{\sigma^2 }{(\n-2\bn-\mn)\mu\varepsilon} + \frac{\n{\bn}\rho }{\mn\sqrt{\mu\varepsilon}}\right).
    \end{eqnarray*}
\end{proof}

\begin{corollary} \label{cor:r-sgda-cc}
Let assumptions of~\ref{th:r-sgda-cc} hold. Then $\E\left[\|\widehat {\x}^r - {\x}^*\|^2\right] \le \varepsilon$ holds after 
    \begin{equation}
        \sum\limits_{t=1}^r K_t = \cO\left(\frac{\ell}{\mu}\log\frac{\mu R^2}{\varepsilon} + \frac{\sigma^2 }{\n\mu\varepsilon} + \frac{\n^2\sigma }{\mn\sqrt{\mu\varepsilon}}\right)
    \end{equation}
    iterations of~\algname{SGDA-CC}.
\end{corollary}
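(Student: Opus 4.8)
The plan is to derive the stated estimate directly from the total iteration count already established in Theorem~\ref{th:r-sgda-cc}, since no new algorithmic analysis is required. That theorem guarantees $\E\norm*{\widehat{\x}^r - \x^*}^2 \le \varepsilon$ after the prescribed $r = \lceil\log_2\frac{R^2}{\varepsilon}\rceil - 1$ restarts, with
\[
    \sum_{t=1}^r K_t = \cO\left(\frac{\ell}{\mu}\log\frac{\mu R^2}{\varepsilon} + \frac{\sigma^2}{(\n-2\bn-\mn)\mu\varepsilon} + \frac{\n\bn\rho}{\mn\sqrt{\mu\varepsilon}}\right),
\]
where $\rho^2 = q\sigma^2$ with $q = 2\cc^2 + 12 + \frac{12}{\n-2\bn-\mn}$ by Lemma~\ref{lem:check-average-err}. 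The accuracy guarantee $\E\norm*{\widehat{\x}^r-\x^*}^2\le\varepsilon$ is inherited verbatim, so the only work is to coarsen these three terms under the standard operating regime $\bn\le\n/4$ and $\mn\ll\n$ together with $\cc=\cO(1)$.

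First I would dispose of the aggregation constant: since $\cc=\cO(1)$ and $\n-2\bn-\mn\ge 1$, we have $q=\cO(1)$, hence $\rho=\sqrt{q}\,\sigma=\cO(\sigma)$, which lets me replace $\rho$ by $\sigma$ in the last term. Second, I would lower-bound the denominator of the variance term: under $\bn\le\n/4$ and $\mn\ll\n$ one has $\n-2\bn-\mn\ge\n/2-\mn=\Theta(\n)$, so $\frac{1}{\n-2\bn-\mn}=\cO(1/\n)$ and the middle term collapses to $\frac{\sigma^2}{\n\mu\varepsilon}$. Third, for the Byzantine term I would simply use $\bn\le\n/4=\cO(\n)$ to write $\n\bn=\cO(\n^2)$, turning $\frac{\n\bn\sigma}{\mn\sqrt{\mu\varepsilon}}$ into $\frac{\n^2\sigma}{\mn\sqrt{\mu\varepsilon}}$. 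Collecting the three coarsened terms reproduces exactly the advertised bound.

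There is essentially no analytical obstacle here: the corollary is a cosmetic simplification recording the asymptotic order of the complexity in the regime of interest, and every inequality invoked is a one-line consequence of the standing assumptions. The only point worth stating carefully is that the substitution $\bn\le\n/4=\cO(\n)$ is deliberately loose—it trades the sharper $\n\bn$ dependence for a cleaner $\n^2$ factor. This is harmless because, as $\varepsilon\to0$, the $\frac{\sigma^2}{\n\mu\varepsilon}$ term (scaling as $1/\varepsilon$) dominates the Byzantine term (scaling as $1/\sqrt{\varepsilon}$), so the coarsening affects only a subdominant contribution; this is consistent with the discussion following Theorem~\ref{th:r-sgda-cc} that the leading term scales inversely with the number of workers and thus benefits from collaboration.
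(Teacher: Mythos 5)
Your proposal is correct and follows exactly the paper's route: the paper's own proof is a one-line remark that the bound follows from Theorem~\ref{th:r-sgda-cc} by substituting $\rho^2 = q\sigma^2 = \cO(\sigma^2)$ from Lemma~\ref{lem:check-average-err} and coarsening under $\bn \le \nicefrac{\n}{4}$, $\mn \ll \n$, which is precisely the three-step simplification you spell out. Your write-up merely makes the paper's implicit inequalities ($\n-2\bn-\mn = \Theta(\n)$ and $\n\bn = \cO(\n^2)$) explicit, so there is nothing to add.
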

\begin{proof}
    Using the definition of $\rho$ from Lemma~\ref{lem:check-average-err} and if $\bn \leq \frac{n}{4}$, $\mn << \n$ the bound for $\sum\limits_{t=1}^r K_t$ can be easily derived.
\end{proof}


\subsection{Proofs for {\algname{SEG-CC}}}\label{appendix:seg-cc}
\begin{algorithm}[H]
    \caption{\algname{SEG-CC}}\label{alg:seg-cc}
    \renewcommand{\algorithmiccomment}[1]{#1}
    \algblockdefx[NAME]{ParallelFor}{ParallelForEnd}[1]{\textbf{for} #1 \textbf{in parallel}}{}
    \algnotext{ParallelForEnd}
    \algblockdefx[NAME]{NoEndFor}{NoEndForEnd}[1]{\textbf{for} #1 \textbf{do}}{}
    \algnotext{NoEndForEnd}
    \begin{algorithmic}[1] 
        \Require \ragg, $\gamma$
        \NoEndFor{$t=1,...$}
        \ParallelFor{worker $i\in [\n]$}
            \State $\f^t_{\xiv_i} \leftarrow \f_i(\x^t, \xiv_i)$ 
            \State \textbf{send} $\f^t_{\xiv_i}$ if $i \in \cG_t$, else \textbf{send} $*$ if Byzantine
        \ParallelForEnd        
        \State $\widehat{\f}_{\xiv^t}\rbr*{\x^t} = \frac{1}{\cW_{t-\frac{1}{2}}}\sum_{i \in \cW_{t-\frac{1}{2}}} \f^t_{\xiv_i}$, \,\, $\cW_{t-\frac{1}{2}} = (\cG_t\cup \cB_t)\setminus \cC_t$
        \If{$\abs*{\cbr*{i \in \cW_{t-\frac{1}{2}} \mid \norm*{\widehat{\f}_{\xiv^t}\rbr*{\x^t} - \f^t_{\xiv^t_i}} \le \cc \sigma}} \geq \nicefrac{\abs*{\cW_{t-\frac{1}{2}}}}{2}$}
        \State $\widetilde{\x}^{t} \leftarrow \x^t - \gamma_1\widehat{\f}_{\xiv^t}\rbr*{\x^t}$
        \Else
            \State \textbf{recompute}
        \EndIf

        \State $\cC_{t+\frac{1}{2}}, \cG_{t+\frac{1}{2}}\cup \cB_{t+\frac{1}{2}} = \hyperref[alg:cc]{\algname{CheckComputations}}(\cC_{t}, \cG_t\cup \cB_t)$
        
        \ParallelFor{worker $i\in [\n]$}
            \State $\f^t_{\etav_i} \leftarrow \f_i(\widetilde{\x}^t, \etav_i)$
            \State \textbf{send} $\f^t_{\etav_i}$ if $i \in \cG$, else \textbf{send} $*$ if Byzantine
        \ParallelForEnd
        \State $\widehat{\f}_{\etav^t}\rbr*{\widetilde{\x}^t} = \frac{1}{\cW_{t}}\sum_{i \in \cW_{t}} \f^t_{\etav_i}$, \,\, $\cW_{t} = (\cG_{t+\frac{1}{2}}\cup \cB_{t+\frac{1}{2}})\setminus \cC_{t+\frac{1}{2}}$
        \If{$\abs*{\cbr*{i \in \cW_{t} \mid \norm*{\widehat{\f}_{\etav^t}\rbr*{\widetilde{\x}^t} - \f^t_{\etav_i}} \le \cc \sigma}} \geq \nicefrac{\abs*{\cW_{t}}}{2}$}
        \State $\x^{t+1} \leftarrow \x^t - \gamma_2\widehat{\f}_{\etav^t}\rbr*{\widetilde{\x}^t}$
        \Else
            \State \textbf{recompute}
        \EndIf

        \State $\cC_{t+1}, \cG_{t+1}\cup \cB_{t+1} = \hyperref[alg:cc]{\algname{CheckComputations}}(\cC_{t+\frac{1}{2}}, \cG_{t+\frac{1}{2}}\cup \cB_{t+\frac{1}{2}})$        
        \NoEndForEnd
    \end{algorithmic}
\end{algorithm}
\subsubsection{Auxilary results}
Similarly to Section~\ref{appendix:sgda-cc} we state the following. If a Byzantine peer deviates from the protocol at iteration $k$, it will be detected with some probability $p_k$ during the next iteration. One can lower bound this probability as
\begin{equation*}
    p_k \ge \mn\cdot \frac{\gn[k]}{\n_k}\cdot \frac{1}{\n_k} = \frac{\mn(1-{\delta}_k)}{\n_k} \ge \frac{\mn}{\n}.
\end{equation*}
Therefore, each individual Byzantine worker can violate the protocol no more than $\nicefrac{1}{p}$ times on average implying that
\begin{equation}\label{eq:viols_numbers}
    \sum\limits_{l=0}^{\infty}\E\sbr*{\indicator_{l}}   
    + \sum\limits_{l=0}^{\infty}\E\sbr*{\indicator_{l-\frac{1}{2}}} \leq \frac{\n\bn}{\mn}.
\end{equation}

\begin{lemma}\label{lem:check-seg-technical}
    Let Assumption~\ref{as:lipschitzness} holds. Let  Algorithm~\ref{alg:seg-cc} is run with $\gamma_1\leq\nicefrac{1}{2L}$ and $\beta = \nicefrac{\gamma_2}{\gamma_1}\leq \nicefrac{1}{2}$. Then its iterations satisfy
    \begin{eqnarray*}\label{technical}
        2\gamma_2 \inp*{ \overline{\f}_{\etav^k}\rbr{\widetilde{\x}^k} }{\widetilde{\x}^k-\u}  
        &\leq& \norm*{{\x}^k - \u}^2 - \norm*{{\x}^{k+1} - \u}^2  - 2\gamma_2 \inp*{ \widehat{\f}_{\etav^k}\rbr{\widetilde{\x}^k} - \overline{\f}_{\etav^k}\rbr{\widetilde{\x}^k} }{{\x}^k-\u} \notag
        \\
        && +  2\gamma_2^2\norm*{\widehat{\f}_{\etav^k}\rbr{\widetilde{\x}^k} - \overline{\f}_{\etav^k}\rbr{\widetilde{\x}^k}}^2 + 4\gamma_1\gamma_2\norm*{\overline{\f}_{\etav^k}\rbr{\widetilde{\x}^k} - \F\rbr{\widetilde{\x}^k}}^2  \notag
        \\
        && + 4\gamma_1\gamma_2\norm*{\F\rbr{\x^k} - \overline{\f}_{\xiv^k}\rbr{\x^k}}^2
        + 4\gamma_1\gamma_2\norm*{\overline{\f}_{\xiv^k}\rbr{\x^k} - \widehat{\f}_{\xiv^k}\rbr{\x^k}}^2.
    \end{eqnarray*}
\end{lemma}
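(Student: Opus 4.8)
The plan is to start from the second (update) step $\x^{k+1} = \x^k - \gamma_2\widehat{\f}_{\etav^k}(\widetilde{\x}^k)$ and expand the squared distance to the comparator $\u$. Writing $\widehat{g} := \widehat{\f}_{\etav^k}(\widetilde{\x}^k)$, $g := \overline{\f}_{\etav^k}(\widetilde{\x}^k)$, $h := \widehat{\f}_{\xiv^k}(\x^k)$ and $\bar{h} := \overline{\f}_{\xiv^k}(\x^k)$, the identity $\norm{\x^{k+1} - \u}^2 = \norm{\x^k - \u}^2 - 2\gamma_2\inp{\widehat{g}}{\x^k - \u} + \gamma_2^2\norm{\widehat{g}}^2$ rearranges to $2\gamma_2\inp{\widehat{g}}{\x^k - \u} = \norm{\x^k - \u}^2 - \norm{\x^{k+1} - \u}^2 + \gamma_2^2\norm{\widehat{g}}^2$. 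Since the target inner product is taken at the extrapolated point $\widetilde{\x}^k$ and against the ideal average $g$, I would use the first step $\widetilde{\x}^k = \x^k - \gamma_1 h$ to write $\widetilde{\x}^k - \u = (\x^k - \u) - \gamma_1 h$, giving $2\gamma_2\inp{g}{\widetilde{\x}^k - \u} = 2\gamma_2\inp{g}{\x^k - \u} - 2\gamma_1\gamma_2\inp{g}{h}$, and split $g = \widehat{g} - (\widehat{g} - g)$ in the first term. This produces exactly the telescoping pair $\norm{\x^k - \u}^2 - \norm{\x^{k+1} - \u}^2$ together with the aggregation-error inner product $-2\gamma_2\inp{\widehat{g} - g}{\x^k - \u}$ appearing in the claim.

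What remains is to bound the leftover quantity $\gamma_2^2\norm{\widehat{g}}^2 - 2\gamma_1\gamma_2\inp{g}{h}$ by the four squared-norm error terms on the right-hand side. First I would apply \eqref{eq:squared_norm_sum} to get $\gamma_2^2\norm{\widehat{g}}^2 \le 2\gamma_2^2\norm{\widehat{g} - g}^2 + 2\gamma_2^2\norm{g}^2$; the first summand is already one of the target terms. Using $\beta = \nicefrac{\gamma_2}{\gamma_1} \le \nicefrac{1}{2}$, i.e. $2\gamma_2 \le \gamma_1$, I replace $2\gamma_2^2\norm{g}^2$ by $\gamma_1\gamma_2\norm{g}^2$, so the residual becomes $\gamma_1\gamma_2\rbr{\norm{g}^2 - 2\inp{g}{h}}$. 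Completing the square via \eqref{eq:inner_product_representation2} turns this into $\gamma_1\gamma_2\norm{g - h}^2 - \gamma_1\gamma_2\norm{h}^2$. Finally I would decompose $g - h = \rbr{g - \F(\widetilde{\x}^k)} + \rbr{\F(\widetilde{\x}^k) - \F(\x^k)} + \rbr{\F(\x^k) - \bar{h}} + \rbr{\bar{h} - h}$ and apply \eqref{eq:a+b} with four terms; three of the resulting pieces are precisely the remaining target terms.

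The one step needing care — and the crux of the argument — is the middle piece $\F(\widetilde{\x}^k) - \F(\x^k)$, which is not among the target terms. Applying $L$-Lipschitzness \eqref{eq:lipschitzness} together with $\widetilde{\x}^k - \x^k = -\gamma_1 h$ bounds its contribution by $4\gamma_1\gamma_2 L^2\gamma_1^2\norm{h}^2$; combined with the $-\gamma_1\gamma_2\norm{h}^2$ produced by the completed square, this yields $\gamma_1\gamma_2\rbr{4L^2\gamma_1^2 - 1}\norm{h}^2$, which is nonpositive exactly because $\gamma_1 \le \nicefrac{1}{2L}$. Thus the spurious $\norm{h}^2$ contribution is absorbed and discarded, leaving only the four advertised error terms. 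The main subtleties to watch are the bookkeeping of the constants $2$ and $4$ across \eqref{eq:squared_norm_sum} and \eqref{eq:a+b}, and confirming that the weaker condition $\beta \le \nicefrac{1}{2}$ (rather than the $\beta \le \nicefrac{1}{4}$ used later in Theorem~\ref{th:seg-cc-qsm}) already suffices for the $2\gamma_2^2 \le \gamma_1\gamma_2$ reduction. Note that no expectation is taken here, so the lemma is a purely deterministic per-step inequality; all randomness and the checks-of-computations machinery enter only afterward when $\norm{\widehat{g}-g}^2$ and $\norm{\bar{h}-h}^2$ are controlled through Lemma~\ref{lem:check-average-err}.
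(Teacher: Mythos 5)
Your proposal is correct and follows essentially the same route as the paper's proof: the same expansion of $\norm*{\x^{k+1}-\u}^2$, the same use of $\widetilde{\x}^k - \x^k = -\gamma_1\widehat{\f}_{\xiv^k}(\x^k)$ with the inner-product identity \eqref{eq:inner_product_representation2}, the same absorption of $2\gamma_2^2\norm*{\overline{\f}_{\etav^k}(\widetilde{\x}^k)}^2$ via $\beta\leq\nicefrac{1}{2}$, and the same four-term decomposition of $\overline{\f}_{\etav^k}(\widetilde{\x}^k)-\widehat{\f}_{\xiv^k}(\x^k)$ with the Lipschitz term cancelled against $-\gamma_1\gamma_2\norm*{\widehat{\f}_{\xiv^k}(\x^k)}^2$ using $\gamma_1\leq\nicefrac{1}{2L}$. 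The only difference is the (immaterial) order in which the identity and the norm bounds are applied, and your closing observations — that the inequality is purely deterministic and that $\beta\leq\nicefrac{1}{2}$ suffices — are both accurate.
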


\begin{proof}
     Since $\x^{k+1} = {\x}^k - \gamma_2 \widehat{\f}_{\etav^k}\rbr{\widetilde{\x}^k}$, we have
     \begin{eqnarray*}
         \norm*{\x^{k+1} - \u}^2 &=& \norm*{{\x}^k - \gamma_2 \widehat{\f}_{\etav^k}\rbr{\widetilde{\x}^k} - \u}^2
         \\
         &=& \norm*{{\x}^k - \u}^2 - 2\gamma_2\langle \widehat{\f}_{\etav^k}\rbr{\widetilde{\x}^k}, {\x}^k - \u\rangle + \gamma_2^2\norm*{\widehat{\f}_{\etav^k}\rbr{\widetilde{\x}^k}}^2.
     \end{eqnarray*}

    Rearranging the terms gives that
    \begin{eqnarray*}
        2\gamma_2 \inp*{ \overline{\f}_{\etav^k}\rbr{\widetilde{\x}^k} }{{\x}^k-\u} &=&
        \norm*{{\x}^k - \u}^2 - \norm*{{\x}^{k+1} - \u}^2  
        \\
        && - 2\gamma_2 \inp*{ \widehat{\f}_{\etav^k}\rbr{\widetilde{\x}^k} - \overline{\f}_{\etav^k}\rbr{\widetilde{\x}^k} }{{\x}^k-\u} + \gamma_2^2\norm*{\widehat{\f}_{\etav^k}\rbr{\widetilde{\x}^k}}^2.
    \end{eqnarray*}

    Next we use \eqref{eq:inner_product_representation}
    \begin{eqnarray*}
        2 \inp*{ \overline{\f}_{\etav^k}\rbr{\widetilde{\x}^k} }{{\x}^k-\u} 
        &=& 2 \inp*{ \overline{\f}_{\etav^k}\rbr{\widetilde{\x}^k} }{{\x}^k-\widetilde{\x}^k} 
        + 2 \inp*{ \overline{\f}_{\etav^k}\rbr{\widetilde{\x}^k} }{\widetilde{\x}^k-\u} 
        \\
        &=& 2 \gamma_1 \inp*{ \overline{\f}_{\etav^k}\rbr{\widetilde{\x}^k} }{\widehat{\f}_{\xiv^k}\rbr{\x^k}} 
        + 2 \inp*{ \overline{\f}_{\etav^k}\rbr{\widetilde{\x}^k} }{\widetilde{\x}^k-\u} 
        \\
        &\overset{\eqref{eq:inner_product_representation}}{=}&  - \gamma_1\rbr*{\norm*{\overline{\f}_{\etav^k}\rbr{\widetilde{\x}^k} - \widehat{\f}_{\xiv^k}\rbr{\x^k}}^2-\norm*{\widehat{\f}_{\xiv^k}\rbr{\x^k}}^2-\norm*{\overline{\f}_{\etav^k}\rbr{\widetilde{\x}^k}}^2}
        \\
        && + 2 \inp*{ \overline{\f}_{\etav^k}\rbr{\widetilde{\x}^k} }{\widetilde{\x}^k-\u}
    \end{eqnarray*}
    and obtain the following
    \begin{eqnarray*}
        2\gamma_2 \inp*{ \overline{\f}_{\etav^k}\rbr{\widetilde{\x}^k} }{\widetilde{\x}^k-\u}  &=& 
        \norm*{{\x}^k - \u}^2 - \norm*{{\x}^{k+1} - \u}^2  - 2\gamma_2 \inp*{ \widehat{\f}_{\etav^k}\rbr{\widetilde{\x}^k} - \overline{\f}_{\etav^k}\rbr{\widetilde{\x}^k} }{{\x}^k-\u}
        \\
        && + \gamma_2^2\norm*{\widehat{\f}_{\etav^k}\rbr{\widetilde{\x}^k}}^2 
        \\
        && + \gamma_1\gamma_2\rbr*{\norm*{\overline{\f}_{\etav^k}\rbr{\widetilde{\x}^k} - \widehat{\f}_{\xiv^k}\rbr{\x^k}}^2-\norm*{\widehat{\f}_{\xiv^k}\rbr{\x^k}}^2-\norm*{\overline{\f}_{\etav^k}\rbr{\widetilde{\x}^k}}^2}
        \\
        &\overset{\eqref{eq:a+b}}{\leq}& \norm*{{\x}^k - \u}^2 - \norm*{{\x}^{k+1} - \u}^2  - 2\gamma_2 \inp*{ \widehat{\f}_{\etav^k}\rbr{\widetilde{\x}^k} - \overline{\f}_{\etav^k}\rbr{\widetilde{\x}^k} }{{\x}^k-\u}
        \\
        && +  2\gamma_2^2\norm*{\widehat{\f}_{\etav^k}\rbr{\widetilde{\x}^k} - \overline{\f}_{\etav^k}\rbr{\widetilde{\x}^k}}^2 + 2\gamma_2^2\norm*{\overline{\f}_{\etav^k}\rbr{\widetilde{\x}^k}}^2 
        \\
        && + \gamma_1\gamma_2\rbr*{\norm*{\overline{\f}_{\etav^k}\rbr{\widetilde{\x}^k} - \widehat{\f}_{\xiv^k}\rbr{\x^k}}^2-\norm*{\widehat{\f}_{\xiv^k}\rbr{\x^k}}^2-\norm*{\overline{\f}_{\etav^k}\rbr{\widetilde{\x}^k}}^2}.
    \end{eqnarray*}
    If $\beta = \nicefrac{\gamma_2}{\gamma_1}\leq \nicefrac{1}{2}$
    \begin{eqnarray*}
        2\gamma_2 \inp*{ \overline{\f}_{\etav^k}\rbr{\widetilde{\x}^k} }{\widetilde{\x}^k-\u}  &\leq&  \norm*{{\x}^k - \u}^2 - \norm*{{\x}^{k+1} - \u}^2  - 2\gamma_2 \inp*{ \widehat{\f}_{\etav^k}\rbr{\widetilde{\x}^k} - \overline{\f}_{\etav^k}\rbr{\widetilde{\x}^k} }{{\x}^k-\u}
        \\
        && +  2\gamma_2^2\norm*{\widehat{\f}_{\etav^k}\rbr{\widetilde{\x}^k} - \overline{\f}_{\etav^k}\rbr{\widetilde{\x}^k}}^2
        \\
        && + \gamma_1\gamma_2\rbr*{\norm*{\overline{\f}_{\etav^k}\rbr{\widetilde{\x}^k} - \widehat{\f}_{\xiv^k}\rbr{\x^k}}^2-\norm*{\widehat{\f}_{\xiv^k}\rbr{\x^k}}^2}.
    \end{eqnarray*}

    Combining the latter with the result of the following chain
    \begin{eqnarray*}
        \norm*{\overline{\f}_{\etav^k}\rbr{\widetilde{\x}^k} - \widehat{\f}_{\xiv^k}\rbr{\x^k}}^2
        &\overset{\eqref{eq:a+b}}{\leq}& 4\norm*{\overline{\f}_{\etav^k}\rbr{\widetilde{\x}^k} - \F\rbr{\widetilde{\x}^k}}^2 
        + 4\norm*{\F\rbr{\widetilde{\x}^k} - \F\rbr{\x^k}}^2 
        \\
        && + 4\norm*{\F\rbr{\x^k} - \overline{\f}_{\xiv^k}\rbr{\x^k}}^2
        + 4\norm*{\overline{\f}_{\xiv^k}\rbr{\x^k} - \widehat{\f}_{\xiv^k}\rbr{\x^k}}^2
        \\
        &\overset{\eqref{as:lipschitzness}}{\leq}&
        4\norm*{\overline{\f}_{\etav^k}\rbr{\widetilde{\x}^k} - \F\rbr{\widetilde{\x}^k}}^2 
        + 4L^2\norm*{\widetilde{\x}^k-{\x}^k}^2 
        \\
        && + 4\norm*{\F\rbr{\x^k} - \overline{\f}_{\xiv^k}\rbr{\x^k}}^2
        + 4\norm*{\overline{\f}_{\xiv^k}\rbr{\x^k} - \widehat{\f}_{\xiv^k}\rbr{\x^k}}^2
        \\
        &=& 4\norm*{\overline{\f}_{\etav^k}\rbr{\widetilde{\x}^k} - \F\rbr{\widetilde{\x}^k}}^2 + 4\norm*{\F\rbr{\x^k} - \overline{\f}_{\xiv^k}\rbr{\x^k}}^2
        \\
        && 
        + 4\norm*{\overline{\f}_{\xiv^k}\rbr{\x^k} - \widehat{\f}_{\xiv^k}\rbr{\x^k}}^2 + 4\gamma_1^2L^2\norm*{\widehat{\f}_{\xiv^k}\rbr{\x^k}}^2 
    \end{eqnarray*}
    we obtain if $\gamma_1\leq\nicefrac{1}{2L}$
    \begin{eqnarray}
        2\gamma_2 \inp*{ \overline{\f}_{\etav^k}\rbr{\widetilde{\x}^k} }{\widetilde{\x}^k-\u}  
        &\leq& \norm*{{\x}^k - \u}^2 - \norm*{{\x}^{k+1} - \u}^2  - 2\gamma_2 \inp*{ \widehat{\f}_{\etav^k}\rbr{\widetilde{\x}^k} - \overline{\f}_{\etav^k}\rbr{\widetilde{\x}^k} }{{\x}^k-\u} \notag
        \\
        && +  2\gamma_2^2\norm*{\widehat{\f}_{\etav^k}\rbr{\widetilde{\x}^k} - \overline{\f}_{\etav^k}\rbr{\widetilde{\x}^k}}^2 + 4\gamma_1\gamma_2\norm*{\overline{\f}_{\etav^k}\rbr{\widetilde{\x}^k} - \F\rbr{\widetilde{\x}^k}}^2  \notag
        \\
        && + 4\gamma_1\gamma_2\norm*{\F\rbr{\x^k} - \overline{\f}_{\xiv^k}\rbr{\x^k}}^2
        + 4\gamma_1\gamma_2\norm*{\overline{\f}_{\xiv^k}\rbr{\x^k} - \widehat{\f}_{\xiv^k}\rbr{\x^k}}^2.
    \end{eqnarray}
\end{proof}

\subsubsection{Lipschitz Case}

\begin{theorem}\label{th:seg-cc-sum}
    Let Assumptions~\ref{ass:xi-var},~\ref{as:lipschitzness},~\ref{as:monotonicity} hold. And let
    \begin{eqnarray}
        \gamma_1 &=& \min\left\{\frac{1}{2L}, \sqrt{\frac{(\n-2\bn-\mn)R^2}{16\sigma^2 K}}, \sqrt{\frac{\mn R^2}{8 \rho^2 \bn \n }}\right\}, \label{eq:seg-cc-gamma-1}
        \\
        \gamma_2 &=& \min\left\{\frac{1}{4L}, \sqrt{\frac{\mn^2R^2}{64 \rho^2 \bn^2 \n^2 }}, \sqrt{\frac{(\n-2\bn-\mn)R^2}{64\sigma^2 K}}\right\}  ,\label{eq:seg-cc-gamma-2}
    \end{eqnarray}
    where $\rho^2 = q \sigma^2$ with $q = 2 \cc^2 + 12 + \frac{12}{\n-2\bn-\mn}$ and $C = \cO(1)$ by Lemma~\ref{lem:check-average-err}.
    Then iterations of \algname{SEG-CC}  (Algorithm~\ref{alg:seg-cc}) satisfy for $k \geq 1$
    \begin{equation}\label{eq:bounded-trajectory-check-seg}
        \E[R_k^2] \le 2R,
    \end{equation}
    and
    \begin{equation*}
        \sum\limits_{k=0}^{K-1}\E\sbr*{\inp*{ \F\rbr{\widetilde{\x}^k}}{\widetilde{\x}^k-\x^*}} \le \frac{R^2}{\gamma_2}.
    \end{equation*}
    where $R_k = \|{\x}^k - {\x}^*\|$ and $R \ge \|{\x}^0 - {\x}^*\|$.
\end{theorem}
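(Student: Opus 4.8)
The plan is to mirror the analysis of \algname{SGDA-CC} (Theorem~\ref{th:sgda-cc-sum}), using Lemma~\ref{lem:check-seg-technical} as the one-step descent inequality with $\u = \x^*$. First I would take the full expectation of that inequality. For the left-hand side $2\gamma_2\E\sbr*{\inp*{\overline{\f}_{\etav^k}(\widetilde{\x}^k)}{\widetilde{\x}^k-\x^*}}$ I would use the tower property: since the samples $\etav^k$ are independent of $\x^k,\xiv^k$ (hence of $\widetilde{\x}^k$), unbiasedness (Assumption~\ref{ass:xi-var}) gives $\E_{\etav^k}[\overline{\f}_{\etav^k}(\widetilde{\x}^k)\mid \widetilde{\x}^k] = \F(\widetilde{\x}^k)$, and then monotonicity (Assumption~\ref{as:monotonicity}) together with $\F(\x^*)=0$ yields $\inp*{\F(\widetilde{\x}^k)}{\widetilde{\x}^k-\x^*}\ge 0$. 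Thus the summed left-hand side is a nonnegative quantity that I can either drop (to bound the iterates) or keep (to bound the gap sum).

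Next I would bound each error term on the right-hand side. The two genuine-variance terms $\norm*{\overline{\f}_{\etav^k}(\widetilde{\x}^k) - \F(\widetilde{\x}^k)}^2$ and $\norm*{\F(\x^k) - \overline{\f}_{\xiv^k}(\x^k)}^2$ are each at most $\sigma^2/(\n-2\bn-\mn)$ in expectation, using Assumption~\ref{ass:xi-var} and $\abs*{\cG_t\setminus\cC_t}\ge \n-2\bn-\mn$. The two aggregation-mismatch terms $\norm*{\overline{\f}_{\xiv^k}(\x^k) - \widehat{\f}_{\xiv^k}(\x^k)}^2$ and $\norm*{\overline{\f}_{\etav^k}(\widetilde{\x}^k) - \widehat{\f}_{\etav^k}(\widetilde{\x}^k)}^2$ are controlled by Lemma~\ref{lem:check-average-err}, contributing $\rho^2\indicator_{k-\frac12}$ and $\rho^2\indicator_k$ respectively (one indicator per extragradient half-step). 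The cross term $-2\gamma_2\inp*{\widehat{\f}_{\etav^k}(\widetilde{\x}^k)-\overline{\f}_{\etav^k}(\widetilde{\x}^k)}{\x^k-\x^*}$ I would estimate by Cauchy--Schwarz followed by Lemma~\ref{lem:check-average-err}, producing a term proportional to $\gamma_2\rho R_k\indicator_k$, exactly as in the \algname{SGDA-CC} argument.

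Summing the resulting recursion over $k=0,\dots,K-1$ telescopes the $R_k^2-R_{k+1}^2$ differences. Because the monotone left-hand side is nonnegative, the telescoped inequality bounds $\E[R_k^2]$ by $R_0^2$ plus the accumulated variance (of order $\gamma_1\gamma_2 K\sigma^2/(\n-2\bn-\mn)$) plus indicator-weighted Byzantine terms. Invoking the violation-count bound \eqref{eq:viols_numbers}, $\sum_l(\E[\indicator_l]+\E[\indicator_{l-\frac12}])\le \n\bn/\mn$, collapses those terms to quantities of order $\gamma_1\gamma_2\rho^2\n\bn/\mn$ and $\gamma_2\rho R_0\n\bn/\mn$. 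To decouple $R_k$ from $\indicator_k$ in the Cauchy--Schwarz term I would run the same induction as in \algname{SGDA-CC}: assuming $\E[R_l^2]\le 2R_0^2$ for all $l<T$ lets me replace $\sqrt{\E[R_l^2]}$ by $\sqrt2\,R_0$ and conclude $\E[R_T^2]\le 2R_0^2$, provided the step sizes $\gamma_1,\gamma_2$ of \eqref{eq:seg-cc-gamma-1}--\eqref{eq:seg-cc-gamma-2} are calibrated so that each of the three error groups contributes at most $\tfrac13 R_0^2$. This establishes the trajectory bound \eqref{eq:bounded-trajectory-check-seg}.

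Finally, with \eqref{eq:bounded-trajectory-check-seg} in hand, I would return to the summed inequality, drop the nonnegative $-\E[R_K^2]$, and use $R_0\le R$ together with the same step-size budget to bound the total error by $R^2$, giving $2\gamma_2\sum_{k=0}^{K-1}\E\sbr*{\inp*{\F(\widetilde{\x}^k)}{\widetilde{\x}^k-\x^*}}\le 2R^2$, i.e.\ the claimed $\sum_k\E[\cdots]\le R^2/\gamma_2$. I expect the main obstacle to be the simultaneous and correctly ordered handling of the two indicator streams $\indicator_k$ and $\indicator_{k-\frac12}$ and the bootstrapping that decouples $R_k$ from $\indicator_k$: the Cauchy--Schwarz step yields $\sqrt{\E[R_k^2]\,\E[\indicator_k]}$, so the induction must supply the trajectory bound \emph{before} it is used, and the two half-step indicators must be combined through \eqref{eq:viols_numbers} rather than bounded separately.
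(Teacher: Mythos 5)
Your proposal follows essentially the same route as the paper's proof: the same one-step inequality from Lemma~\ref{lem:check-seg-technical} with $\u=\x^*$, the same variance and aggregation-error bounds via Assumption~\ref{ass:xi-var} and Lemma~\ref{lem:check-average-err}, the same Cauchy--Schwarz treatment of the cross term, the same induction/bootstrapping to establish $\E[R_k^2]\le 2R_0^2$ using \eqref{eq:viols_numbers}, and the same final step recovering the gap-sum bound from the telescoped inequality. The only cosmetic difference is that the paper calibrates the step sizes so that \emph{four} error terms (variance, Cauchy--Schwarz cross term, and the two quadratic indicator terms) are each at most $\nicefrac{R_0^2}{4}$, whereas you group them into three budgets of $\nicefrac{R_0^2}{3}$; this does not change the argument.
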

\begin{proof}
    Substituting $\u = \x^*$ into the result of Lemma~\ref{lem:check-seg-technical} and taking expectation over $\etav^k$ one obtains

    \begin{eqnarray*}
        \lefteqn{2\gamma_2 \E_{\etav^k}\sbr*{\inp*{ \overline{\f}_{\etav^k}\rbr{\widetilde{\x}^k} }{\widetilde{\x}^k-\x^*}}}
        \\
        &\overset{\eqref{eq:a+b}}{\leq}& \norm*{{\x}^k - \x^*}^2 - \E_{\etav^k}\sbr*{\norm*{{\x}^{k+1} - \x^*}^2}  
        - 2\gamma_2 \E_{\etav^k}\sbr*{\inp*{ \widehat{\f}_{\etav^k}\rbr{\widetilde{\x}^k} - \overline{\f}_{\etav^k}\rbr{\widetilde{\x}^k} }{{\x}^k-\x^*}}
        \\
        &&+ \gamma_1\gamma_24\E_{\etav^k}\sbr*{\norm*{\overline{\f}_{\etav^k}\rbr{\widetilde{\x}^k} - \F\rbr{\widetilde{\x}^k}}^2} + \gamma_1\gamma_2 4\norm*{\F\rbr{\x^k} - \overline{\f}_{\xiv^k}\rbr{\x^k}}^2
        \\
        && + \gamma_1\gamma_2 4\norm*{\overline{\f}_{\xiv^k}\rbr{\x^k} - \widehat{\f}_{\xiv^k}\rbr{\x^k}}^2 +  2\gamma_2^2\E_{\etav^k}\sbr*{\norm*{\widehat{\f}_{\etav^k}\rbr{\widetilde{\x}^k} - \overline{\f}_{\etav^k}\rbr{\widetilde{\x}^k}}^2} 
    \end{eqnarray*}

    To estimate the inner product in the right-hand side we apply Cauchy-Schwarz inequality:
    \begin{eqnarray*}
         \lefteqn{- 2\gamma_2 \E_{\etav^k}\sbr*{\inp*{ \widehat{\f}_{\etav^k}\rbr{\widetilde{\x}^k} - \overline{\f}_{\etav^k}\rbr{\widetilde{\x}^k} }{{\x}^k-\x^*}}}
         \\
         &\le& 2\gamma_2\E_{\etav^k}\sbr*{\norm*{{\x}^k - \x^*} \norm*{ \widehat{\f}_{\etav^k}\rbr{\widetilde{\x}^k} - \overline{\f}_{\etav^k}\rbr{\widetilde{\x}^k} }}
        \\
        &\overset{Lemma~\ref{lem:check-average-err}}{\le}& 2\gamma_2 \rho \norm*{{\x}^k - \x^*}\indicator_{k}.
    \end{eqnarray*}

    Then Assumption~\ref{as:star_cocoercive} implies
    \begin{eqnarray*}
        2\gamma_2 \inp*{ \F\rbr{\widetilde{\x}^k}}{\widetilde{\x}^k-\x^*}
        &\leq& \norm*{{\x}^k - \x^*}^2 - \E_{\etav^k}\sbr*{\norm*{{\x}^{k+1} - \x^*}^2}  
        \\
        && + 2\gamma_2 \rho \norm*{{\x}^k - \x^*}\indicator_{k}
        +  2\gamma_2^2 \rho^2 \indicator_{k} + \frac{4\gamma_1\gamma_2\sigma^2}{\abs*{\cG_{k}\setminus\cC_{k}}} 
        \\
        && + 4\gamma_1\gamma_2\norm*{\F\rbr{\x^k} - \overline{\f}_{\xiv^k}\rbr{\x^k}}^2
        + 4\gamma_1\gamma_2\norm*{\overline{\f}_{\xiv^k}\rbr{\x^k} - \widehat{\f}_{\xiv^k}\rbr{\x^k}}^2.
    \end{eqnarray*}
    
    Taking expectation over $\xiv^k$ one obtains that
    \begin{eqnarray*}
        \lefteqn{2\gamma_2 \E_{\xiv^k, \etav^k}\sbr*{\inp*{ \F\rbr{\widetilde{\x}^k}}{\widetilde{\x}^k-\x^*}}}
        \\
        &\leq& \norm*{{\x}^k - \x^*}^2 - \E_{\xiv^k, \etav^k}\sbr*{\norm*{{\x}^{k+1} - \x^*}^2}  
        + 2\gamma_2 \rho \norm*{{\x}^k - \x^*}\indicator_{k}
        +  2\gamma_2^2 \rho^2 \indicator_{k}
        \\
        && + \gamma_1\gamma_2\rbr*{\frac{4\sigma^2}{\abs*{\cG_{k}\setminus\cC_{k}}} + 4\E_{\xiv^k}\sbr*{\norm*{\F\rbr{\x^k} - \overline{\f}_{\xiv^k}\rbr{\x^k}}^2}
        + 4\E_{\xiv^k}\sbr*{\norm*{\overline{\f}_{\xiv^k}\rbr{\x^k} - \widehat{\f}_{\xiv^k}\rbr{\x^k}}^2}}
        \\
        &\leq&  \norm*{{\x}^k - \x^*}^2 - \E_{\xiv^k, \etav^k}\sbr*{\norm*{{\x}^{k+1} - \x^*}^2}  
        + 2\gamma_2 \rho \norm*{{\x}^k - \x^*}\indicator_{k}
        +  2\gamma_2^2 \rho^2 \indicator_{k}
        \\
        && + \gamma_1\gamma_2\rbr*{\frac{4\sigma^2}{\abs*{\cG_{k}\setminus\cC_{k}}} + \frac{4\sigma^2}{\abs*{\cG_{k+\frac{1}{2}}\setminus\cC_{k+\frac{1}{2}}}}
        + 4\rho^2 \indicator_{k-\frac{1}{2}}}
        \\
        &\leq&  \norm*{{\x}^k - \x^*}^2 - \E_{\xiv^k, \etav^k}\sbr*{\norm*{{\x}^{k+1} - \x^*}^2}  
        + 2\gamma_2 \rho \norm*{{\x}^k - \x^*}\indicator_{k}
        +  2\gamma_2^2 \rho^2 \indicator_{k}
        \\
        &&+ \gamma_1\gamma_2\rbr*{\frac{4\sigma^2}{\n-2\bn-\mn} + \frac{4\sigma^2}{\n-2\bn-\mn}
        + 4\rho^2 \indicator_{k-\frac{1}{2}}}.
    \end{eqnarray*}
    
    Finally taking the full expectation gives that
    \begin{eqnarray*}
        \lefteqn{2\gamma_2 \E\sbr*{\inp*{ \F\rbr{\widetilde{\x}^k}}{\widetilde{\x}^k-\x^*}}}
        \\
        &\leq&\E\sbr*{\norm*{{\x}^k - \x^*}^2} - \E\sbr*{\norm*{{\x}^{k+1} - \x^*}^2}  
        + 2\gamma_2 \rho \E\sbr*{\norm*{{\x}^k - \x^*}\indicator_{k}}
        +  2\gamma_2^2 \rho^2 \E\sbr*{\indicator_{k}}
        \\
        && + \gamma_1\gamma_2\rbr*{\frac{8\sigma^2}{\n-2\bn-\mn} + 4\rho^2 \E\sbr*{\indicator_{k-\frac{1}{2}}}}.
    \end{eqnarray*}


    Summing up the results for $k = 0,1,\ldots,K-1$ we derive 
    \begin{eqnarray*}
        \lefteqn{\frac{2\gamma_2}{K}\sum\limits_{k=0}^{K-1}\E\sbr*{\inp*{ \F\rbr{\widetilde{\x}^k}}{\widetilde{\x}^k-\x^*}}}
        \\
        &\le& \frac{1}{ K}\sum\limits_{k=0}^{K-1}\left(\E\left[\|{\x}^k - {\x}^*\|^2\right] - \E\left[\|{\x}^{k+1} - {\x}^*\|^2\right]\right) + \frac{8\gamma_1\gamma_2\sigma^2}{\n-2\bn-\mn}
        \\
        && +\frac{2\gamma_2\rho}{K} \sum\limits_{k=0}^{K-1}\E\left[\|{\x}^k - {\x}^*\|\indicator_k\right] + \frac{2\gamma_2^2 \rho^2}{K}\sum\limits_{k=0}^{K-1}\E\sbr*{\indicator_{k}}   + \frac{4\gamma_1\gamma_2 \rho^2}{K}\sum\limits_{k=0}^{K-1}\E\sbr*{\indicator_{k-\frac{1}{2}}}  
        \\
        &\le& \frac{\|{\x}^0-{\x}^*\|^2 - \E[\|{\x}^{K}-{\x}^*\|^2]}{K} + \frac{8\gamma_1\gamma_2\sigma^2}{\n-2\bn-\mn}
        \\
        && +\frac{2\gamma_2\rho}{K} \sum\limits_{k=0}^{K-1}\sqrt{\E\left[\|{\x}^k - {\x}^*\|^2\right]\E[\indicator_k]} + \frac{2\gamma_2^2 \rho^2}{K}\sum\limits_{k=0}^{K-1}\E\sbr*{\indicator_{k}}   + \frac{4\gamma_1\gamma_2 \rho^2}{K}\sum\limits_{k=0}^{K-1}\E\sbr*{\indicator_{k-\frac{1}{2}}}.
    \end{eqnarray*}

    \nazarii{check bellow}
    Assumption~\ref{as:monotonicity} implies that $0 \leq  \inp*{ \F\rbr{{\x}^*}}{\widetilde{\x}^k-\x^*} \leq \inp*{ \F\rbr{\widetilde{\x}^k}}{\widetilde{\x}^k-\x^*}$. Using this and a the notation $R_k = \|{\x}^k - {\x}^*\|$, $k> 0$, $R_0 \ge \|{\x}^0 - {\x}^*\|$ we get
    \begin{eqnarray}
        0 &\le& \frac{R_0^2 - \E[R_K^2]}{K} + \frac{8\gamma_1\gamma_2\sigma^2}{\n-2\bn-\mn}\notag\\
        && +\frac{2\gamma_2\rho}{K} \sum\limits_{k=0}^{K-1}\sqrt{\E\left[\|{\x}^k - {\x}^*\|^2\right]\E[\indicator_k]} + \frac{2\gamma_2^2 \rho^2}{K}\sum\limits_{k=0}^{K-1}\E\sbr*{\indicator_{k}}   + \frac{4\gamma_1\gamma_2 \rho^2}{K}\sum\limits_{k=0}^{K-1}\E\sbr*{\indicator_{k-\frac{1}{2}}}.\label{eq:technical_bound_cvx_unknown_check_seg_0}
    \end{eqnarray}
    implying (after changing the indices) that
    \begin{eqnarray}
        \E[R_k^2] &\le& R_0^2 + \frac{8\gamma_1\gamma_2\sigma^2 k}{\n-2\bn-\mn}
        +2\gamma_2\rho \sum\limits_{l=0}^{k-1}\sqrt{\E[R_l^2]\E[\indicator_l]} 
        \\
        && + 2\gamma_2^2 \rho^2\sum\limits_{l=0}^{k-1}\E\sbr*{\indicator_{l}}   + 4\gamma_1\gamma_2 \rho^2\sum\limits_{l=0}^{k-1}\E\sbr*{\indicator_{l-\frac{1}{2}}}\label{eq:technical_bound_cvx_unknown_check_seg_1}
    \end{eqnarray}
    holds for all $k\ge 0$. In the remaining part of the proof we derive by induction that 
    \begin{eqnarray}
        \E[R_k^2] &\le& R_0^2 + \frac{8\gamma_1\gamma_2\sigma^2 k}{\n-2\bn-\mn}
        +2\gamma_2\rho \sum\limits_{l=0}^{k-1}\sqrt{\E[R_l^2]\E[\indicator_l]} 
        \\
        && + 2\gamma_2^2 \rho^2\sum\limits_{l=0}^{k-1}\E\sbr*{\indicator_{l}}   + 4\gamma_1\gamma_2 \rho^2\sum\limits_{l=0}^{k-1}\E\sbr*{\indicator_{l-\frac{1}{2}}}  \le 2R_0^2 \label{eq:technical_bound_cvx_unknown_check_seg_2}
    \end{eqnarray}
    for all $k=0,\ldots,K$. For $k=0$ this inequality trivially holds. Next, assume that it holds for all $k=0,1,\ldots, T-1$, $T \le K-1$. Let us show that it holds for $k = T$ as well. From \eqref{eq:technical_bound_cvx_unknown_check_seg_1} and \eqref{eq:technical_bound_cvx_unknown_check_seg_2} we have that $\E[R_k^2]\le 2R_0^2$ for all $k=0,1,\ldots, T-1$. Therefore,
    \begin{eqnarray*}
        \E[R_T^2] &\le& R_0^2 + \frac{8\gamma_1\gamma_2\sigma^2 k}{\n-2\bn-\mn}
        +2\gamma_2\rho \sum\limits_{l=0}^{k-1}\sqrt{\E[R_l^2]\E[\indicator_l]} 
        + 2\gamma_2^2 \rho^2\sum\limits_{l=0}^{k-1}\E\sbr*{\indicator_{l}} 
        \\
        && + 4\gamma_1\gamma_2 \rho^2\sum\limits_{l=0}^{k-1}\E\sbr*{\indicator_{l-\frac{1}{2}}}
        \\
        &\le& R_0^2 + \frac{8\gamma_1\gamma_2\sigma^2 k}{\n-2\bn-\mn}
        +2\gamma_2\rho R_0 \sum\limits_{l=0}^{k-1}\sqrt{\E[\indicator_l]} 
        + 2\gamma_2^2 \rho^2\sum\limits_{l=0}^{k-1}\E\sbr*{\indicator_{l}}   
        \\
        && + 4\gamma_1\gamma_2 \rho^2\sum\limits_{l=0}^{k-1}\E\sbr*{\indicator_{l-\frac{1}{2}}}.
    \end{eqnarray*}
    
    The latter together with the expected number of at least one peer violations~\eqref{eq:viols_numbers} implies
    \begin{eqnarray*}
        \E[R_T^2] &\le& R_0^2 + \frac{8\gamma_1\gamma_2\sigma^2 k}{\n-2\bn-\mn}
        +2\gamma_2\rho R_0 \frac{\n\bn}{\mn} 
        + 2\gamma_2^2 \rho^2\frac{\n\bn}{\mn}  
         + 4\gamma_1\gamma_2 \rho^2\frac{\n\bn}{\mn}.
    \end{eqnarray*}
    Taking
    \begin{equation*}
        \gamma_1 = \min\left\{\frac{1}{2L}, \sqrt{\frac{(\n-2\bn-\mn)R_0^2}{16\sigma^2 K}}, \sqrt{\frac{\mn R_0^2}{8 \rho^2 \bn \n }}\right\}
    \end{equation*}
    \begin{eqnarray*}
        \gamma_2 &=& \min\left\{\sqrt{\frac{\mn^2R_0^2}{64 \rho^2 \bn^2 \n^2 }}, \frac{1}{4L}, \sqrt{\frac{(\n-2\bn-\mn)R_0^2}{64\sigma^2 K}}, \sqrt{\frac{\mn R_0^2}{32 \rho^2 \bn \n }}\right\} 
        \\
        &=& \min\left\{\frac{1}{4L}, \sqrt{\frac{\mn^2R_0^2}{64 \rho^2 \bn^2 \n^2 }}, \sqrt{\frac{(\n-2\bn-\mn)R_0^2}{64\sigma^2 K}}\right\}
    \end{eqnarray*}
    
    we satisfy conditions of Lemma~\ref{lem:check-seg-technical} and ensure that
    \begin{equation*}
        \frac{8\gamma_1\gamma_2\sigma^2 k}{\n-2\bn-\mn}
        +2\gamma_2\rho R_0 \frac{\n\bn}{\mn} 
        + 2\gamma_2^2 \rho^2\frac{\n\bn}{\mn}  
         + 4\gamma_1\gamma_2 \rho^2\frac{\n\bn}{\mn} \le \frac{R_0^2}{4} + \frac{R_0^2}{4} + \frac{R_0^2}{4} + \frac{R_0^2}{4} = R_0^2, 
    \end{equation*}
    and, as a result, we get 
    \begin{equation}
        \E[R_T^2] \le 2R_0^2 \equiv 2R.
    \end{equation}
    Therefore, \eqref{eq:technical_bound_cvx_unknown_check_seg_2} holds for all $k=0,1,\ldots,K$. Together with \eqref{eq:technical_bound_cvx_unknown_check_seg_0} it implies
    \begin{equation*}
        \sum\limits_{k=0}^{K-1}\E\sbr*{\inp*{ \F\rbr{\widetilde{\x}^k}}{\widetilde{\x}^k-\x^*}} \le \frac{R_0^2}{\gamma_2}.
    \end{equation*}

\end{proof}  


\subsubsection{Quasi-Strongly Monotone Case}
\begin{lemma}\label{lem:check_SEG_ind_sample_second_moment_bound}
	Let Assumptions~\ref{as:lipschitzness},~\ref{as:str_monotonicity}~and Corollary~\ref{as:UBV_and_quadr_growth} hold. If 
	\begin{equation}\label{eq:check_SEG_stepsize_1}
	    \gamma_1 \le \frac{1}{2L} 
	\end{equation}
	then $\overline{\f}_{\etav^k}(\widetilde{\x}^k) = \overline{\f}_{\etav^k}\left({\x}^k - \gamma_1 \widehat{\f}_{\xiv^k}({\x}^k)\right)$ satisfies the following inequality
	\begin{eqnarray}
            \gamma_1^2\E\left[\norm*{\overline{\f}_{\etav^k}(\widetilde{\x}^k)}^2\mid {\x}^k\right] &\leq&  2\widehat{P}_k + \frac{8\gamma_1^2\sigma^2}{\gn} +4\gamma_1^2\rho^2\indicator_{k-\frac{1}{2}}, \label{eq:second_moment_bound_SEG_check}
	\end{eqnarray}
	where $\widehat{P}_k = \gamma_1\E_{\xiv^k, \etav^k}\left[\inp*{\overline{\f}_{\etav^k}(\widetilde{\x}^k)}{{\x}^k - \x^*}\right]$ and $\rho^2 = q \sigma^2$ with $q = 2 \cc^2 + 12 + \frac{12}{\n-2\bn-\mn}$ and $C = \cO(1)$ by Lemma~\ref{lem:check-average-err}.
\end{lemma}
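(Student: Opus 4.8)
The plan is to mirror the proof of Lemma~\ref{lem:SEG_ind_sample_second_moment_bound} almost verbatim, because the only structural difference between \algname{SEG-RA} and \algname{SEG-CC} lies in how the aggregation error of the first extragradient step is controlled: there it is bounded by the robust-aggregator guarantee (Lemma~\ref{lem:agg-err}), whereas here it is bounded by the checks-of-computations guarantee (Lemma~\ref{lem:check-average-err}). Note that Assumption~\ref{as:i-var} is no longer needed, since checks of computations are analyzed only in the homogeneous regime $\zeta = 0$, so the per-step variance bounds of Corollary~\ref{as:UBV_and_quadr_growth} suffice.

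First I would introduce the auxiliary iterate $\widehat{\x}^{k+1} = {\x}^k - \gamma_1 \overline{\f}_{\etav^k}(\widetilde{\x}^k)$, expand $\norm*{\widehat{\x}^{k+1} - \x^*}^2$, and rewrite ${\x}^k - \x^* = (\widetilde{\x}^k - \x^*) + \gamma_1 \widehat{\f}_{\xiv^k}({\x}^k)$ inside the inner product, exactly as in the \algname{SEG-RA} derivation. Taking the conditional expectation $\E_{\xiv^k,\etav^k}[\cdot] = \E[\cdot \mid {\x}^k]$, using the tower property together with $\E_{\etav^k}[\overline{\f}_{\etav^k}(\widetilde{\x}^k)] = \F(\widetilde{\x}^k)$, and applying \eqref{eq:str_monotonicity} and the identity \eqref{eq:inner_product_representation}, I reach the same intermediate bound
\[
    \E_{\xiv^k,\etav^k}\left[\norm*{\widehat{\x}^{k+1} - \x^*}^2\right] \le \norm*{{\x}^k - \x^*}^2 - \gamma_1^2 \E_{\xiv^k}\left[\norm*{\widehat{\f}_{\xiv^k}({\x}^k)}^2\right] + \gamma_1^2 \E_{\xiv^k,\etav^k}\left[\norm*{\widehat{\f}_{\xiv^k}({\x}^k) - \overline{\f}_{\etav^k}(\widetilde{\x}^k)}^2\right].
\]

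Next I would split the last term with \eqref{eq:a+b} into four pieces: two of them are bounded by $\tfrac{4\gamma_1^2\sigma^2}{\gn}$ each via \eqref{eq:batched_var_xi1}--\eqref{eq:batched_var_xi2}; the Lipschitz piece $4\gamma_1^2\norm*{\F({\x}^k) - \F(\widetilde{\x}^k)}^2$ is bounded by $4\gamma_1^4 L^2\norm*{\widehat{\f}_{\xiv^k}({\x}^k)}^2$ via \eqref{eq:lipschitzness}; and---this is the one substantive change---the remaining aggregation piece $4\gamma_1^2\E_{\xiv^k}[\norm*{\overline{\f}_{\xiv^k}({\x}^k) - \widehat{\f}_{\xiv^k}({\x}^k)}^2]$ is bounded by $4\gamma_1^2\rho^2\indicator_{k-\frac{1}{2}}$ using Lemma~\ref{lem:check-average-err} (in place of $4\gamma_1^2 c\delta\rho^2$). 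The negative term and the Lipschitz contribution combine into $-\gamma_1^2(1 - 4\gamma_1^2 L^2)\E[\norm*{\widehat{\f}_{\xiv^k}({\x}^k)}^2]$, which is nonpositive once $\gamma_1 \le \tfrac{1}{2L}$ by \eqref{eq:check_SEG_stepsize_1} and can be dropped, leaving
\[
    \E_{\xiv^k,\etav^k}\left[\norm*{\widehat{\x}^{k+1} - \x^*}^2\right] \le \norm*{{\x}^k - \x^*}^2 + \frac{8\gamma_1^2\sigma^2}{\gn} + 4\gamma_1^2\rho^2\indicator_{k-\frac{1}{2}}.
\]
Comparing this with the direct expansion $\E_{\xiv^k,\etav^k}[\norm*{\widehat{\x}^{k+1} - \x^*}^2] = \norm*{{\x}^k - \x^*}^2 - 2\widehat{P}_k + \gamma_1^2\E[\norm*{\overline{\f}_{\etav^k}(\widetilde{\x}^k)}^2]$ and rearranging yields \eqref{eq:second_moment_bound_SEG_check}.

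The calculation is routine; the only point requiring care is the bookkeeping of the indicator. I must verify that the aggregation error of the $\xiv$-step---which enters through $\widetilde{\x}^k = {\x}^k - \gamma_1\widehat{\f}_{\xiv^k}({\x}^k)$---is correctly tied to the event $\indicator_{k-\frac{1}{2}}$ (at least one Byzantine violation at the first half-step of iteration $k$), consistently with the numbering used in the proof of Theorem~\ref{th:seg-cc-sum}, and that the acceptance test built into the recompute loop is precisely what licenses the invocation of Lemma~\ref{lem:check-average-err}. Since here $\rho^2 = q\sigma^2$, the final bound is stated with this value rather than the robust-aggregation one.
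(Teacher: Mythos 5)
Your proposal is correct and follows essentially the same route as the paper's own proof: the same auxiliary iterate $\widehat{\x}^{k+1} = {\x}^k - \gamma_1 \overline{\f}_{\etav^k}(\widetilde{\x}^k)$, the same intermediate bound via \eqref{eq:str_monotonicity} and \eqref{eq:inner_product_representation}, the same four-way split by \eqref{eq:a+b} with the two variance terms, the Lipschitz term absorbed into $-\gamma_1^2(1-4\gamma_1^2L^2)\E\left[\norm*{\widehat{\f}_{\xiv^k}({\x}^k)}^2\right]$, and the aggregation error bounded by $4\gamma_1^2\rho^2\indicator_{k-\frac{1}{2}}$ via Lemma~\ref{lem:check-average-err}, followed by comparison against the direct expansion to extract $\widehat{P}_k$. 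Your bookkeeping of the indicator $\indicator_{k-\frac{1}{2}}$ for the first half-step and your observation that Assumption~\ref{as:i-var} is not needed in the homogeneous check-of-computations setting both match the paper.
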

\begin{proof}
	Using the auxiliary iterate $\widehat{\x}^{k+1} = {\x}^k - \gamma_1 \overline{\f}_{\etav^k}(\widetilde{\x}^k)$, we get
    \begin{eqnarray}\label{eq:sec_mom_ind_samples_technical_1_check}
        \norm*{\widehat{\x}^{k+1} - \x^*}^2 &=& \norm*{{\x}^k - \x^*}^2 -2\gamma_1\inp*{{\x}^k-\x^*}{\overline{\f}_{\etav^k}(\widetilde{\x}^k)} + \gamma_1^2 \norm*{\overline{\f}_{\etav^k}(\widetilde{\x}^k)}^2
        \\
        &=& \norm*{{\x}^k - \x^*}^2 -2\gamma_1\left\langle {\x}^k -\gamma \widehat{\f}_{\xiv^k}({\x}^k) - \x^*, \overline{\f}_{\etav^k}(\widetilde{\x}^k) \right\rangle 
        \\
        && - 2\gamma_1^2 \inp*{\widehat{\f}_{\xiv^k}({\x}^k)}{\overline{\f}_{\etav^k}(\widetilde{\x}^k)} + \gamma_1^2 \norm*{\overline{\f}_{\etav^k}(\widetilde{\x}^k)}^2.
    \end{eqnarray}
    
    Taking the expectation $\E_{\xiv^k, \etav^k}\left[\cdot\right] = \E\left[\cdot\mid {\x}^k\right]$ conditioned on ${\x}^k$ from the above identity, using tower property $\E_{\xiv^k, \etav^k}[\cdot] = \E_{\xiv^k}[\E_{\etav^k}[\cdot]]$, and $\mu$-quasi strong monotonicity of $\F(x)$, we derive
	\begin{eqnarray*}
             \lefteqn{\E_{\xiv^k, \etav^k}\left[\norm*{\widehat{\x}^{k+1} - \x^*}^2 \right]}
             \\
             &=& \norm*{{\x}^k - \x^*}^2 - 2\gamma_1 \E_{\xiv^k, \etav^k}\left[\left\langle {\x}^k - \gamma_1 \widehat{\f}_{\xiv^k}({\x}^k) - \x^*, \overline{\f}_{\etav^k}(\widetilde{\x}^k) \right\rangle \right]
            \\
            && - 2\gamma_1^2 \E_{\xiv^k, \etav^k}\left[\inp*{\widehat{\f}_{\xiv^k}({\x}^k)}{ \overline{\f}_{\etav^k}(\widetilde{\x}^k)}\right] + \gamma_1^2 \E_{\xiv^k, \etav^k}\left[\norm*{\overline{\f}_{\etav^k}(\widetilde{\x}^k)}^2\right]
            \\
		&=& \norm*{{\x}^k - \x^*}^2
            \\
            && - 2\gamma_1 \E_{\xiv^k}\left[\left\langle {\x}^k - \gamma_1 \widehat{\f}_{\xiv^k}({\x}^k) - \x^*, F\left({\x}^k - \gamma_1 \widehat{\f}_{\xiv^k}({\x}^k)\right) \right\rangle \right]
            \\
            && - 2\gamma_1^2 \E_{\xiv^k}\left[\inp*{\widehat{\f}_{\xiv^k}    ({\x}^k)}{\overline{\f}_{\etav^k}(\widetilde{\x}^k)}\right] +   \gamma_1^2 \E_{\xiv^k, \etav^k}\left[\norm*{\overline{\f}_{\etav^k}(\widetilde{\x}^k)}^2\right]\\
            &\overset{\eqref{eq:str_monotonicity},\eqref{eq:inner_product_representation}}{\le}& \norm*{{\x}^k - \x^*}^2 - \gamma_1^2\E_{\xiv^k, \etav^k}\left[\norm*{\widehat{\f}_{\xiv^k}({\x}^k)}^2\right]
            \\
            && + \gamma_1^2\E_{\xiv^k, \etav^k}\left[\norm*{\widehat{\f}_{\xiv^k}({\x}^k) - \overline{\f}_{\etav^k}(\widetilde{\x}^k)}^2\right].
	\end{eqnarray*}
	To upper bound the last term we use simple inequality \eqref{eq:a+b}, and apply $L$-Lipschitzness of $\F(x)$:
	\begin{eqnarray*}
            \E_{\xiv^k, \etav^k}\left[\norm*{\widehat{\x}^{k+1} -\x^*}^2\right] &\overset{\eqref{eq:a+b}}{\le}& \norm*{{\x}^k - \x^*}^2 - \gamma_1^2\E_{\xiv^k}\left[\norm*{\widehat{\f}_{\xiv^k}({\x}^k)}^2\right]
            \\
            && + 4\gamma_1^2 \E_{\xiv^k}\left[\norm*{ \overline{\f}_{\xiv^k}({\x}^k) - \widehat{\f}_{\xiv^k}({\x}^k)}^2\right]
            \\
            && +4\gamma_1^2 \E_{\xiv^k}\left[\norm*{\F({\x}^k) - F\left(\widetilde{\x}^k\right)}^2\right]
            \\
            && + 4\gamma_1^2 \E_{\xiv^k}\left[\norm*{ \overline{\f}_{\xiv^k}({\x}^k) - \F({\x}^k)}^2\right]
            \\
            && + 4\gamma_1^2 \E_{\xiv^k, \etav^k}\left[\norm*{ \overline{\f}_{\etav^k}\left(\widetilde{\x}^k\right) - F\left(\widetilde{\x}^k\right)}^2\right]\\
            &\overset{\eqref{eq:lipschitzness}, \eqref{eq:batched_var_xi1}, \eqref{eq:batched_var_xi2}}{\le}& \norm*{{\x}^k - \x^*}^2 - \gamma_1^2\left(1 - 4L^2\gamma_1^2\right)\E_{\xiv^k}\left[\norm*{\widehat{\f}_{\xiv^k}({\x}^k)}^2\right]\\
            && + 4\gamma_1^2 \E_{\xiv^k}\left[\norm*{ \overline{\f}_{\xiv^k}({\x}^k) - \widehat{\f}_{\xiv^k}({\x}^k)}^2\right]
            \\
            && + \frac{4\gamma_1^2\sigma^2}{\gn}+ \frac{4\gamma_1^2\sigma^2}{\gn}
            \\
            &\overset{\eqref{eq:a+b}, Lem.~\ref{lem:agg-err}}{\le}& \norm*{{\x}^k - \x^*}^2 
             - \gamma_1^2\left(1 - 4\gamma_1^2 L^2\right)\E_{\xiv^k}\left[\norm*{\widehat{\f}_{\xiv^k}({\x}^k)}^2\right]
             \\
             &&+ \frac{8\gamma_1^2\sigma^2}{\gn} + 4\gamma_1^2\rho^2\indicator_{k-\frac{1}{2}}
             \\
            &\overset{\eqref{eq:check_SEG_stepsize_1}}{\leq}& \norm*{{\x}^k - \x^*}^2 + \frac{8\gamma_1^2\sigma^2}{\gn} + 4\gamma_1^2\rho^2\indicator_{k-\frac{1}{2}}.
	\end{eqnarray*}
	Finally, we use the above inequality together with \eqref{eq:sec_mom_ind_samples_technical_1_check}:
	\begin{eqnarray*}
		\norm*{{\x}^k - \x^*}^2 -2\widehat{P}_k + \gamma_1^2 \E\left[\norm*{\overline{\f}_{\etav^k}(\widetilde{\x}^k)}^2\mid {\x}^k\right] &\le& \norm*{{\x}^k - \x^*}^2 + \frac{8\gamma_1^2\sigma^2}{\gn} + 4\gamma_1^2\rho^2\indicator_{k-\frac{1}{2}},
	\end{eqnarray*}		
	where $\widehat{P}_k = \gamma_1\E_{\xiv^k, \etav^k}\left[\inp*{\overline{\f}_{\etav^k}(\widetilde{\x}^k)}{{\x}^k - \x^*}\right]$. Rearranging the terms, we obtain \eqref{eq:second_moment_bound_SEG_check}.
\end{proof}

\begin{lemma}\label{lem:check_SEG_ind_sample_P_k_bound}
	Let Assumptions~\ref{as:lipschitzness},~\ref{as:str_monotonicity}~and Corollary~\ref{as:UBV_and_quadr_growth} hold. If
	\begin{equation}
		\gamma_1 \leq  \frac{1}{2\mu + 2L}, \label{eq:P_k_SEG_ind_sample_gamma_condition_check}
	\end{equation}
	then $\overline{\f}_{\etav^k}(\widetilde{\x}^k) = \overline{\f}_{\etav^k}\left({\x}^k - \gamma_1 \widehat{\f}_{\xiv^k}({\x}^k)\right)$ satisfies the following inequality
	\begin{eqnarray}
            \widehat{P}_{k} &\geq& \frac{\mu\gamma_1}{2}\norm*{{\x}^k - \x^*}^2 + \frac{\gamma_1^2}{4}\E_{\xiv^k}\left[\norm*{\overline{\f}_{\xiv^k}({\x}^k)}^2\right] -  \frac{8\gamma_1^2 \sigma^2}{\gn} -\frac{9\gamma_1^2\rho^2\indicator_{k-\frac{1}{2}}}{2}, \label{eq:P_k_SEG_ind_sample_check}
	\end{eqnarray}
        or simply
        \begin{eqnarray*}
            -\widehat{P}_k
            &{\leq}& -\frac{\mu\gamma_1}{2}\norm*{{\x}^k - \x^*}^2 + \frac{4\gamma_1^2 \sigma^2}{\gn} + 4\gamma_1^2\rho^2\indicator_{k-\frac{1}{2}}
	\end{eqnarray*}
	where $\widehat{P}_k = \gamma_1\E_{\xiv^k, \etav^k}\left[\inp*{\overline{\f}_{\etav^k}(\widetilde{\x}^k)}{{\x}^k - \x^*}\right]$, where $\rho^2 = q \sigma^2$ with $q = 2 \cc^2 + 12 + \frac{12}{\n-2\bn-\mn}$ and $C = \cO(1)$ by Lemma~\ref{lem:check-average-err}.
\end{lemma}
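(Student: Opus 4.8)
The plan is to follow the proof of the robust-aggregation counterpart, Lemma~\ref{lem:SEG_ind_sample_P_k_bound}, essentially verbatim, substituting the checks-of-computations error bound of Lemma~\ref{lem:check-average-err} in place of the robust-aggregation error bound of Lemma~\ref{lem:agg-err}. First I would introduce the auxiliary iterate $\widehat{\x}^{k+1} = {\x}^k - \gamma_1\overline{\f}_{\etav^k}(\widetilde{\x}^k)$ and expand $-\widehat{P}_k = -\gamma_1\E_{\xiv^k,\etav^k}[\inp*{\overline{\f}_{\etav^k}(\widetilde{\x}^k)}{{\x}^k-\x^*}]$. Writing ${\x}^k - \x^* = ({\x}^k - \gamma_1\widehat{\f}_{\xiv^k}({\x}^k) - \x^*) + \gamma_1\widehat{\f}_{\xiv^k}({\x}^k)$ and applying the polarization identity~\eqref{eq:inner_product_representation} to the residual inner product $\inp*{\overline{\f}_{\etav^k}(\widetilde{\x}^k)}{\widehat{\f}_{\xiv^k}({\x}^k)}$ produces the same three squared-norm terms as in Lemma~\ref{lem:SEG_ind_sample_P_k_bound}.

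Next I would take the expectation over $\etav^k$ conditioned on $\xiv^k$ (tower property), so that $\E_{\etav^k}[\overline{\f}_{\etav^k}(\widetilde{\x}^k)] = \F(\widetilde{\x}^k)$ by unbiasedness, and apply $\mu$-quasi-strong monotonicity~\eqref{eq:str_monotonicity} at the extrapolated point $\widetilde{\x}^k = {\x}^k - \gamma_1\widehat{\f}_{\xiv^k}({\x}^k)$ to get $\inp*{\F(\widetilde{\x}^k)}{\widetilde{\x}^k - \x^*} \ge \mu\norm*{\widetilde{\x}^k - \x^*}^2$. The key cross term $\norm*{\overline{\f}_{\etav^k}(\widetilde{\x}^k) - \widehat{\f}_{\xiv^k}({\x}^k)}^2$ I would split with~\eqref{eq:a+b} into four pieces: the aggregate-versus-mean error $\norm*{\overline{\f}_{\xiv^k}({\x}^k) - \widehat{\f}_{\xiv^k}({\x}^k)}^2$, the Lipschitz displacement $\norm*{\F({\x}^k) - \F(\widetilde{\x}^k)}^2 \le L^2\gamma_1^2\norm*{\widehat{\f}_{\xiv^k}({\x}^k)}^2$ via~\eqref{eq:lipschitzness}, and the two variance terms $\norm*{\overline{\f}_{\xiv^k}({\x}^k) - \F({\x}^k)}^2$ and $\norm*{\overline{\f}_{\etav^k}(\widetilde{\x}^k) - \F(\widetilde{\x}^k)}^2$, each at most $\sigma^2/\gn$ by Corollary~\ref{as:UBV_and_quadr_growth}.

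The single point where the argument diverges from Lemma~\ref{lem:SEG_ind_sample_P_k_bound} is the bound on the aggregate error. Since $\widehat{\f}_{\xiv^k}({\x}^k)$ is the checked mean formed at the extrapolation (first half) step, I would invoke Lemma~\ref{lem:check-average-err} to obtain $\E_{\xiv^k}\norm*{\overline{\f}_{\xiv^k}({\x}^k) - \widehat{\f}_{\xiv^k}({\x}^k)}^2 \le \rho^2\indicator_{k-\frac{1}{2}}$, which replaces the $c\delta\rho^2$ of the robust-aggregation proof. After lower-bounding $\norm*{{\x}^k - \x^* - \gamma_1\widehat{\f}_{\xiv^k}({\x}^k)}^2$ via~\eqref{eq:a+b_lower} and using the stepsize condition~\eqref{eq:P_k_SEG_ind_sample_gamma_condition_check} to ensure $1 - 2\gamma_1\mu - 4\gamma_1^2 L^2 \ge 0$ so that the term $-\tfrac{\gamma_1^2}{2}(1-2\gamma_1\mu-4\gamma_1^2L^2)\norm*{\widehat{\f}_{\xiv^k}({\x}^k)}^2$ can be discarded, I arrive at $-\widehat{P}_k \le -\tfrac{\mu\gamma_1}{2}\norm*{{\x}^k-\x^*}^2 - \tfrac{\gamma_1^2}{2}\E_{\xiv^k}\norm*{\widehat{\f}_{\xiv^k}({\x}^k)}^2 + \tfrac{4\gamma_1^2\sigma^2}{\gn} + 4\gamma_1^2\rho^2\indicator_{k-\frac{1}{2}}$. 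Converting $\norm*{\widehat{\f}_{\xiv^k}}^2$ back into $\norm*{\overline{\f}_{\xiv^k}}^2$ with~\eqref{eq:squared_norm_sum} (which costs one more copy of the check error, raising the constant from $4$ to $\tfrac{9}{2}$) gives the stated inequality~\eqref{eq:P_k_SEG_ind_sample_check}, and dropping the nonpositive $-\tfrac{\gamma_1^2}{4}\norm*{\overline{\f}_{\xiv^k}}^2$ term yields the simpler form.

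The calculation is routine; the only thing requiring care is the bookkeeping of the indicator. The error must attach to $\indicator_{k-\frac{1}{2}}$ (the event that a Byzantine peer deviates at the extrapolation half-step) rather than $\indicator_k$, precisely because the aggregate entering this lemma is the $\xiv$-step mean $\widehat{\f}_{\xiv^k}$; the $\etav$-estimate appears here only through its ideal average $\overline{\f}_{\etav^k}$ and hence contributes no check error. I expect no genuine analytic obstacle beyond this substitution, since the structure of the bound is identical to the robust-aggregation case.
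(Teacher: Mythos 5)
Your proposal is correct and is essentially the paper's own proof: the same polarization-identity expansion of $-\widehat{P}_k$ after splitting ${\x}^k-\x^* = ({\x}^k-\gamma_1\widehat{\f}_{\xiv^k}({\x}^k)-\x^*)+\gamma_1\widehat{\f}_{\xiv^k}({\x}^k)$, the same four-way split of $\norm*{\overline{\f}_{\etav^k}(\widetilde{\x}^k)-\widehat{\f}_{\xiv^k}({\x}^k)}^2$ via \eqref{eq:a+b}, the same substitution of Lemma~\ref{lem:check-average-err} (with the error correctly attached to $\indicator_{k-\frac{1}{2}}$, since only the $\xiv$-step aggregate enters) in place of Lemma~\ref{lem:agg-err}, and the same conversion of $\norm*{\widehat{\f}_{\xiv^k}({\x}^k)}^2$ into $\norm*{\overline{\f}_{\xiv^k}({\x}^k)}^2$ producing the $\nicefrac{9}{2}$ constant. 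The one imprecision you carry over --- saying the term $-\tfrac{\gamma_1^2}{2}(1-2\gamma_1\mu-4\gamma_1^2L^2)\E_{\xiv^k}\left[\norm*{\widehat{\f}_{\xiv^k}({\x}^k)}^2\right]$ ``can be discarded'' while your displayed bound still retains $-\tfrac{\gamma_1^2}{2}\E_{\xiv^k}\left[\norm*{\widehat{\f}_{\xiv^k}({\x}^k)}^2\right]$, which amounts to replacing a factor lying in $[0,1]$ by $1$ inside a negative term --- is exactly the imprecision in the paper's own derivation (here and in Lemma~\ref{lem:SEG_ind_sample_P_k_bound}); the ``or simply'' form, which is the only one used downstream in Theorem~\ref{th:seg-cc-qsm}, follows validly by genuinely dropping that nonpositive term.
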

\begin{proof}
	Since $\E_{\xiv^k, \etav^k}[\cdot] = \E[\cdot\mid {\x}^k]$ and $\overline{\f}_{\etav^k}(\widetilde{\x}^k) = \overline{\f}_{\etav^k}\left({\x}^k - \gamma_1 \widehat{\f}_{\xiv^k}({\x}^k)\right)$, we have
	\begin{eqnarray*}
            -\widehat{P}_k &=& - \gamma_1\E_{\xiv^k, \etav^k}\left[\langle \overline{\f}_{\etav^k}(\widetilde{\x}^k), {\x}^k-\x^* \rangle\right]
            \\
            &=& - \gamma_1\E_{\xiv^k}\left[\langle \E_{\etav^k}[\overline{\f}_{\etav^k}(\widetilde{\x}^k)], {\x}^k - \gamma_1 \widehat{\f}_{\xiv^k}({\x}^k) - \x^* \rangle\right] 
            \\
            && - \gamma_1^2\E\left[\langle \overline{\f}_{\etav^k}(\widetilde{\x}^k), \widehat{\f}_{\xiv^k}({\x}^k) \rangle\right]
            \\
            &\oset{\eqref{eq:inner_product_representation}}{=}& - \gamma_1\E_{\xiv^k}\left[\langle \F({\x}^k - \gamma_1 \widehat{\f}_{\xiv^k}({\x}^k)), {\x}^k - \gamma_1 \widehat{\f}_{\xiv^k}({\x}^k) - \x^* \rangle\right] 
            \\ 
            && - \frac{\gamma_1^2}{2}\E_{\xiv^k, \etav^k}\left[\norm*{\overline{\f}_{\etav^k}(\widetilde{\x}^k)}^2\right] - \frac{\gamma_1^2}{2}\E_{\xiv^k}\left[\norm*{\widehat{\f}_{\xiv^k}({\x}^k)}^2\right] 
            \\
            && + \frac{\gamma_1^2}{2}\E_{\xiv^k, \etav^k}\left[\norm*{\overline{\f}_{\etav^k}(\widetilde{\x}^k) - \widehat{\f}_{\xiv^k}({\x}^k)}^2\right]
            \\
            &\overset{\eqref{eq:str_monotonicity},\eqref{eq:a+b}}{\leq}& -\mu\gamma_1\E_{\xiv^k, \etav^k}\left[\norm*{{\x}^k - \x^* - \gamma_1 \widehat{\f}_{\xiv^k}({\x}^k)}^2\right] - \frac{\gamma_1^2}{2}\E_{\xiv^k}\left[\norm*{\widehat{\f}_{\xiv^k}({\x}^k)}^2\right]
            \\
            && + \frac{4\gamma_1^2}{2}\E_{\xiv^k}\left[\norm*{\overline{\f}_{\xiv^k}({\x}^k) - \widehat{\f}_{\xiv^k}({\x}^k)}^2\right]
            \\
            && + \frac{4\gamma_1^2}{2}\E_{\xiv^k}\left[\norm*{\F({\x}^k) - \F\left(\widetilde{\x}^k\right)}^2\right]
            \\
            && + \frac{4\gamma_1^2}{2}\E_{\xiv^k}\left[\norm*{\overline{\f}_{\xiv^k}({\x}^k) - \F({\x}^k)}^2\right]
            \\
            && + \frac{4\gamma_1^2}{2}\E_{\xiv^k, \etav^k}\left[\norm*{\overline{\f}_{\etav^k}\left(\widetilde{\x}^k\right) - \F\left(\widetilde{\x}^k\right)}^2\right] 
            \\
            &\overset{\eqref{eq:a+b_lower},\eqref{eq:lipschitzness},Lem.\ref{lem:agg-err},Cor.\ref{as:UBV_and_quadr_growth}}{\le}& -\frac{\mu\gamma_1}{2}\norm*{{\x}^k - \x^*}^2 - \frac{\gamma_1^2}{2} (1 - 2\gamma_1\mu - 4\gamma_1^2 L^2)\E_{\xiv^k}\left[\norm*{\widehat{\f}_{\xiv^k}({\x}^k)}^2\right] 
            \\
            && + \frac{4\gamma_1^2\sigma^2}{2g} + \frac{4\gamma_1^2\sigma^2}{2g} + 4\gamma_1^2\rho^2\indicator_{k-\frac{1}{2}}
            \\
            &\overset{\eqref{eq:P_k_SEG_ind_sample_gamma_condition_check}}{\leq}& -\frac{\mu\gamma_1}{2}\norm*{{\x}^k - \x^*}^2 - \frac{\gamma_1^2}{2}\E_{\xiv^k}\left[\norm*{\widehat{\f}_{\xiv^k}({\x}^k)}^2\right]
            \\
            &&  + \frac{4\gamma_1^2 \sigma^2}{\gn} + 4\gamma_1^2\rho^2\indicator_{k-\frac{1}{2}}
	\end{eqnarray*}
        So one have
        \begin{eqnarray*}
            -\widehat{P}_k
            &{\leq}& -\frac{\mu\gamma_1}{2}\norm*{{\x}^k - \x^*}^2 - \frac{\gamma_1^2}{4}\E_{\xiv^k}\left[\norm*{\overline{\f}_{\xiv^k}({\x}^k)}^2\right] + \frac{4\gamma_1^2 \sigma^2}{\gn} + \frac{9\gamma_1^2\rho^2\indicator_{k-\frac{1}{2}}}{2}
	\end{eqnarray*}
        or simply 
        \begin{eqnarray*}
            -\widehat{P}_k
            &{\leq}& -\frac{\mu\gamma_1}{2}\norm*{{\x}^k - \x^*}^2 + \frac{4\gamma_1^2 \sigma^2}{\gn} + 4\gamma_1^2\rho^2\indicator_{k-\frac{1}{2}}
	\end{eqnarray*}
	that concludes the proof.
\end{proof}

Combining Lemmas~\ref{lem:check_SEG_ind_sample_second_moment_bound}~and~\ref{lem:check_SEG_ind_sample_P_k_bound}, we get the following result.

	


\begin{theorem*}[Theorem~\ref{th:seg-cc-qsm} duplicate]
	Let Assumptions~\ref{ass:xi-var},~\ref{as:lipschitzness} and \ref{as:str_monotonicity} hold. Then after $T$ iterations \algname{SEG-CC} (Algorithm~\ref{alg:seg-cc}) with $\gamma_1 \leq  \frac{1}{2\mu + 2L}$ and $\beta = \nicefrac{\gamma_2}{\gamma_1}\leq \nicefrac{1}{4}$ outputs $\x^{T}$ such that
    \begin{eqnarray*}
        \E \norm*{\x^{T} - \x^*}^2 \leq \rbr*{1 - \frac{\mu\beta\gamma_1}{4}}^{T}\norm*{\x^{0} - \x^*}^2 +
        2 \sigma^2 \rbr*{\frac{4 \gamma_1}{\beta\mu^2\rbr*{\n-2\bn-\mn}}+\frac{\gamma_1q\n\bn}{\mn}},
    \end{eqnarray*}
    where $q = 2 \cc^2 + 12 + \frac{12}{\n-2\bn-\mn}$; $q=\cO(1)$ since $\cc=\cO(1)$.
\end{theorem*}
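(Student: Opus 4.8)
The plan is to mirror almost verbatim the proof of Theorem~\ref{th:seg-ra-qsm} for \algname{SEG-RA}, with the single structural change that the robust-aggregation error bound $c\delta\rho^2$ is everywhere replaced by the checked-aggregation error $\rho^2\indicator_{k}$ (resp. $\rho^2\indicator_{k-\frac12}$ for the inner step) supplied by Lemma~\ref{lem:check-average-err}, and that the per-step variance $\sigma^2/\gn$ is replaced by $\sigma^2/|\cG_k\setminus\cC_k|\le \sigma^2/(\n-2\bn-\mn)$ via Corollary~\ref{as:UBV_and_quadr_growth} applied to the active set. Concretely, I would start from the outer update $\x^{k+1}={\x}^k-\gamma_2\widehat{\f}_{\etav^k}(\widetilde{\x}^k)$, expand $\norm*{\x^{k+1}-\x^*}^2$, and split $\widehat{\f}_{\etav^k}=\overline{\f}_{\etav^k}+(\widehat{\f}_{\etav^k}-\overline{\f}_{\etav^k})$. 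Using \eqref{eq:1+lambda} to absorb the cross term $\inp*{\widehat{\f}_{\etav^k}-\overline{\f}_{\etav^k}}{{\x}^k-\x^*}$ and the squared error $\norm*{\widehat{\f}_{\etav^k}-\overline{\f}_{\etav^k}}^2$ into a $(1+\lambda)$ factor and a $(2+\tfrac1\lambda)$ factor, then taking the conditional expectation and bounding the aggregation error by $\rho^2\indicator_{k}$, produces a one-step inequality written in terms of $\widehat{P}_k=\gamma_1\E_{\xiv^k,\etav^k}\sbr*{\inp*{\overline{\f}_{\etav^k}(\widetilde{\x}^k)}{{\x}^k-\x^*}}$.

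Next I would plug in the two auxiliary lemmas exactly as in the \algname{SEG-RA} derivation: the second-moment bound \eqref{eq:second_moment_bound_SEG_check} to replace $2\beta^2\gamma_1^2\E\norm*{\overline{\f}_{\etav^k}(\widetilde{\x}^k)}^2$ by $2\beta^2(2\widehat{P}_k+\tfrac{8\gamma_1^2\sigma^2}{\gn}+4\gamma_1^2\rho^2\indicator_{k-\frac12})$, and then the lower bound \eqref{eq:P_k_SEG_ind_sample_check} on $-\widehat{P}_k$ to extract the contraction $-\tfrac{\mu\gamma_1}{2}\norm*{{\x}^k-\x^*}^2$. Using $0\le\beta\le\nicefrac14$ to guarantee $\beta-2\beta^2\ge\tfrac\beta2$ and choosing $\lambda=\nicefrac{\mu\gamma_2}{4}$ (so that $1+\lambda-\tfrac{\mu\gamma_2}{2}\le1-\tfrac{\mu\gamma_2}{4}=1-\tfrac{\mu\beta\gamma_1}{4}$) collapses everything into a recursion of the form
\begin{equation*}
\E\norm*{\x^{k+1}-\x^*}^2\le\rbr*{1-\tfrac{\mu\beta\gamma_1}{4}}\E\norm*{{\x}^k-\x^*}^2+\frac{c_\sigma\gamma_1^2\sigma^2}{\n-2\bn-\mn}+c_\rho\gamma_1^2\rho^2\rbr*{\E\indicator_{k}+\E\indicator_{k-\frac12}}
\end{equation*}
for absolute constants $c_\sigma,c_\rho$. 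Unrolling this geometric recursion from $k=0$ to $T-1$ gives the first two terms of the claimed bound (the $\tfrac{4\gamma_1}{\beta\mu^2(\n-2\bn-\mn)}$ term arises after dividing the stationary $\sigma^2$ term by $\tfrac{\mu\beta\gamma_1}{4}$).

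The crucial and least routine step is handling the random indicator terms in the unrolled sum. After unrolling, the Byzantine contribution is $c_\rho\gamma_1^2\rho^2\sum_{k}\rbr*{1-\tfrac{\mu\beta\gamma_1}{4}}^{T-1-k}\rbr*{\E\indicator_k+\E\indicator_{k-\frac12}}$, and since the geometric weights are at most $1$, this is dominated by $c_\rho\gamma_1^2\rho^2\sum_{k}\rbr*{\E\indicator_k+\E\indicator_{k-\frac12}}$, which is precisely the quantity controlled by the detection-probability argument: each Byzantine worker is checked with probability at least $\nicefrac{\mn}{\n}$ per round, so the expected total number of protocol violations over all iterations is bounded by \eqref{eq:viols_numbers}, i.e. $\sum_l\E\indicator_l+\sum_l\E\indicator_{l-\frac12}\le\tfrac{\n\bn}{\mn}$. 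Substituting $\rho^2=q\sigma^2$ then yields the final term $\tfrac{\gamma_1 q\n\bn}{\mn}$. The main obstacle I anticipate is bookkeeping the two distinct indicator streams ($\indicator_{k}$ from the outer $\etav$-aggregation and $\indicator_{k-\frac12}$ from the inner $\xiv$-aggregation embedded inside Lemmas~\ref{lem:check_SEG_ind_sample_second_moment_bound}--\ref{lem:check_SEG_ind_sample_P_k_bound}) so that both are summed against \eqref{eq:viols_numbers} without double counting, together with verifying that the stepsize conditions $\gamma_1\le\tfrac{1}{2\mu+2L}$ and $\beta\le\nicefrac14$ simultaneously satisfy the hypotheses \eqref{eq:check_SEG_stepsize_1} and \eqref{eq:P_k_SEG_ind_sample_gamma_condition_check} of both lemmas; the rest is mechanical once the \algname{SEG-RA} template is in hand.
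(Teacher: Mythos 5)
Your proposal is correct and follows essentially the same route as the paper's own proof: the same expansion of the outer step with the $(1+\lambda)$/$(2+\tfrac{1}{\lambda})$ split, the same two auxiliary lemmas (the checked-computation analogues of the \algname{SEG-RA} second-moment and $\widehat{P}_k$ bounds), the same choice $\lambda = \nicefrac{\mu\gamma_2}{4}$ with $\beta \le \nicefrac{1}{4}$, and the same final step of dominating the geometrically weighted indicator sums by the total expected number of protocol violations $\nicefrac{\n\bn}{\mn}$ before substituting $\rho^2 = q\sigma^2$. The bookkeeping concern you raise about the two indicator streams $\indicator_k$ and $\indicator_{k-\frac{1}{2}}$ is handled in the paper exactly as you anticipate, via the combined bound \eqref{eq:viols_numbers}.
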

\begin{proof}[Proof of Theorem~\ref{th:seg-cc-qsm}]
     Since $\x^{k+1} = {\x}^k - \gamma_2 \widehat{\f}_{\etav^k}\left(\widetilde{\x}^k\right)$, we have
     \begin{eqnarray*}
         \norm*{\x^{k+1} - \x^*}^2 &=& \norm*{{\x}^k - \gamma_2 \widehat{\f}_{\etav^k}\left(\widetilde{\x}^k\right) - \x^*}^2
         \\
         &=& \norm*{{\x}^k - \x^*}^2 - 2\gamma_2\langle \widehat{\f}_{\etav^k}\left(\widetilde{\x}^k\right), {\x}^k - \x^*\rangle + \gamma_2^2\norm*{\widehat{\f}_{\etav^k}\left(\widetilde{\x}^k\right)}^2
         \\
         &\leq& \norm*{{\x}^k - \x^*}^2 - 2\gamma_2\langle \overline{\f}_{\etav^k}\left(\widetilde{\x}^k\right), {\x}^k - \x^*\rangle + 2\gamma_2^2\norm*{\overline{\f}_{\etav^k}\left(\widetilde{\x}^k\right)}^2 
         \\
         && + 2\gamma_2^2\norm*{\overline{\f}_{\etav^k}\left(\widetilde{\x}^k\right) - \widehat{\f}_{\etav^k}\left(\widetilde{\x}^k\right)}^2 + 2\gamma_2\langle \overline{\f}_{\etav^k}\left(\widetilde{\x}^k\right) - \widehat{\f}_{\etav^k}\left(\widetilde{\x}^k\right), {\x}^k - \x^*\rangle
         \\
         &\leq& \rbr*{1+\lambda}\norm*{{\x}^k - \x^*}^2 - 2\gamma_2\langle \overline{\f}_{\etav^k}\left(\widetilde{\x}^k\right), {\x}^k - \x^*\rangle + 2\gamma_2^2\norm*{\overline{\f}_{\etav^k}\left(\widetilde{\x}^k\right)}^2
         \\
         && + \gamma_2^2\rbr*{2 +\frac{1}{\lambda}}\norm*{\overline{\f}_{\etav^k}\left(\widetilde{\x}^k\right) - \widehat{\f}_{\etav^k}\left(\widetilde{\x}^k\right)}^2
     \end{eqnarray*}
    Taking the expectation, conditioned on $\x^k$, 
    \begin{eqnarray*}
         \E_{\xiv^k, \etav^k}\norm*{\x^{k+1} - \x^*}^2 &\leq& \rbr*{1+\lambda}\norm*{{\x}^k - \x^*}^2 - 2\beta\gamma_1\E_{\xiv^k, \etav^k}\langle \overline{\f}_{\etav^k}\left(\widetilde{\x}^k\right), {\x}^k - \x^*\rangle 
         \\
         && + 2\beta^2\gamma_1^2\E_{\xiv^k, \etav^k}\norm*{\overline{\f}_{\etav^k}\left(\widetilde{\x}^k\right)}^2 
         + \gamma_2^2\rho^2\rbr*{2 +\frac{1}{\lambda}}\indicator_{k},
    \end{eqnarray*}
     using the definition of $\widehat{P}_k = \gamma_1\E_{\xiv^k, \etav^k}\left[\inp*{\overline{\f}_{\etav^k}(\widetilde{\x}^k)}{{\x}^k - \x^*}\right]$, we continue our derivation:
     \begin{eqnarray*}
         \lefteqn{\E_{\xiv^k, \etav^k}\left[\norm*{\x^{k+1} - \x^*}^2\right]}
         \\
         &=& \rbr*{1+\lambda}\norm*{{\x}^k - \x^*}^2 -2\beta\widehat{P}_k + 2\beta^2\gamma_1^2\E_{\xiv^k, \etav^k}\norm*{\overline{\f}_{\etav^k}\left(\widetilde{\x}^k\right)}^2 
         \\
         && +\gamma_2^2\rho^2\rbr*{2 +\frac{1}{\lambda}}\indicator_{k}
         \\
         &\overset{\eqref{eq:second_moment_bound_SEG_check}}{\le}& \rbr*{1+\lambda}\norm*{{\x}^k - \x^*}^2- 2\beta\widehat{P}_k + 2\beta^2\rbr*{2\widehat{P}_k + \frac{8\gamma_1^2\sigma^2}{\gn} +4\gamma_1^2\rho^2\indicator_{k-\frac{1}{2}}} 
         \\
         && +\gamma_2^2\rho^2\rbr*{2 +\frac{1}{\lambda}}\indicator_{k}
         \\
         &\overset{0 \le \beta \le \nicefrac{1}{2}}{\leq}& \rbr*{1+\lambda} \norm*{{\x}^k - \x^*}^2 - 2\widehat{P}_k\rbr*{\beta - 2\beta^2} +\frac{16\gamma_2^2\sigma^2}{\gn} +8\gamma_2^2\rho^2\indicator_{k-\frac{1}{2}} 
         \\
         && +\gamma_2^2\rho^2\rbr*{2 +\frac{1}{\lambda}}\indicator_{k}
         \\
         &\overset{\eqref{eq:P_k_SEG_ind_sample_check}}{\le}& \rbr*{1+\lambda} \norm*{{\x}^k - \x^*}^2 
         \\
         && + 2\beta\rbr*{1 - 2\beta}\rbr*{-\frac{\mu\gamma_1}{2}\norm*{{\x}^k - \x^*}^2  +  \frac{4\gamma_1^2 \sigma^2}{\gn} + 4\gamma_1^2\rho^2\indicator_{k-\frac{1}{2}}} 
         \\
         && +\frac{16\gamma_2^2\sigma^2}{\gn} +8\gamma_2^2\rho^2\indicator_{k-\frac{1}{2}} +\gamma_2^2\rho^2\rbr*{2 +\frac{1}{\lambda}}\indicator_{k}
         \\
         &\le& \rbr*{1+\lambda -2\beta\rbr*{1 - 2\beta}\frac{\mu\gamma_1}{2}} \norm*{{\x}^k - \x^*}^2 
         \\
         && + \frac{\gamma_1^2 \sigma^2}{\gn} + \gamma_1^2\rho^2\indicator_{k-\frac{1}{2}}+\frac{16\gamma_2^2\sigma^2}{\gn} +8\gamma_2^2\rho^2\indicator_{k-\frac{1}{2}} +\gamma_2^2\rho^2\rbr*{2 +\frac{1}{\lambda}}\indicator_{k}
         \\
         &\overset{0 \le \beta \le \nicefrac{1}{4}}{\leq}& \rbr*{1+\lambda -\frac{\mu\gamma_2}{2}} \norm*{{\x}^k - \x^*}^2 
         \\
         && + \frac{\sigma^2}{\gn}\rbr*{\gamma_1^2 + 16\gamma_2^2} + \gamma_1^2\rho^2\indicator_{k-\frac{1}{2}} +8\gamma_2^2\rho^2\indicator_{k-\frac{1}{2}} +\gamma_2^2\rho^2\rbr*{2 +\frac{1}{\lambda}}\indicator_{k}
         \\
         &\overset{\lambda = \nicefrac{\mu\gamma_2}{4}}{\leq}& \rbr*{1 -\frac{\mu\gamma_2}{4}} \norm*{{\x}^k - \x^*}^2 
         \\
         && + \frac{\sigma^2}{\gn}\rbr*{\gamma_1^2 + 16\gamma_2^2} + \gamma_1^2\rho^2\indicator_{k-\frac{1}{2}}+8\gamma_2^2\rho^2\indicator_{k-\frac{1}{2}} +\gamma_2^2\rho^2\rbr*{2 + \frac{4}{\mu\gamma_2}}\indicator_{k}.
     \end{eqnarray*}

    Next, taking the full expectation from the both sides and obtain
    \begin{eqnarray*}
         \E\left[\norm*{\x^{k+1} - \x^*}^2\right] &\le& \rbr*{1 -\frac{\mu\gamma_2}{4}}\E\left[\norm*{{\x}^k - \x^*}^2\right]
         \\
         && + \frac{\sigma^2}{\gn}\rbr*{\gamma_1^2 + 16\gamma_2^2}
         + \rho^2\rbr*{\gamma_1^2 + 8\gamma_2^2} \E \indicator_{k-\frac{1}{2}} + 2\rho^2\rbr*{\gamma_2^2 + \frac{2\gamma_2}{\mu}} \E \indicator_{k}. 
     \end{eqnarray*}

    Unrolling the recurrence, we derive the rest of the result:
    \begin{eqnarray*}
        \E \norm*{\x^{K+1} - \x^*}^2 
        &\leq& \rbr*{1 - \frac{\mu\gamma_2 }{4}}^{K+1}\norm*{\x^{0} - \x^*}^2 + \frac{4 \sigma^2 \rbr*{\gamma_1^2 + 16\gamma_2^2}}{\gamma_2\mu\rbr*{\n-2\bn-\mn}}
        \\
        &&  + \rho^2\rbr*{\gamma_1^2 + 8\gamma_2^2}  \sum_i^K \rbr*{1 - \frac{\mu\gamma_2}{4}}^{K-i}\E \indicator_{i-\frac{1}{2}} 
        \\
        && + 2\rho^2\rbr*{\gamma_2^2 + \frac{2\gamma_2}{\mu}} \sum_i^K \rbr*{1 - \frac{\mu\gamma_2}{4}}^{K-i} \E \indicator_{i}.
    \end{eqnarray*}
     
     Since $\gamma_2 \leq \frac{4}{\mu}$ and
     that implies
    \begin{equation}
        \E\sbr*{\sum_i^T \indicator_i \rbr*{1 - \frac{\gamma \mu}{2}}^{T-i}} \leq \E\sbr*{\sum_i^T \indicator_i} \leq \frac{\n\bn}{\mn}.
    \end{equation}
     using the expected number of at least one peer violations~\eqref{eq:viols_numbers} we derive 
     \begin{eqnarray*}
        \E \norm*{\x^{K+1} - \x^*}^2 
        &\leq& \rbr*{1 - \frac{\mu\gamma_2 }{4}}^{K+1}\norm*{\x^{0} - \x^*}^2 + \frac{4 \sigma^2 \rbr*{\gamma_1^2 + 16\gamma_2^2}}{\gamma_2\mu^2\rbr*{\n-2\bn-\mn}}
        \\
        &&  + \rho^2\rbr*{\gamma_1^2 + 8\gamma_2^2} \frac{\n\bn}{\mn} + 2\rho^2\rbr*{\gamma_2^2 + \frac{2\gamma_2}{\mu}} \frac{\n\bn}{\mn}
        \\
        &\leq& \rbr*{1 - \frac{\mu\gamma_2 }{4}}^{K+1}\norm*{\x^{0} - \x^*}^2 + \frac{4 \sigma^2 \rbr*{\gamma_1^2 + 16\gamma_2^2}}{\gamma_2\mu\rbr*{\n-2\bn-\mn}}
        \\
        &&  + \rho^2\rbr*{\gamma_1^2 + 10\gamma_2^2} \frac{\n\bn}{\mn} + \frac{4\rho^2\gamma_2}{\mu} \frac{\n\bn}{\mn},
    \end{eqnarray*}
    that together with $\rho^2 = q \sigma^2$ with $q = 2 \cc^2 + 12 + \frac{12}{\n-2\bn-\mn}$ and $C = \cO(1)$ by Lemma~\ref{lem:check-average-err} give result of the theorem.
\end{proof}

\begin{corollary}\label{cor:seg-cc-qsm}
    Let assumptions of Theorem~\ref{th:seg-cc-qsm} hold. Then $\E \norm*{\x^{T} - \x^*}^2 \leq \varepsilon$ holds after 
    \begin{equation*}
        T = \widetilde\cO\rbr*{\frac{L}{\mu} + \frac{1}{\beta} + \frac{\sigma^2}{\beta^2\mu^2 (\n-2\bn-\mn) \varepsilon} + \frac{q \sigma^2 \bn\n}{\beta \mu^2 \mn \varepsilon} + \frac{q \sigma^2 \bn\n}{\beta^2\mu^2 \mn\sqrt{\varepsilon}}}
    \end{equation*}
    iterations of~~\algname{SEG-CC} with 
    $$\gamma = \min\left\{\frac{1}{2L + 2\mu}, \frac{4\ln\left(\max\cbr*{2,\min\cbr*{\frac{m(\n-2\bn-\mn)\beta^2\mu^2R^2K}{32m\sigma^2 + 4q\sigma^2\beta^2 n\bn(\n-2\bn-\mn)},\frac{m\mu^2 \beta^2 R^2K^2}{32qn\bn\sigma^2}}}\right)}{\mu\beta \rbr{K+1}}\right\}.$$
\end{corollary}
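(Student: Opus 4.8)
The plan is to obtain Corollary~\ref{cor:seg-cc-qsm} from Theorem~\ref{th:seg-cc-qsm} by the same mechanism already used for \algname{SGDA-CC} in Corollary~\ref{cor:sgda-cc-qsm}: cast the contraction established for \algname{SEG-CC} into the template of Lemma~\ref{lem:lemma2_stich} and read off the iteration count. The one subtlety is that the \emph{collapsed} bound displayed in the statement of Theorem~\ref{th:seg-cc-qsm} only exhibits terms linear in the stepsize, whereas the $\nicefrac{1}{\sqrt{\varepsilon}}$ contribution in the target complexity can only originate from a genuinely quadratic-in-stepsize term. I would therefore start from the finer unrolled recurrence established inside the proof of Theorem~\ref{th:seg-cc-qsm}, namely
\[
  \E\norm*{\x^{K+1}-\x^*}^2 \le \rbr*{1-\tfrac{\mu\gamma_2}{4}}^{K+1}\norm*{\x^0-\x^*}^2 + \frac{4\sigma^2\rbr*{\gamma_1^2+16\gamma_2^2}}{\gamma_2\mu(\n-2\bn-\mn)} + \rho^2\rbr*{\gamma_1^2+10\gamma_2^2}\frac{\n\bn}{\mn} + \frac{4\rho^2\gamma_2}{\mu}\frac{\n\bn}{\mn},
\]
which already incorporates the expected total number of protocol violations $\nicefrac{\n\bn}{\mn}$ from~\eqref{eq:viols_numbers} and still keeps the quadratic terms explicit.

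Next I would set $\gamma := \gamma_1$ as the single free stepsize, substitute $\gamma_2 = \beta\gamma_1$ and $\rho^2 = q\sigma^2$ (Lemma~\ref{lem:check-average-err}), and regroup the floor by powers of $\gamma$, using $\beta \le \nicefrac{1}{4}$ to bound factors such as $1+16\beta^2$ and $1+10\beta^2$ by absolute constants. This puts the recurrence in the form $r_K \le r_0(1-a\gamma)^{K+1} + c_1\gamma + c_2\gamma^2$ required by Lemma~\ref{lem:lemma2_stich}, with contraction rate $a = \nicefrac{\mu\beta}{4}$ and $\gamma_0 = \nicefrac{1}{(2\mu+2L)}$; the linear coefficient $c_1 = \cO\rbr*{\frac{\sigma^2}{\beta\mu(\n-2\bn-\mn)} + \frac{q\sigma^2\beta\n\bn}{\mu\mn}}$ collects the variance floor (via $\nicefrac{\gamma_1}{\gamma_2}=\nicefrac{1}{\beta}$) together with the $\nicefrac{\gamma_2}{\mu}$ Byzantine term, while the quadratic coefficient $c_2 = \cO\rbr*{\frac{q\sigma^2\n\bn}{\mn}}$ collects the $\rho^2\gamma_1^2\frac{\n\bn}{\mn}$ contribution. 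The stepsize prescribed in the statement is then exactly~\eqref{eq:lemma2_stich_gamma} evaluated at these $a,\gamma_0,c_1,c_2$.

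Finally I would invoke the complexity bound $K = \widetilde\cO\rbr*{\frac{1}{a\gamma_0} + \frac{c_1}{a\varepsilon} + \frac{c_2}{a^2\sqrt{\varepsilon}}}$ of Lemma~\ref{lem:lemma2_stich}. With the constants above, $\frac{1}{a\gamma_0}$ produces the warm-up cost $\frac{L}{\mu}+\frac{1}{\beta}$; $\frac{c_1}{a\varepsilon}$ splits into the two $\nicefrac{1}{\varepsilon}$ terms $\frac{\sigma^2}{\beta^2\mu^2(\n-2\bn-\mn)\varepsilon}$ and $\frac{q\sigma^2\bn\n}{\beta\mu^2\mn\varepsilon}$; and $\frac{c_2}{a^2\sqrt{\varepsilon}}$ produces the $\frac{q\sigma^2\bn\n}{\beta^2\mu^2\mn\sqrt{\varepsilon}}$ term, which together recover the claimed bound. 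The optional simplifications $\bn \le \nicefrac{\n}{4}$ and $\mn \ll \n$ then recover the cleaner form reported in Table~\ref{tab:comparison_of_rates}.

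The main obstacle is precisely the regrouping in the second paragraph: one must resist collapsing to the theorem-statement floor and instead keep the quadratic term alive, and simultaneously track the $\beta$-powers consistently, since $\beta$ enters the contraction rate $a$, the variance coefficient (through $\nicefrac{1}{\beta}$), and the Byzantine coefficient (through $\gamma_2=\beta\gamma_1$). Getting these $\beta$-dependencies right across $a$, $c_1$, and $c_2$ is what fixes the exact $\beta$-scaling of the five terms; everything else is the same routine geometric-sum estimate already carried out for \algname{SGDA-CC}.
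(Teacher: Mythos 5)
Your proposal is correct and follows essentially the same route as the paper's own proof: the paper likewise plugs $\gamma_2=\beta\gamma_1$ and $\rho^2=q\sigma^2$ into the finer, pre-collapsed recurrence from the proof of Theorem~\ref{th:seg-cc-qsm} (keeping the quadratic term $2\rho^2\gamma_1^2\,\nicefrac{\n\bn}{\mn}$ alive), obtaining $r_T\le r_0(1-\nicefrac{\mu\beta\gamma_1}{4})^T+c_1\gamma_1+c_2\gamma_1^2$ with exactly your $a$, $\gamma_0$, $c_1$, $c_2$, and then invokes Lemma~\ref{lem:lemma2_stich}. Your observation that the collapsed bound displayed in the theorem statement (linear in the stepsize only) cannot produce the $\nicefrac{1}{\sqrt{\varepsilon}}$ term is precisely why the paper, too, works from the unrolled inequality inside the theorem's proof rather than from its statement.
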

\begin{proof}
     Next, we plug $\gamma_2 = \beta \gamma_1 \leq \nicefrac{\gamma_1}{4}$ into the result of Theorem~\ref{th:seg-cc-qsm} and obtain
     \begin{equation}
         \E\left[\norm*{\x^{k+1} - \x^*}^2\right] \le \rbr*{1 -\frac{\mu\beta\gamma_1}{4}}\norm*{\x^{0} - \x^*}^2  + \frac{8 \sigma^2 \gamma_1}{\beta\mu\rbr*{\n-2\bn-\mn}} + 2\rho^2\gamma_1^2 \frac{\n\bn}{\mn} + \frac{4\rho^2\beta\gamma_1}{\mu} \frac{\n\bn}{\mn}. 
     \end{equation}
    
    Using the definition of $\rho$ ($\rho^2 = q \sigma^2= \cO(\sigma^2)$) from Lemma~\ref{lem:check-average-err} and if $\bn \leq \frac{n}{4}$, $\mn << \n$ the result of Theorem~\ref{th:seg-cc-qsm} can be simplified as
    \begin{equation*}
        \E \norm*{\x^{T} - \x^*}^2 
        \leq \ \rbr*{1 -\frac{\mu\beta\gamma_1}{4}}^T\norm*{\x^{0} - \x^*}^2 + \frac{8 \sigma^2 \gamma_1}{\beta\mu\rbr*{\n-2\bn-\mn}} + 2q\sigma^2\gamma_1^2 \frac{\n\bn}{\mn} + \frac{4q\sigma^2\beta\gamma_1}{\mu} \frac{\n\bn}{\mn}.
    \end{equation*}

    Applying Lemma~\ref{lem:lemma2_stich} to the last bound we get the result of the corollary.
\end{proof}

\subsubsection{Lipschitz Monotone Case}
\begin{theorem}\label{th:seg-cc-mon}
    Suppose the assumptions of Theorem~\ref{th:seg-cc-sum} and Assumption~\ref{as:monotonicity} hold.
    Then after $K$ iterations of~~\algname{SEG-CC} (Algorithm~\ref{alg:seg-cc})
     \begin{equation}
        \E \sbr*{\gap_{B_R(x^*)}\rbr*{\overline{\x}^K}} \leq \frac{3 R^2}{2\gamma_2 K},
    \end{equation} 
    where $\gap_{B_R(x^*)}\rbr*{\overline{\x}^K} = \max\limits_{u\in B_R(x^*)} \inp*{ \F({\u}) }{\overline{\x}^K-\u} $,  $\overline{\x}^K = \frac{1}{K}\sum\limits_{k=0}^{K-1} \widetilde{\x}^k$ and $R \le \|{\x}^0 - {\x}^*\|$.
\end{theorem}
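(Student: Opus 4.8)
The plan is to mirror the monotone-case argument for \algname{SGDA-CC} (Theorem~\ref{th:sgda-cc-gap}), replacing the single-step descent identity used there with the extragradient estimate of Lemma~\ref{lem:check-seg-technical}. Fix an arbitrary comparison point $\u \in B_R(x^*)$. Starting from Lemma~\ref{lem:check-seg-technical} (valid since $\gamma_1 \le \nicefrac{1}{2L}$ and $\beta \le \nicefrac{1}{2}$), the left-hand side is $2\gamma_2\inp{\overline{\f}_{\etav^k}(\widetilde{\x}^k)}{\widetilde{\x}^k - \u}$. Since $\widetilde{\x}^k = \x^k - \gamma_1\widehat{\f}_{\xiv^k}(\x^k)$ depends only on $\xiv^k$, one has $\E_{\etav^k}[\overline{\f}_{\etav^k}(\widetilde{\x}^k)] = \F(\widetilde{\x}^k)$, so I would split $\inp{\overline{\f}_{\etav^k}(\widetilde{\x}^k)}{\widetilde{\x}^k - \u} = \inp{\F(\widetilde{\x}^k)}{\widetilde{\x}^k - \u} + \inp{\overline{\f}_{\etav^k}(\widetilde{\x}^k) - \F(\widetilde{\x}^k)}{\widetilde{\x}^k - \u}$ and then invoke monotonicity (Assumption~\ref{as:monotonicity}) in the form $\inp{\F(\widetilde{\x}^k)}{\widetilde{\x}^k - \u} \ge \inp{\F(\u)}{\widetilde{\x}^k - \u}$. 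This converts the iterate gap into the target quantity $\inp{\F(\u)}{\widetilde{\x}^k - \u}$ at the price of an extra stochastic term.

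Next I would sum the resulting per-iteration inequality over $k = 0,\dots,K-1$. The quadratic terms $\norm{\x^k - \u}^2 - \norm{\x^{k+1} - \u}^2$ telescope to $\norm{\x^0 - \u}^2$, and by linearity/Jensen $\sum_k \inp{\F(\u)}{\widetilde{\x}^k - \u} = K\inp{\F(\u)}{\overline{\x}^K - \u}$ with $\overline{\x}^K = \frac{1}{K}\sum_k \widetilde{\x}^k$. After taking $\max_{\u \in B_R(x^*)}$ and full expectation, the terms split into three groups: (i) $\u$-free error terms, namely the variance contributions $4\gamma_1\gamma_2\norm{\overline{\f}_{\etav^k}(\widetilde{\x}^k) - \F(\widetilde{\x}^k)}^2$ and $4\gamma_1\gamma_2\norm{\F(\x^k) - \overline{\f}_{\xiv^k}(\x^k)}^2$ (each bounded in expectation by $4\gamma_1\gamma_2\sigma^2/(\n-2\bn-\mn)$) together with the squared check-aggregation errors, which Lemma~\ref{lem:check-average-err} bounds by $\rho^2\indicator_{k}$ and $\rho^2\indicator_{k-\frac{1}{2}}$; (ii) the $\u$-dependent aggregation-error inner product $-2\gamma_2\inp{\widehat{\f}_{\etav^k}(\widetilde{\x}^k) - \overline{\f}_{\etav^k}(\widetilde{\x}^k)}{\x^k - \u}$; and (iii) the $\u$-dependent stochastic term $-2\gamma_2\inp{\overline{\f}_{\etav^k}(\widetilde{\x}^k) - \F(\widetilde{\x}^k)}{\widetilde{\x}^k - \u}$ introduced above.

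For group (ii), following the \algname{SGDA-CC} proof, I would write $\x^k - \u = (\x^k - \x^*) + (\x^* - \u)$, apply Cauchy--Schwarz, and use $\norm{\widehat{\f}_{\etav^k}(\widetilde{\x}^k) - \overline{\f}_{\etav^k}(\widetilde{\x}^k)} \le \rho\indicator_k$ together with the bounded-trajectory estimate $\E[R_k^2]\le 2R^2$ from Theorem~\ref{th:seg-cc-sum} and $\norm{\x^*-\u}\le R$, producing terms proportional to $\gamma_2\rho R\,\E[\indicator_k]$. For group (iii), I would split $\widetilde{\x}^k - \u = (\widetilde{\x}^k - \x^*) + (\x^* - \u)$: the component against $\widetilde{\x}^k - \x^*$ has zero conditional expectation (since $\widetilde{\x}^k \perp \etav^k$), while for the component against the fixed $\x^* - \u$ I would treat $\max_{\u}\sum_k \inp{\F(\widetilde{\x}^k) - \overline{\f}_{\etav^k}(\widetilde{\x}^k)}{\x^* - \u}$ by adding and subtracting the anchor $\x^0$, applying Fenchel--Young~\eqref{eq:fenchel_young_inequality}, and invoking the martingale variance identity Lemma~\ref{lem:variance_lemma} to the sum $\sum_k(\F(\widetilde{\x}^k) - \overline{\f}_{\etav^k}(\widetilde{\x}^k))$, exactly as in \eqref{temp_2_monotone_appendix}. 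Finally, bounding the accumulated indicators via the violation-count estimate \eqref{eq:viols_numbers} ($\sum_k\E[\indicator_k] + \sum_k\E[\indicator_{k-\frac{1}{2}}] \le \n\bn/\mn$) and plugging in the stepsizes \eqref{eq:seg-cc-gamma-1}--\eqref{eq:seg-cc-gamma-2}, each error block is forced below a fixed multiple of $R^2$, yielding $2\gamma_2 K\,\E[\gap_{B_R(x^*)}(\overline{\x}^K)] \le 3R^2$.

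I expect the main obstacle to be the careful treatment of group (iii): unlike the aggregation error, this is a genuine martingale term whose argument $\widetilde{\x}^k$ is itself random and correlated across iterations, so one must cleanly separate the zero-mean part (against $\widetilde{\x}^k - \x^*$) from the worst-case part (against the deterministic $\x^* - \u$) and verify the conditional-independence hypothesis of Lemma~\ref{lem:variance_lemma} for the $\etav^k$-noise. Keeping the bookkeeping of the two noise sources ($\xiv^k$ in the extrapolation step and $\etav^k$ in the update step) and of the two half-step indicators $\indicator_{k}, \indicator_{k-\frac{1}{2}}$ consistent with the bounded-trajectory induction of Theorem~\ref{th:seg-cc-sum} is where the proof demands the most care.
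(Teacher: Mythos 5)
Your proposal is correct and follows essentially the same route as the paper's proof: start from Lemma~\ref{lem:check-seg-technical}, split off the $\etav^k$-noise term, apply monotonicity to pass to $\inp*{\F(\u)}{\widetilde{\x}^k-\u}$, telescope and maximize over $\u$, bound the aggregation errors via Lemma~\ref{lem:check-average-err}, the trajectory bound of Theorem~\ref{th:seg-cc-sum}, and the violation count \eqref{eq:viols_numbers}, and handle the martingale term with Fenchel--Young and Lemma~\ref{lem:variance_lemma} as in \eqref{temp_2_monotone_appendix}. Your only deviation — anchoring the group (iii) split at $\x^*$ rather than at $\widetilde{\x}^k$ and $\x^0$ as the paper does — is an immaterial variant of the same zero-mean separation trick.
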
    
\begin{proof}
    We start the proof with the result of Lemma~\ref{lem:check-seg-technical}
    \begin{eqnarray*}
        2\gamma_2 \inp*{ \overline{\f}_{\etav^k}\rbr{\widetilde{\x}^k} }{\widetilde{\x}^k-\u}  
        &\leq& \norm*{{\x}^k - \u}^2 - \norm*{{\x}^{k+1} - \u}^2  - 2\gamma_2 \inp*{ \widehat{\f}_{\etav^k}\rbr{\widetilde{\x}^k} - \overline{\f}_{\etav^k}\rbr{\widetilde{\x}^k} }{{\x}^k-\u}
        \\
        && +  2\gamma_2^2\norm*{\widehat{\f}_{\etav^k}\rbr{\widetilde{\x}^k} - \overline{\f}_{\etav^k}\rbr{\widetilde{\x}^k}}^2 + 4\gamma_1\gamma_2\norm*{\overline{\f}_{\etav^k}\rbr{\widetilde{\x}^k} - \F\rbr{\widetilde{\x}^k}}^2
        \\
        &&  + 4\gamma_1\gamma_2\norm*{\F\rbr{\x^k} - \overline{\f}_{\xiv^k}\rbr{\x^k}}^2
        + 4\gamma_1\gamma_2\norm*{\overline{\f}_{\xiv^k}\rbr{\x^k} - \widehat{\f}_{\xiv^k}\rbr{\x^k}}^2,
    \end{eqnarray*}
    that leads to the following inequality
    \begin{eqnarray*}
        \lefteqn{2\gamma_2 \inp*{\F\rbr{\widetilde{\x}^k} }{\widetilde{\x}^k-\u}}
        \\
        &\leq& \norm*{{\x}^k - \u}^2 - \norm*{{\x}^{k+1} - \u}^2  
        +  2\gamma_2^2\norm*{\widehat{\f}_{\etav^k}\rbr{\widetilde{\x}^k} - \overline{\f}_{\etav^k}\rbr{\widetilde{\x}^k}}^2
        \\
        && +2\gamma_2 \inp*{\F\rbr{\widetilde{\x}^k} - \overline{\f}_{\etav^k}\rbr{\widetilde{\x}^k} }{\widetilde{\x}^k-\u} 
        - 2\gamma_2 \inp*{ \widehat{\f}_{\etav^k}\rbr{\widetilde{\x}^k} - \overline{\f}_{\etav^k}\rbr{\widetilde{\x}^k} }{{\x}^k-\u}
        \\
        && + 4\gamma_1\gamma_2\rbr*{\norm*{\overline{\f}_{\etav^k}\rbr{\widetilde{\x}^k} - \F\rbr{\widetilde{\x}^k}}^2 + \norm*{\F\rbr{\x^k} - \overline{\f}_{\xiv^k}\rbr{\x^k}}^2
        + \norm*{\overline{\f}_{\xiv^k}\rbr{\x^k} - \widehat{\f}_{\xiv^k}\rbr{\x^k}}^2}.
    \end{eqnarray*}
    
    Assumption~\ref{as:monotonicity} implies that
    \begin{equation}
        \inp*{ \F\rbr*{{\u}} }{\widetilde{\x}^k-\u} \leq \inp*{ \F\rbr{\widetilde{\x}^k}  }{{\widetilde\x}^k-\u} 
    \end{equation}
    and consequently by Jensen inequality 
    \begin{eqnarray*}
        \lefteqn{2\gamma_2 K\inp*{ \F\rbr*{{\u}} }{\overline{\x}^K -\u}}
        \\
        &\leq& \norm*{{\x}^0 - \u}^2 
        +  2\gamma_2^2 \sum\limits_{k=0}^{K-1}\norm*{\widehat{\f}_{\etav^k}\rbr{\widetilde{\x}^k} - \overline{\f}_{\etav^k}\rbr{\widetilde{\x}^k}}^2
        \\
        && +2\gamma_2 \sum\limits_{k=0}^{K-1}\rbr*{\inp*{\F\rbr{\widetilde{\x}^k} - \overline{\f}_{\etav^k}\rbr{\widetilde{\x}^k} }{\widetilde{\x}^k-\u} 
        - \inp*{ \widehat{\f}_{\etav^k}\rbr{\widetilde{\x}^k} - \overline{\f}_{\etav^k}\rbr{\widetilde{\x}^k} }{{\x}^k-\u}}
        \\
        && + 4\gamma_1\gamma_2\sum\limits_{k=0}^{K-1}\rbr*{\norm*{\overline{\f}_{\etav^k}\rbr{\widetilde{\x}^k} - \F\rbr{\widetilde{\x}^k}}^2 + \norm*{\F\rbr{\x^k} - \overline{\f}_{\xiv^k}\rbr{\x^k}}^2
        + \norm*{\overline{\f}_{\xiv^k}\rbr{\x^k} - \widehat{\f}_{\xiv^k}\rbr{\x^k}}^2},
    \end{eqnarray*}
    where $\overline{\x}^K = \frac{1}{K}\sum\limits_{k=0}^{K-1} \widetilde{\x}^k$.
        
    Then maximization in $\u$ gives
    \begin{eqnarray*}
        \lefteqn{2\gamma_2 K \gap_{B_R(x^*)}\rbr*{\overline{\x}^K}}
        \\
        &\leq& \max\limits_{u\in B_R(x^*)}\norm*{{\x}^0 - \u}^2 + 2\gamma_2^2 \sum\limits_{k=0}^{K-1}\norm*{\widehat{\f}_{\etav^k}\rbr{\widetilde{\x}^k} - \overline{\f}_{\etav^k}\rbr{\widetilde{\x}^k}}^2
        \\
        && + 4\gamma_1\gamma_2\sum\limits_{k=0}^{K-1}\rbr*{\norm*{\overline{\f}_{\etav^k}\rbr{\widetilde{\x}^k} - \F\rbr{\widetilde{\x}^k}}^2 + \norm*{\F\rbr{\x^k} - \overline{\f}_{\xiv^k}\rbr{\x^k}}^2
        + \norm*{\overline{\f}_{\xiv^k}\rbr{\x^k} - \widehat{\f}_{\xiv^k}\rbr{\x^k}}^2}
        \\
        && +2\gamma_2 \max\limits_{u\in B_R(x^*)}\sum\limits_{k=0}^{K-1}\inp*{\F\rbr{\widetilde{\x}^k} - \overline{\f}_{\etav^k}\rbr{\widetilde{\x}^k} }{\widetilde{\x}^k-\u} 
        \\
        && + 2\gamma_2 \max\limits_{u\in B_R(x^*)} \sum\limits_{k=0}^{K-1}\inp*{\overline{\f}_{\etav^k}\rbr{\widetilde{\x}^k} - \widehat{\f}_{\etav^k}\rbr{\widetilde{\x}^k}}{{\x}^k-\u}.
    \end{eqnarray*}

    By Lemma~\ref{lem:check-average-err} 
    \begin{equation}
        2\gamma_2^2 \E\rbr*{\sum\limits_{k=0}^{K-1}\norm*{\widehat{\f}_{\etav^k}\rbr{\widetilde{\x}^k} - \overline{\f}_{\etav^k}\rbr{\widetilde{\x}^k}}^2} \leq  2\gamma_2^2 \rho^2\sum\limits_{k=0}^{K-1}\E\sbr*{\indicator_{k}} \leq 2\gamma_2^2 \rho^2 \frac{\n\bn}{\mn} \overset{\eqref{eq:seg-cc-gamma-2}}{\leq} \frac{R^2}{32}.
    \end{equation}
    and
    \begin{equation}
        4\gamma_1\gamma_2\E\rbr*{\sum\limits_{k=0}^{K-1}\norm*{\overline{\f}_{\xiv^k}\rbr{\x^k} - \widehat{\f}_{\xiv^k}\rbr{\x^k}}^2} \leq 4\gamma_1\gamma_2 \rho^2\sum\limits_{k=0}^{K-1}\E\sbr*{\indicator_{k-\frac{1}{2}}}  
        \\
        \leq 4\gamma_1\gamma_2 \rho^2 \frac{\n\bn}{\mn}
        \overset{\eqref{eq:seg-cc-gamma-1},\eqref{eq:seg-cc-gamma-2}}{\leq}  \frac{R^2}{5}
    \end{equation}

    By Corollary~\ref{as:UBV_and_quadr_growth}
    \begin{equation}
        4\gamma_1\gamma_2\E\rbr*{\sum\limits_{k=0}^{K-1}\rbr*{\norm*{\overline{\f}_{\etav^k}\rbr{\widetilde{\x}^k} - \F\rbr{\widetilde{\x}^k}}^2 + \norm*{\F\rbr{\x^k} - \overline{\f}_{\xiv^k}\rbr{\x^k}}^2}} \leq \frac{8\gamma_1\gamma_2 K}{\n-2\bn-\mn}  \overset{\eqref{eq:seg-cc-gamma-1},\eqref{eq:seg-cc-gamma-2}}{\leq} \frac{R^2}{4}.
    \end{equation}

    \begin{eqnarray*}
        \lefteqn{2\gamma_2 \max\limits_{u\in B_R(x^*)}\sum\limits_{k=0}^{K-1} \inp*{\overline{\f}_{\etav^k}\rbr{\widetilde{\x}^k} - \widehat{\f}_{\etav^k}\rbr{\widetilde{\x}^k}}{{\x}^k-\u}}
        \\
        &\leq& 2\gamma_2 \max\limits_{u\in B_R(x^*)} \sum\limits_{k=0}^{K-1}\inp*{\overline{\f}_{\etav^k}\rbr{\widetilde{\x}^k} - \widehat{\f}_{\etav^k}\rbr{\widetilde{\x}^k}}{{\x}^k-\x^*} 
        \\
        && + 2\gamma_2 \max\limits_{u\in B_R(x^*)} \sum\limits_{k=0}^{K-1}\inp*{\overline{\f}_{\etav^k}\rbr{\widetilde{\x}^k} - \widehat{\f}_{\etav^k}\rbr{\widetilde{\x}^k}}{{\x}^*-\u} 
        \\
        &\leq& 2\gamma_2\sum\limits_{k=0}^{K-1}\norm*{{\x}^k - \x^*} \norm*{ \widehat{\f}_{\etav^k}\rbr{\widetilde{\x}^k} - \overline{\f}_{\etav^k}\rbr{\widetilde{\x}^k} } 
        \\
        && + 2\gamma_2\max\limits_{u\in B_R(x^*)}\sum\limits_{k=0}^{K-1}\norm*{{\x}^* - \u} \norm*{ \widehat{\f}_{\etav^k}\rbr{\widetilde{\x}^k} - \overline{\f}_{\etav^k}\rbr{\widetilde{\x}^k} }
        \\
        & \oset{Lemma~\ref{lem:check-average-err}}{\le}& 6\gamma_2 \rho R_0\indicator_{k}.
    \end{eqnarray*}

    Taking the full expectation of both sides of the result of the previous chain we derive
    \begin{equation*}
        2\gamma_2 \E\max\limits_{u\in B_R(x^*)}\sum\limits_{k=0}^{K-1} \inp*{\overline{\f}_{\etav^k}\rbr{\widetilde{\x}^k} - \widehat{\f}_{\etav^k}\rbr{\widetilde{\x}^k}}{{\x}^k-\u}
        \leq 6\gamma_2 \rho R_0\E\indicator_{k} \leq 6\gamma_2 \rho R_0 \frac{\n\bn}{\mn} \overset{\eqref{eq:seg-cc-gamma-2}}{\leq} \frac{3}{4}R^2.
    \end{equation*}
    Now the last term
    \begin{equation}
        2\gamma_2 \max\limits_{u\in B_R(x^*)}\sum\limits_{k=0}^{K-1}\inp*{\F\rbr{\widetilde{\x}^k} - \overline{\f}_{\etav^k}\rbr{\widetilde{\x}^k} }{\widetilde{\x}^k-\u} 
    \end{equation}

    Following~\cite{beznosikov2022stochastic} one can derive the bound for the next term:
    \begin{eqnarray*}
        \E\left[\sum_{k=0}^{K-1} \langle \F\rbr{\widetilde{\x}^k} - \overline{\f}_{\etav^k}\rbr{\widetilde{\x}^k}, \widetilde\x^k \rangle\right] &=& \E\left[\sum_{k=0}^{K-1} \langle \E[\F\rbr{\widetilde{\x}^k} - \overline{\f}_{\etav^k}\rbr{\widetilde{\x}^k} \mid \widetilde\x^k], \widetilde\x^k \rangle\right] = 0,\\
        \E\left[\sum_{k=0}^{K-1} \langle \F\rbr{\widetilde{\x}^k} - \overline{\f}_{\etav^k}\rbr{\widetilde{\x}^k}, \x^0 \rangle\right] &=& \sum\limits_{k=0}^{K-1} \left\langle \E[\F\rbr{\widetilde{\x}^k} - \overline{\f}_{\etav^k}\rbr{\widetilde{\x}^k}], \x^0\right\rangle = 0,
    \end{eqnarray*}
    we have
    \begin{eqnarray*}
         \lefteqn{2\gamma_2 \E\left[\max_{\u \in B_R(x^*)}\sum\limits_{k=0}^{K-1} \langle \F\rbr{\widetilde{\x}^k} -  \overline{\f}_{\etav^k}\rbr{\widetilde{\x}^k}, \widetilde{\x}^k - \u \rangle \right]}
        \\
        &=& 2\gamma_2 \E\left[\sum_{k=0}^{K-1} \langle \F\rbr{\widetilde{\x}^k} - \overline{\f}_{\etav^k}\rbr{\widetilde{\x}^k}, \widetilde{\x}^k \rangle\right]
        \\
        &&
        + 2\gamma_2 \E\left[\max_{\u \in B_R(x^*)}\sum\limits_{k=0}^{K-1} \langle \F\rbr{\widetilde{\x}^k} - \overline{\f}_{\etav^k}\rbr{\widetilde{\x}^k}, - \u \rangle \right]
        \\
         &=& 2\gamma_2 \E\left[\max_{\u \in B_R(x^*)}\sum\limits_{k=0}^{K-1} \langle \F\rbr{\widetilde{\x}^k} - \overline{\f}_{\etav^k}\rbr{\widetilde{\x}^k}, - \u \rangle \right]
        \\
        &=& 2\gamma_2 \E\left[\sum_{k=0}^{K-1} \langle \F\rbr{\widetilde{\x}^k} - \overline{\f}_{\etav^k}\rbr{\widetilde{\x}^k}, \x^0 \rangle\right]
        \\
        && +  2\gamma_2 \E\left[\max_{\u \in B_R(x^*)}\sum\limits_{k=0}^{K-1} \langle \F\rbr{\widetilde{\x}^k} - \overline{\f}_{\etav^k}\rbr{\widetilde{\x}^k}, - \u \rangle \right]
        \\
        &=& 2\gamma_2 K\E\left[\max\limits_{\u \in B_R(x^*)}\left\langle \frac{1}{K}\sum\limits_{k=0}^{K-1}(\F\rbr{\widetilde{\x}^k} - \overline{\f}_{\etav^k}\rbr{\widetilde{\x}^k}), \x^0 - \u \right\rangle\right]
        \\
        &\oset{\eqref{eq:fenchel_young_inequality}}{\leq}& 2\gamma_2 K\E\left[\max\limits_{\u \in B_R(x^*)}\left\{\frac{\gamma_2 }{2}\left\| \frac{1}{K}\sum\limits_{k=0}^{K-1}(\F\rbr{\widetilde{\x}^k} - \overline{\f}_{\etav^k}\rbr{\widetilde{\x}^k}) \right\|^2 + \frac{1}{2\gamma_2}\|\x^0 - \u\|^2\right\}\right]\\
        &=& \gamma_2^2\E\left[\left\|\sum\limits_{k=0}^{K-1}(\F\rbr{\widetilde{\x}^k} - \overline{\f}_{\etav^k}\rbr{\widetilde{\x}^k}) \right\|^2\right] + \max\limits_{\u \in B_R(x^*)} \|\x^0 - \u\|^2.
    \end{eqnarray*}
    
    We notice that $\E[\F\rbr{\widetilde{\x}^k} - \overline{\f}_{\etav^k}\rbr{\widetilde{\x}^k}\mid \F\rbr*{\widetilde{\x}^0} - \overline{\f}_{\etav^0}\rbr*{\widetilde{\x}^0}, \ldots, \F\rbr*{\widetilde{\x}^{k-1}} - \overline{\f}_{\etav^{k-1}}\rbr*{\widetilde{\x}^{k-1}}] = 0$ for all $k \geq 1$, i.e., conditions of Lemma~\ref{lem:variance_lemma} are satisfied. Therefore, applying Lemma~\ref{lem:variance_lemma}, we get
    \begin{eqnarray} \label{temp_2_monotone_appendix_seg}
        \lefteqn{2\gamma_2 \E\left[\max_{\u \in B_R(x^*)}\sum\limits_{k=0}^{K-1} \langle \F\rbr{\widetilde{\x}^k} - \overline{\f}_{\etav^k}\rbr{\widetilde{\x}^k}, \widetilde{\x}^k - \u \rangle \right] }
        \\
        &\leq& \gamma_2^2\sum\limits_{k=0}^{K-1}\E[\|\F\rbr{\widetilde{\x}^k} - \overline{\f}_{\etav^k}\rbr{\widetilde{\x}^k} \|^2]
        \\
        && + \max\limits_{\u \in B_R(x^*)} \|\x^0 - \u\|^2
        \\
        &\leq& \frac{\gamma_2^2 K \sigma^2}{\n-2\bn-\mn}  + \max\limits_{\u \in B_R(x^*)} \|\x^0 - \u\|^2
        \\
        &\overset{\eqref{eq:seg-cc-gamma-1},\eqref{eq:seg-cc-gamma-2}}{\leq}& \frac{9}{8}R^2.
    \end{eqnarray}

    Assembling the above results together gives
    \begin{equation}
        2\gamma_2 K \E \gap_{B_R(x^*)}\rbr*{\overline{\x}^K} \leq \frac{R^2}{32} + \frac{R^2}{5} + \frac{R^2}{4} + \frac{3}{4}R^2 +\frac{9}{8} R^2 \leq 3R^2.
    \end{equation}
\end{proof}

\begin{corollary}\label{cor:seg-cc-mon}
    Let assumptions of Theorem~\ref{th:seg-cc-mon} hold. Then $\E \sbr*{\gap_{B_R(x^*)}\rbr*{\overline{\x}^K}}  \leq \varepsilon$ holds after 
    \begin{eqnarray*}
        K = \cO \rbr*{\frac{L R^2}{\varepsilon} + \frac{\sigma^2 R^2}{\n\varepsilon^2} + \frac{\sigma \n^2 R}{\mn \varepsilon}}
    \end{eqnarray*}
    iterations of \algname{SEG-CC}.
\end{corollary}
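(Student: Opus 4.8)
The plan is to derive this complexity bound directly from the gap estimate in Theorem~\ref{th:seg-cc-mon}, exactly mirroring the argument used for \algname{SGDA-CC} in Corollary~\ref{cor:sgda-cc-gap}. The starting point is the guarantee
\begin{equation*}
    \E\sbr*{\gap_{B_R(x^*)}\rbr*{\overline{\x}^K}} \leq \frac{3R^2}{2\gamma_2 K},
\end{equation*}
so the entire task reduces to substituting the prescribed stepsize $\gamma_2$ from~\eqref{eq:seg-cc-gamma-2} and reading off the iteration count needed to drive the right-hand side below $\varepsilon$.

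The first step is to upper bound $1/\gamma_2$. Since $\gamma_2 = \min\{a,b,c\}$ with the three candidate values from~\eqref{eq:seg-cc-gamma-2}, I would use $\tfrac{1}{\min\{a,b,c\}} = \max\{\tfrac1a,\tfrac1b,\tfrac1c\} \le \tfrac1a+\tfrac1b+\tfrac1c$ to write
\begin{equation*}
    \frac{1}{\gamma_2} \le 4L + \frac{8\rho\bn\n}{\mn R} + \frac{8\sigma\sqrt{K}}{\sqrt{\n-2\bn-\mn}\,R}.
\end{equation*}
Plugging this into the gap bound and distributing the factor $\tfrac{3R^2}{2K}$ yields three additive terms,
\begin{equation*}
    \E\sbr*{\gap_{B_R(x^*)}\rbr*{\overline{\x}^K}} \le \frac{6LR^2}{K} + \frac{12\rho\bn\n R}{\mn K} + \frac{12\sigma R}{\sqrt{(\n-2\bn-\mn)K}},
\end{equation*}
each of which has a clean dependence on $K$ (namely $K^{-1}$, $K^{-1}$, and $K^{-1/2}$ respectively).

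The second step is to force each of the three terms to be at most $\nicefrac{\varepsilon}{3}$. Solving the resulting inequalities gives the threshold
\begin{equation*}
    K = \max\rbr*{\frac{18LR^2}{\varepsilon},\; \frac{1296\sigma^2 R^2}{(\n-2\bn-\mn)\varepsilon^2},\; \frac{36\rho\bn\n R}{\mn\varepsilon}},
\end{equation*}
which guarantees $\E\sbr*{\gap_{B_R(x^*)}\rbr*{\overline{\x}^K}} \le \varepsilon$. Finally, I would invoke $\rho^2 = q\sigma^2$ with $q = \cO(1)$ (so $\rho = \cO(\sigma)$) from Lemma~\ref{lem:check-average-err}, together with the regime $\bn \le \tfrac{n}{4}$ and $\mn \ll \n$ (whence $\n-2\bn-\mn = \Theta(\n)$), to absorb the numerical and denominator factors into the $\cO(\cdot)$ and recover the stated bound $K = \cO\big(\tfrac{LR^2}{\varepsilon} + \tfrac{\sigma^2 R^2}{\n\varepsilon^2} + \tfrac{\sigma\n^2 R}{\mn\varepsilon}\big)$.

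I do not expect any genuine obstacle here: the argument is entirely mechanical once Theorem~\ref{th:seg-cc-mon} is granted, and it is structurally identical to Corollary~\ref{cor:sgda-cc-gap} with $\ell$ replaced by $L$ and the gap prefactor halved. The only points requiring minor care are correctly bounding the reciprocal of the three-way minimum and keeping track of which term produces the $\varepsilon^{-2}$ scaling (the variance term, via the $K^{-1/2}$ dependence) versus the $\varepsilon^{-1}$ scaling of the Lipschitz and Byzantine-correction terms.
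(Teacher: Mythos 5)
Your proposal is correct and follows essentially the same route as the paper's proof: bound $\nicefrac{1}{\gamma_2}$ by the sum of the three reciprocals from~\eqref{eq:seg-cc-gamma-2}, split the resulting gap bound into three terms, force each below $\nicefrac{\varepsilon}{3}$ (yielding the identical thresholds $\frac{18LR^2}{\varepsilon}$, $\frac{1296\sigma^2R^2}{(\n-2\bn-\mn)\varepsilon^2}$, $\frac{36\rho\bn\n R}{\mn\varepsilon}$), and absorb $\rho=\cO(\sigma)$, $\bn\le\nicefrac{\n}{4}$, $\mn\ll\n$ into the $\cO(\cdot)$. You even correctly retain the factor $L$ in the $\frac{6LR^2}{K}$ term, which the paper's intermediate display drops by an apparent typo.
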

\begin{proof}
    \begin{eqnarray*}
        \E \sbr*{\gap_{B_R(x^*)}\rbr*{\overline{\x}^K}} \le \frac{3 R^2}{2\gamma_2 K} &\leq& \frac{3 R^2}{2K} \rbr*{4L + \sqrt{\frac{64\sigma^2 K}{(\n-2\bn-\mn)R^2}} + \sqrt{\frac{64 \rho^2 \bn^2 \n^2 }{\mn^2R^2}}}
        \\
        &\leq& \frac{6 R^2}{K} + \sqrt{\frac{144\sigma^2 R^2}{(\n-2\bn-\mn)K}} + \frac{12 \rho \bn \n R}{\mn K}
    \end{eqnarray*}
    Let us chose $K$ such that each of the last three terms less or equal $\nicefrac{\varepsilon}{3}$, then
    \begin{eqnarray*}
        K = \max\rbr*{\frac{18L R^2}{\varepsilon},  \frac{144\cdot9\sigma^2 R^2}{(\n-2\bn-\mn)\varepsilon^2}, \frac{36 \rho \bn \n R}{\mn \varepsilon}},
    \end{eqnarray*}
    where $\rho^2 = q \sigma^2$ with $q = 2 \cc^2 + 12 + \frac{12}{\n-2\bn-\mn}$ and $C = \cO(1)$ by Lemma~\ref{lem:check-average-err}. The latter implies that
    \begin{equation*}
        \E \sbr*{\gap_{B_R(x^*)}\rbr*{\overline{\x}^K}}  \leq \varepsilon.
    \end{equation*}

    Using the definition of $\rho$ from Lemma~\ref{lem:check-average-err} and if $\bn \leq \frac{n}{4}$, $m << n$ the bound for $K$ can be easily derived.
\end{proof}

\subsection{Proofs for \algname{R-SEG-CC}} \label{appendix:r-seg-cc}
\subsubsection{Quasi Strongly Monotone Case}
\begin{algorithm}[H]
\caption{\algname{R-SEG-CC}}
\label{alg:r-seg-cc}
\begin{algorithmic}[1]
   \Require ${\x}^0$ -- starting point, $r$ -- number of restarts, $\{\gamma_t\}_{t=1}^r$ -- stepsizes for \hyperref[alg:seg-cc]{\algname{SEG-CC}}  (Alg.~\ref{alg:seg-cc}), $\{K_t\}_{t=1}^r$ -- number of iterations for \hyperref[alg:seg-cc]{\algname{SEG-CC}}  (Alg.~\ref{alg:seg-cc}),
   \State $\widehat{\x}^{0} = {\x}^0$
   \For{$t = 1,2,\ldots,r$}
    \State Run \hyperref[alg:seg-cc]{\algname{SEG-CC}}  (Alg.~\ref{alg:seg-cc}) for $K_t$ iterations with stepsize $\gamma_t$, starting point $\widehat{\x}^{t-1}$, 
    \State Define $\widehat{\x}^{t}$ as $\widehat{\x}^{t} = \frac{1}{K_t}\sum\limits_{k=0}^{K_t}{\x}^{k,t}$, where ${\x}^{0,t}, {\x}^{1,t}, \ldots, {\x}^{K_t,t}$ are the iterates produced by \hyperref[alg:seg-cc]{\algname{SEG-CC}} .
   \EndFor
   \Ensure $\widehat {\x}^r$
\end{algorithmic}
\end{algorithm}

    
\begin{theorem*}[Theorem~\ref{th:r-seg-cc} duplicate]
    Let Assumptions~\ref{ass:xi-var},~\ref{as:lipschitzness},~\ref{as:str_monotonicity} hold.   Then, after $r = \left\lceil\log_2\frac{R^2}{\varepsilon} \right\rceil - 1$ restarts  
    \algname{R-SEG-CC} (Algotithm~\ref{alg:r-seg-cc}) with 
    $\gamma_{1_t} = \min\left\{\frac{1}{2L}, \sqrt{\frac{(\gn-\bn-\mn)R^2}{16\sigma^2 2^t K_t}}, \sqrt{\frac{\mn R^2}{8q \sigma^2  2^t\bn \n }}\right\}$, $\gamma_{2_t} = \min\left\{\frac{1}{4L}, \sqrt{\frac{\mn^2R^2}{64q\sigma^2 2^t\bn^2 \n^2 }}, \sqrt{\frac{(\gn-\bn-\mn)R^2}{64\sigma^2 K_t}}\right\}$ and $K_t = \left\lceil\max\left\{\frac{8 L}{\mu}, \frac{16\n\sigma\bn\sqrt{q 2^t}}{\mn\mu R }, \frac{256\sigma^2 2^t}{(\gn-\bn-\mn)\mu^2 R^2}\right\} \right\rceil$,
    where $R \ge \|{\x}^0 - {\x}^*\|$ outputs $\widehat {\x}^r$ such that $\E \norm*{\widehat {\x}^r  - \x^*}^2 \le \varepsilon$.
    Moreover, the total number of executed iterations of \algname{SEG-CC} is
    \begin{equation}
        \sum\limits_{t=1}^rK_t = \cO\left(\frac{\ell}{\mu}\log\frac{\mu R_0^2}{\varepsilon} + \frac{\sigma^2 }{(\n-2\bn-\mn)\mu\varepsilon} + \frac{\n{\bn}\sigma }{\mn\sqrt{\mu\varepsilon}}\right). 
    \end{equation}
\end{theorem*}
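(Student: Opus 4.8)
The plan is to mirror the restart argument used for \algname{R-SGDA-CC} in the proof of Theorem~\ref{th:r-sgda-cc}, replacing the single-run guarantee of Theorem~\ref{th:sgda-cc-sum} by its \algname{SEG-CC} counterpart, Theorem~\ref{th:seg-cc-sum}. First I would observe that although Theorem~\ref{th:seg-cc-sum} is stated under Assumption~\ref{as:monotonicity}, its summation bound $\sum_{k=0}^{K-1}\E[\langle \F(\widetilde{\x}^k), \widetilde{\x}^k - \x^*\rangle] \le \nicefrac{R^2}{\gamma_2}$ and the bounded-trajectory estimate $\E[R_k^2]\le 2R$ use monotonicity only through the nonnegativity $\langle \F(\widetilde{\x}^k), \widetilde{\x}^k - \x^*\rangle \ge 0$. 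Under \eqref{eq:str_monotonicity} this holds directly, since $\langle \F(\widetilde{\x}^k), \widetilde{\x}^k - \x^*\rangle \ge \mu\|\widetilde{\x}^k - \x^*\|^2 \ge 0$; hence the same conclusions, with the same stepsize choices $\gamma_{1_t},\gamma_{2_t}$ from \eqref{eq:seg-cc-gamma-1}--\eqref{eq:seg-cc-gamma-2}, remain valid in the quasi-strongly monotone regime.

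Next I would convert the summation bound into a per-stage contraction. Applying \eqref{eq:str_monotonicity} termwise and Jensen's inequality to the average $\widehat{\x}^t$ of the iterates over stage $t$ gives $\mu\,\E\|\widehat{\x}^t - \x^*\|^2 \le \tfrac{1}{K_t}\sum_{k}\E[\langle \F(\widetilde{\x}^k),\widetilde{\x}^k-\x^*\rangle] \le \nicefrac{R^2}{(\gamma_{2_t}K_t)}$, so that $\E\|\widehat{\x}^t-\x^*\|^2 \le \nicefrac{R^2}{(\mu\gamma_{2_t}K_t)}$. The stepsizes and iteration counts are then chosen exactly so that $\mu\gamma_{2_t}K_t \ge 2$ whenever the stage is launched from a point at squared distance $\le \nicefrac{R^2}{2^{t-1}}$: the $2^t$ factors inside $\gamma_{2_t}$ and $K_t$ compensate the shrinking radius. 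With this, a straightforward induction on $t$ — base case from the first stage, inductive step by invoking Theorem~\ref{th:seg-cc-sum} conditionally on $\widehat{\x}^{t-1}$ (with random radius $\|\widehat{\x}^{t-1}-\x^*\|$) and then taking full expectation via the tower property — yields $\E\|\widehat{\x}^t-\x^*\|^2 \le \nicefrac{R^2}{2^t}$ for all $t$. Choosing $r = \lceil \log_2(\nicefrac{R^2}{\varepsilon})\rceil - 1$ then gives $\E\|\widehat{\x}^r-\x^*\|^2\le\varepsilon$.

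Finally, I would bound the total work $\sum_{t=1}^r K_t$ by substituting the explicit $K_t$, replacing $2^t \le 2^r = \cO(\nicefrac{R^2}{\varepsilon})$, and evaluating the geometric sums: the term scaling like $2^t$ sums to $\cO(2^r)$, the one scaling like $2^{t/2}$ (from the Byzantine $\nicefrac{n B}{m}$ contribution) sums to $\cO(2^{r/2})$, and the $\nicefrac{L}{\mu}$ term contributes the logarithmic factor $r$. Simplifying with $\rho^2 = q\sigma^2$, $q=\cO(1)$, $B\le\nicefrac{n}{4}$ and $m\ll n$ (as in the companion corollary) then produces the stated complexity. The main obstacle I anticipate is the cross-stage probabilistic bookkeeping: Theorem~\ref{th:seg-cc-sum} is proved with a deterministic bound $R\ge\|\x^0-\x^*\|$ driving its bounded-trajectory induction and with the expected violation count $\sum_k \E[\indicator_k]\le \nicefrac{nB}{m}$, and one must check that invoking it conditionally with the previous (random) output as the new starting point is legitimate and that the violation budget is not double-counted across restarts. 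The remaining effort — balancing the three-way $\min$ defining $\gamma_{1_t},\gamma_{2_t}$ so that every error term is dominated by the geometrically decaying radius — is routine once the single-stage contraction is established.
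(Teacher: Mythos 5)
Your proposal follows essentially the same route as the paper's own proof: the single-stage guarantee of Theorem~\ref{th:seg-cc-sum} with the stepsizes \eqref{eq:seg-cc-gamma-1}--\eqref{eq:seg-cc-gamma-2}, converted via \eqref{eq:str_monotonicity} and Jensen's inequality into the per-stage bound $\E\norm*{\widehat{\x}^{t}-\x^*}^2 \le \nicefrac{R^2}{(\mu\gamma_{2_t}K_t)}$, then the halving induction over restarts with the tower property, and the same geometric-sum accounting for $\sum_t K_t$. Your two additional remarks — that the monotonicity hypothesis of Theorem~\ref{th:seg-cc-sum} enters only through $\inp*{\F(\widetilde{\x}^k)}{\widetilde{\x}^k-\x^*}\ge 0$, which \eqref{eq:str_monotonicity} supplies directly, and that the conditional re-invocation with a random starting point and the shared violation budget $\nicefrac{\n\bn}{\mn}$ need checking — address points the paper silently elides, so your attempt is, if anything, more careful than the original.
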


\begin{proof}[Proof of Theorem~\ref{th:r-seg-cc}]
    $\overline{\x}^K = \frac{1}{K}\sum\limits_{k=0}^{K-1} \widetilde{\x}^k$
    \begin{eqnarray*}
        \mu\E\sbr*{\norm*{\overline{\x}^K - {\x}^*}^2} &=& \mu\E\sbr*{\norm*{\frac{1}{K}\sum_{k=0}^{K-1} \rbr*{\widetilde{\x}^k - {\x}^*}}^2} \leq \mu\E\sbr*{\frac{1}{K}\sum_{k=0}^{K-1}\norm*{\widetilde{\x}^k - {\x}^*}^2} 
        \\
        &=& \frac{\mu}{K}\sum_{k=0}^{K-1}\E\sbr*{\norm*{\widetilde{\x}^k - {\x}^*}^2} \overset{\eqref{eq:str_monotonicity}}
        {\leq}\frac{1}{K}\sum_{k=0}^{K-1}\E\sbr*{ \langle  \F(\widetilde{\x}^k), \widetilde{\x}^k-{\x}^* \rangle}.
    \end{eqnarray*}
    
    Theorem~\ref{th:seg-cc-sum} implies that \algname{SEG-CC} with 
    \begin{eqnarray*}
        \gamma_1 &=& \min\left\{\frac{1}{2L}, \sqrt{\frac{(\n-2\bn-\mn)R^2}{16\sigma^2 K}}, \sqrt{\frac{\mn R^2}{8 \rho^2 \bn \n }}\right\},
        \\
        \gamma_2 &=& \min\left\{\frac{1}{4L}, \sqrt{\frac{\mn^2R^2}{64 \rho^2 \bn^2 \n^2 }}, \sqrt{\frac{(\n-2\bn-\mn)R^2}{64\sigma^2 K}}\right\} ,
    \end{eqnarray*}
    
    guarantees
    \begin{equation*}
        \mu\E\sbr*{\norm*{\overline{\x}^K - {\x}^*}^2} \le \frac{R_0^2}{\gamma_2 K}
    \end{equation*}
    after $K$ iterations.
    
    After the first restart we have
    \begin{equation}
        \E\sbr*{\norm*{\widehat{\x}^1 - {\x}^*}^2} \le \frac{R_0^2}{\mu \gamma_{2_1} K_1} \le \frac{R_0^2}{2}, \notag
    \end{equation}
    since $\mu \gamma_{2_1} K_1 \geq 2$.
    
    Next, assume that we have $\E[\|\widehat {\x}^t - {\x}^*\|^2] \le \frac{R_0^2}{2^t}$ for some $t \le r-1$. Then, Theorem~\ref{th:seg-cc-sum} implies that
    \begin{equation*}
        \E\sbr*{\norm*{\widehat{\x}^{t+1} - {\x}^*}^2 \mid \widehat {\x}^t} \le \frac{\|\widehat{\x}^t - {\x}^*\|^2}{\mu \gamma_{2_t} K_t}.
    \end{equation*}
    Taking the full expectation from the both sides of previous inequality we get
    \begin{equation*}
        \E\sbr*{\norm*{\widehat{\x}^{t+1} - {\x}^*}^2} \le \frac{\E[\|\widehat{\x}^t - {\x}^*\|^2]}{\mu \gamma_{2_t} K_t} \le \frac{R_0^2}{2^t\mu\gamma_{2_t} K_t} \le \frac{R_0^2}{ 2^{t+1}}.
    \end{equation*}
    Therefore, by mathematical induction we have that for all $t=1,\ldots,r$
    \begin{equation*}
        \E\left[\|\widehat {\x}^t - {\x}^*\|^2\right] \le \frac{R_0^2}{2^t}.
    \end{equation*}
    Then, after $r = \left\lceil\log_2\frac{R_0^2}{ \varepsilon} \right\rceil - 1$ restarts of \algname{SEG-CC} we have $\E\left[\|\widehat {\x}^r - {\x}^*\|^2\right] \le \varepsilon$. The total number of iterations executed by \algname{SEG-CC} is
    \begin{eqnarray*}
        \sum\limits_{t=1}^r K_t &=& \cO\left(\sum\limits_{t=1}^r\max\left\{\frac{L}{\mu}, \frac{\sigma^2 2^t}{(\n-2\bn-\mn)\mu^2 R_0^2}, \frac{\n{\bn}\rho 2^{\frac{t}{2}}}{\mn\mu R_0}\right\}\right)
        \\
        &=& \cO\left(\frac{L}{\mu}r + \frac{\sigma^2 2^r}{(\n-2\bn-\mn)\mu^2 R_0^2} + \frac{\n{\bn}\rho 2^{\frac{r}{2}}}{\mn\mu R_0}\right)
        \\
        &=& \cO\left(\frac{L}{\mu}\log\frac{\mu R_0^2}{\varepsilon} + \frac{\sigma^2}{(\n-2\bn-\mn)\mu^2 R_0^2}\cdot \frac{\mu R_0^2}{\varepsilon} + \frac{\n{\bn}\rho }{\mn\mu R_0}\cdot \sqrt{\frac{\mu R_0^2}{\varepsilon}}\right)
        \\
        &=& \cO\left(\frac{L}{\mu}\log\frac{\mu R_0^2}{\varepsilon} + \frac{\sigma^2 }{(\n-2\bn-\mn)\mu\varepsilon} + \frac{\n{\bn}\rho }{\mn\sqrt{\mu\varepsilon}}\right),
    \end{eqnarray*}
    that together with $\rho^2 = q \sigma^2$ with $q = 2 \cc^2 + 12 + \frac{12}{\n-2\bn-\mn}$ and $C = \cO(1)$ given by Lemma~\ref{lem:check-average-err} implies the result of the theorem.
\end{proof}

\begin{corollary} \label{cor:r-seg-cc}
Let assumptions of~\ref{th:r-seg-cc} hold. Then $\E\left[\|\widehat {\x}^r - {\x}^*\|^2\right] \le \varepsilon$ holds after 
    \begin{equation}
        \sum\limits_{t=1}^r K_t = \cO\left(\frac{L}{\mu}\log\frac{\mu R^2}{\varepsilon} + \frac{\sigma^2 }{\n\mu\varepsilon} + \frac{\n^2\sigma }{\mn\sqrt{\mu\varepsilon}}\right)
    \end{equation}
    iterations of \algname{SEG-CC}.
\end{corollary}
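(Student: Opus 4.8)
The plan is to obtain Corollary~\ref{cor:r-seg-cc} as an immediate simplification of Theorem~\ref{th:r-seg-cc}: the complexity bound proved there is already in essentially the target shape, so the only task is to bound the Byzantine-dependent constants by their worst-case orders under the standing regime $\bn \le \nicefrac{\n}{4}$ and $\mn \ll \n$, and to absorb the aggregation-error constant into $\cO(\cdot)$. First I would invoke Theorem~\ref{th:r-seg-cc}, which guarantees $\E\norm*{\widehat{\x}^r - \x^*}^2 \le \varepsilon$ after $r = \lceil\log_2(\nicefrac{R^2}{\varepsilon})\rceil - 1$ restarts with total inner-iteration count
\begin{equation*}
    \sum_{t=1}^r K_t = \cO\!\left(\frac{L}{\mu}\log\frac{\mu R^2}{\varepsilon} + \frac{\sigma^2}{(\n-2\bn-\mn)\mu\varepsilon} + \frac{\n\bn\rho}{\mn\sqrt{\mu\varepsilon}}\right),
\end{equation*}
where $\rho$ is the recomputation-error radius furnished by Lemma~\ref{lem:check-average-err}.

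The first substitution uses $\rho^2 = q\sigma^2$ with $q = 2\cc^2 + 12 + \tfrac{12}{\n-2\bn-\mn}$. Because the acceptance-threshold constant obeys $\cc = \cO(1)$ and $\n-2\bn-\mn \ge 1$, the factor $q$ is $\cO(1)$, so $\rho = \cO(\sigma)$ and the third term becomes $\cO(\nicefrac{\n\bn\sigma}{(\mn\sqrt{\mu\varepsilon})})$. Next I would simplify the worker counts: under $\bn \le \nicefrac{\n}{4}$ and $\mn \ll \n$ the denominator of the variance term satisfies $\n-2\bn-\mn = \Theta(\n)$, turning the second term into the collaborative $\nicefrac{1}{\n}$-accelerated expression $\cO(\nicefrac{\sigma^2}{(\n\mu\varepsilon)})$; and using the crude bound $\n\bn \le \n^2$ the third term becomes $\cO(\nicefrac{\n^2\sigma}{(\mn\sqrt{\mu\varepsilon})})$. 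The logarithmic (condition-number) term and the restart count $r$ are untouched, so reassembling the three terms reproduces exactly the asserted
\begin{equation*}
    \sum_{t=1}^r K_t = \cO\!\left(\frac{L}{\mu}\log\frac{\mu R^2}{\varepsilon} + \frac{\sigma^2}{\n\mu\varepsilon} + \frac{\n^2\sigma}{\mn\sqrt{\mu\varepsilon}}\right).
\end{equation*}

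There is no genuine analytic difficulty here, as this is a cosmetic restatement of the preceding theorem rather than a new argument; the only points deserving care are verifying that $q$ stays $\cO(1)$ (which is what makes $\rho$ and $\sigma$ interchangeable up to constants) and that $\n-2\bn-\mn$ remains strictly positive and of order $\n$ in the assumed regime. This positivity is precisely what both licenses the $\Theta(\n)$ replacement in the variance term and implicitly keeps the stepsizes $\gamma_{1_t},\gamma_{2_t}$ and iteration counts $K_t$ of the restart schedule well-defined. I would close by noting that the simplification is identical in spirit to the one used for Corollary~\ref{cor:r-sgda-cc}.
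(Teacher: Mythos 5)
Your proposal is correct and matches the paper's own (very terse) proof: the paper likewise just substitutes $\rho^2 = q\sigma^2$ with $q = \cO(1)$ from Lemma~\ref{lem:check-average-err} into the bound of Theorem~\ref{th:r-seg-cc}, and uses $\bn \le \nicefrac{\n}{4}$ and $\mn \ll \n$ to replace $\n-2\bn-\mn$ by $\Theta(\n)$ and $\n\bn$ by $\cO(\n^2)$. Your write-up simply makes explicit the bookkeeping the paper leaves to the reader, exactly as in Corollary~\ref{cor:r-sgda-cc}.
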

\begin{proof}
    Using the definition of $\rho$ from Lemma~\ref{lem:check-average-err} and if $\bn \leq \frac{n}{4}$, $\mn << \n$ the bound for $\sum\limits_{t=1}^r K_t$ can be easily derived.
\end{proof}

\clearpage

\section{Extra Experiments and Experimental details}\label{appendix:experiments}


\subsection{Quadratic games}
Firstly, let us clarify notations made in~\eqref{quadratic-op}:
\begin{gather*}
    \x = \begin{pmatrix}y \\ z\end{pmatrix},\quad \mA_i = \begin{pmatrix}\mA_{1,i} & \mA_{2,i} \\ -\mA_{2,i} & \mA_{3,i} \end{pmatrix}, \quad b_i = \begin{pmatrix}b_{1,i} \\ b_{2,i}\end{pmatrix},
\end{gather*}

with symmetric matrices $\mA_{j,i}$ s.t. $\mu \mI \preccurlyeq \mA_{j,i} \preccurlyeq \ell \mI$, $\mA_i \in \R^{d\times d}$ and $b_i \in \R^d$.

\paragraph{Data generation.} For each $j$ we sample real random matrix $\mB_i$ with elements sampled from a normal distribution. Then we compute the eigendecomposition and obtain the following $\mB_i = \mU^T_i \mD_i \mU_i$ with diagonal $\mD_i$. Next, we scale $\mD_i$ and obtain $\widehat \mD_i$ with elements lying in $[\mu, \ell]$. Finally we compose $\mA_{j,i}$ as $\mA_{j,i} = \mU^T_i \widehat \mD_i \mU_i$. This process ensures that eigenvalues of $\mA_{j,i}$ all lie between $\mu$ and $\ell$, and thus $\F(\x)$ is strongly monotone and cocoercive. Vectors $b_i \in \R^{d}$ are sampled from a normal distribution with variance $\nicefrac{10}{d}$.

\paragraph{Experimental setup.}
For all the experiments we choose $\ell=100$, $\mu=0.1$, $s=1000$ and $d=50$. 

For the experiments presented in the Appendix we simulate $\n = 20$  nodes on a single machine and $\bn=4$. For methods with checks of computations the only one peer attacks per iteration.

We used RFA with 5 buckets bucketing as an aggregator since it showed the best performance. For approximating the median we used Weiszfeld's method with $10$ iterations and parameter $\nu=0.1$ \cite{pillutla2019robust}.

RDEG~\citep{adibi2022distributed} provably works only if $\n \geq 100$ we manually selected parameter $\epsilon=0.5$ using a grid-search and picking the best performing value.

We present experiments with different attack (bit flipping (BF), random noise (RN), inner product manipulation (IPM)~\cite{xie2019fall} and ``a little is enough'' (ALIE)~\cite{baruch2019little}) and different batchsizes (bs) 1, 10 and 100. If an attack has a parameter it is specified in the brackets on each plot.
\paragraph{No checks.} Firstly we provide a detailed comparison between methods that do not check computation with fixed learning rate value $\gamma=3.3e-5$. Code for quadratic games is available at \url{https://github.com/nazya/sgda-ra}\footnote{Code is based on \url{https://github.com/hugobb/sgda}}.

\begin{figure}[H]
    \centering
    \begin{minipage}[htp]{0.32\textwidth}
        \centering
        \includegraphics[width=1\textwidth]{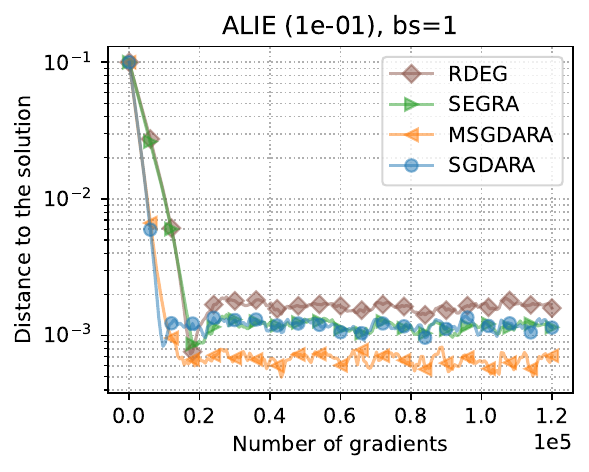}
    \end{minipage}
    \centering
    \begin{minipage}[htp]{0.32\textwidth}
        \centering
        \includegraphics[width=1\textwidth]{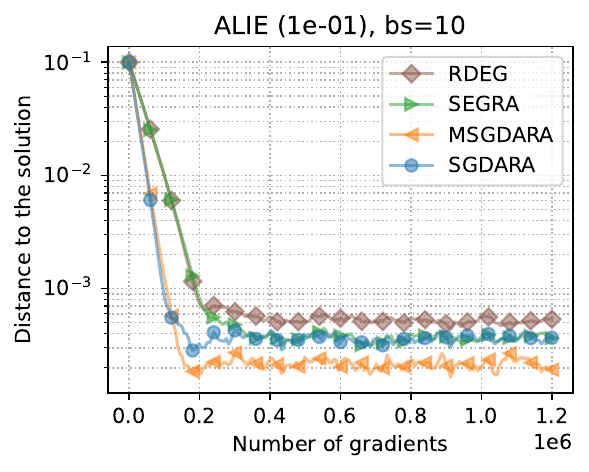}
    \end{minipage}
    \centering
    \begin{minipage}[htp]{0.32\textwidth}
        \centering
        \includegraphics[width=1\textwidth]{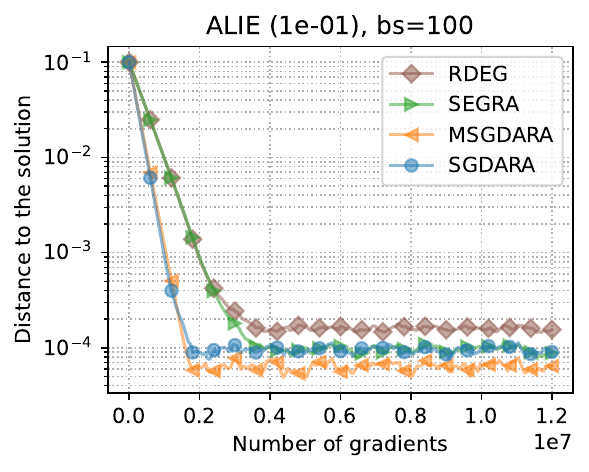}
    \end{minipage}
    \centering
    \begin{minipage}[htp]{0.32\textwidth}
        \centering
        \includegraphics[width=1\textwidth]{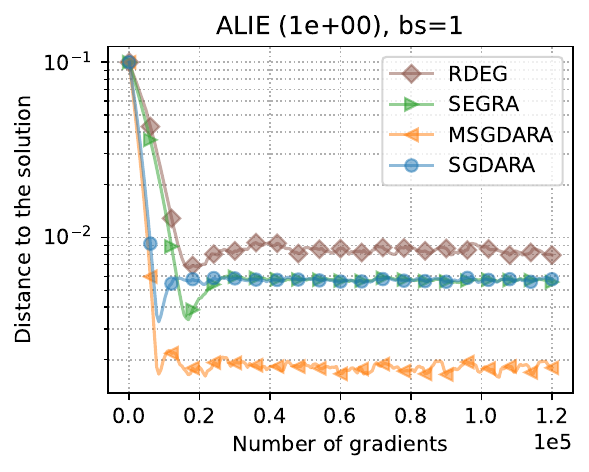}
    \end{minipage}
    \centering
    \begin{minipage}[htp]{0.32\textwidth}
        \centering
        \includegraphics[width=1\textwidth]{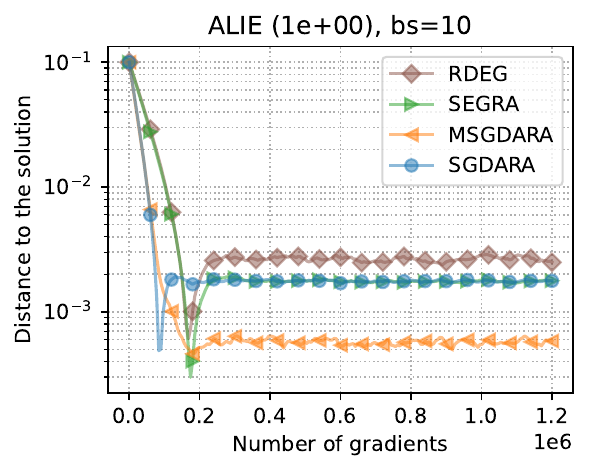}
    \end{minipage}
    \centering
    \begin{minipage}[htp]{0.32\textwidth}
        \centering
        \includegraphics[width=1\textwidth]{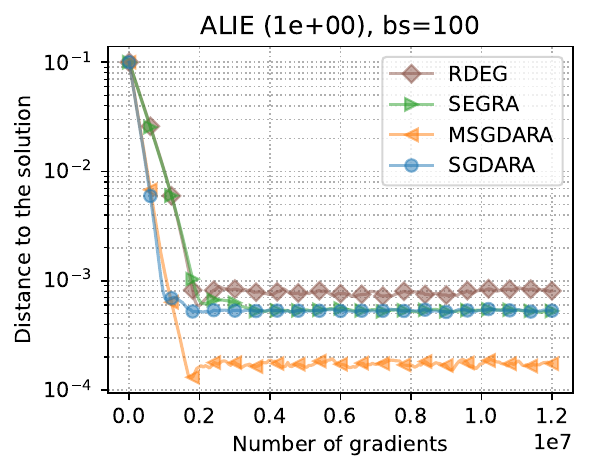}
    \end{minipage}
    \centering
    \begin{minipage}[htp]{0.32\textwidth}
        \centering
        \includegraphics[width=1\textwidth]{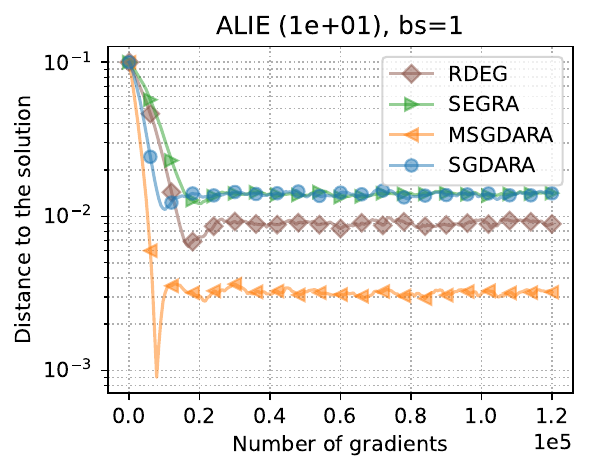}
    \end{minipage}
    \centering
    \begin{minipage}[htp]{0.32\textwidth}
        \centering
        \includegraphics[width=1\textwidth]{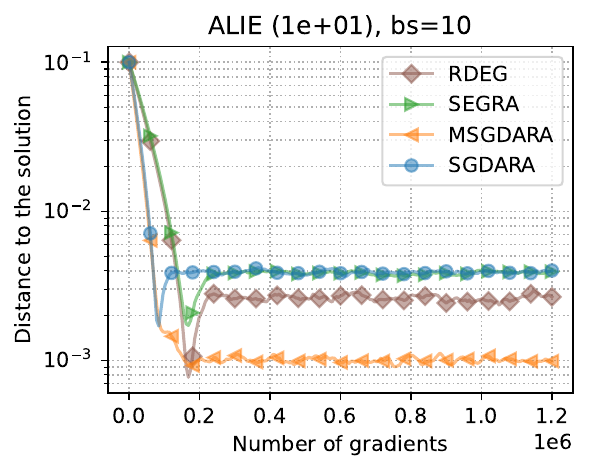}
    \end{minipage}
    \centering
    \begin{minipage}[htp]{0.32\textwidth}
        \centering
        \includegraphics[width=1\textwidth]{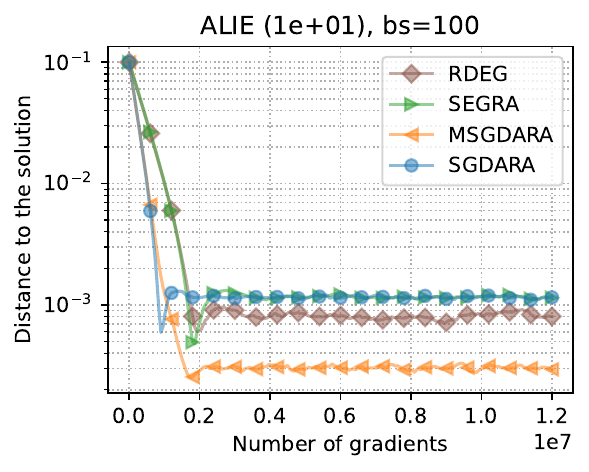}
    \end{minipage}
    \centering
    \begin{minipage}[htp]{0.32\textwidth}
        \centering
        \includegraphics[width=1\textwidth]{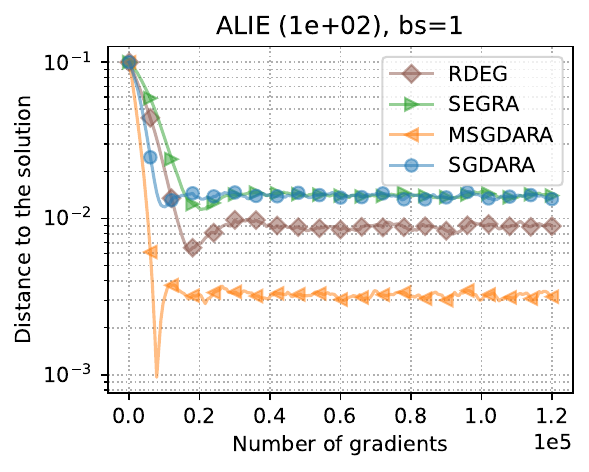}
    \end{minipage}
    \centering
    \begin{minipage}[htp]{0.32\textwidth}
        \centering
        \includegraphics[width=1\textwidth]{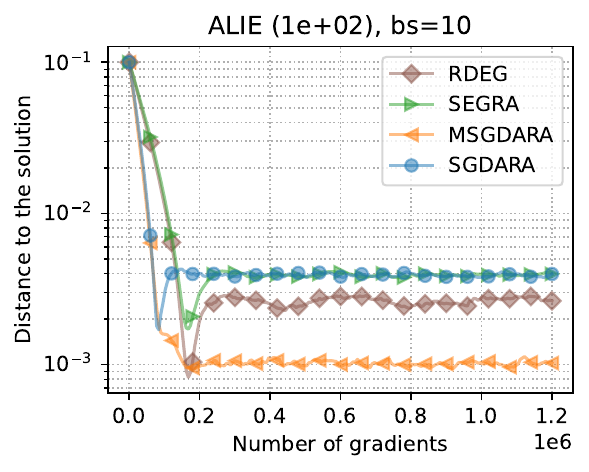}
    \end{minipage}
    \centering
    \begin{minipage}[htp]{0.32\textwidth}
        \centering
        \includegraphics[width=1\textwidth]{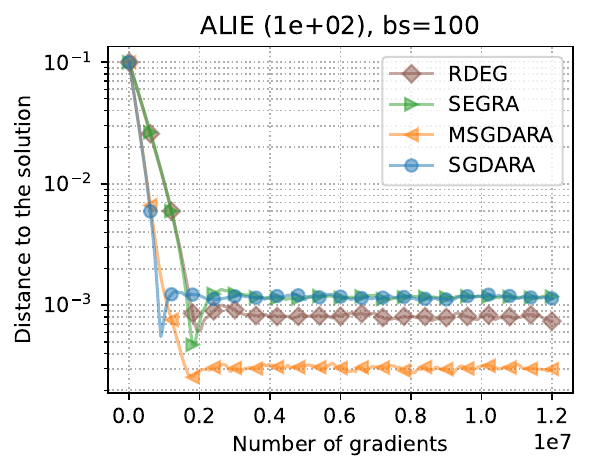}
    \end{minipage}
    \caption{Distance to the solution ALIE attack with various of parameter values and batchsizes.}
    \vspace{-0.6cm}%
\end{figure}
\begin{figure}[H]
    \centering
    \begin{minipage}[htp]{0.32\textwidth}
        \centering
        \includegraphics[width=1\textwidth]{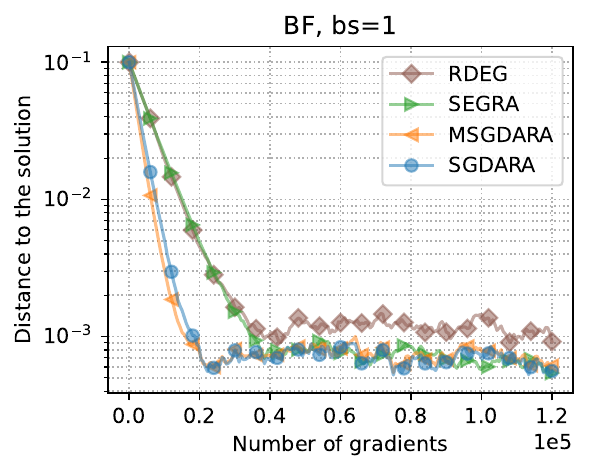}
    \end{minipage}
    \centering
    \begin{minipage}[htp]{0.32\textwidth}
        \centering
        \includegraphics[width=1\textwidth]{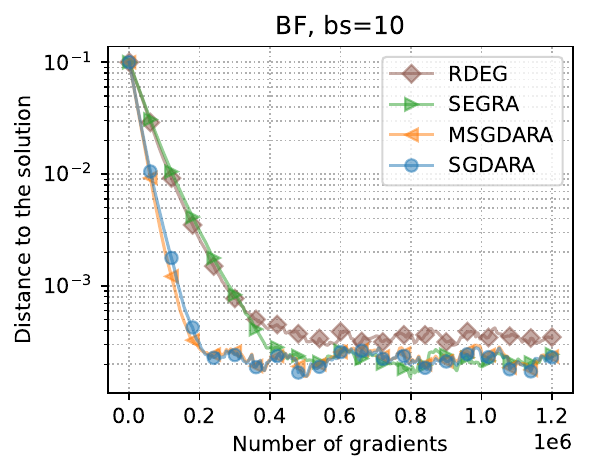}
    \end{minipage}
    \centering
    \begin{minipage}[htp]{0.32\textwidth}
        \centering
        \includegraphics[width=1\textwidth]{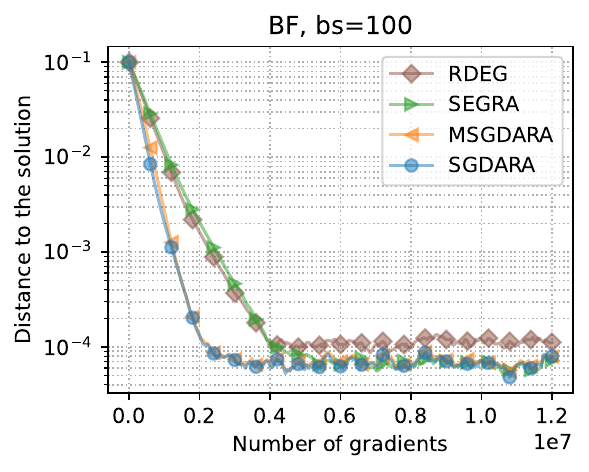}
    \end{minipage}
    \caption{Distance to the solution BF attack with various batchsizes.}
    \vspace{-0.6cm}%
\end{figure}

\begin{figure}[H]
    \centering
    \begin{minipage}[htp]{0.32\textwidth}
        \centering
        \includegraphics[width=1\textwidth]{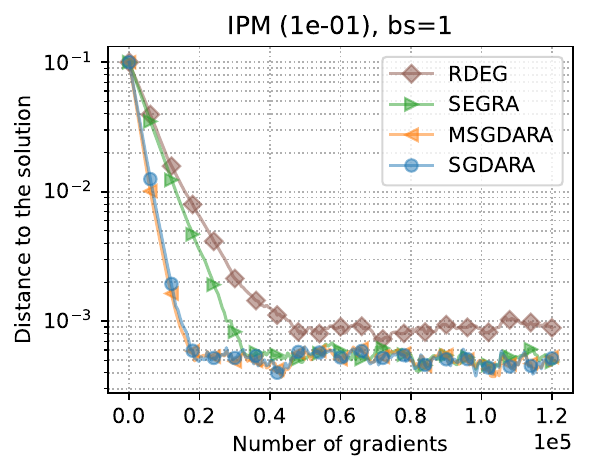}
    \end{minipage}
    \centering
    \begin{minipage}[htp]{0.32\textwidth}
        \centering
        \includegraphics[width=1\textwidth]{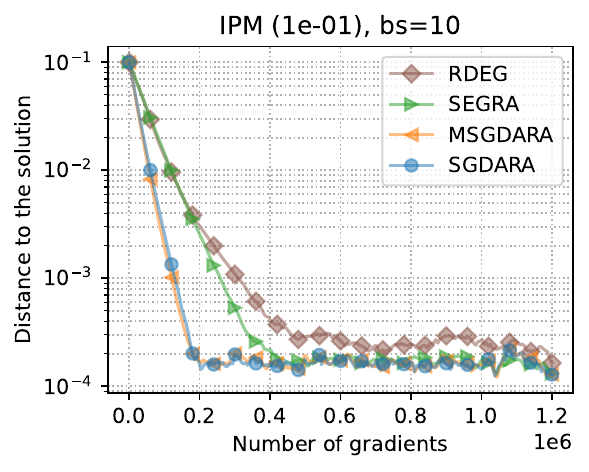}
    \end{minipage}
    \centering
    \begin{minipage}[htp]{0.32\textwidth}
        \centering
        \includegraphics[width=1\textwidth]{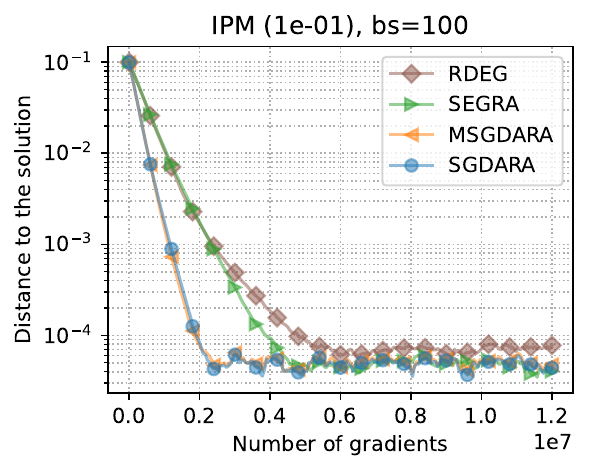}
    \end{minipage}
    \centering
    \begin{minipage}[htp]{0.32\textwidth}
        \centering
        \includegraphics[width=1\textwidth]{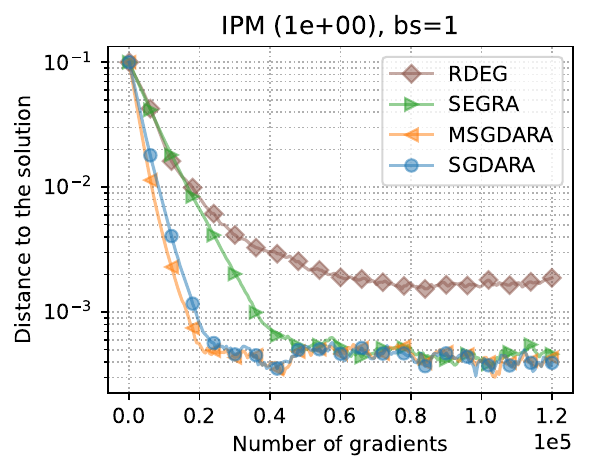}
    \end{minipage}
    \centering
    \begin{minipage}[htp]{0.32\textwidth}
        \centering
        \includegraphics[width=1\textwidth]{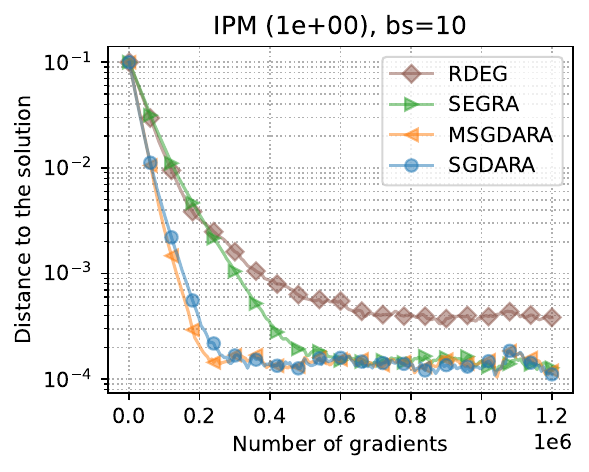}
    \end{minipage}
    \centering
    \begin{minipage}[htp]{0.32\textwidth}
        \centering
        \includegraphics[width=1\textwidth]{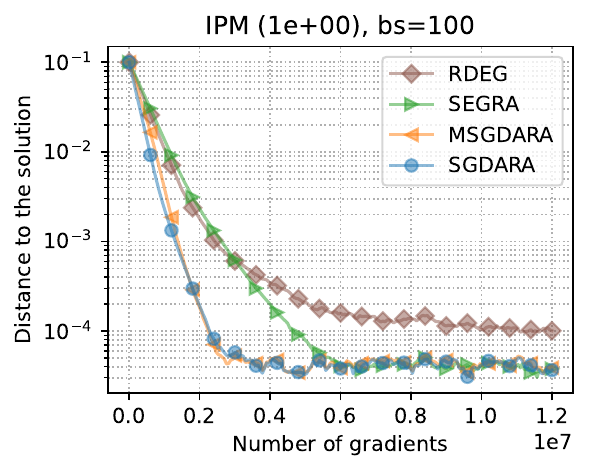}
    \end{minipage}
    \centering
    \begin{minipage}[htp]{0.32\textwidth}
        \centering
        \includegraphics[width=1\textwidth]{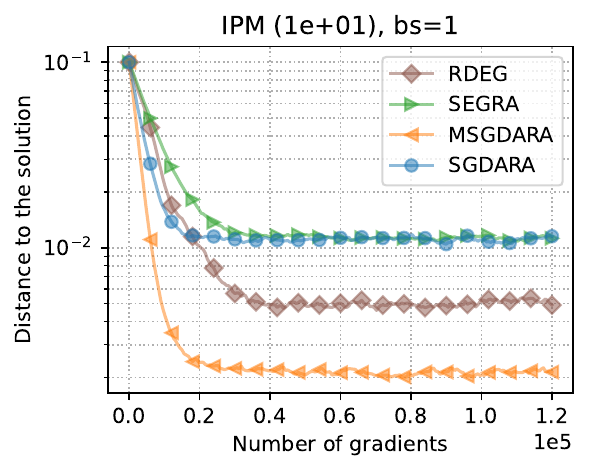}
    \end{minipage}
    \centering
    \begin{minipage}[htp]{0.32\textwidth}
        \centering
        \includegraphics[width=1\textwidth]{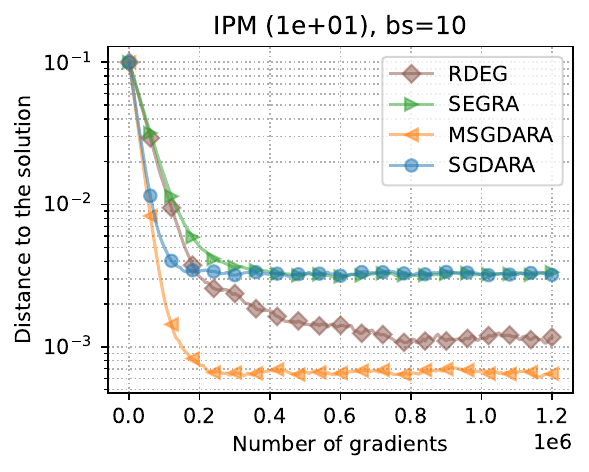}
    \end{minipage}
    \centering
    \begin{minipage}[htp]{0.32\textwidth}
        \centering
        \includegraphics[width=1\textwidth]{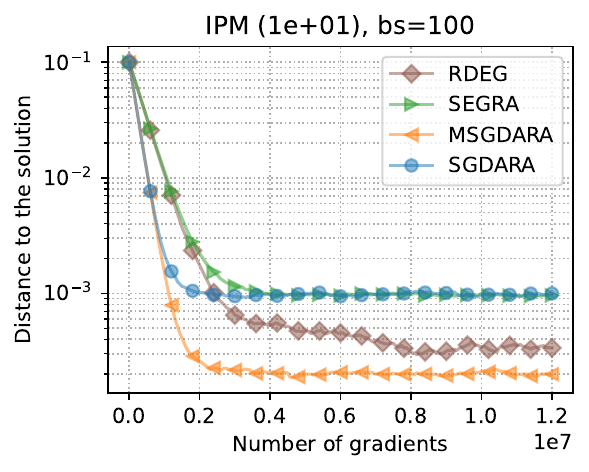}
    \end{minipage}
    \centering
    \begin{minipage}[htp]{0.32\textwidth}
        \centering
        \includegraphics[width=1\textwidth]{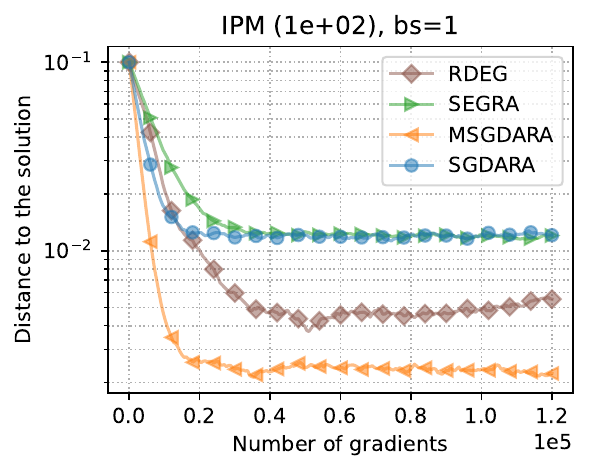}
    \end{minipage}
    \centering
    \begin{minipage}[htp]{0.32\textwidth}
        \centering
        \includegraphics[width=1\textwidth]{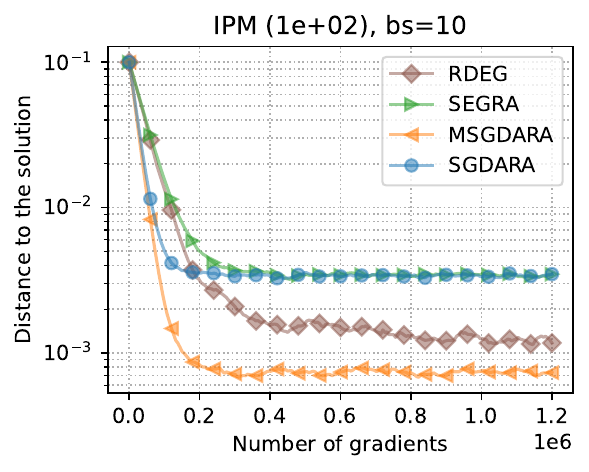}
    \end{minipage}
    \centering
    \begin{minipage}[htp]{0.32\textwidth}
        \centering
        \includegraphics[width=1\textwidth]{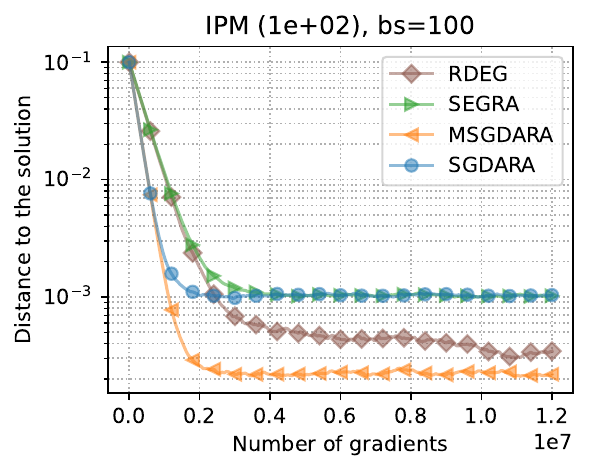}
    \end{minipage}
    \caption{Distance to the solution under IPM attack with various parameter values and batchsizes.}
    \vspace{-0.6cm}%
\end{figure}
\begin{figure}[H]
    \centering
    \begin{minipage}[htp]{0.32\textwidth}
        \centering
        \includegraphics[width=1\textwidth]{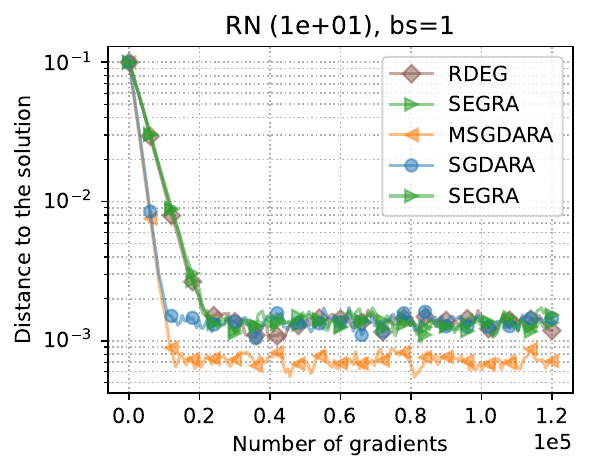}
    \end{minipage}
    \centering
    \begin{minipage}[htp]{0.32\textwidth}
        \centering
        \includegraphics[width=1\textwidth]{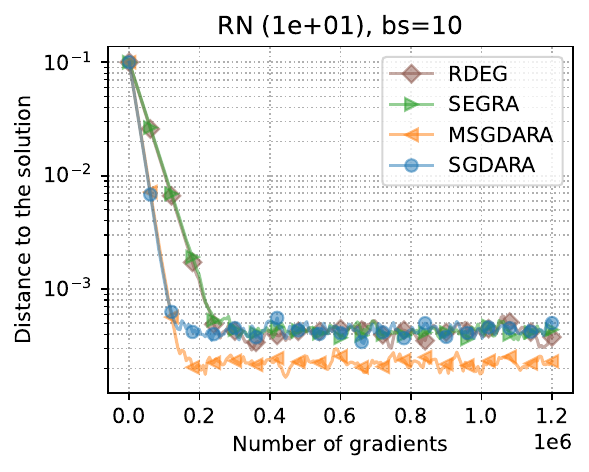}
    \end{minipage}
    \centering
    \begin{minipage}[htp]{0.32\textwidth}
        \centering
        \includegraphics[width=1\textwidth]{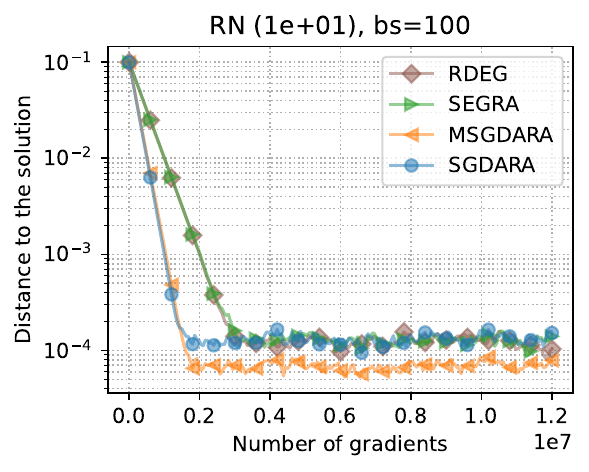}
    \end{minipage}
    \centering
    \begin{minipage}[htp]{0.32\textwidth}
        \centering
        \includegraphics[width=1\textwidth]{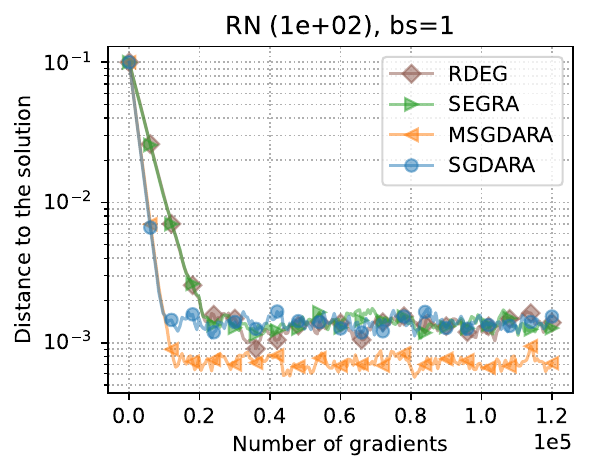}
    \end{minipage}
    \centering
    \begin{minipage}[htp]{0.32\textwidth}
        \centering
        \includegraphics[width=1\textwidth]{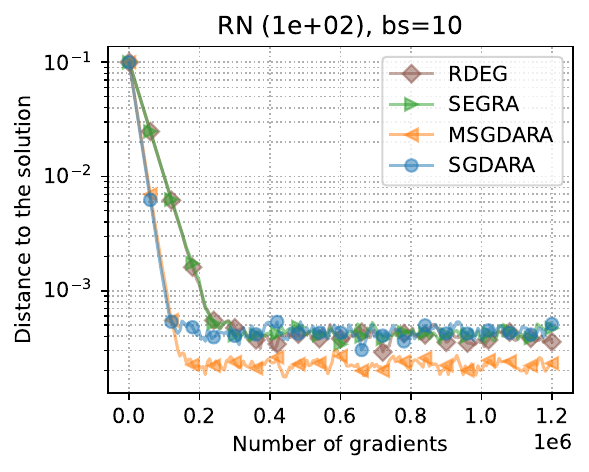}
    \end{minipage}
    \centering
    \begin{minipage}[htp]{0.32\textwidth}
        \centering
        \includegraphics[width=1\textwidth]{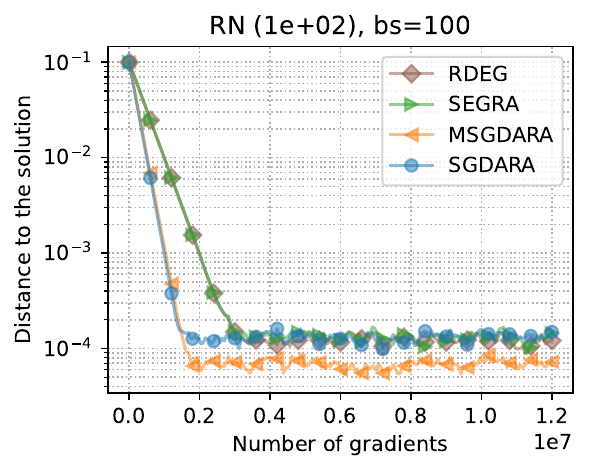}
    \end{minipage}
    \centering
    \begin{minipage}[htp]{0.32\textwidth}
        \centering
        \includegraphics[width=1\textwidth]{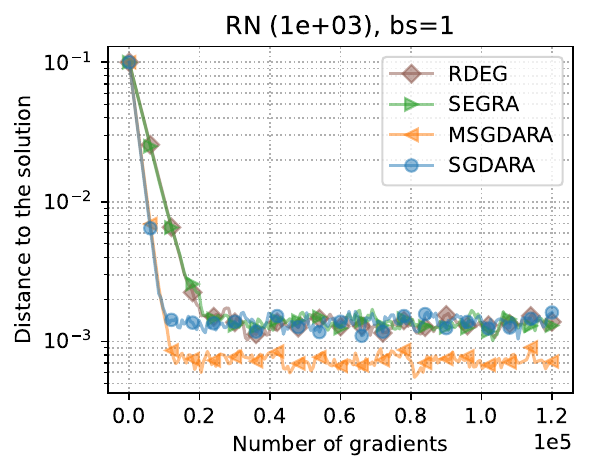}
    \end{minipage}
    \centering
    \begin{minipage}[htp]{0.32\textwidth}
        \centering
        \includegraphics[width=1\textwidth]{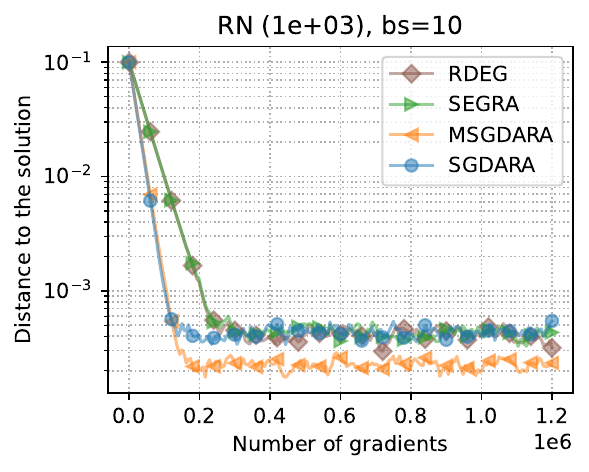}
    \end{minipage}
    \centering
    \begin{minipage}[htp]{0.32\textwidth}
        \centering
        \includegraphics[width=1\textwidth]{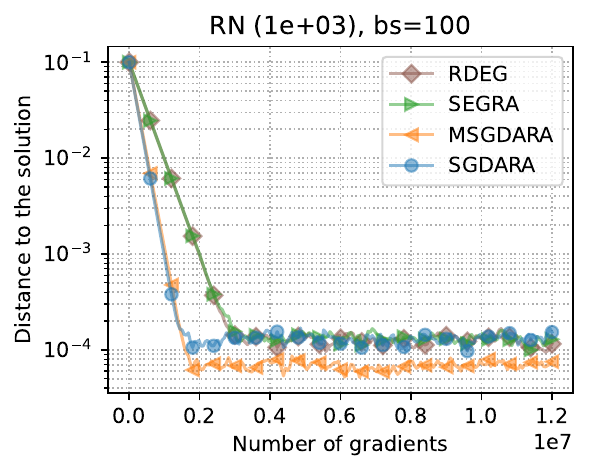}
    \end{minipage}
    \centering
    \begin{minipage}[htp]{0.32\textwidth}
        \centering
        \includegraphics[width=1\textwidth]{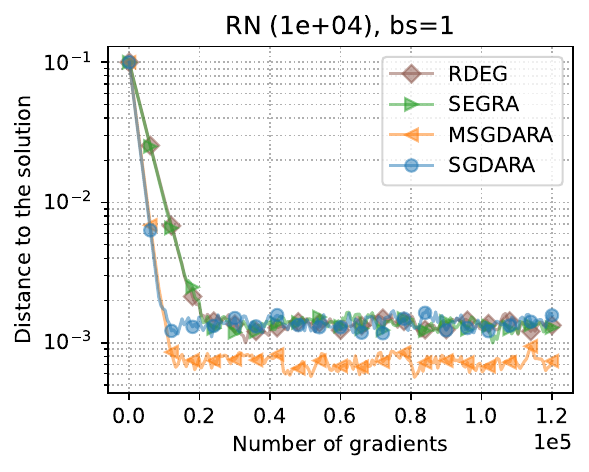}
    \end{minipage}
    \centering
    \begin{minipage}[htp]{0.32\textwidth}
        \centering
        \includegraphics[width=1\textwidth]{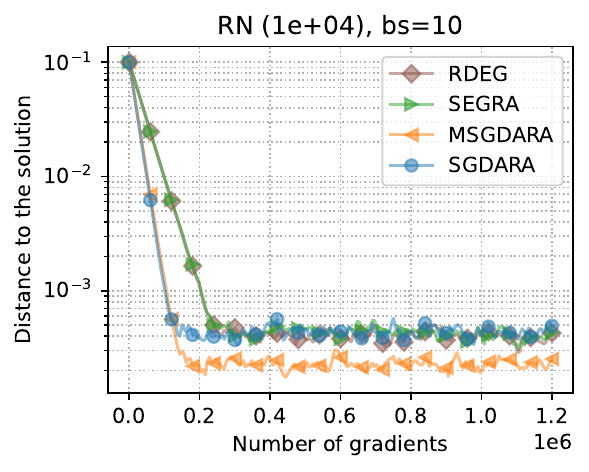}
    \end{minipage}
    \centering
    \begin{minipage}[htp]{0.32\textwidth}
        \centering
        \includegraphics[width=1\textwidth]{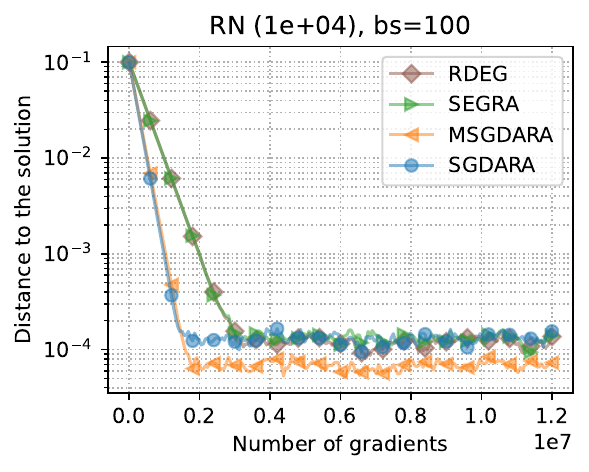}
    \end{minipage}
    \caption{Distance to the solution under RN  attack with various parameter values and batchsizes.}
    \vspace{-0.6cm}%
\end{figure}

\paragraph{Checks of Computations.} Next, using the same setup, we compare \algname{M-SGDA-RA}, which showed the best performance in the above experiments, with methods that check computations.
With checks of computation the best strategy for attackers is that at each iteration only one peer attacks, since it maximizes the expected number of rounds with the presence of actively malicious peers. So the comparison in this paragraph is performed in this setup.
\begin{figure}[H]
    \centering
    \begin{minipage}[htp]{0.32\textwidth}
        \centering
        \includegraphics[width=1\textwidth]{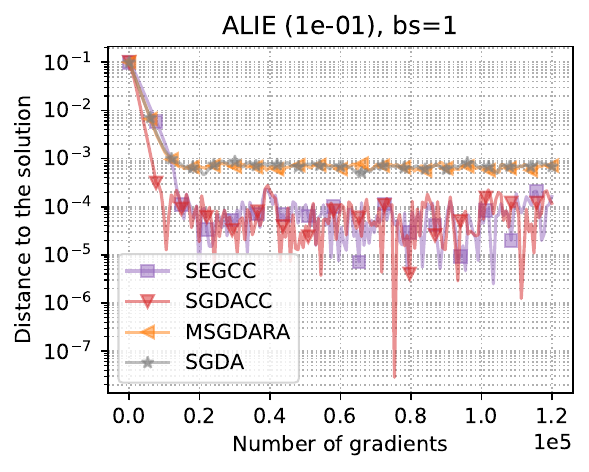}
    \end{minipage}
    \centering
    \begin{minipage}[htp]{0.32\textwidth}
        \centering
        \includegraphics[width=1\textwidth]{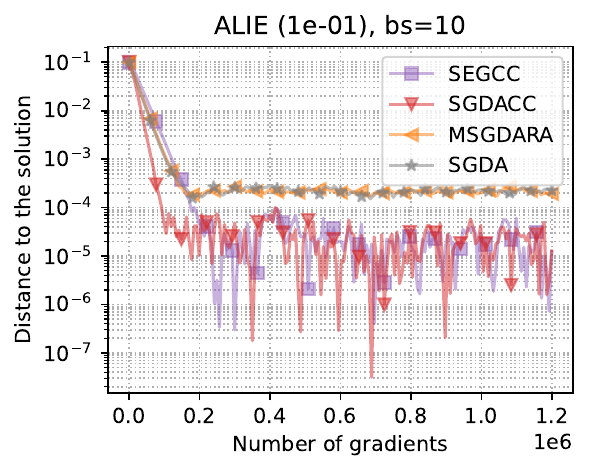}
    \end{minipage}
    \centering
    \begin{minipage}[htp]{0.32\textwidth}
        \centering
        \includegraphics[width=1\textwidth]{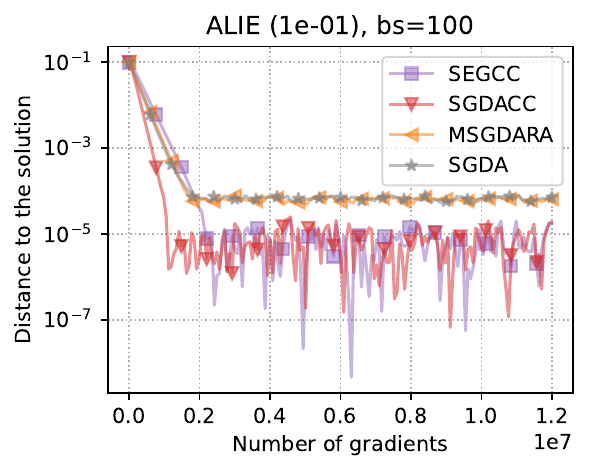}
    \end{minipage}
    \centering
    \begin{minipage}[htp]{0.32\textwidth}
        \centering
        \includegraphics[width=1\textwidth]{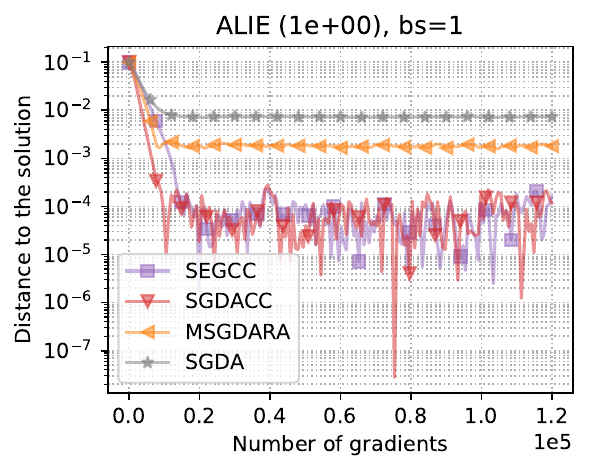}
    \end{minipage}
    \centering
    \begin{minipage}[htp]{0.32\textwidth}
        \centering
        \includegraphics[width=1\textwidth]{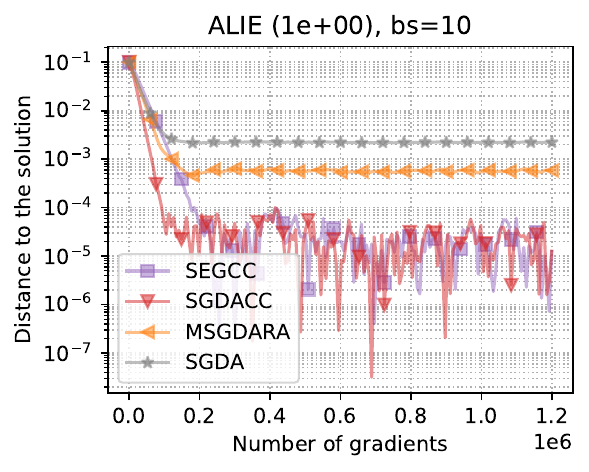}
    \end{minipage}
    \centering
    \begin{minipage}[htp]{0.32\textwidth}
        \centering
        \includegraphics[width=1\textwidth]{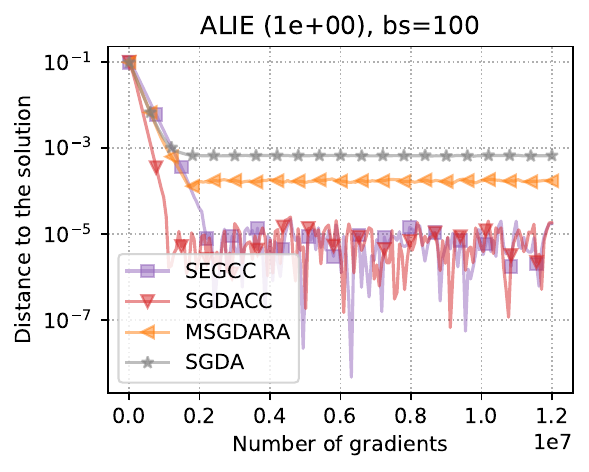}
    \end{minipage}
    \centering
    \begin{minipage}[htp]{0.32\textwidth}
        \centering
        \includegraphics[width=1\textwidth]{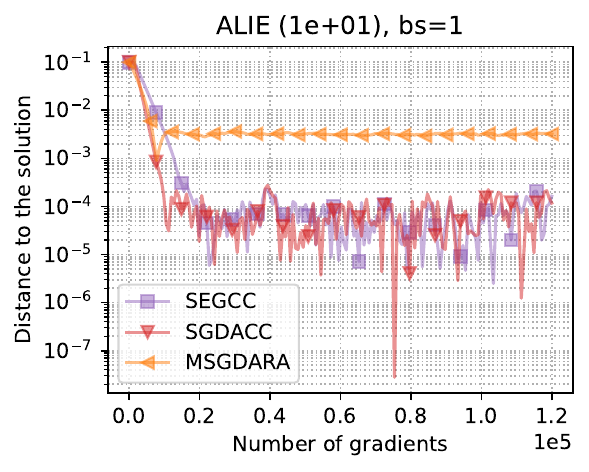}
    \end{minipage}
    \centering
    \begin{minipage}[htp]{0.32\textwidth}
        \centering
        \includegraphics[width=1\textwidth]{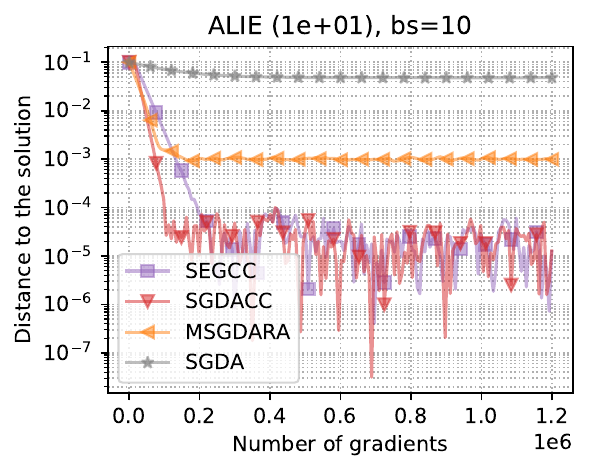}
    \end{minipage}
    \centering
    \begin{minipage}[htp]{0.32\textwidth}
        \centering
        \includegraphics[width=1\textwidth]{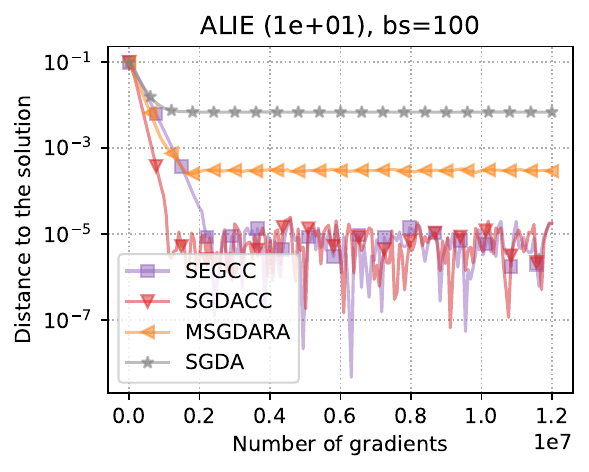}
    \end{minipage}
    \centering
    \begin{minipage}[htp]{0.32\textwidth}
        \centering
        \includegraphics[width=1\textwidth]{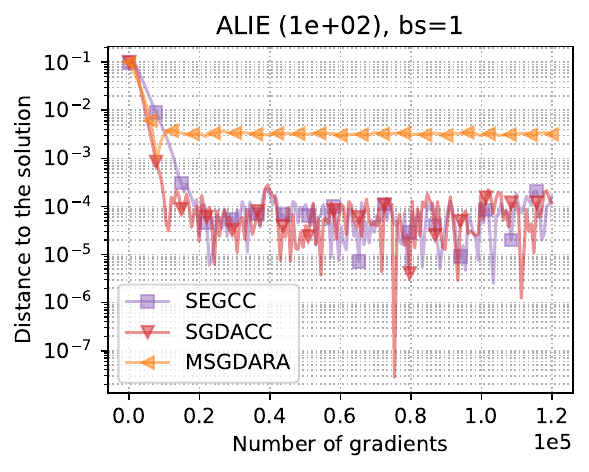}
    \end{minipage}
    \centering
    \begin{minipage}[htp]{0.32\textwidth}
        \centering
        \includegraphics[width=1\textwidth]{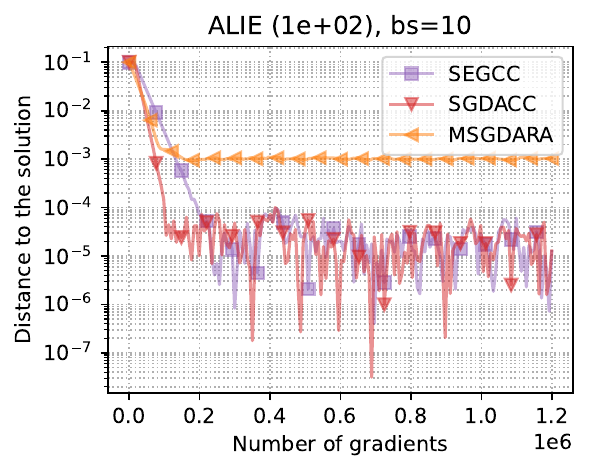}
    \end{minipage}
    \centering
    \begin{minipage}[htp]{0.32\textwidth}
        \centering
        \includegraphics[width=1\textwidth]{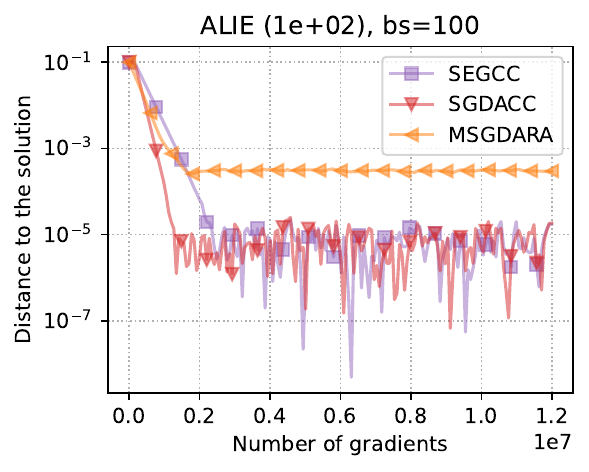}
    \end{minipage}
    \caption{Distance to the solution under ALIE attack with various parameter values and batchsizes.}
    \vspace{-0.6cm}%
\end{figure}

\begin{figure}[H]
    \centering
    \begin{minipage}[htp]{0.32\textwidth}
        \centering
        \includegraphics[width=1\textwidth]{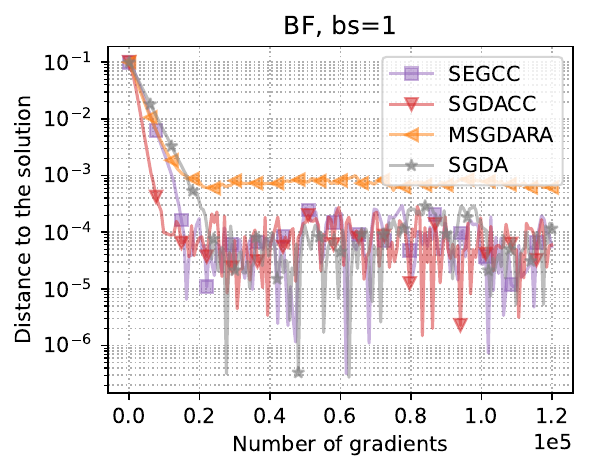}
    \end{minipage}
    \centering
    \begin{minipage}[htp]{0.32\textwidth}
        \centering
        \includegraphics[width=1\textwidth]{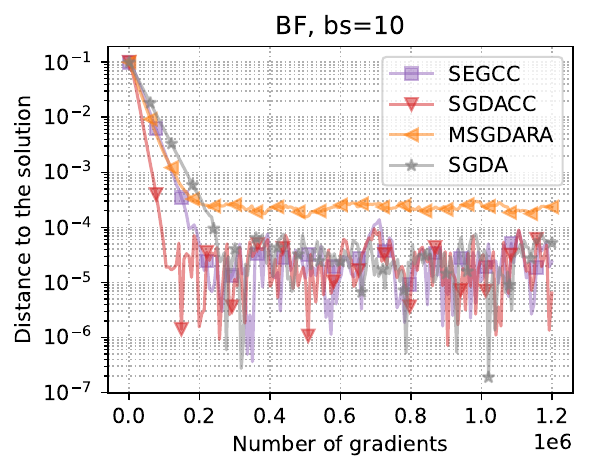}
    \end{minipage}
    \centering
    \begin{minipage}[htp]{0.32\textwidth}
        \centering
        \includegraphics[width=1\textwidth]{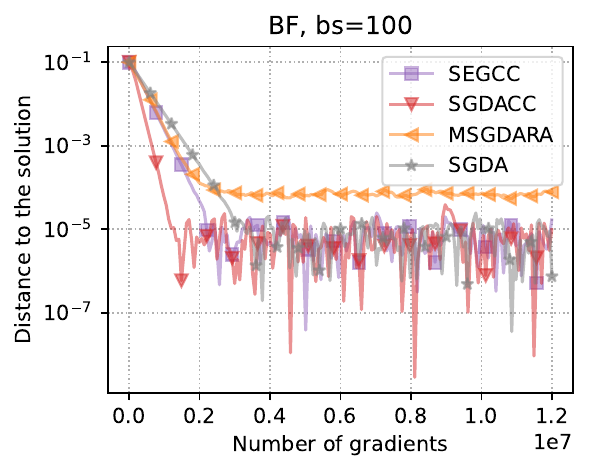}
    \end{minipage}
    \caption{Distance to the solution under BF attack with various batchsizes.}
    \vspace{-0.6cm}%
\end{figure}

\begin{figure}[H]
    \centering
    \begin{minipage}[htp]{0.32\textwidth}
        \centering
        \includegraphics[width=1\textwidth]{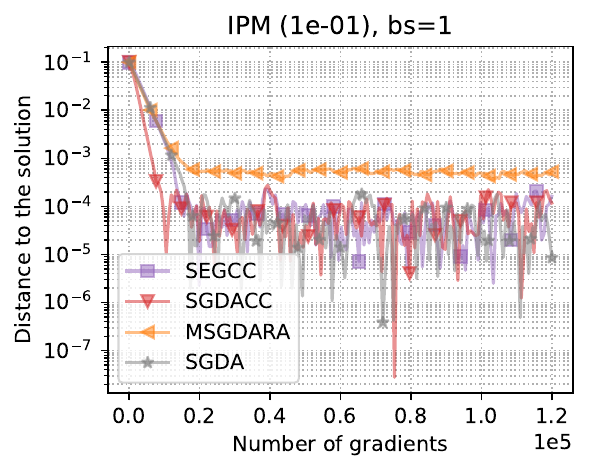}
    \end{minipage}
    \centering
    \begin{minipage}[htp]{0.32\textwidth}
        \centering
        \includegraphics[width=1\textwidth]{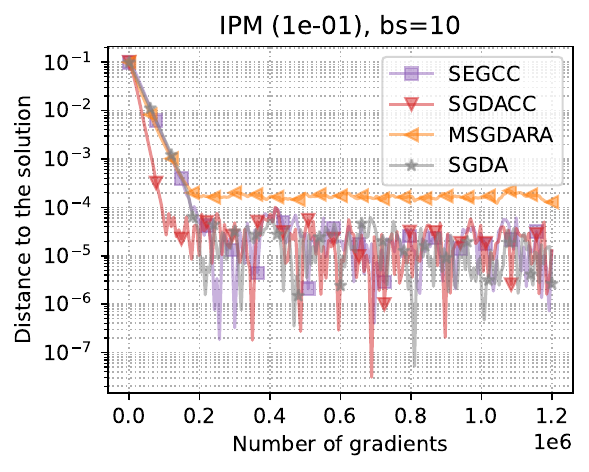}
    \end{minipage}
    \centering
    \begin{minipage}[htp]{0.32\textwidth}
        \centering
        \includegraphics[width=1\textwidth]{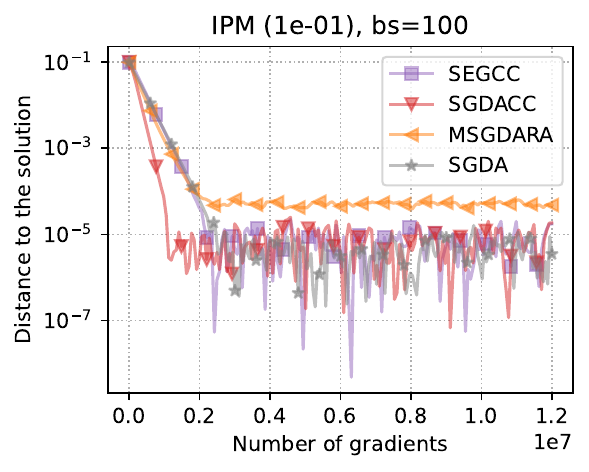}
    \end{minipage}
    \centering
    \begin{minipage}[htp]{0.32\textwidth}
        \centering
        \includegraphics[width=1\textwidth]{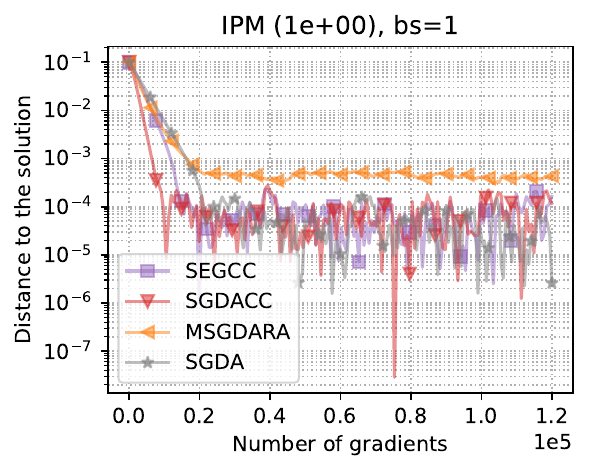}
    \end{minipage}
    \centering
    \begin{minipage}[htp]{0.32\textwidth}
        \centering
        \includegraphics[width=1\textwidth]{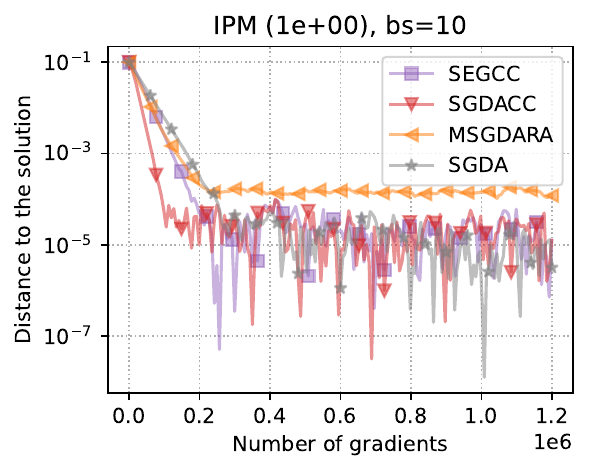}
    \end{minipage}
    \centering
    \begin{minipage}[htp]{0.32\textwidth}
        \centering
        \includegraphics[width=1\textwidth]{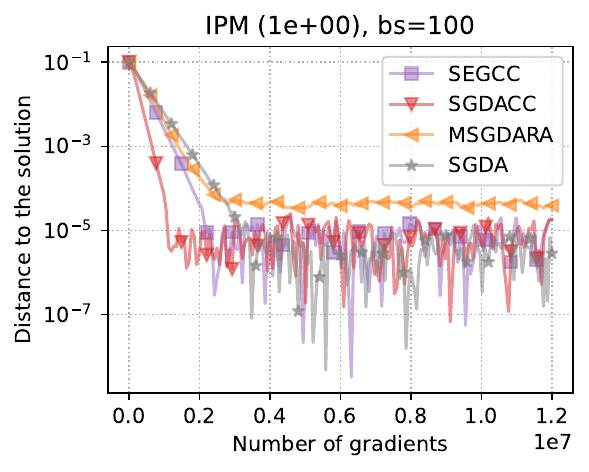}
    \end{minipage}
    \centering
    \begin{minipage}[htp]{0.32\textwidth}
        \centering
        \includegraphics[width=1\textwidth]{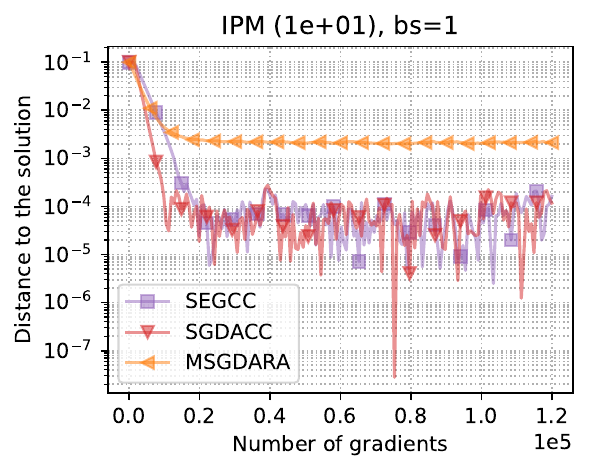}
    \end{minipage}
    \centering
    \begin{minipage}[htp]{0.32\textwidth}
        \centering
        \includegraphics[width=1\textwidth]{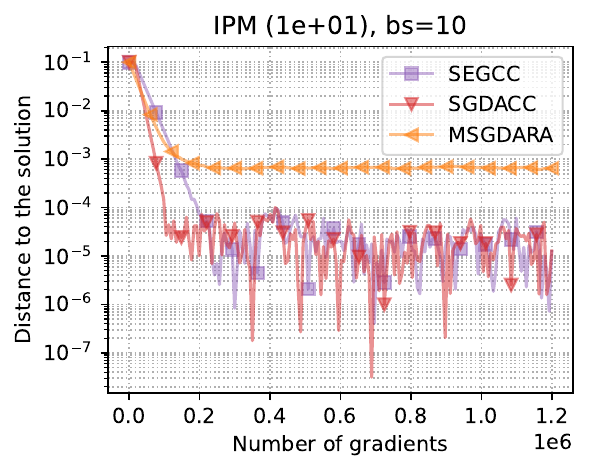}
    \end{minipage}
    \centering
    \begin{minipage}[htp]{0.32\textwidth}
        \centering
        \includegraphics[width=1\textwidth]{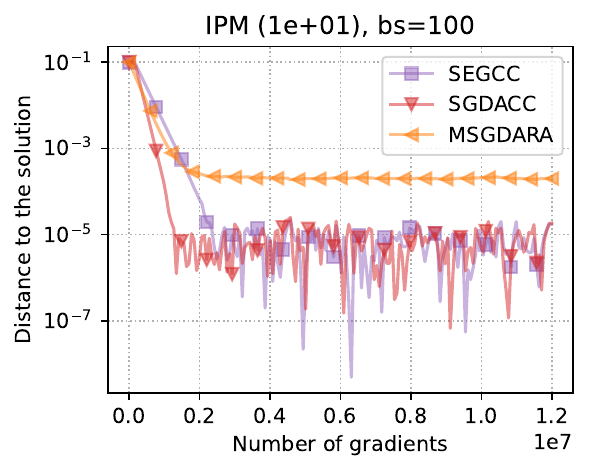}
    \end{minipage}
    \centering
    \begin{minipage}[htp]{0.32\textwidth}
        \centering
        \includegraphics[width=1\textwidth]{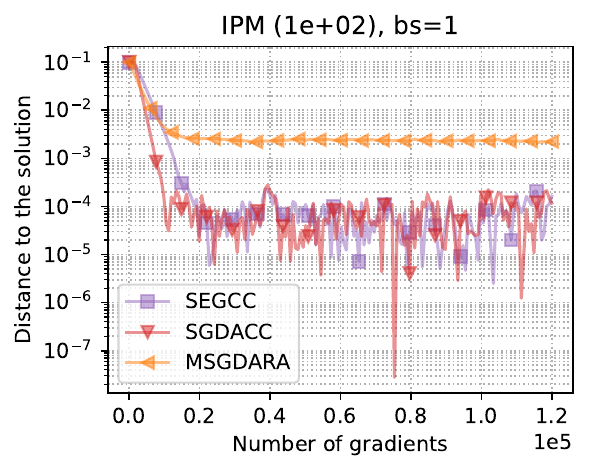}
    \end{minipage}
    \centering
    \begin{minipage}[htp]{0.32\textwidth}
        \centering
        \includegraphics[width=1\textwidth]{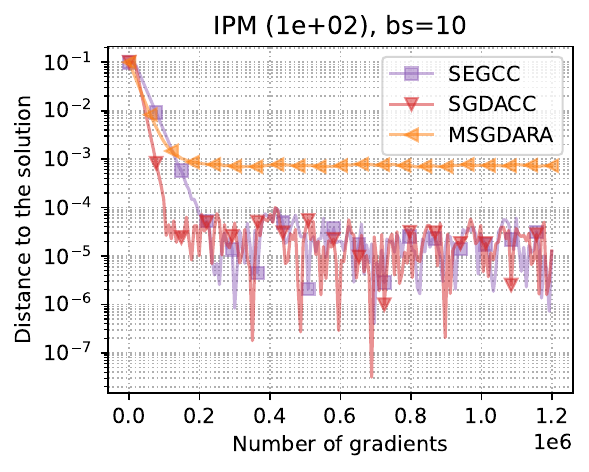}
    \end{minipage}
    \centering
    \begin{minipage}[htp]{0.32\textwidth}
        \centering
        \includegraphics[width=1\textwidth]{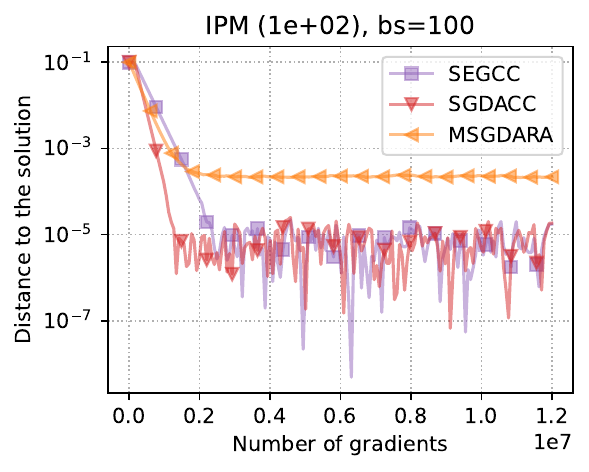}
    \end{minipage}
    \caption{Distance to the solution under IPM attack with various parameter values and batchsizes.}
    \vspace{-0.6cm}%
\end{figure}

\begin{figure}[H]
    \centering
    \begin{minipage}[htp]{0.32\textwidth}
        \centering
        \includegraphics[width=1\textwidth]{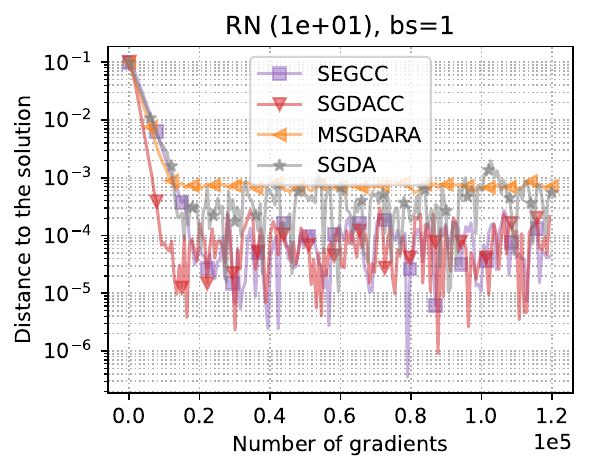}
    \end{minipage}
    \centering
    \begin{minipage}[htp]{0.32\textwidth}
        \centering
        \includegraphics[width=1\textwidth]{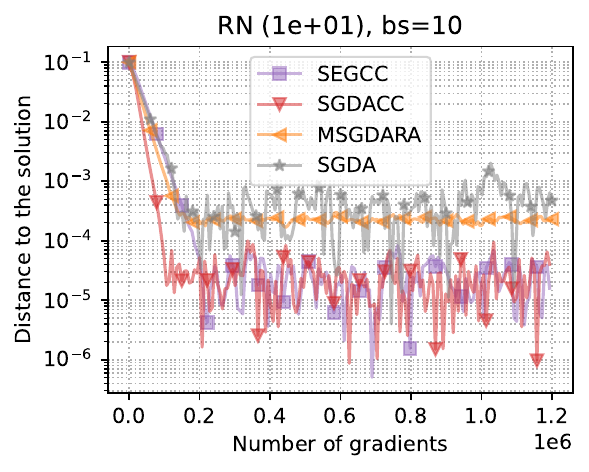}
    \end{minipage}
    \centering
    \begin{minipage}[htp]{0.32\textwidth}
        \centering
        \includegraphics[width=1\textwidth]{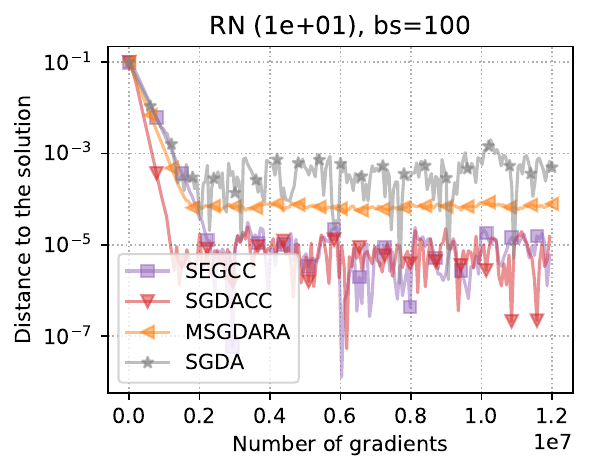}
    \end{minipage}
    \centering.
    \begin{minipage}[htp]{0.32\textwidth}
        \centering
        \includegraphics[width=1\textwidth]{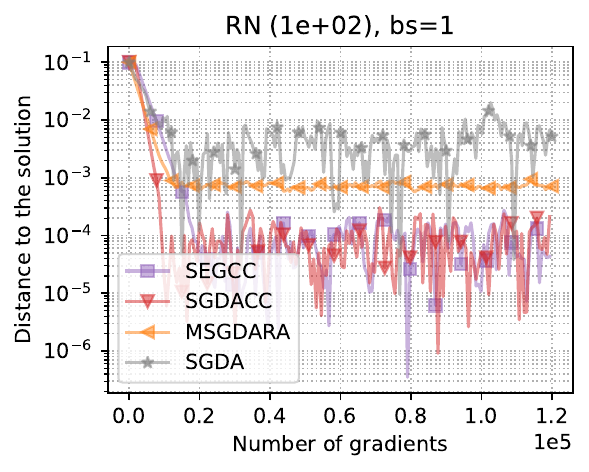}
    \end{minipage}
    \centering
    \begin{minipage}[htp]{0.32\textwidth}
        \centering
        \includegraphics[width=1\textwidth]{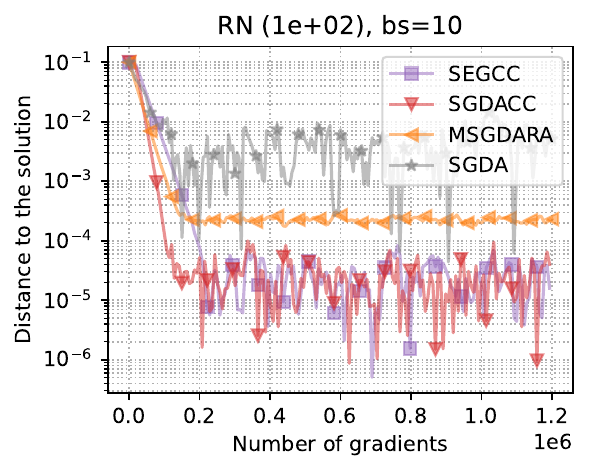}
    \end{minipage}
    \centering
    \begin{minipage}[htp]{0.32\textwidth}
        \centering
        \includegraphics[width=1\textwidth]{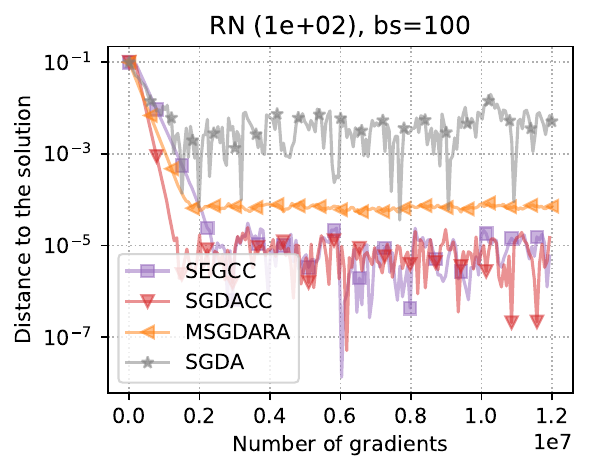}
    \end{minipage}
    \centering
    \begin{minipage}[htp]{0.32\textwidth}
        \centering
        \includegraphics[width=1\textwidth]{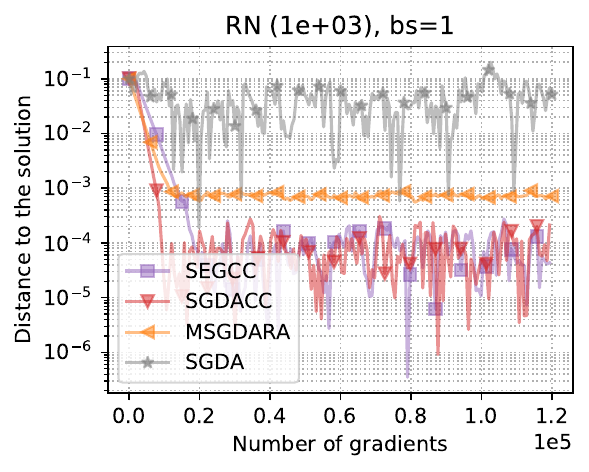}
    \end{minipage}
    \centering
    \begin{minipage}[htp]{0.32\textwidth}
        \centering
        \includegraphics[width=1\textwidth]{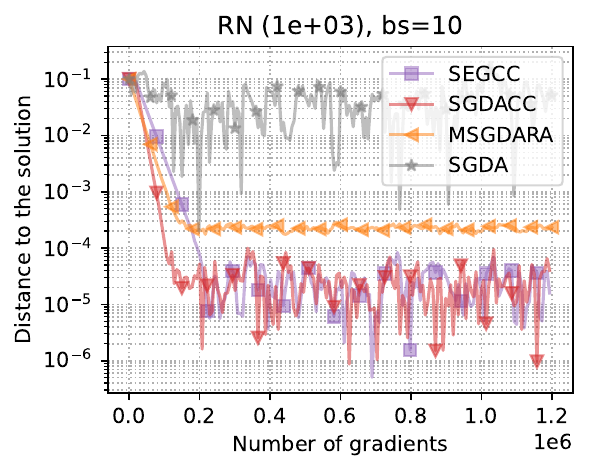}
    \end{minipage}
    \centering
    \begin{minipage}[htp]{0.32\textwidth}
        \centering
        \includegraphics[width=1\textwidth]{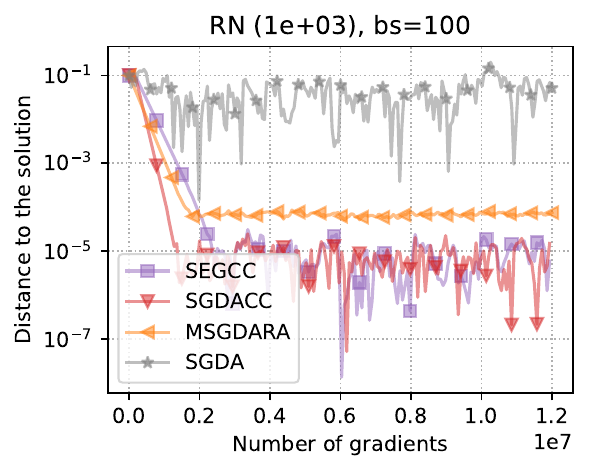}
    \end{minipage}
    \centering
    \begin{minipage}[htp]{0.32\textwidth}
        \centering
        \includegraphics[width=1\textwidth]{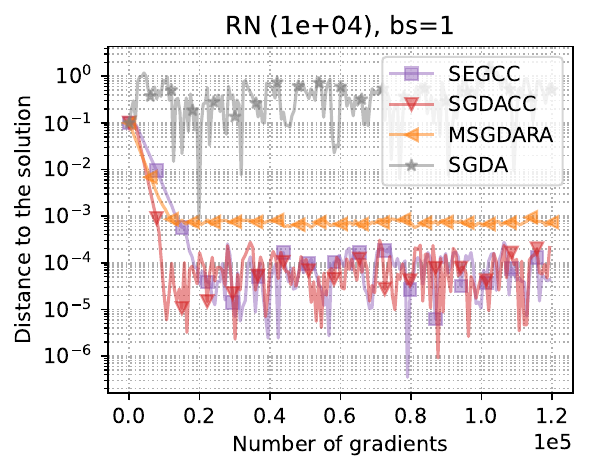}
        \vspace{-0.6cm}%
    \end{minipage}
    \centering
    \begin{minipage}[htp]{0.32\textwidth}
        \centering
        \includegraphics[width=1\textwidth]{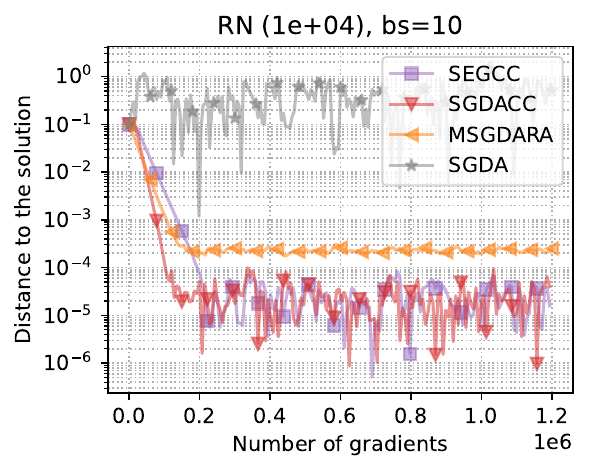}
        \vspace{-0.6cm}%
    \end{minipage}
    \centering
    \begin{minipage}[htp]{0.32\textwidth}
        \centering
        \includegraphics[width=1\textwidth]{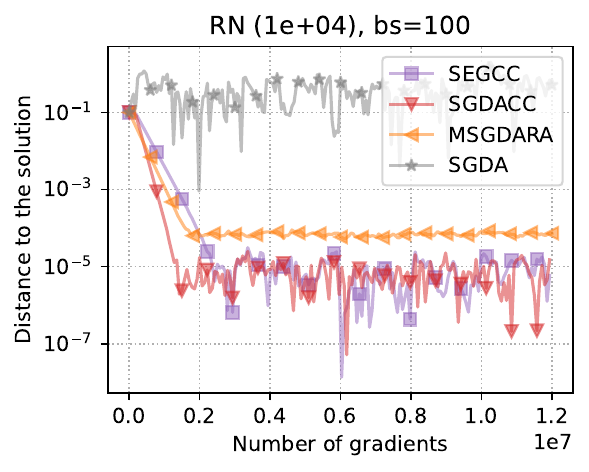}
        \vspace{-0.6cm}%
    \end{minipage}
    \caption{Distance to the solution under RN attack with various parameter values and batchsizes.}
\end{figure}

\paragraph{Learning rates.} We conducted extra experiments to show the dependence on different learning rate values $1e-5, 2e-5, 5e-6$.
\begin{figure}[H]
    \centering
    \begin{minipage}[htp]{0.19\textwidth}
        \centering
        \includegraphics[width=1\textwidth]{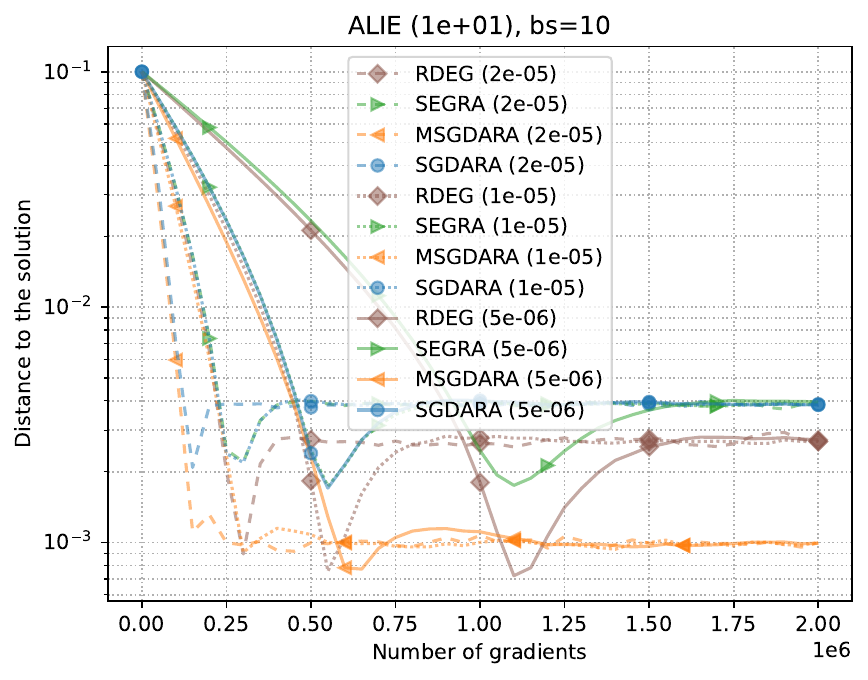}
    \end{minipage}
    \centering
    \begin{minipage}[htp]{0.19\textwidth}
        \centering
        \includegraphics[width=1\textwidth]{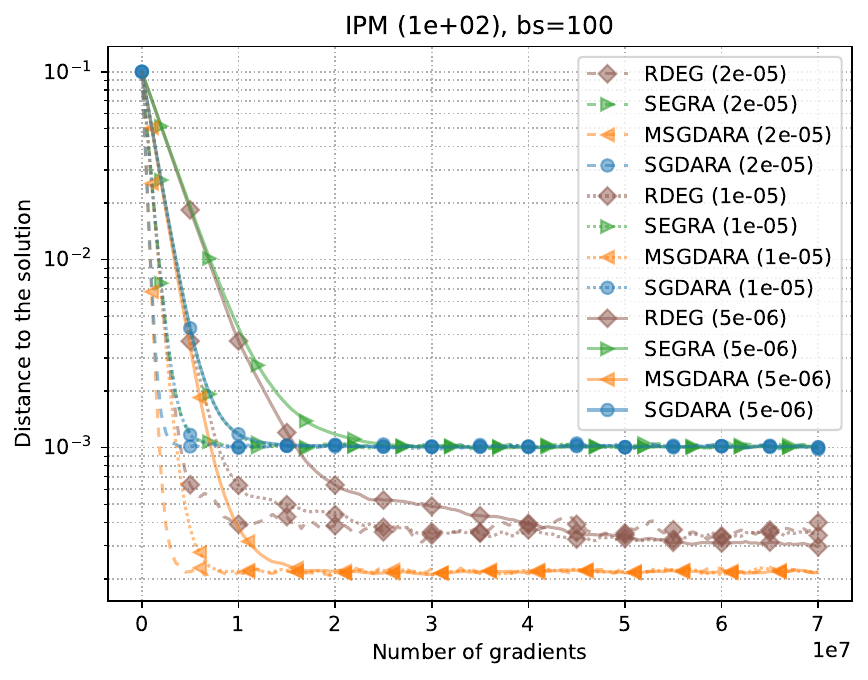}
    \end{minipage}
    \centering
    \begin{minipage}[htp]{0.19\textwidth}
        \centering
        \includegraphics[width=1\textwidth]{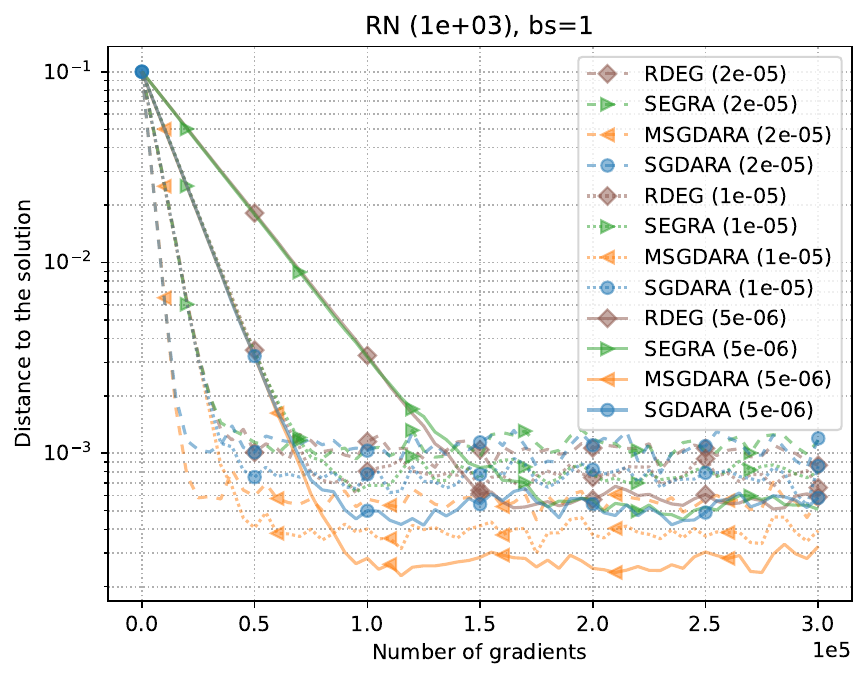}
    \end{minipage}
    \centering
    \begin{minipage}[htp]{0.19\textwidth}
        \centering
        \includegraphics[width=1\textwidth]{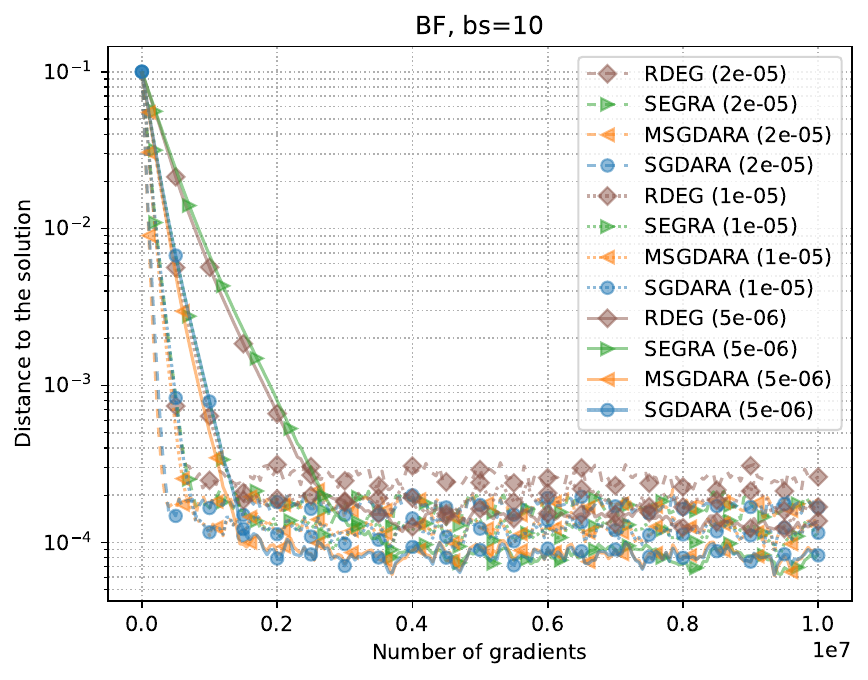}
    \end{minipage}
    \centering
    \begin{minipage}[htp]{0.19\textwidth}
        \centering
        \includegraphics[width=1\textwidth]{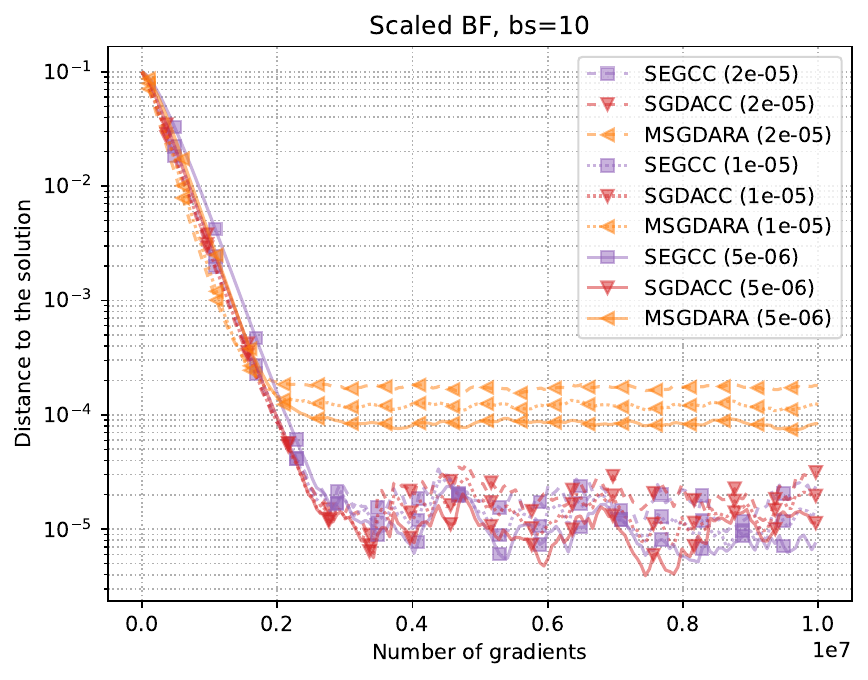}
    \end{minipage}
    \centering
    \begin{minipage}[htp]{0.19\textwidth}
        \centering
        \includegraphics[width=1\textwidth]{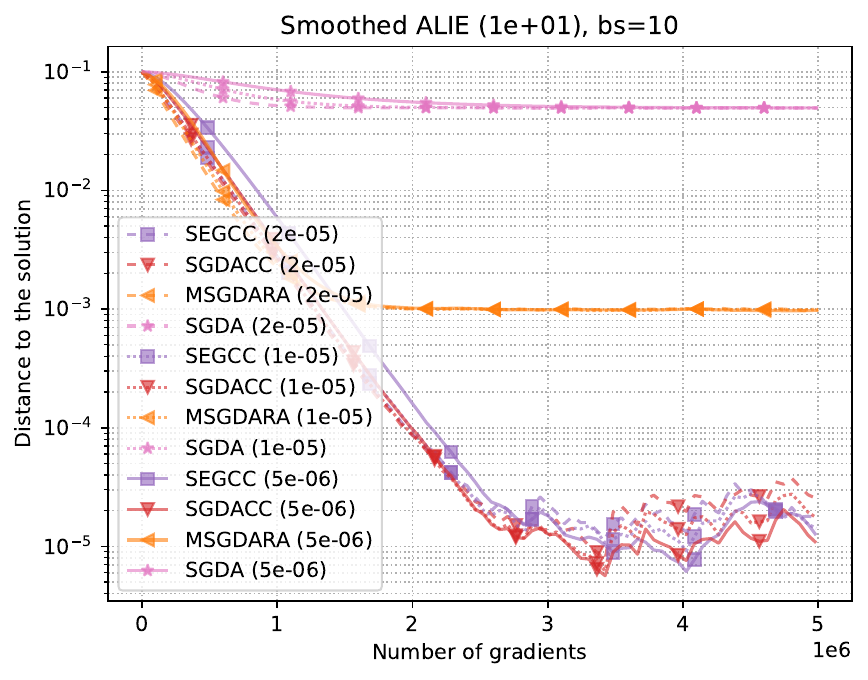}
        \vspace{-0.6cm}%
    \end{minipage}
    \centering
    \begin{minipage}[htp]{0.19\textwidth}
        \centering
        \includegraphics[width=1\textwidth]{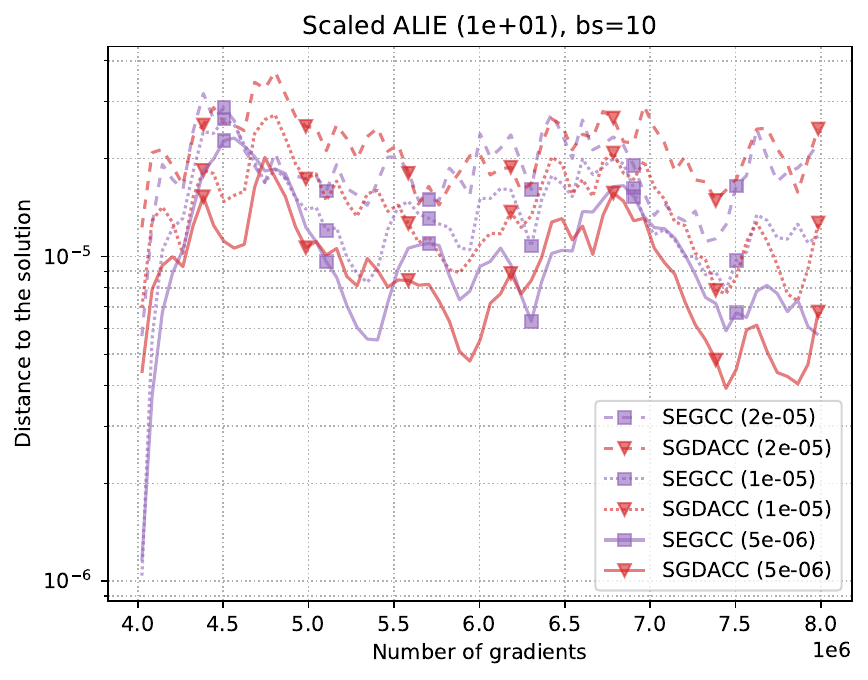}
        \vspace{-0.6cm}%
    \end{minipage}
    \centering
    \begin{minipage}[htp]{0.19\textwidth}
        \centering
        \includegraphics[width=1\textwidth]{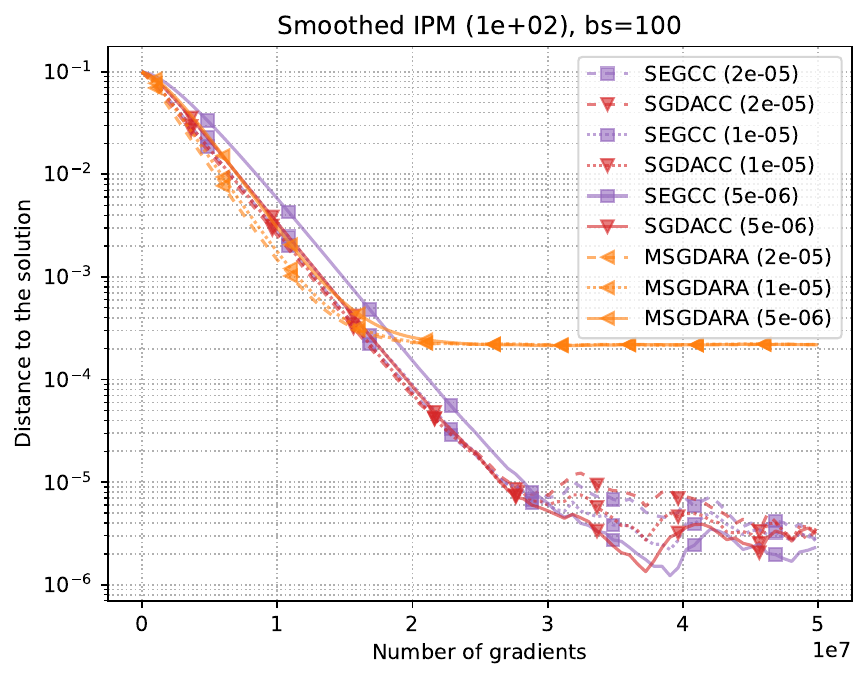}
        \vspace{-0.6cm}%
    \end{minipage}
    \centering
    \begin{minipage}[htp]{0.19\textwidth}
        \centering
        \includegraphics[width=1\textwidth]{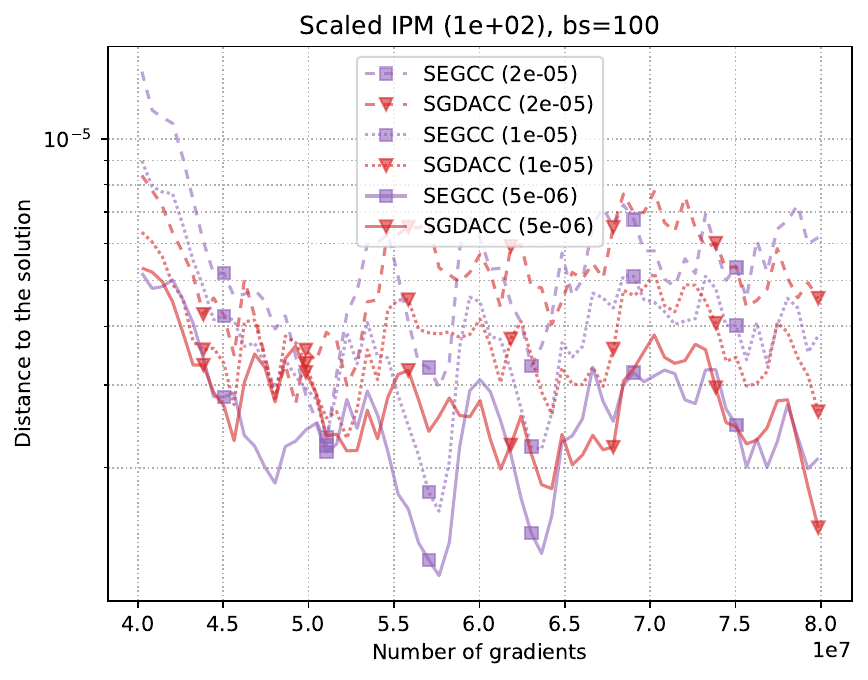}
        \vspace{-0.6cm}%
    \end{minipage}
    \centering
    \begin{minipage}[htp]{0.19\textwidth}
        \centering
        \includegraphics[width=1\textwidth]{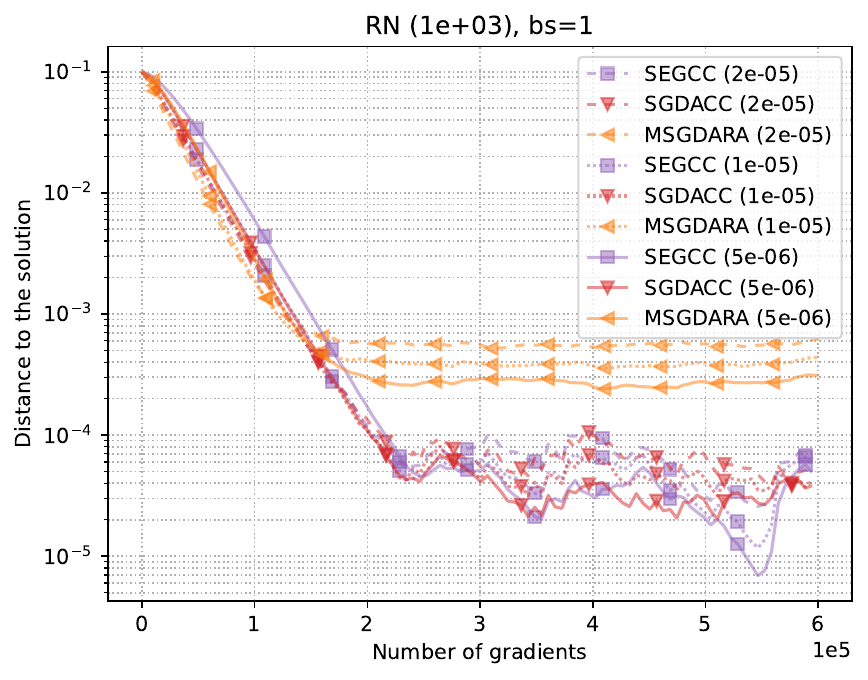}
        \vspace{-0.6cm}%
    \end{minipage}
    \caption{Distance to the solution under various attacks, batchsizes (bs) and learning rates (lr).}
\end{figure}

\subsection{Generative Adversarial Networks}
One of the most well-known frameworks in which the objective function is a variational inequality is generative adversarial networks (GAN)~\cite{goodfellow2014generative}. In the simplest case of this setting, we have a generator $G: \R^z \to \R^d$ and a discriminator $D: \R^d \to \R$, where $z$ denotes the dimension of the latent space. The objective function can be written as
\begin{equation}
    \label{eq:app-gan-obj}
    \min_G \max_D \quad
    \underset{\*x}{\E} \log (D(\*x)) 
    + \underset{\*z}{\E}  \log (1 - D(G(\*z))).
\end{equation}
Here, it is understood that $D$ and $G$ are modeled as neural nets and can be optimized in the distributed setting with gradient descent ascent algorithms. However, due to the complexity of the GANs framework, tricks and adjustments are being employed to ensure good results, such as the Wasserstein GAN formulation \citep{gulrajani2017improved} with Lipschitz constraint on $D$ and the spectral normalization \citep{Miyato2018SpectralNorm} trick to ensure the Lipschitzness of $D$. The discriminator can thus benefit in practice from multiple gradient ascent steps per gradient descent step on the generator. In addition, Adam \citep{kingma2014adam} is often used for GANs as they can be very slow to converge and not perform as well with vanilla SGD.

Therefore, in our implementation of GANs in the distributed setting, we employ all of these techniques and show improvements when we add checks of gradient computations to the server. As for the gradients in our implementation, we can think of the accumulated Adam steps within the clients as ``generalized gradients'' and aggregate them in the server with checks of computations (by rewinding model and optimizer state). We tried aggregation after each descent or ascent step, after full descent-ascent step, and after multiple descent-ascent steps. For the first case, we found that GANs converge much more slowly. For the third case, the performance is better but checks of computations take more time. Thus, we choose to report the performance for the second case: aggregations of a full descent-ascent step. Though, we note that experiments for the other cases suggest similar improvements.

The dataset we chose for this experiment is CIFAR-10 \citep{krizhevsky2009learning} because it is more realistic than MNIST yet is still tractable to simulate in the distributed setting. We let $\n=10$, $\bn=2$, and choose a learning rate of 0.001, $\beta_1=0.5$, and $\beta_2=0.9$ with a batch size of 64. We run the algorithms for 4600 epochs. We could not average across runs as the simulation is very compute intensive and the benefits are obvious. We compare \algname{SGDA-RA} (RFA with bucket size 2) and \algname{SGDA-CC} under the following byzantine attacks: i) no attack (NA), ii) label flipping (LF), iii) inner product manipulation (IPM)~\citep{xie2019fall}, and iv) a little is enough (ALIE)~\citep{baruch2019little}. The architecture of the GAN follows \cite{Miyato2018SpectralNorm}.

We show the results in Figure \ref{fig:app-gan}. The improvements are most significant for the ALIE attack. Even when no attacks are present, checks of computations only slightly affects convergence speed. This experiment should further justify our proposed algorithm and its real-world benefits even for a setting as complex as distributed GANs. Code for GANs is available at \url{https://github.com/zeligism/vi-robust-agg}.

\begin{figure}
    \centering
    \begin{minipage}[htp]{0.48\textwidth}
        \centering
        \includegraphics[width=1\textwidth]{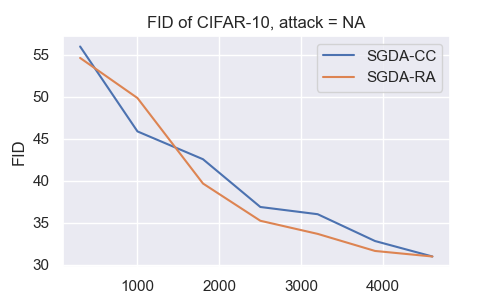}
        \vspace{-0.6cm}%
    \end{minipage}
    \begin{minipage}[htp]{0.48\textwidth}
        \centering
        \includegraphics[width=1\textwidth]{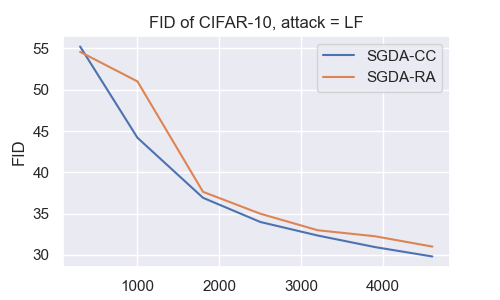}
        \vspace{-0.6cm}%
    \end{minipage}
    \begin{minipage}[htp]{0.48\textwidth}
        \centering
        \includegraphics[width=1\textwidth]{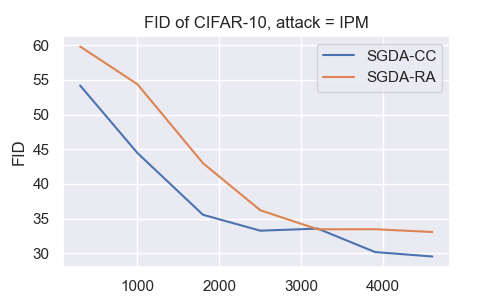}
        \vspace{-0.6cm}%
    \end{minipage}
    \begin{minipage}[htp]{0.48\textwidth}
        \centering
        \includegraphics[width=1\textwidth]{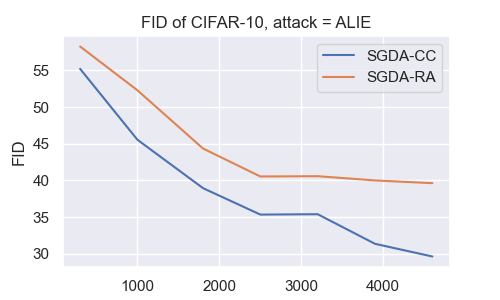}
        \vspace{-0.6cm}%
    \end{minipage}
    \caption{Comparison of FID to CIFAR-10 per epoch between \algname{SGDA-CC} and \algname{SGDA-RA}. The FID is calculated on 50,000 samples. (lower = better).}
    \vspace{-0.6cm}%
    \label{fig:app-gan}
\end{figure}

\end{document}